\begin{document}

\title{A Classification of Critical Configurations for any Number of Projective Views}

%%=============================================================%%
%% Prefix	-> \pfx{Dr}
%% GivenName	-> \fnm{Joergen W.}
%% Particle	-> \spfx{van der} -> surname prefix
%% FamilyName	-> \sur{Ploeg}
%% Suffix	-> \sfx{IV}
%% NatureName	-> \tanm{Poet Laureate} -> Title after name
%% Degrees	-> \dgr{MSc, PhD}
%% \author*[1,2]{\pfx{Dr} \fnm{Joergen W.} \spfx{van der} \sur{Ploeg} \sfx{IV} \tanm{Poet Laureate} 
%%                 \dgr{MSc, PhD}}\email{iauthor@gmail.com}
%%=============================================================%%

\author*[1]{\fnm{Martin} \sur{Bråtelund}}\email{mabraate@math.uio.no}

\affil*[1]{\orgdiv{Department of Mathematics}, \orgname{University of Oslo}, \orgaddress{\street{Moltke Moes vei 35}, \city{Oslo}, \postcode{0316}, \country{Norway}}}

%\author{Martin Bråtelund}
%\address{Department of Mathematics, University of Oslo, Moltke Moes vei 35, 0316 Oslo Norway\\
%mabraate@mail.uio.no}
%\email{mabraate@mail.uio.no}
%\date{\today}

\keywords{critical configurations,
multiple-view geometry,
projective geometry,
algebraic vision,
quadric surfaces,
structure from motion}

\maketitle

\setlength{\parindent}{0 pt }
\setlength{\parskip}{2.25 ex plus 0.75 ex minus 0.3 ex }

%\section*{todo}
%\begin{itemize}
%\item Notation section \begin{enumerate}
%\item general/generic
%\item rational map
%\item zariski closure
%\item multi-linear polynomials/forms
%\end{enumerate}
%\end{itemize}

\begin{abstract}{}
Structure from motion is the process of recovering information about cameras and 3D scene from a set of images. Generally, in a noise-free setting, all information can be uniquely recovered if enough images and image points are provided. There are, however, certain cases where unique recovery is impossible, even in theory; these are called \emph{critical configurations}. We use a recently developed algebraic approach to classify all critical configurations for any number of projective cameras. We show that they form well-known algebraic varieties, such as quadric surfaces and curves of degree at most 4. This paper also improves upon earlier results both by finding previously unknown critical configurations and by showing that some configurations previously believed to be critical are in fact not.
\end{abstract}
%\begin{keyword}
%critical configurations,
%projective geometry,
%synthetic geometry,
%multiple-view geometry,
%quadric surfaces,
%structure from motion,
%birational geometry,
%\end{keyword}

%\tableofcontents
\section*{Acknowledgments}
I would like to thank my two supervisors, Kristian Ranestad and Kathlén Kohn, for their help and guidance, for providing me with useful insights, and for their belief in my work. This work was supported by the Norwegian National Security Authority.

\section{Introduction}
One of the oldest and most well-studied problems in computer vision is that of \emph{structure from motion}, where given a set of $2$-dimensional images the task is to reconstruct a scene in $3$-space and find the camera positions in this scene. Early works on this problem predate even the first computers, with works going back to Taylor and Lagrange \cite{sturm2011history}. Over time, many techniques have been developed for solving these problems for varying camera models and under different assumptions on the space and image points \cite{maybank1992theory, maybank1993theory, hartley1997lines, hartleyzisserman}. In general, when assuming noiseless images and with enough images and enough points in each image, one can uniquely recover all information about the original scene. There are, however, some configurations of cameras and points whose 3D information can not be uniquely recovered from the images. These are called \emph{critical configurations}.

Much work has been done to understand critical configurations for various settings \cite{buchanan1988twistedcubic, hartley2000, hartleyKahlAstrom2001, HKCalibrated, bertolini2007criticalOneView, HK, bertolini2020criticalRank, BunchananCriticalLines, duff2020pl, duff2019plmp, zuzana2008, larsson2017efficient}, with results dating as far back as 1941 \cite{Krames1941}. While interesting from a purely theoretical viewpoint, critical configurations also play a part in practical applications. In real-life reconstruction problems, critical configurations rarely show up when enough data is available (noise alone should be enough to avoid the critical configurations), nonetheless, it has been shown that as the configurations approach the critical ones, reconstruction algorithms can become less stable \cite{stability, HK, bertoliniStability}. Moreover, when doing reconstruction with little data available, the chances of running into critical configurations greatly increase.
 
We study the critical configurations for projective cameras observing points using the approach from \cite{twoViews, threeViews}. The idea is as follows (full details in \cref{sec:approach}): A set of $n$ cameras $\textbf{P}$ defines a rational map $\phi_{\textbf{P}}\from\p3\dashrightarrow(\p2)^n$. A set of $n$ cameras $\textbf{P}$ and a set of points $X\subset\p3$ is referred to as a \emph{configuration}. A configuration is critical if there exists another set of cameras $\textbf{Q}$ and points $Y$ such that $\phi_{\textbf{P}}(X)=\phi_{\textbf{Q}}(Y)$. The map $\phi_{\textbf{P}}$ takes $\p3$ to a variety in $(\p2)^{n}$ called the \emph{multi-view variety}. Since $X$ and $Y$ both map into the intersection of the multi-view varieties of $\phi_{\textbf{P}}$ and $\phi_{\textbf{Q}}$, one can classify critical configurations by classifying all possible intersections of multi-view varieties.

Using this approach, we give a complete classification of the critical configurations for any number of projective views. An earlier classification appears in \cite{HK}, using a different approach. While \cite{HK} laid the groundwork and served as an inspiration for this paper, the classification it gives is incomplete. It is missing some of the critical configurations for three or more views, and more importantly, the main theorem for critical configurations for four or more views does not hold. This is explained in more detail in section \cref{sec:counterexample}, where we give a counterexample. %\footnote{The missing critical configuration consists of a set of 3 or more cameras together with a set of points all lying on a twisted cubic passing through all but one camera center}

A complete classification for the case of one and two views is given in \cite{twoViews}, and a classification for three views is given in \cite{threeViews}, although the latter is missing at least one critical configuration. For the sake of completeness, the results from these two works are included in this paper, sometimes with new proofs. The classification of the critical configurations for two views can be found in \cref{thr:critical_conf_for_two_views,fig:critical_quadrics}, for three views in \cref{thr:critical_configuration_three_views,fig:12critical}, and for four or more views in \cref{thr:critical_configuration_four_views}.

\cref{sec:background} introduces notation and the main concepts and gives some preliminary results. \cref{sec:approach} describes the approach used for the classification of critical configurations. \cref{sec:one_view_case} briefly covers the one-view case before the two-view case is covered in \cref{sec:preliminary_two_views}. Here we introduce the fundamental matrix and show that for two views, the critical configurations lie on quadric surfaces. \cref{sec:preliminary_three_views,sec:critical_configurations_for_three_views} cover the case of three cameras showing that the critical configurations for three views tend to lie on curves of degree at most four (see \cref{fig:12critical}). Lastly, \cref{sec:four_views} shows that most of these configurations remain critical when more cameras are added, thus completing the classification.
 %Our new techniques confirm the main result of \cite{HK} for two cameras, and provide proofs for many unproven assertions in \cite{HK} (see Section 4). Moreover, they yield explicit maps between different critical configurations that have the same images (see Section 5). We also give a complete description of all critical configurations in the case of a single camera (see Section 3). Our companion paper \cite{threeViews} uses the techniques developed in this article to correct the classification of \cite{HK} for three cameras. We plan to use these new techniques to classify critical configurations for any number of views, as well as use them in other, more complicated scenarios (e.g., cameras observing lines \cite{felixLineMVI, BunchananCriticalLines}, or rolling-shutter cameras \cite{rollingShutterDegeneracies} in future work. 
%
%The main result of this paper is the classification of the critical configurations for two views in \cref{thr:critical_conf_for_two_views}.

\section{Background}
\label{sec:background}
As this paper deals with a very similar topic as \cite{twoViews, threeViews}, this section is largely the same as Section 2 in those works. We refer the reader to \cite{cox1994ideals} for the basics on ideals and varieties and to \cite{hartleyzisserman} for the basics on algebraic vision and multi-view geometry.

Let $\mathbb{C}$ denote the complex numbers, and let $\p{n}$ denote the projective space over the vector space $\mathbb{C}^{n+1}$. Projection from a point $p\in\p3$ is a linear map
\begin{equation*}
P\from \p3\dashrightarrow\p2.
\end{equation*}
We refer to such a projection and its projection center $p$ as a \emph{camera} and its \emph{camera center} in $\p3$\footnote{Following established terminology, we use the words \emph{camera} and \emph{view} interchangeably}. Following this theme, we refer to points in $\p3$ as \emph{space points} and points in $\p2$ as \emph{image points}. Similarly, $\p2$ will be referred to as an \emph{image}.

Once coordinates are chosen in $\p3$ and $\p2$, a camera can be represented by a $3\times4$ matrix of full rank called the \emph{camera matrix}. The camera center is then given as the nullspace (or kernel) of the matrix. For the most part, we make no distinction between a camera and its camera matrix, referring to both simply as cameras. We use the \emph{real projective pinhole camera model}, meaning that we require a camera matrix to be of full rank and to have only real entries.

\begin{remark}
\label{rem:basis_chosen}
Throughout the paper, whenever we talk about cameras it is to be understood that a choice of coordinates has been made, both on the images and 3-space, so that each camera comes with a camera matrix.
\end{remark}

\begin{remark}
Throughout the paper, when dealing with multiple cameras, we always assume that all camera centers are distinct.
\end{remark}

\begin{definition}
Given an $n$-tuple of cameras $\textbf{P}=(P_1,\ldots,P_n)$, with camera centers $p_1,\ldots,p_n$. We define the \emph{joint camera map} to be the map
\begin{align*}
\phi_\textbf{P}=P_1\times\cdots\times P_n\from\p3&\dashrightarrow(\p2)^{n},\\
x&\mapsto (P_1(x),\ldots ,P_n(x)).
\end{align*}
\end{definition}

\begin{lemma}
\label{lem:joint-camera_map_is_isomorphism}
Let $(P_1,\ldots,P_n)$ be a tuple of $n\geq2$ cameras. If the camera centers do not all lie on a line, the joint camera map $\phi_\textbf{P}$ is injective. If the camera centers all lie on a line, the joint camera map sends this line to a point and is injective everywhere else.
\end{lemma}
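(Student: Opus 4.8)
The plan is to reduce the statement to the elementary geometry of a single linear projection and then take the conjunction of the resulting conditions over all $n$ cameras. Throughout I keep in mind that the common domain of $\phi_\textbf{P}$ is $\p3\setminus\{p_1,\dots,p_n\}$, since $P_i$ is the projection from $p_i$ and hence defined precisely away from $p_i$.

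First I would record the fiber structure of one camera. Write $P_i$ as a rank-$3$ matrix with $\ker P_i=\langle p_i\rangle$. For $x,y$ in the domain, the equality $P_i(x)=P_i(y)$ in $\p2$ means $P_i x=c\,P_i y$ as vectors for some $c\in\mathbb{C}\setminus\{0\}$, hence $x-cy\in\ker P_i=\langle p_i\rangle$, so $x=cy+d\,p_i$ for some scalar $d$. If $d=0$ then $x=y$ in $\p3$; otherwise $p_i=d^{-1}(x-cy)$ lies on the line spanned by $x$ and $y$, which is a genuine line since $x\neq y$. Conclusion: for distinct $x,y$ in the domain, $P_i(x)=P_i(y)$ if and only if $p_i$ lies on the line $\overline{xy}$.

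Since $\phi_\textbf{P}(x)=\phi_\textbf{P}(y)$ is by definition the conjunction of the equalities $P_i(x)=P_i(y)$ over all $i$, this gives: for distinct $x,y$ in the domain, $\phi_\textbf{P}(x)=\phi_\textbf{P}(y)$ if and only if every camera center $p_i$ lies on $\overline{xy}$. The two cases of the lemma then follow quickly. If the $p_i$ do not all lie on a line, no line contains all of them, so no such pair $(x,y)$ exists and $\phi_\textbf{P}$ is injective. If the $p_i$ all lie on a line $\ell$, then on one hand $p_i\in\ell$ forces $P_i$ to be constant on $\ell\setminus\{p_i\}$, so $\phi_\textbf{P}$ is constant on $\ell$ minus the finitely many camera centers, i.e.\ $\ell$ is sent to a point; on the other hand, if $x\neq y$ lie in the domain with $\phi_\textbf{P}(x)=\phi_\textbf{P}(y)$, then $\overline{xy}$ contains all $p_i$, and because $n\geq2$ and the camera centers are distinct they span $\ell$, forcing $\overline{xy}=\ell$ and hence $x,y\in\ell$. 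Thus $\phi_\textbf{P}$ is injective on $\p3\setminus\ell$ and the image of $\ell$ is disjoint from the image of its complement.

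I do not expect a serious obstacle here; the only points needing care are the bookkeeping of the rational map's domain (excluding all camera centers everywhere), the degenerate subcases $d=0$ and $x=y$ in the linear-algebra step, and the use of the standing assumption that the camera centers are distinct, which together with $n\geq2$ is exactly what lets two of them span the line $\ell$ in the second case.
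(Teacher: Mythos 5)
Your proof is correct and takes essentially the same route as the paper's: the paper describes the fiber of $\phi_\textbf{P}$ over an image point as the intersection of the lines $l_{P_i}=\overline{P_i^{-1}(x_i)}$ through the camera centers, which is a single point unless all these lines coincide in a line through every $p_i$, and your pairwise criterion ($\phi_\textbf{P}(x)=\phi_\textbf{P}(y)$ for distinct $x,y$ iff every $p_i$ lies on $\overline{xy}$) is the same observation with the linear algebra spelled out. The extra care you take with the domain and the degenerate cases is sound but does not change the substance of the argument.
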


\begin{proof}
Let $\textbf{x}=(x_1,\ldots,x_n)\in(\p2)^{n}$ be a point in the image of $\phi_\textbf{P}$. The preimage of $\textbf{x}$ is the intersection $\bigcap\limits_{i=1}^{n}l_{P_i}$ where $l_{P_i}=\overline{P_i^{-1}(x_i)}$ is a line through $p_i$. As such, they intersect in a single point unless they are all equal to a line passing through all camera centers. Hence the joint camera map is injective outside of a potential line spanned by the camera centers.
\end{proof}

\begin{definition}
We call the closure of the image of the joint camera map $\phi_\textbf{P}$ the \emph{multi-view variety} of $P_1,\ldots,P_n$, and denote it by $\mathcal{M}_P$. The set of all multi-homogeneous polynomials vanishing on $\mathcal{M}_P$ is an ideal that we denote as the \emph{multi-view ideal}.
\end{definition}

\textbf{Notation.} While it follows from \cref{lem:joint-camera_map_is_isomorphism} that the multi-view variety is always irreducible, we use the term \emph{variety} to also include reducible algebraic sets.

\begin{theorem}[{\cite[Theorem 3.7]{Tomas}}]
\label{thr:ideal_of_multiview_variety}
The ideal of the multi-view variety is generated only by bilinear and trilinear polynomials. In particular, it is generated by the determinant of
\begin{align*}
\begin{bmatrix}
P_i'&\textnormal{\textbf{x}}&\textnormal{\textbf{0}}\\
P_j'&\textnormal{\textbf{0}}&\textnormal{\textbf{y}}
\end{bmatrix},
\end{align*}
for each pair of cameras, and the $7\times7$ minors of 
\begin{align*}
\begin{bmatrix}
P_i'&\textnormal{\textbf{x}}&\textnormal{\textbf{0}}&\textnormal{\textbf{0}}\\
P_j'&\textnormal{\textbf{0}}&\textnormal{\textbf{y}}&\textnormal{\textbf{0}}\\
P_k'&\textnormal{\textbf{0}}&\textnormal{\textbf{0}}&\textnormal{\textbf{z}}
\end{bmatrix},
\end{align*}
for each triple. Here \textnormal{\textbf{x}}, \textnormal{\textbf{y}} and \textnormal{\textbf{z}} are the $3\times1$ vectors with variables in the $i$-th, $j$-th and $k$-th image respectively.
\end{theorem}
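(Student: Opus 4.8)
The plan is to write $J$ for the ideal generated by the bilinear forms of pairs and the trilinear ($7\times7$) minors of triples listed in the statement, $I$ for the full multi-view ideal, and to establish $J=I$ by proving both inclusions. The inclusion $J\subseteq I$ is the routine one. If $\mathbf{x}=\phi_{\textbf{P}}(X)$ lies in the image of the joint camera map, choose scalars $\alpha_i$ with $P_iX=\alpha_i x_i$; then $(X,-\alpha_i,-\alpha_j)^{\top}$ is a nonzero vector in the kernel of the $6\times6$ matrix of the statement, so its determinant vanishes, and $(X,-\alpha_i,-\alpha_j,-\alpha_k)^{\top}$ lies in the kernel of the $9\times7$ triple matrix, so all of its $7\times7$ minors vanish. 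These forms therefore vanish on the dense image of $\phi_{\textbf{P}}$, hence on all of $\mathcal{M}_P$, giving $J\subseteq I$.

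For the opposite inclusion I would first prove the set-theoretic statement $V(J)=\mathcal{M}_P$ and then upgrade it. Fix $\mathbf{x}\in V(J)$ and consider the back-projected lines $\ell_i=\overline{P_i^{-1}(x_i)}$, each passing through the camera center $p_i$. The geometric dictionary is: the bilinear form of a pair $(i,j)$ vanishes at $\mathbf{x}$ iff the $6\times6$ matrix is singular, i.e.\ iff $\ell_i\cap\ell_j\neq\emptyset$; and all $7\times7$ minors of the triple matrix vanish iff that $9\times7$ matrix has rank at most $6$, which --- because the $x_i$ are nonzero, so any nontrivial kernel vector has nonzero $\p3$-part and produces a point lying on $\ell_i$, $\ell_j$ and $\ell_k$ simultaneously --- is equivalent to $\ell_i\cap\ell_j\cap\ell_k\neq\emptyset$. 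If all the $\ell_i$ coincide, their common line is a preimage of $\mathbf{x}$ (the camera centers then lie on it and \cref{lem:joint-camera_map_is_isomorphism} applies), so $\mathbf{x}\in\mathcal{M}_P$. Otherwise pick $a\neq b$ with $\ell_a\neq\ell_b$; the bilinear relation for $(a,b)$ forces $\ell_a\cap\ell_b$ to be a single point $q$, and for every remaining index $k$ the trilinear relation for $(a,b,k)$ forces $\ell_a\cap\ell_b\cap\ell_k\neq\emptyset$, hence $q\in\ell_k$. So $q\in\bigcap_i\ell_i$ and $\mathbf{x}=\phi_{\textbf{P}}(q)\in\mathcal{M}_P$ (if $q$ happens to be a camera center, one instead realizes $\mathbf{x}$ as a limit of $\phi_{\textbf{P}}$ along the corresponding back-projected line, still inside the closure $\mathcal{M}_P$). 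In particular this already shows, at the level of the variety, that the quadrifocal and higher constraints are redundant.

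It remains --- and this is the substantive part --- to upgrade $V(J)=\mathcal{M}_P$ to the ideal-level statement $J=I$. Since $\mathcal{M}_P$ is irreducible by \cref{lem:joint-camera_map_is_isomorphism}, $I$ is prime and the set-theoretic equality already gives $\sqrt{J}=I$; what is left is to show that $J$ is radical and saturated, i.e.\ has no embedded or superfluous primary components. The route I would pursue is a Gröbner-basis argument in the multigraded coordinate ring of $(\p2)^{n}$: exhibit a term order for which the listed bilinear and trilinear forms are a Gröbner basis of $J$ and for which the resulting initial ideal is squarefree. A squarefree initial ideal forces $J$ to be radical, and since $V(J)$ is the irreducible variety $\mathcal{M}_P$ this yields $J=I$ (and along the way one recovers the Cohen--Macaulayness of $\mathcal{M}_P$). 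The main obstacle is exactly this Gröbner-basis verification: showing that every $S$-polynomial among the generators reduces to zero, equivalently describing the simplicial complex supporting the initial ideal and checking that it is shellable of the expected dimension so that the multigraded Hilbert functions of $S/J$ and $S/I$ agree. An alternative organizing principle is induction on the number of cameras $n$: the base case $n=2$ is the classical fact that $I$ is principal, generated by the irreducible bilinear form $x_j^{\top}F_{ij}x_i$ (irreducible because the fundamental matrix $F_{ij}$ has rank $2$ when the centers are distinct), and the inductive step adds one camera, the point being to show that over the dense open locus where the preimage in $\p3$ is already determined by $x_1,\dots,x_{n-1}$ the new coordinate $x_n$ is cut out scheme-theoretically by the bilinear forms $F_{in}$ and the trilinear forms on triples $(i,j,n)$; making that step precise along the degeneration loci is where this approach gets technical.
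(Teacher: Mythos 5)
The paper itself gives no proof of this statement: it is imported from \cite[Theorem 3.7]{Tomas}, where the argument exhibits the bilinear and trilinear forms as a Gr\"obner basis of the multi-view ideal. So the comparison is between your attempt and that external proof. Your easy inclusion $J\subseteq I$ and your set-theoretic identity $V(J)=\mathcal{M}_P$ are correct and well argued: the kernel-vector computation matches what the paper does for two views in \cref{lem:fundamental_form}, and the back-projected-lines argument --- two distinct lines meeting in a single point $q$, each trilinear relation forcing $q$ onto the remaining lines, and the limit along $\ell_k$ when $q$ is a camera center --- is sound, including the observation that a nontrivial kernel vector of the $9\times7$ matrix has nonzero $\p3$-part because the $x_i$ are nonzero.

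The gap is exactly at the step you flag as substantive: passing from $\sqrt{J}=I$ to $J=I$. Everything before that point only shows the listed forms cut out $\mathcal{M}_P$ as a set, whereas the theorem asserts they generate its full ideal; the entire content of the cited result lives in that upgrade. Your two proposed routes (a term order with squarefree initial ideal, or induction on the number of cameras) are plausible --- the first is essentially the strategy of \cite{Tomas} --- but neither is carried out, and the Gr\"obner verification is precisely where the work lies. The difficulty is not cosmetic: the generator count is sensitive to the camera configuration (the paper's own \cref{lem:three_bilinear_and_10_trilinear} records that the collinear three-camera case behaves differently and is harder to prove), so any Hilbert-function or degeneration argument has to track those special positions. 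As written, the proposal proves the set-theoretic weakening of the theorem and defers its ideal-theoretic content.
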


\begin{definition}
\label{def:reconstruction}
Given a set of points $S\subset(\p2)^{n}$, a \emph{reconstruction} of $S$ is a configuration of cameras and points $(P_1,\ldots,P_n,X)$ such that $S=\phi_\textbf{P}(X)$ where $\phi_\textbf{P}$ is the joint camera map of the cameras $P_1,\ldots,P_n$. 
\end{definition}

\begin{remark}
By \cref{def:reconstruction} it is clear that for a set $S$ to have a reconstruction, it needs to lie in the image of some joint camera map. Following established conventions for the study of critical configurations, we slacken this requirement somewhat by considering a set to have a reconstruction as long as it lies in the \emph{closure} of the image of some joint camera map, in other words, in a multi-view variety. If $S$ lies in the closure, but not the image, at least one of the points $x\in X$ in the reconstruction will be a camera center.
\end{remark}

\begin{definition}
Given a configuration of cameras and points $(P_1,\ldots,P_n,X)$, we refer to $\phi_\textbf{P}(X)\subset(\p2)^{n}$ as the \emph{images} of $(P_1,\ldots,P_n,X)$.
\end{definition}

\begin{remark}
Every configuration of cameras and points is a reconstruction of its images, so every configuration (of cameras and points) is a reconstruction and vice versa.
\end{remark}

Given a set of image points $S\subset(\p2)^{n}$ as well as a reconstruction $(P_1,\ldots,P_n,X)$ of $S$, any scaling, rotation, translation, or more generally, any real projective transformation of $(P_1,\ldots,P_n,X)$ does not change the images, giving rise to a large family of reconstructions of $S$. However, we are not interested in differentiating between these reconstructions.

\begin{definition}
\label{def:equvalent_configurations}
Given a set of points $S\subset(\p2)^{n}$, let $(P_1,\ldots,P_n,X)$ and $(Q_1,\ldots,Q_n,Y)$ be two reconstructions of $S$. The two reconstructions are considered \emph{equivalent} if there exists an element $H\in\PGL(4)$, such that
\begin{align*}
&H(X)=Y,\\
&P_iH^{-1}=Q_i,\quad \forall i.
\end{align*}
\end{definition}

From now on, whenever we talk about a configuration of cameras and points, it is to be understood as unique up to action by $\PGL(4)$, and two configurations will be considered different only if they are nonequivalent. As such, we consider a reconstruction to be unique if it is unique up to action by $\PGL(4)$.

\begin{definition}
\label{def:conjugate_configuration/point}
Two configurations of cameras and points $(P_1,\ldots,P_n,X)$ and $(Q_1,\ldots,Q_n,Y)$ are called \emph{conjugate configurations} if they are nonequivalent reconstructions of the same set. That is, if they satisfy $\phi_\textbf{P}(X)=\phi_\textbf{Q}(Y)$, but not the conditions in \cref{def:equvalent_configurations}. Pairs of points $(x,y)\in X\times Y$ are called \emph{conjugate points} if $\phi_\textbf{P}(x)=\phi_\textbf{Q}(y)$.
\end{definition}

\begin{definition}
\label{def:critical_configuration}
A configuration of cameras and points $(P_1,\ldots,P_n,X)$ is said to be a \emph{critical configuration} if it has at least one conjugate configuration. A critical configuration $(P_1,\ldots,P_n,X)$ is said to be \emph{maximal} if there exists no critical configuration $(P_1,\ldots,P_n,X')$ such that $X\subsetneq X'$.
\end{definition}

Hence, a configuration is critical if and only if the images it produces do not have a unique reconstruction.

\begin{remark}
Various definitions of critical configurations exist. For instance, \cite{Krames1941} considers the cone with two cameras on the same line to be critical, while it fails to be critical by our definition. \cite{twoViews, threeViews} define it using the blow-up, which, when blowing back down, is equivalent to our definition. Our definition is the same as the one in \cite{HK}.
\end{remark}

\begin{definition}
\label{def:set_of_critical_points}
Let $\textbf{P}$ and $\textbf{Q}$ be two $n$-tuples of cameras. Define
\begin{align*}
I=\Set{(x,y)\in\p3\times\p3\mid\phi_\textbf{P}(x)=\phi_\textbf{Q}(y)}.
\end{align*}
The projection of $I$ to each coordinate gives us two varieties, $X$ and $Y$, where we denote $X$ as the \emph{set of critical points of $\textnormal{\textbf{P}}$ with respect to $\textnormal{\textbf{Q}}$}, and similarly for $Y$.
\end{definition}

This definition is motivated by the following fact:

\begin{proposition}
\label{prop:critical_points_give_critical_configuration}
Let $\textbf{P}$ and $\textbf{Q}$ be two $n$-tuples of cameras such that there is no $H\in\PGL(4)$ satisfying $P_iH^{-1}=Q_i$ for all $i$. Let $X$ be the critical points of $P$ with respect to $Q$ and conversely for $Y$. Then $(\textbf{P},X)$ is a critical configuration, with $(\textbf{Q},Y)$ as its conjugate. Furthermore, $(\textbf{P},X)$ is maximal with respect to $\textbf{Q}$ in the sense that if there exists a critical configuration $(\textbf{P},X')$ with $X\subsetneq X'$, then its conjugate consists of cameras different from $\textbf{Q}$.
\end{proposition}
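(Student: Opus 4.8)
The plan is to verify the three assertions in turn: that $\phi_{\textbf{P}}(X)=\phi_{\textbf{Q}}(Y)$, that $(\textbf{P},X)$ and $(\textbf{Q},Y)$ are nonequivalent, and that $(\textbf{P},X)$ is maximal with respect to $\textbf{Q}$. Throughout, images of configurations are read via closures, as in the remark following \cref{def:reconstruction}, so that points of $X$ or $Y$ sitting on a camera center are permitted; in each step I restrict to the dense open locus where the relevant joint camera map is a genuine morphism (i.e.\ away from the finitely many camera centers) and pass to closures at the end.

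First, for $\phi_{\textbf{P}}(X)=\phi_{\textbf{Q}}(Y)$: the set $I$ of \cref{def:set_of_critical_points} (understood, as usual, as the closure of the locus where $\phi_{\textbf{P}}$ and $\phi_{\textbf{Q}}$ are both defined and agree) is symmetric under interchanging $\textbf{P}$ and $\textbf{Q}$, and $X=\pi_1(I)$, $Y=\pi_2(I)$ for the two coordinate projections $\pi_1,\pi_2$ of $\p3\times\p3$; since these projections are closed maps, $X$ and $Y$ are honest closed subvarieties. For $x$ in the open locus of $X$ where $\phi_{\textbf{P}}$ is defined, choose $y$ with $(x,y)\in I$; then $\phi_{\textbf{P}}(x)=\phi_{\textbf{Q}}(y)\in\phi_{\textbf{Q}}(Y)$, and taking closures gives $\phi_{\textbf{P}}(X)\subseteq\phi_{\textbf{Q}}(Y)$. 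Interchanging $\textbf{P}$ and $\textbf{Q}$ gives the reverse inclusion, so $(\textbf{Q},Y)$ is a reconstruction of $\phi_{\textbf{P}}(X)$ in the sense of \cref{def:reconstruction}. Nonequivalence is then immediate from the hypothesis: an equivalence between $(\textbf{P},X)$ and $(\textbf{Q},Y)$ would, by \cref{def:equvalent_configurations}, furnish an $H\in\PGL(4)$ with $P_iH^{-1}=Q_i$ for all $i$. Hence the two configurations are conjugate in the sense of \cref{def:conjugate_configuration/point}, and $(\textbf{P},X)$ is critical by \cref{def:critical_configuration}.

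For the maximality clause, suppose for contradiction that $(\textbf{P},X')$ is a critical configuration with $X\subsetneq X'$ that has a conjugate of the form $(\textbf{Q},Y')$ (this is precisely the failure of ``cameras different from $\textbf{Q}$'', modulo the standing $\PGL(4)$-action). Being a conjugate, it satisfies $\phi_{\textbf{P}}(X')=\phi_{\textbf{Q}}(Y')$. Choose $x\in X'\setminus X$ at which $\phi_{\textbf{P}}$ is defined; then $\phi_{\textbf{P}}(x)\in\phi_{\textbf{Q}}(Y')$, so $\phi_{\textbf{P}}(x)=\phi_{\textbf{Q}}(y)$ for some $y\in Y'$, whence $(x,y)\in I$ and therefore $x\in\pi_1(I)=X$, contradicting $x\notin X$. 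So no critical configuration strictly enlarging $(\textbf{P},X)$ admits a conjugate with camera tuple $\textbf{Q}$, which is the asserted maximality.

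The mathematical content is light; I expect the only real care to go into the rational-map bookkeeping: making sure $X$ is the \emph{closed} image $\pi_1(I)$ rather than merely the set-theoretic projection of the locus of definition, tracking the points of $X'$, $Y'$ that map onto camera centers using the closure/blow-up convention already adopted, and fixing the reading of ``cameras different from $\textbf{Q}$'' as: $(\textbf{P},X')$ is not conjugate to any configuration whose camera tuple is $\textbf{Q}$. (If $X=\varnothing$ the first two assertions hold vacuously and the maximality argument is unaffected, so that degenerate case needs no separate treatment.)
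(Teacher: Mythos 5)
Your proof is correct and follows essentially the same route as the paper's: each point of $X$ has a conjugate point in $Y$ by the definition of $I$, inequivalence comes from the hypothesis that no $H\in\PGL(4)$ relates the two camera tuples, and maximality follows because a point outside $\pi_1(I)$ admits no conjugate point under $\textbf{Q}$. The additional bookkeeping you supply about closures and loci of definition of the rational maps is more careful than what the paper records, but it does not change the argument.
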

\begin{proof}
(2.12 in \cite{twoViews}): It follows from \cref{def:set_of_critical_points} that for each point $x\in X$, we have a conjugate point $y\in Y$. Hence the two configurations have the same images. Inequivalence follows from the fact that the cameras lie in different orbits under $\PGL(4)$, so the second point in \cref{def:equvalent_configurations} can not be satisfied. Hence they are both critical configurations, conjugate to one another. 

The (partial) maximality follows from the fact that if we add a point $x_0$ to $X$ that does not lie in the set of critical points, there is (by \cref{def:set_of_critical_points}) no point $y_0$ such that $\phi_\textbf{P}(x_0)=\phi_\textbf{Q}(y_0)$.
\end{proof}

\cref{sec:preliminary_two_views,sec:critical_configurations_for_three_views,sec:four_views} classify all maximal critical configurations for 2, 3, and 4+ views respectively. The reason we focus primarily on the maximal ones is that every critical configuration is contained in a maximal one and (when working with more than one camera) the converse is true as well, any subconfiguration of a critical configuration is itself critical.

We conclude this section with a final, useful property of critical configurations, namely that the only property of the cameras we need to consider when exploring critical configurations is the position of their camera centers (i.e. change of coordinates in the images does not affect criticality). 

\begin{proposition}[{\cite[Proposition 3.7]{HK}}]
\label{prop:only_camera_centers_matter}
Let $(P_1,\ldots,P_n)$ be $n$ cameras with centers $p_1,\ldots,p_n$, and let $(P_1,\ldots,P_n,X)$ be a critical configuration. \newline If $(\mathcal{P}_1,\ldots,\mathcal{P}_n)$ is a set of cameras sharing the same camera centers, the configuration $(\mathcal{P}_1,\ldots,\mathcal{P}_n,X)$ is critical as well.
\end{proposition}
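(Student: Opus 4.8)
The plan is to exploit the fact that a camera with a given center is determined, up to a projective change of coordinates in its own image, by that center. Concretely, if $P_i$ and $\mathcal{P}_i$ are two $3\times 4$ matrices of full rank with the same kernel $p_i$, then there exists $A_i\in\PGL(3)$ with $\mathcal{P}_i = A_i P_i$: indeed both maps $\p3\dashrightarrow\p2$ are projections from the same point $p_i$, hence they differ only by an automorphism of the target $\p2$. I would state this as a first, short step.

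Next I would translate criticality into the language of joint camera maps. By \cref{def:critical_configuration} and \cref{def:conjugate_configuration/point}, $(P_1,\ldots,P_n,X)$ being critical means there is a conjugate configuration $(Q_1,\ldots,Q_n,Y)$ with $\phi_\textbf{P}(X)=\phi_\textbf{Q}(Y)$ and with no $H\in\PGL(4)$ realizing the equivalence of \cref{def:equvalent_configurations}. The key observation is that the coordinate changes $A_i$ assemble into a single automorphism $A = A_1\times\cdots\times A_n$ of $(\p2)^n$, and by construction $\phi_{\boldsymbol{\mathcal{P}}} = A\circ\phi_\textbf{P}$. Applying the same $A_i$ to the conjugate side, set $\mathcal{Q}_i = A_i Q_i$; then $\mathcal{Q}_i$ has the same center as $Q_i$, and $\phi_{\boldsymbol{\mathcal{Q}}} = A\circ\phi_\textbf{Q}$. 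Therefore
\begin{equation*}
\phi_{\boldsymbol{\mathcal{P}}}(X) = A\bigl(\phi_\textbf{P}(X)\bigr) = A\bigl(\phi_\textbf{Q}(Y)\bigr) = \phi_{\boldsymbol{\mathcal{Q}}}(Y),
\end{equation*}
so $(\mathcal{Q}_1,\ldots,\mathcal{Q}_n,Y)$ reconstructs the same images as $(\mathcal{P}_1,\ldots,\mathcal{P}_n,X)$.

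It remains to check that this new pair is genuinely a pair of conjugate configurations, i.e. that the two reconstructions are not equivalent in the sense of \cref{def:equvalent_configurations}. Here I would argue by contradiction: suppose $H\in\PGL(4)$ satisfies $H(X)=Y$ and $\mathcal{P}_i H^{-1} = \mathcal{Q}_i$ for all $i$. Substituting $\mathcal{P}_i = A_i P_i$ and $\mathcal{Q}_i = A_i Q_i$ gives $A_i P_i H^{-1} = A_i Q_i$, and cancelling the invertible $A_i$ on the left yields $P_i H^{-1} = Q_i$ for all $i$, together with $H(X)=Y$ — precisely the statement that $(P_1,\ldots,P_n,X)$ and $(Q_1,\ldots,Q_n,Y)$ are equivalent, contradicting the assumption that they are conjugate. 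Hence $(\mathcal{P}_1,\ldots,\mathcal{P}_n,X)$ has a conjugate and is critical.

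The only genuinely delicate point is the first step — that sharing a camera center forces $\mathcal{P}_i = A_i P_i$ for some $A_i\in\PGL(3)$ — and the mild subtlety that one must allow $X$ to contain camera centers (so that $\phi_\textbf{P}$ is only a rational map); but since $A$ is a genuine isomorphism of $(\p2)^n$ and the identity $\phi_{\boldsymbol{\mathcal{P}}}=A\circ\phi_\textbf{P}$ holds on the common domain of definition and hence on closures, the images and the set $X$ are unaffected, and the argument goes through on the level of the closures of images used in the definition of reconstruction. Everything else is formal bookkeeping with the two defining equations of \cref{def:equvalent_configurations}.
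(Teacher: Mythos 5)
Your proposal is correct and follows essentially the same route as the paper: decompose $\mathcal{P}_i = A_iP_i$ with $A_i\in\PGL(3)$, apply the same image-coordinate changes to the conjugate cameras $Q_i$, and observe that the resulting pair still produces identical images. The only difference is that you spell out the inequivalence check (cancelling the $A_i$ to derive a contradiction), which the paper leaves implicit; that is a welcome but not substantively different addition.
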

\begin{proof}
Since $P_i$ and $\mathcal{P}_i$ share the same camera center and the camera center determines the camera uniquely up to a choice of coordinates, there exists some $H_i\in\PGL(3)$ such that $\mathcal{P}_i=H_iP_i$. Let $(Q_1,\ldots,Q_n,Y)$ be a conjugate to $(P_1,\ldots,P_n,X)$. Then $(H_1Q_1,\ldots,H_nQ_n,Y)$ is a conjugate to $(H_1P_1,\ldots,H_nP_n,X)=(\mathcal{P}_1,\ldots,\mathcal{P}_n,X)$, so this configuration is critical as well.
\end{proof}

\section{Approach}
\label{sec:approach}
Consider a critical configuration $(P_1,\ldots,P_n,X)$. Since it is critical, there exists a conjugate configuration $(Q_1,\ldots,Q_n,Y)$ giving the same images in $(\p2)^{n}$. The two sets of cameras define two joint-camera maps $\phi_\textbf{P}$ and $\phi_\textbf{Q}$.
\begin{center}
\begin{tikzcd}[ampersand replacement=\&, column sep=small]
\p3 \arrow[rd, "\phi_\textbf{P}"] \& \& \p3 \arrow[ld, "\phi_\textbf{Q}"'] \\
 \& (\p2)^{n} \& 
\end{tikzcd}
\end{center}

The image of $X$ lies in the multi-view variety $\mathcal{M}_P\subseteq(\p2)^{n}$ and the image of $Y$ lies in $\mathcal{M}_Q$. As such, the two sets of points $X$ and $Y$ lie in such a way that they both map (with their respective maps) into the intersection of the two multi-view varieties $\mathcal{M}_P\cap\mathcal{M}_Q$.

\begin{remark}
If two sets of cameras the second equation in \cref{def:equvalent_configurations} their multi-view varieties are equal. Hence the multi-view varieties being different (which is the case we study throughout the paper) is enough to ensure that the two configurations are inequivalent.
\end{remark}

The two multi-view varieties $\mathcal{M}_P$ and $\mathcal{M}_Q$ are irreducible 3-folds, so their intersection is either a surface, a curve, or a set of points. Taking the preimage under $\phi_{\textbf{P}}$, $\mathcal{M}_P\cap\mathcal{M}_Q$ pulls back to a surface, curve, or point set in $\p3$, which is exactly the set of critical points of $\textbf{P}$ with respect to $\textbf{Q}$. As such, \emph{classifying all maximal critical configurations can be done by classifying all possible intersections between two multi-view varieties.} The configuration of points in $\p3$ can be found by taking the pullback of $\mathcal{M}_P\cap\mathcal{M}_Q$ under $\phi_{\textbf{P}}$.

\begin{proposition}
\label{cor:critical_configurations_lie_on_quadrics_and_cubics}
Let $(P_1,\ldots,P_n,X)$ be a maximal critical configuration. Then $X$ is the intersection of quadric surfaces $S^{ij}$ containing $p_i,p_j$, one such surface for each pair of cameras, and cubic surfaces $S^{ijk}$ containing $p_i,p_j,p_k$, one for each triple of cameras.
\end{proposition}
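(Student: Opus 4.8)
The plan is to combine the description of $X$ from \cref{sec:approach} with \cref{thr:ideal_of_multiview_variety}. Since $(P_1,\dots,P_n,X)$ is critical it has a conjugate $(Q_1,\dots,Q_n,Y)$; let $X_0$ be the set of critical points of $\textbf{P}$ with respect to $\textbf{Q}$. By \cref{prop:critical_points_give_critical_configuration} the pair $(\textbf{P},X_0)$ is critical and $X\subseteq X_0$, so maximality of $(\textbf{P},X)$ forces $X=X_0$. As explained in \cref{sec:approach}, $X_0=\phi_{\textbf{P}}^{-1}(\mathcal{M}_P\cap\mathcal{M}_Q)$. Writing $I_Q$ for the multi-view ideal of $\textbf{Q}$ and using that the image of $\phi_{\textbf{P}}$ lies in $\mathcal{M}_P$, the generators of the multi-view ideal of $\textbf{P}$ pull back to zero on $\p3$, so $X=\phi_{\textbf{P}}^{-1}(\mathcal{M}_Q)$ is cut out exactly by the polynomials $g(P_1z,\dots,P_nz)$ as $g$ ranges over a generating set of $I_Q$.

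By \cref{thr:ideal_of_multiview_variety}, $I_Q$ is generated by one bilinear form $f_{ij}$ for each pair $(i,j)$ and by the $7\times7$ minors of the stacked matrix for each triple $(i,j,k)$, these minors being trilinear in the three relevant image coordinates. Substituting the linear expressions $P_iz$ into a form that is linear in the $i$-th image coordinate contributes degree at most one in $z$, so $S^{ij}:=f_{ij}(P_iz,P_jz)$ is homogeneous of degree at most $2$ (a quadratic form in $z$), and each pulled-back minor of the triple $(i,j,k)$ is homogeneous of degree at most $3$. For the incidences, evaluate at $z=p_i$: since $P_ip_i=0$ and a form linear in the $i$-th image coordinate vanishes when that coordinate is $0$, we get that $S^{ij}$ vanishes at $p_i$ and, symmetrically, at $p_j$, while every cubic coming from the triple $(i,j,k)$ vanishes at $p_i$, $p_j$ and $p_k$.

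This already exhibits $X$ as a common zero locus of quadrics through pairs of camera centers and cubics through triples of camera centers, so the only point still needing care -- and the crux of the argument -- is to replace, for each triple, the whole collection of pulled-back $7\times7$ minors by a single cubic $S^{ijk}$ without changing the locus they cut out together with the quadrics $S^{ij},S^{ik},S^{jk}$. I would do this geometrically: for generic $z$ the conjugate point $w$ is already determined from a single pair, namely as the intersection of the lines $Q_i^{-1}(P_iz)$ and $Q_j^{-1}(P_jz)$, an intersection that exists precisely when $z\in S^{ij}$; then $z$ lies in the three-view critical set if and only if this same $w$ also lies on $Q_k^{-1}(P_kz)$, and expressing that single coincidence as a polynomial in $z$ and checking its degree along $S^{ij}$ identifies it with the restriction of one cubic, which one takes as $S^{ijk}$. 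Equivalently, one can argue that modulo $S^{ij},S^{ik},S^{jk}$ the ideal generated by the pulled-back minors of the triple is principal, generated in degree $3$. The degenerate cases -- where some $S^{ij}$ or $S^{ijk}$ fails to be a genuine quadric or cubic, e.g.\ becomes identically zero when the corresponding parts of the two multi-view varieties coincide -- are harmless once degenerate quadrics and cubics are allowed.
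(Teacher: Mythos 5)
Your argument is essentially the paper's own proof: identify $X$ with $\phi_{\textbf{P}}^{-1}(\mathcal{M}_P\cap\mathcal{M}_Q)=\phi_{\textbf{P}}^{-1}(\mathcal{M}_Q)$ and pull back the bilinear and trilinear generators of the multi-view ideal of $\textbf{Q}$ from \cref{thr:ideal_of_multiview_variety} along the product of linear maps, obtaining quadrics through pairs and cubics through triples of camera centers. The one point where you go beyond the paper --- collapsing the several $7\times7$ minors of a triple into a single cubic $S^{ijk}$ --- is indeed elided in the paper's proof and is essentially the content of \cref{lem:three_bilinear_and_10_trilinear} (where a collinear conjugate triple in fact requires three trilinear generators, though this does not change the set cut out together with the quadrics), so your sketch identifies a real but minor gap that the paper only settles later.
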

\begin{proof}
Since $(P_1,\ldots,P_n,X)$ is a maximal critical configuration, $X$ is the set of critical points of $P$ with respect to some other set of cameras $Q$ (from the conjugate configuration). Then $X$ is the pullback of  $\mathcal{M}_P\cap\mathcal{M}_Q$ under $\phi_{\textbf{P}}$. Since $\phi_{\textbf{P}}$ is the product of linear maps, it pulls a polynomial of multidegree $(d_1,\ldots,d_n)$ in $(\p2)^{n}$ back to a polynomial of degree $d_1+\cdots+d_n$ with multiplicity $d_i$ in the camera center $p_i$ in $\p3$. By \cref{thr:ideal_of_multiview_variety}, the ideal of the multi-view variety $\mathcal{M}_Q$ is generated only by bilinear and trilinear polynomials, one for each pair/triple of cameras. The pullback of $\mathcal{M}_P$ is all of $\p3$, the pullback of each bilinear polynomial in the ideal of $\mathcal{M}_Q$ is a quadric surface $S^{ij}$ and the pullback of each trilinear polynomial is a cubic surface. Thus $X$ is the intersection of such surfaces.
\end{proof}

\section{Critical configurations for one view}
\label{sec:one_view_case}
Reconstruction of a 3D scene from the image of one projective camera is generally considered impossible, so most papers start with the two-view case. Still, for the sake of completeness, we give a summary of the critical configurations for one camera. 

Let $P$ be a camera, and let $p$ be its camera center, we then have the joint camera map:
\begin{align*}
\phi_P\from\p3\dashrightarrow\p2.
\end{align*}
For any point $x\in\p2$, the fiber over $x$ is a line through $p$, so no point can be uniquely recovered. From this, one might assume that every configuration with one camera is critical. However, this is only the case if our configuration consists of sufficiently many points.

\begin{theorem}
A configuration of one point and one camera is never critical. A configuration of one camera and $n>1$ points is critical if and only if the camera center along with the $n$ points span a space of dimension less than $n$.
\end{theorem}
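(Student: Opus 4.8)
The plan is to reduce the whole statement to a single numerical invariant of the configuration, namely $d:=\dim\operatorname{span}(p,x_1,\dots,x_n)$, the projective dimension of the span of the camera centre $p$ and the $n$ points, and to prove that $(P,X)$ is critical exactly when $d<n$. Since a single point $x_1$ (necessarily distinct from $p$) always gives $d=1=n$, and $n+1$ points in $\p{3}$ always give $d\le 3<n$ once $n\ge 4$, this single statement contains both parts of the theorem. Two easy facts are used throughout. First, $\PGL(4)$ acts transitively on cameras — every full-rank $3\times 4$ matrix has the form $PH^{-1}$ for some $H\in\PGL(4)$ — so any conjugate may be replaced by an equivalent reconstruction that uses the camera $P$ itself. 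Second, the group $\{H\in\PGL(4):PH^{-1}=P\}$ consists precisely of the perspectivities centred at $p$, represented by $H^{-1}=I+pu^{\top}$ with $1+u^{\top}p\ne 0$; these act transitively on each line through $p$ with the point $p$ removed (which already settles the one-point case).

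For the non-critical direction, suppose $d=n$, so $p,x_1,\dots,x_n$ are linearly independent vectors (hence $n\le 3$) and in particular $p\notin\operatorname{span}(x_1,\dots,x_n)$. Let $(Q,Y)$ be any reconstruction of $\phi_P(X)$, with centre $q$. Since the images $\phi_P(x_i)$ span a $\p{n-1}$, so do the $\phi_Q(y_i)$, which forces $q,y_1,\dots,y_n$ to be projectively independent as well (in particular $y_i\ne q$). I would then solve $PH^{-1}=Q$, equivalently $QH=P$, for $H$: the solutions form a four-parameter family, every member of which automatically maps $p$ to $q$ (because $QHp=Pp=0$), and the residual freedom can be used to impose $H(x_i)=y_i$ for all $i$ — a consistent linear system, since the $x_i$ are independent — while keeping $H$ invertible, which is possible precisely because $p$ is independent of the $x_i$. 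Thus $(Q,Y)$ is equivalent to $(P,X)$, so $(P,X)$ has no conjugate.

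For the critical direction, suppose $d<n$, so there is a nontrivial linear relation $\sum_i\gamma_i x_i=-\delta\,p$ with $\gamma\ne 0$. It suffices to exhibit one conjugate using the camera $P$; the points $y_i$ of such a conjugate lie on the fibres $\phi_P^{-1}(\phi_P(x_i))=\langle p,x_i\rangle$, so write $y_i=x_i+\mu_i p$. Then $(P,Y)$ is equivalent to $(P,X)$ iff some perspectivity $H^{-1}=I+pu^{\top}$ centred at $p$ carries each $x_i$ to $y_i$, i.e.\ iff the linear system $u^{\top}(x_i+\mu_i p)=-\mu_i$ $(i=1,\dots,n)$ has a solution with $1+u^{\top}p\ne 0$. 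Forming the $\gamma$-linear combination of these equations and substituting the relation collapses them to one scalar identity between $u^{\top}p$ and $\sum_i\gamma_i\mu_i$; choosing $\mu\ne 0$ with $\sum_i\gamma_i\mu_i=\delta$ (if $\delta\ne 0$) or with $\sum_i\gamma_i\mu_i\ne 0$ (if $\delta=0$) makes this identity either outright inconsistent or forces $u^{\top}p=-1$. Either way no admissible solution exists, so $(P,Y_\mu)$ is a genuine conjugate and $(P,X)$ is critical.

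The step I expect to be the main obstacle is exactly the constraint $1+u^{\top}p\ne 0$: solutions of the linear systems with $1+u^{\top}p=0$ correspond to non-invertible $H$ and must be discarded, and it is this exclusion — not a naive dimension count — that makes the degenerate configurations critical, so some care is needed to verify that the relevant systems really do fail to have admissible solutions. A minor technical point is the boundary case where some $y_i$ equals the camera centre $p$; this is permitted by the convention (see the remark after \cref{def:reconstruction}) that reconstructions may lie in the closure of the multi-view variety, and such limiting reconstructions do not affect the criterion.
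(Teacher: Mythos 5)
Your proof is correct, and it splits naturally into a half that mirrors the paper and a half that does not. For the non-critical direction ($d=n$, hence $n\le 3$) the paper simply invokes the fact that $\PGL(4)$ acts transitively on configurations of at most four points in general position; you flesh this out by solving $QH=P$, observing that the solutions form the four-parameter family $H+qu^{\top}$, using the linear independence of the $x_i$ to impose $H(x_i)=y_i$, and using the independence of $p$ from the $x_i$ to keep $\det H\neq 0$. That is a more careful rendering of the same idea (the paper leaves implicit both that any reconstruction of such an image set is again in general position and that the residual freedom in $H$ suffices to match the cameras as well as the points). The critical direction is where you genuinely diverge: the paper produces two reconstructions distinguished by a discrete projective invariant --- the points span an $m$-plane avoiding the camera centre in one and an $(m+1)$-plane containing it in the other --- whereas you keep the camera $P$ fixed, perturb each point along its fibre $\overline{px_i}$ by $y_i=x_i+\mu_i p$, and show that the linear system $u^{\top}(x_i+\mu_i p)=-\mu_i$ for a perspectivity $I+pu^{\top}$ collapses, via the dependence relation $\sum_i\gamma_i x_i=-\delta p$, to an equation that is either inconsistent (when $\delta\neq0$ and $\sum_i\gamma_i\mu_i=\delta$) or satisfiable only with $1+u^{\top}p=0$, i.e.\ only by singular matrices (when $\delta=0$). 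Your route yields a continuous family of conjugates and isolates exactly where degeneracy enters, at the cost of the invertibility bookkeeping; the paper's route is shorter and makes inequivalence visible from a $\PGL(4)$-invariant. Both arguments are sound.
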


\begin{proof}
For the first part, note that up to a projective transformation, there exists only one configuration of one point and one camera. In other words, any configuration of one point and one camera can be taken to any other such configuration by simply changing coordinates. By \cref{def:equvalent_configurations} this makes them equivalent, which means that only one reconstruction exists. 

The same turns out to be the case if the configuration is such that the camera center along with the $n>1$ points span a space of dimension $n$. If they do then the points and camera center lie in general position. However, for four or fewer points in $\p3$ (fewer than three space points + one camera center), there exists only one configuration (up to action with $\PGL(4)$) where all points are in general position. Hence any configuration where the camera center along with the $n$ points span a space of dimension $n$ can not be critical. This can only happen for $n\leq3$ since $\p3$ itself is of dimension 3.

Now it only remains to show that a configuration is critical if the camera center along with the $n>1$ points span a space of dimension less than $n$. Indeed, if the points along with the camera center span a space of dimension less than $n$, the image points span a space of dimension $m<n-1$. Then there are at least two nonequivalent reconstructions; one reconstruction where the points span a space of dimension $m$ not containing the camera center and one where they span a space of dimension $m+1$ which contains the camera center. For instance, if one has three image points lying on a line, there are two reconstructions: one where the three space points span a plane containing the camera center, and one where the three space points lie on a line.
\end{proof}

\section{Critical configurations for two views}
\label{sec:preliminary_two_views}

\subsection{The fundamental matrix}
We start the study of the case of two cameras $P_1$ and $P_2$ by understanding their multi-view variety. The two cameras define the joint camera map:
\begin{align*}
\phi_P\from \p3\dashrightarrow\pxp.
\end{align*}

By \cref{lem:joint-camera_map_is_isomorphism}, the multi-view variety is an irreducible singular $3$-fold in $(\p2)^{n}$. In particular, for two views, it is a 3-fold in $\pxp$, so it is described by a single bilinear polynomial. This polynomial can be represented\footnote{For every bilinear polynomial $f($\textbf{x},\textbf{y}$)$ there exists a matrix $F$ such that $f($\textbf{x},\textbf{y}$)$=$\textnormal{\textbf{x}}^T$F\textbf{y}.} by a $3\times3$ matrix called the \emph{fundamental matrix} of $P_1$, $P_2$, which we denote $F_P$. See \cite[Section~9.2]{hartleyzisserman} for a geometric construction of the fundamental matrix. We use the construction in \cite{Tomas} where the fundamental matrix (or rather, its polynomial representation) is given as the determinant of a $6\times6$ matrix.

\begin{definition}
\label{def:epipole}
The \emph{epipoles} $e_{P_i}^{j}$ are the image points we get by mapping the $j$-th camera center to the $i$-th image
\begin{align*}
e_{P_i}^{j}=P_i(p_j).
\end{align*}
\end{definition}

\begin{proposition}[The fundamental matrix]{\cite[Sections 9.2 and 17.1]{hartleyzisserman}}
\label{lem:fundamental_form}
For two cameras $P_1,P_2$, the multi-view variety $\mathcal{M}_P\subset\pxp$ is the vanishing locus of a single, bilinear, rank 2\footnote{The rank of a bilinear polynomial is the rank of its matrix representation.} polynomial, given as follows:
\begin{align*}
\textnormal{\textbf{x}}^{T}F_P\textnormal{\textbf{y}}\coloneq\det\begin{bmatrix}
P_i&\textnormal{\textbf{x}}&\textnormal{\textbf{0}}\\
P_j&\textnormal{\textbf{0}}&\textnormal{\textbf{y}}
\end{bmatrix},
\end{align*}
where \textnormal{\textbf{x}} and \textnormal{\textbf{y}} are the variables in the first and second image respectively. The left and right nullspaces of $F_P$ are the two epipoles $e_{P_1}^{2}$ and $e_{P_2}^{1}$ respectively
\end{proposition}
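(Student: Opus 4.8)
The plan is to establish the three claims in \cref{lem:fundamental_form} separately: that the displayed determinant is a bilinear polynomial whose vanishing locus is $\mathcal{M}_P$, that its matrix $F_P$ has rank exactly $2$, and that the left and right kernels are the epipoles. First I would expand the $6\times 6$ determinant along the last two columns (which contain only the entries of $\textnormal{\textbf{x}}$ and $\textnormal{\textbf{y}}$ together with zeros). Since the column built from $\textnormal{\textbf{x}}$ sits in the first three rows and the column built from $\textnormal{\textbf{y}}$ in the last three, a Laplace expansion shows the determinant is linear in the entries of $\textnormal{\textbf{x}}$ and linear in the entries of $\textnormal{\textbf{y}}$, hence bilinear; collecting coefficients gives the matrix $F_P$ with $\textnormal{\textbf{x}}^{T}F_P\textnormal{\textbf{y}}$ equal to this determinant. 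This determinant is a special case of the bilinear generators in \cref{thr:ideal_of_multiview_variety} (the pair $(i,j)=(1,2)$), and since for two cameras the multi-view variety is an irreducible hypersurface in $\pxp$ (by \cref{lem:joint-camera_map_is_isomorphism}) cut out by a single bilinear form, that determinant is, up to scalar, the defining equation; so its vanishing locus is $\mathcal{M}_P$.

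Next I would identify the kernels. Geometrically, $\textnormal{\textbf{x}}^{T}F_P\textnormal{\textbf{y}}=0$ precisely when the two back-projected lines $\overline{P_1^{-1}(\textnormal{\textbf{x}})}$ and $\overline{P_2^{-1}(\textnormal{\textbf{y}})}$ meet in $\p3$, i.e. when $(\textnormal{\textbf{x}},\textnormal{\textbf{y}})$ lies on $\mathcal{M}_P$. To pin down the right kernel, observe that the block matrix in the determinant has a nonzero vector in its own kernel exactly when $\textnormal{\textbf{y}}$ is the image of the first camera center: if $\textnormal{\textbf{y}}=e_{P_2}^{1}=P_2(p_1)$, then taking the vector $(p_1^{T},\, -1,\, 0)^{T}$ (where $p_1$ is a representative of the first camera center, so $P_1 p_1=\textnormal{\textbf{0}}$ and $P_2 p_1 = \textnormal{\textbf{y}}$) kills all six rows, forcing the determinant to vanish for every $\textnormal{\textbf{x}}$; hence $F_P\, e_{P_2}^{1}=\textnormal{\textbf{0}}$, and $e_{P_2}^{1}$ spans the right nullspace. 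The statement for the left nullspace $e_{P_1}^{2}$ follows by the symmetric argument (swapping the roles of $P_1,P_2$ amounts to transposing the block matrix), so $e_{P_1}^{2}$ spans the left nullspace.

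Finally, the rank-$2$ claim. One inclusion is already in hand: having a nonzero right kernel vector shows $\mathrm{rank}(F_P)\le 2$. For the reverse inequality I would argue that $F_P\ne 0$ has rank at least $1$ trivially, and rule out rank $1$: if $F_P$ had rank $1$ it would factor as $\textnormal{\textbf{a}}\,\textnormal{\textbf{b}}^{T}$, so its zero locus in $\pxp$ would be the reducible variety $\{\textnormal{\textbf{x}}^{T}\textnormal{\textbf{a}}=0\}\cup\{\textnormal{\textbf{b}}^{T}\textnormal{\textbf{y}}=0\}$, contradicting the irreducibility of $\mathcal{M}_P$ from \cref{lem:joint-camera_map_is_isomorphism}. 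Alternatively one computes the rank directly from a normal-form pair of cameras (e.g. $P_1=[\,I\mid \textnormal{\textbf{0}}\,]$, $P_2=[\,A\mid \textnormal{\textbf{b}}\,]$ with $\textnormal{\textbf{b}}\ne \textnormal{\textbf{0}}$ since the centers are distinct), giving $F_P=[\textnormal{\textbf{b}}]_\times A$, whose rank is $2$ as $[\textnormal{\textbf{b}}]_\times$ has rank $2$ and $A$ is invertible (the $A$ block is invertible exactly when the two centers differ). I expect the main obstacle to be purely bookkeeping: carefully matching the coefficient extraction from the $6\times 6$ determinant with the asserted matrix $F_P$, and choosing representatives of the camera centers and epipoles consistently so that the kernel computation is clean; the irreducibility input needed for the rank lower bound is already available from \cref{lem:joint-camera_map_is_isomorphism}, so no new geometry is required there.
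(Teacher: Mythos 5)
Your argument is correct and reaches the same conclusions, but the logical ordering differs from the paper's. The paper first shows directly that the determinant vanishes on $\mathcal{M}_P$ (a generic point of $\mathcal{M}_P$ yields a nonzero kernel vector $(\textbf{X},-\lambda_1,-\lambda_2)$ of the $6\times6$ block matrix), then deduces $V(\textbf{x}^{T}F_P\textbf{y})=\mathcal{M}_P$ from irreducibility of the determinant, which it in turn derives from $F_P$ having rank $2$. You instead import the equality $V(\textbf{x}^{T}F_P\textbf{y})=\mathcal{M}_P$ wholesale from \cref{thr:ideal_of_multiview_variety} (for $n=2$ there is a single bilinear generator) and then run the implication in the opposite direction, excluding rank $1$ because a rank-$1$ form cuts out a reducible divisor while $\mathcal{M}_P$ is irreducible by \cref{lem:joint-camera_map_is_isomorphism}. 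Both routes are sound; yours leans more heavily on the cited generation theorem but is tighter on the rank bound, since the paper's phrase \enquote{its nullspace is just a point} presupposes the rank it is meant to establish, whereas your reducibility argument (or your normal-form computation $F_P=[\textbf{b}]_{\times}A$ with $A$ invertible) supplies an independent justification. Your kernel computation for the epipoles matches the paper's in substance, though the explicit kernel vector has its scalar entries transposed: for $\textbf{y}=e_{P_2}^{1}$ the vector annihilating all six rows is $(p_1^{T},\,0,\,-1)^{T}$, since $P_1p_1=\textbf{0}$ forces the coefficient of the $\textbf{x}$-column to vanish and $P_2p_1=\textbf{y}$ forces the coefficient of the $\textbf{y}$-column to be $-1$; with $(p_1^{T},-1,0)^{T}$ the first three rows evaluate to $-\textbf{x}\neq\textbf{0}$. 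This is a bookkeeping slip, not a gap.
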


\begin{proof}
Starting with the second claim, observe that setting $\textnormal{\textbf{x}}=e_{P_1}^{2}$ ensures that the left $6\times5$ block of the matrix above is of rank 4, meaning the $6\times6$ matrix has determinant 0. The same happens if one sets $\textnormal{\textbf{y}}=e_{P_2}^{1}$. Hence
\begin{align}
\label{eq:epipole_is_nullspace}
(e_{P_1}^{2})^{T}F_P=F_Pe_{P_2}^{1}=0.
\end{align}

For the first claim, it follows from \cref{lem:joint-camera_map_is_isomorphism} that the multi-view variety for two cameras is an irreducible 3-fold. Hence it is generated by a single polynomial. Let $(\textbf{x},\textbf{y})$ be a generic point in the multi-view variety $\mathcal{M}_P$. Then there exists a point $\textbf{X}\in\p3$ such that $P_1(\textbf{X})=\lambda_1\textbf{x}$ and $P_2(\textbf{X})=\lambda_2\textbf{y}$, hence
\begin{align*}
\begin{bmatrix}
P_i&\textbf{x}&\textbf{0}\\
P_j&\textbf{0}&\textbf{y}
\end{bmatrix}\begin{bmatrix}
\textbf{X}\\
-\lambda_1\\
-\lambda_2
\end{bmatrix}=0.
\end{align*}
Since the matrix on the left has a non-zero nullspace, the determinant $\textnormal{\textbf{x}}^{T}F_P\textnormal{\textbf{y}}$ has to vanish on $(\textbf{x},\textbf{y})$. Now we need only show that the determinant is irreducible to prove that $\textnormal{\textbf{x}}^{T}F_P\textnormal{\textbf{y}}$ generates the multi-view ideal. Irreducibility follows from the fact that $F_P$ is of rank 2 (since its nullspace is just a point), whereas a reducible bilinear polynomial has a matrix representation that is always of rank 1 since in this case, the matrix is just the product of two vectors, each representing one of the two factors.
\end{proof}

\begin{remark}
Since the entries in the camera matrix are real, so are the entries in the fundamental matrix.
\end{remark}

For each pair of cameras, we get a fundamental matrix, which is a $3\times3$ matrix of rank 2. The following result states that the converse is also true, i.e. that any $3\times3$ matrix of rank 2 is the fundamental matrix for some pair of cameras. 

\begin{theorem}
\label{thr:fundforms_and_camera_pairs_are_1:1}
There is a $1:1$ correspondence between $3\times3$ matrices of rank 2 (defined up to scale), and pairs of cameras $P_1,P_2$ (up to action by $\PGL(4)$).
\end{theorem}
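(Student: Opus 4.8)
The plan is to establish the two directions of the $1:1$ correspondence separately, then argue uniqueness. We already know from \cref{lem:fundamental_form} that every pair of cameras yields a rank-$2$ matrix $F_P$ (defined up to scale, since the camera matrices themselves are), so the substantive content is the reverse map and the bijectivity. First I would fix a convenient normal form for a pair of cameras: after acting by $\PGL(4)$ we may assume $P_1 = [\,I \mid \mathbf{0}\,]$, and then the epipole $e_{P_2}^1$ and the matrix $F_P$ determine $P_2$ up to the residual freedom. Concretely, the standard construction (see \cite[Ch.~9]{hartleyzisserman}) writes $P_2 = [\,[e]_\times F \mid e\,]$ where $e = e_{P_2}^1$ spans the left nullspace of $F^T$ (equivalently the right nullspace considerations from \eqref{eq:epipole_is_nullspace}) and $[e]_\times$ is the skew-symmetric cross-product matrix; one checks this pair indeed has fundamental matrix $F$ by plugging into the determinant formula of \cref{lem:fundamental_form}.

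\textbf{Surjectivity.} Given any rank-$2$ matrix $F$, pick $e$ spanning $\ker(F^T)$; since $F$ is real, $e$ can be taken real, so $P_2 = [\,[e]_\times F \mid e\,]$ is a real $3\times 4$ matrix, and it has full rank $3$ because $[e]_\times F$ has rank $2$ with column space $e^\perp$ while the last column $e$ is not in $e^\perp$ (as $e^T e \neq 0$ over $\mathbb{C}$ — here one must be slightly careful, but a generic scaling or an alternative choice $P_2 = [\,[e]_\times F + e v^T \mid e\,]$ for suitable $v$ repairs any degeneracy). Together with $P_1 = [\,I\mid\mathbf 0\,]$ this is a valid camera pair (distinct centers: the center of $P_1$ is $[\mathbf 0 : 1]$, the center of $P_2$ is not, since $P_2$ has full rank), and a direct computation of the $6\times 6$ determinant shows its fundamental matrix is a nonzero scalar multiple of $F$.

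\textbf{Injectivity.} Suppose two camera pairs $(P_1,P_2)$ and $(P_1',P_2')$ have the same fundamental matrix up to scale. Acting by $\PGL(4)$ on each, normalize both first cameras to $[\,I\mid\mathbf 0\,]$; the residual stabilizer of $[\,I\mid\mathbf 0\,]$ in $\PGL(4)$ consists of block matrices $\begin{bmatrix} A & \mathbf 0 \\ v^T & 1\end{bmatrix}$ (up to scale) with $A \in \GL(3)$, which act on the second camera by $[\,M \mid m\,] \mapsto [\,MA^{-1} - m v^T A^{-1} \mid m\,]$ — wait, more cleanly: one shows the set of second cameras compatible with a fixed $F$ and fixed first camera $[\,I\mid\mathbf 0\,]$ is exactly the orbit of $[\,[e]_\times F \mid e\,]$ under this residual group, which is the classical statement that $F$ determines the camera pair up to projective equivalence. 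So any two such pairs are related by an element of $\PGL(4)$, hence equivalent as configurations, and the two maps are mutually inverse.

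\textbf{The main obstacle} I expect is the injectivity step: one needs the precise description of \emph{all} second cameras yielding a given $F$ (not just the canonical one), and to see that they form a single $\PGL(4)$-orbit. This is where a careful parametrization — essentially showing $P_2' = P_2 H$ for the residual stabilizer $H$ — must be carried out, using that $F$ has rank exactly $2$ so its left/right nullspaces pin down the epipoles and hence most of the data. A secondary technical nuisance is the reality and full-rank bookkeeping in the surjectivity construction over $\mathbb{C}$, where the naive formula can fail on a measure-zero locus and needs the $e v^T$ correction term; I would handle this by invoking the standard result from \cite{hartleyzisserman} rather than reproving it from scratch.
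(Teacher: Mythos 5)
Your proposal is correct and matches the paper's approach: the forward direction is read off from \cref{lem:fundamental_form}, and the converse rests on the standard canonical-pair construction $P_2=[\,[e]_\times F\mid e\,]$ together with the classification of all camera pairs realizing a given $F$, which is exactly the content of Theorem 9.13 in \cite{hartleyzisserman} that the paper cites. You sketch the details behind that citation rather than just invoking it, but the underlying argument is the same.
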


\begin{proof}
By \cref{lem:fundamental_form}, the fundamental matrix of two cameras is a $3\times3$ matrix of rank 2. The converse follows from Theorem 9.13. in \cite{hartleyzisserman}.
\end{proof}

With these results, we can move on to classifying all the critical configurations for two views. We start with a special type of critical configuration:

\subsection{Trivial critical configurations}

\begin{definition}
\label{def:non_trivial_configuration}
A configuration $(P_1,\ldots,P_n,X)$ is said to be a \emph{non-trivial critical configuration} if it has a conjugate configuration $(Q_1,\ldots,Q_n,Y)$ satisfying
\begin{align}
\label{eq:non_trivial_configuration}
F_P^{ij}\neq F_Q^{ij}, \quad \forall i,j\leq n.
\end{align}
Here $F_P^{ij}$ denotes the fundamental matrix of $P_i,P_j$ and likewise for $F_Q^{ij}$.
\end{definition}

Critical configurations not satisfying this property exist, they are called trivial. If a critical configuration is trivial, then by \cref{thr:fundforms_and_camera_pairs_are_1:1} this means that, after a change of coordinates, $Q_i=P_i$ and $Q_j=P_j$. Since the cameras are the same, \cref{lem:joint-camera_map_is_isomorphism} tells us that the sets $X$ and $Y$ are equal, with the possible exception of any points lying on the line spanned by the two camera centers. It is a well-known fact that no number of cameras can differentiate between points lying on a line containing all the camera centers, hence the name \enquote{trivial}.

A classification of the trivial critical configurations for any number of views can be found in \cite{HK} Section 4, or in \cite{hartleyzisserman} Chapter 22. The focus of this paper is the non-trivial critical configurations, so for the remainder of the paper, when we talk about critical configurations, it is to be understood that they satisfy \cref{eq:non_trivial_configuration}. This, in turn, ensures that the second item of \cref{def:equvalent_configurations} is never satisfied, so there is no need to prove inequivalence between each configuration and its conjugates.

\subsection{Critical configurations for two views}
\label{subsec:crit_for_two_views}
Let us consider a non-trivial critical configuration $(P_1,P_2,X)$. Since it is critical, there exists a conjugate configuration $(Q_1,Q_2,Y)$ giving the same images in $\pxp$. So the points $X$ lie on the pullback $\phi_P^{-1}(\mathcal{M}_Q)$, which is a quadric surface $S_P$. Similarly, $Y$ lies on a quadric surface $S_Q$. Since the multi-view variety $\mathcal{M}_Q$ is described by the fundamental matrix $F_Q$, we can compute the matrix of $S_P$ (and $S_Q$):
\begin{align}
\begin{aligned}
\label{eq:SP_and_SQ}
S_P&=P_1^{T}F_QP_2,\\
S_Q&=Q_1^{T}F_PQ_2.
\end{aligned}
\end{align}

The two quadrics have the following properties:
\begin{lemma}[{Lemma 5.10 in \cite{HK}}]
\label{lem:properties_of_the_quadrics}
\begin{enumerate}
\item The quadric $S_P$ contains the camera centers $p_1$, $p_2$.
\item The quadric $S_P$ is ruled (contains real lines).
\end{enumerate}
\end{lemma}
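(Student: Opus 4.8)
The plan is to argue directly from the quadratic form attached to the matrix $S_P=P_1^{T}F_QP_2$ of \cref{eq:SP_and_SQ}. A point $x\in\p3$ lies on the quadric $S_P$ exactly when
\begin{equation*}
x^{T}S_Px=(P_1x)^{T}F_Q(P_2x)=0,
\end{equation*}
since only the symmetric part of $S_P$ enters this form. Both statements should then follow from two elementary facts: the defining relation $P_ip_i=0$ of a camera center, and the epipole--nullspace relations of \cref{lem:fundamental_form} applied to the cameras $\textbf{Q}$, namely $(e_{Q_1}^{2})^{T}F_Q=F_Qe_{Q_2}^{1}=0$ (the analogue of \cref{eq:epipole_is_nullspace}). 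I would also record at the outset that $S_P$ is an honest quadric and not all of $\p3$: since we work non-trivially, $F_P\neq F_Q$, so $\mathcal{M}_P\neq\mathcal{M}_Q$ and the pullback $S_P=\phi_P^{-1}(\mathcal{M}_Q)$ is a proper subvariety.

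For the first item I would simply substitute the camera centers into the form. With $x=p_1$ we have $P_1p_1=0$, so $(P_1p_1)^{T}F_Q(P_2p_1)=0$ and hence $p_1\in S_P$; with $x=p_2$ we use $P_2p_2=0$ to get $p_2\in S_P$ in the same way.

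For the second item I would exhibit an explicit real line lying on $S_P$. Take the epipole $e_{Q_1}^{2}=Q_1(q_2)$, which is a well-defined image point because all camera centers are distinct, and let $L_1=\overline{P_1^{-1}(e_{Q_1}^{2})}$ be its back-projection through $P_1$; this is a line through $p_1$, and it is real since $P_1$, $Q_1$, $q_2$ are. For every $x\in L_1\setminus\{p_1\}$ the vector $P_1x$ is proportional to $e_{Q_1}^{2}$, so $(P_1x)^{T}F_Q=(e_{Q_1}^{2})^{T}F_Q=0$ and therefore $(P_1x)^{T}F_Q(P_2x)=0$; by Zariski closure this persists on all of $L_1$, so $L_1\subseteq S_P$. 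Symmetrically, $L_2=\overline{P_2^{-1}(e_{Q_2}^{1})}$ is a real line through $p_2$ contained in $S_P$, using $F_Qe_{Q_2}^{1}=0$. A quadric surface that contains a real line is neither empty, nor a single real point, nor of ``ellipsoid'' type (signature $(3,1)$) --- the only cases with no real line --- so $S_P$ must be ruled.

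I do not anticipate a real obstacle here; the care needed is purely bookkeeping. One must check that the epipoles appearing in the argument are defined, which is exactly the standing hypothesis that all camera centers are pairwise distinct, and one should read ``ruled'' as ``contains a real line'' (as the lemma's parenthetical indicates), so that a single real line settles the claim whether $S_P$ ends up a smooth hyperboloid, a real quadric cone, or a (possibly degenerate) pair of real planes. Finally, it is worth noting that the identical argument with $\textbf{P}$ and $\textbf{Q}$ interchanged shows $S_Q$ contains $q_1,q_2$ and is ruled, so the statement is symmetric in the pair of conjugate configurations.
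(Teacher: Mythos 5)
Your proof is correct and follows the same route as the paper's: item 1 by substituting the camera centers into $x^{T}S_Px=(P_1x)^{T}F_Q(P_2x)$ using $P_ip_i=0$, and item 2 by exhibiting the back-projected epipole line annihilated by $F_Q$. The only differences are cosmetic — you exhibit both lines and spell out the realness and ruledness bookkeeping, where the paper gives just the line $\overline{P_2^{-1}(e_{Q_2}^{1})}$.
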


\begin{proof}
A point $x\in\p3$ lies on $S_P$ if and only if $x^{T}S_Px=0$. Since $P_i(p_i)=0$, the camera centers lie on $S_P$. For the second item, recall that $F_Qe_{P_2}^{1}=0$, so the line $g_{P_2}^{1}=\overline{P_2^{-1}(e_{P_2}^{1})}$ lies on $S_P$.
\end{proof}

Up to action by $\PGL(4)$, there are only four quadrics containing lines. These are illustrated in \cref{fig:ruled_quadrics}. 

\begin{figure*}[]
\begin{center}
\includegraphics[width = 0.90\textwidth]{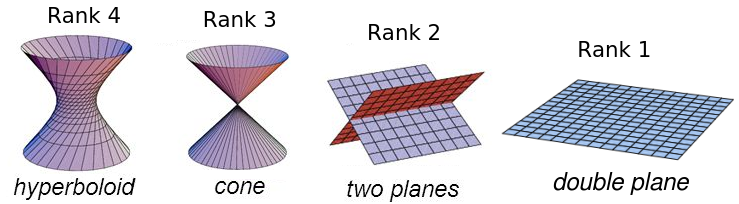}
\end{center}
\caption{All types of real ruled quadrics.}
\label{fig:ruled_quadrics}
\end{figure*}

The discussion so far can be summarized as follows:
\begin{theorem}[{Lemma 5.10 in \cite{HK}}]
\label{thr:critical_configurations_lie_on_quadrics}
Let $(P_1,P_2,X)$ be a non-trivial critical configuration. Then there exists a ruled quadric $S_P$ passing through the camera centers $p_1$, $p_2$ containing $X$.
\end{theorem}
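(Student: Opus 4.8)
The plan is to make explicit the quadric that is already implicit in the discussion preceding the statement --- the pullback under $\phi_\textbf{P}$ of the conjugate multi-view variety --- and then read off its required features from \cref{lem:properties_of_the_quadrics}. First I would unwind the definitions. Since $(P_1,P_2,X)$ is a non-trivial critical configuration, there is a conjugate configuration $(Q_1,Q_2,Y)$ with $\phi_\textbf{P}(X)=\phi_\textbf{Q}(Y)$ and with $F_Q$ not proportional to $F_P$. Because $\phi_\textbf{Q}(Y)\subseteq\mathcal{M}_Q=\Set{(\textbf{u},\textbf{v})\mid \textbf{u}^{T}F_Q\textbf{v}=0}$, every point $x\in X$ satisfies $(P_1x)^{T}F_Q(P_2x)=0$, that is $x^{T}S_Px=0$ with $S_P=P_1^{T}F_QP_2$ as in \cref{eq:SP_and_SQ}. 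So $X$ lies in the zero locus of the quadratic form attached to $S_P$, and $p_1,p_2$ lie there too since $P_i(p_i)=0$ gives $p_i^{T}S_Pp_i=0$.

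The one point that needs care --- and the only place where the non-triviality hypothesis actually does any work --- is checking that $S_P$ genuinely cuts out a quadric surface rather than all of $\p3$, i.e. that the form $x\mapsto x^{T}S_Px$ is not identically zero. I would argue by contradiction: if it vanished identically then $\phi_\textbf{P}(\p3)\subseteq\mathcal{M}_Q$, hence $\mathcal{M}_P\subseteq\mathcal{M}_Q$; but both are irreducible $3$-folds in $\pxp$, so $\mathcal{M}_P=\mathcal{M}_Q$, and then the uniqueness of the bilinear generator of the two-view multi-view ideal (\cref{lem:fundamental_form}), equivalently \cref{thr:fundforms_and_camera_pairs_are_1:1}, forces $F_P$ and $F_Q$ to agree up to scale, contradicting \cref{def:non_trivial_configuration}.

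It then remains only to record that $S_P$ is ruled, and here I would simply invoke \cref{lem:properties_of_the_quadrics}: the nullvector of $F_Q$ is real since the entries of $F_Q$ are real, so it defines an image point $e$ in the second view, and the preimage line $\overline{P_2^{-1}(e)}$ through $p_2$ lies on $S_P$ --- for $x$ on that line $P_2(x)$ is proportional to $e$, whence $x^{T}S_Px=(P_1x)^{T}F_Q(P_2x)=0$. Combining the three steps yields a ruled quadric $S_P$ through $p_1$ and $p_2$ containing $X$, which is exactly the claim. I do not anticipate a genuine obstacle: the statement is essentially a repackaging of \cref{eq:SP_and_SQ} together with \cref{lem:properties_of_the_quadrics}, the degenerate case handled in the second paragraph being the only thing to stay vigilant about.
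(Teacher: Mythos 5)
Your proposal is correct and follows essentially the same route as the paper, which obtains this theorem as a summary of the preceding discussion: $X$ lies on the pullback $S_P=P_1^{T}F_QP_2$ of $\mathcal{M}_Q$, and \cref{lem:properties_of_the_quadrics} supplies the passage through $p_1,p_2$ and the ruling. Your extra check that the form $x\mapsto x^{T}S_Px$ is not identically zero (via non-triviality and \cref{thr:fundforms_and_camera_pairs_are_1:1}) is a point the paper leaves implicit, and it is handled correctly.
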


\cref{thr:critical_configurations_lie_on_quadrics} tells us that all non-trivial critical configurations have their points and camera centers lying on a ruled quadric. The converse, however, is not always true. For instance, we will soon show that cameras and camera centers all lying on a cone is not a critical configuration if the cone contains the line spanned by the two camera centers (see \cref{fig:critical_quadrics}). Let us instead give a partial converse:

\begin{lemma}
\label{lem:critical_if_pullback_of_rank2_form}
Let $(P_1,P_2,X)$ be a configuration of cameras and points such that $X$ lies on a ruled quadric $S_P$ that passes through both the camera centers. Then for each real $3\times3$ (defined up to scale) matrix $F_Q$ of rank 2 such that:
\begin{align*}
S_P=P_1^{T}F_QP_2,
\end{align*}
there exists a conjugate configuration to $(P_1,P_2,X)$.
\end{lemma}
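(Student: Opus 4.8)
The plan is to reverse the construction that produced $S_P$ in the first place. We are given a ruled quadric $S_P$ through $p_1,p_2$ containing $X$, together with a real rank-2 matrix $F_Q$ satisfying $S_P = P_1^T F_Q P_2$. By \cref{thr:fundforms_and_camera_pairs_are_1:1}, a real rank-2 $3\times 3$ matrix (up to scale) is exactly the data of a pair of cameras $(Q_1,Q_2)$ up to $\PGL(4)$, so I first produce cameras $Q_1,Q_2$ whose fundamental matrix is $F_Q$. The multi-view variety $\mathcal M_Q \subset \pxp$ is then the vanishing locus of $\textbf{x}^T F_Q \textbf{y}$, and by the defining equation $S_P = P_1^T F_Q P_2$ we have $\phi_P^{-1}(\mathcal M_Q) = \{x : x^T P_1^T F_Q P_2 x = 0\} = S_P \supseteq X$. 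Hence every point of $X$ is mapped by $\phi_P$ into $\mathcal M_Q$.

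Next I need to turn each image point $\phi_P(x)$, $x\in X$, into a space point $y$ with $\phi_Q(y)=\phi_P(x)$; collecting these gives the candidate conjugate set $Y$. Since $\phi_P(x)\in\mathcal M_Q$ and $\mathcal M_Q$ is (the closure of) the image of $\phi_Q$, there is a preimage $y\in\p3$; it is unique unless $\phi_P(x)$ is the common epipole point, by \cref{lem:joint-camera_map_is_isomorphism}, and in the exceptional case we simply take $y$ to be the appropriate camera center, as allowed by the remark following \cref{def:reconstruction}. This defines $Y$ and gives $\phi_P(X) = \phi_Q(Y)$, so $(Q_1,Q_2,Y)$ is a reconstruction of the images of $(P_1,P_2,X)$.

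Finally I must check that $(Q_1,Q_2,Y)$ is genuinely a \emph{conjugate} configuration, i.e. inequivalent to $(P_1,P_2,X)$ in the sense of \cref{def:equvalent_configurations}. Here I invoke the standing convention established in \cref{subsec:crit_for_two_views}: since $F_Q \neq F_P$ — which I should verify, or rather, it is exactly the non-triviality hypothesis, and if $F_Q = F_P$ then $S_P = P_1^T F_P P_2$ forces $S_P$ to be the quadric already cut out, a degenerate case handled separately — the two multi-view varieties differ, so by the remark in \cref{sec:approach} the second equation of \cref{def:equvalent_configurations} cannot hold for any $H\in\PGL(4)$. Therefore $(P_1,P_2,X)$ has $(Q_1,Q_2,Y)$ as a conjugate, as claimed.

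The main obstacle I anticipate is the bookkeeping around the degenerate points: namely, ensuring the pullback $\phi_P^{-1}(\mathcal M_Q)$ really is the full quadric $S_P$ and not a proper subvariety or all of $\p3$ (which is ruled out by $F_Q$ having rank exactly $2$, so $P_1^T F_Q P_2$ is a nonzero quadratic form of rank $\ge 2$), and carefully handling the finitely many $x\in X$ whose image is the epipole so that the assignment $x\mapsto y$ is well defined and $Y$ is a bona fide configuration in $\p3$. The algebraic identity $S_P = P_1^T F_Q P_2$ itself does all the real work; everything else is verifying that the formal reconstruction it yields satisfies the definitions.
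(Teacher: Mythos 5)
Your proposal is correct and follows essentially the same route as the paper's proof: realize $F_Q$ as the fundamental matrix of a camera pair $(Q_1,Q_2)$ via \cref{thr:fundforms_and_camera_pairs_are_1:1}, observe that $S_P=P_1^{T}F_QP_2$ forces $\phi_P(X)\subset\mathcal{M}_Q$, and lift each image point back through $\phi_Q$ to obtain the conjugate set $Y$. The extra care you take with the epipole fiber and with inequivalence (via $F_Q\neq F_P$ implying distinct multi-view varieties) is bookkeeping the paper leaves implicit, but it does not change the argument.
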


\begin{proof}
Assume such a matrix $F_Q$ exists. By \cref{thr:fundforms_and_camera_pairs_are_1:1} there exists a pair of cameras $(Q_1,Q_2)$ such that $F_Q$ is their fundamental matrix. Since $X$ lies on $S_P$, we have 
\begin{align*}
\phi_{P}(X)\subset\mathcal{M}_Q.
\end{align*}
Then for every point $x\in X$ we can find a point $y$ such that
\begin{align*}
\phi_P(x)=\phi_Q(y).
\end{align*}
Let $Y$ be the set of these points $y$. Then $(Q_1,Q_2,Y)$ is conjugate to $(P_1,P_2,X)$. This can be repeated for each bilinear polynomial of rank 2, giving unique conjugate configurations.
\end{proof}

The problem of determining which configurations are critical is now reduced to finding out which quadrics are the pullbacks of real bilinear polynomials of rank 2.

Let $\p{3\times3}$ denote the space of $3\times3$ matrices defined up to scale. We have that $\p{3\times3}$ is isomorphic to $\p8$. The fundamental matrix $F_P$ of the pair $(P_1,P_2)$ is an element in $\p{3\times3}$. 

\begin{lemma}
\label{lem:correspndence_between_lines_and_quadrics}
There is a $1:1$ correspondence between the set of real quadrics in $\p3$ passing through $p_1,p_2$, and the set of real lines in $\p{3\times3}$ passing through $F_P$.
\end{lemma}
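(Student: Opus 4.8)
The key observation is that the map $F \mapsto P_1^T F P_2$ is a linear map from the $9$-dimensional space of $3\times3$ matrices to the (at most) $10$-dimensional space of $4\times4$ symmetric matrices (i.e.\ quadric forms on $\p3$), and it descends to a map of projective spaces $\Psi\colon \p{3\times3}\to\p{\text{quadrics}}$. The plan is to understand the image and fibers of $\Psi$ well enough to see that lines through $F_P$ correspond bijectively to quadrics through $p_1,p_2$. First I would show the image of $\Psi$ is exactly the quadrics through $p_1,p_2$: the inclusion "$\subseteq$" is immediate since $(P_1 x)^T F (P_2 x)$ vanishes at $x=p_i$ because $P_i p_i = 0$, exactly as in the proof of \cref{lem:properties_of_the_quadrics}; for "$\supseteq$" I would use \cref{eq:SP_and_SQ} / \cref{thr:fundforms_and_camera_pairs_are_1:1} — given a quadric through $p_1,p_2$ one wants to realize it as a pullback, which is essentially the content already developed around \cref{lem:critical_if_pullback_of_rank2_form} (every quadric through the two centers arises as $P_1^TF_QP_2$ for a suitable $3\times3$ matrix $F_Q$, not necessarily of rank $2$).

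Next I would compute the kernel of the linear map $F\mapsto P_1^TFP_2$. Writing $P_i$ as a full-rank $3\times4$ matrix, $P_1^T F P_2 + P_2^T F^T P_1 = 0$ would be the condition for the symmetric form to vanish; but since we only care about the quadratic form $x^T(P_1^TFP_2)x = (P_1x)^T F (P_2x)$, the relevant condition is $(P_1x)^TF(P_2x)\equiv 0$ as a polynomial in $x$. I expect the kernel to be trivial: if $(P_1 x)^T F(P_2 x)=0$ for all $x$, then since $P_1,P_2$ are surjective and their common kernel is trivial (the two camera centers are distinct, so no single $x$ is killed by both, and in fact the images $P_1x, P_2x$ range over enough of $\p2\times\p2$), one deduces $F=0$. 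Concretely, I would argue that the pairs $(P_1x,P_2x)$ as $x$ ranges over $\p3$ are Zariski-dense in the multi-view variety $\mathcal M_P$, which is a hypersurface $\{x^T F_P y = 0\}$ not contained in any other bilinear hypersurface because $F_P$ has rank $2$ (a rank-$2$ bilinear form is not a scalar multiple of any other bilinear form, as used in the irreducibility argument in \cref{lem:fundamental_form}); hence the only bilinear form vanishing on all of $\mathcal M_P$ is $F_P$ itself up to scale, and so $\ker\Psi$ as a map of matrices, intersected with... — wait, more carefully: the bilinear forms vanishing on $\mathcal M_P$ form a $1$-dimensional space spanned by $F_P$, so the preimage of the zero quadric is the line $\mathbb C F_P$, i.e.\ $\Psi$ is a well-defined \emph{injective} map on $\p{3\times3}\setminus\{F_P\}$, and in fact $\Psi$ "blows down" the point $F_P$.

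From here the correspondence is formal. A line $\ell$ in $\p{3\times3}$ through $F_P$ is $\{[\,\lambda F_P + \mu G\,] : [\lambda:\mu]\in\p1\}$ for some $G\notin \mathbb C F_P$; applying $\Psi$ and using that $\Psi(F_P)=0$ (the zero quadric), the image is $\{[\,\mu\, \Psi(G)\,]\} = \{\Psi(G)\}$, a single well-defined quadric through $p_1,p_2$, and this quadric is nonzero precisely because $G\notin\ker$ of the matrix-level map, i.e.\ $G\notin\mathbb C F_P$. Conversely, given a quadric $S$ through $p_1,p_2$, pick any $G$ with $\Psi(G)=[S]$ (exists by surjectivity onto quadrics through the centers); since $\Psi$ is injective away from $F_P$, the fiber $\Psi^{-1}([S])$ is exactly the line through $F_P$ and $G$. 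One checks the two assignments are mutually inverse, which is immediate. The main obstacle is the surjectivity of $\Psi$ onto quadrics through $p_1,p_2$ together with the dimension bookkeeping: the source $\p{3\times3}=\p8$ maps to the $\p8$ of quadrics through two fixed points (a $\p8$ inside the $\p9$ of all quadrics), the map blows down one point and is otherwise injective, so it must be dominant and, being a morphism between projective varieties of the same dimension with finite fibers away from a point, surjective — I would want to state this cleanly rather than hand-wave it, either via the rank-$2$ argument above for injectivity plus a direct dimension count, or by exhibiting an explicit right inverse using coordinates in which $P_1 = [\,I_3 \mid 0\,]$.
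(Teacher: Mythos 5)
Your overall strategy --- projectivize the linear map $\Psi\from F\mapsto P_1^{T}FP_2$, identify its kernel, and read the correspondence off from that --- is the same one the paper uses, and your identification of the kernel with $\mathbb{C}F_P$ (the only bilinear forms vanishing on $\mathcal{M}_P$ are the multiples of $F_P$, by irreducibility of the rank-2 form) is a genuine improvement in rigor over the paper's bare degrees-of-freedom count.

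Two statements in your writeup are wrong, however, and one of them contradicts the rest of your own argument. First, a one-dimensional kernel does not make the projectivized map injective away from $[F_P]$: it makes it the linear projection from the point $[F_P]$, whose fibers are exactly the punctured lines through $[F_P]$. You in fact establish this yourself when you compute that the whole line $\{[\lambda F_P+\mu G]\}$ maps to the single quadric $[\Psi(G)]$; the later sentence ``since $\Psi$ is injective away from $F_P$, the fiber is exactly the line'' is therefore incoherent as written --- the correct justification is that $\Psi(G')=c\,\Psi(G)$ forces $G'-cG\in\ker\Psi$, hence $G'\in\vspan(G,F_P)$. Second, the quadrics through two fixed points form a $\p7$ inside the $\p9$ of all quadrics (two independent linear conditions), not a $\p8$; consequently your surjectivity argument (``morphism between projective varieties of the same dimension with finite fibers away from a point'') does not apply, and could not in any case, since the fibers are lines rather than finite. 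The clean repair is purely linear-algebraic: the source of the linear map is $9$-dimensional and the kernel is $1$-dimensional, so the image is an $8$-dimensional linear subspace of the $8$-dimensional space of quadratic forms vanishing at $p_1$ and $p_2$, hence equal to it; lines through $[F_P]$ in $\p8$ then biject with points of the $\p7$ of quadrics through the two camera centers, which is the claimed correspondence.
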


\begin{proof}
Let $L\subset \p{3\times3}$ be a line through $F_P$, and let $F_0\neq F_P$ be some point on $L$. Every point $F\neq F_P$ on $L$ can be written as $F_0+\lambda F_P$ for some $\lambda\in\mathbb{R}$. We then have:
\begin{align*}
P_1^{T}FP_2&=P_1^{T}F_0P_2+\lambda P_1^{T}F_PP_2^{T},\\
&=P_1^{T}F_0P_2+\lambda\cdot 0,\\
&=P_1^{T}F_0P_2.
\end{align*} 
Hence, the matrix $F$ pulls back to the same quadric for all points $F\neq F_P$ on $L$.

For the $1:1$, note that a quadric $S$ passing through $p_1$ and $p_2$ has seven degrees of freedom, the same as a line in $\p{3\times3}$ through a fixed point.
\end{proof}

Using this $1:1$ correspondence and \cref{lem:critical_if_pullback_of_rank2_form}, the problem has been reduced to determining which quadrics correspond to lines in $\p{3\times3}$ containing at least one viable polynomial of rank 2. Let $\p{3\times3}_2\subset\p{3\times3}$ be the set of $3\times3$ matrices with rank at most 2. $\p{3\times3}_2$ is a hypersurface of degree 3, so a generic line $L$ contains two polynomials of rank 2 in addition to $F_P$. There are also other possibilities, listed in the tables below\footnote{The full computations behind these tables can be found in \cite{twoViews}}.

We start with the cases where the line $L$ corresponding to $S_P$ has a finite number of intersections with the rank 2 locus $\p{3\times3}_2$. The cases where $L$ contains at least one real rank 2 matrix $F$ different from $F_P$ are covered in \cref{tab:1}. The ones where it does not, are covered in \cref{tab:2}.

\begin{table*}[htbp!]
\begin{tabular}{@{}p{0.45\textwidth}p{0.50\textwidth}@{}}
\hline
\textbf{Intersection points} & \textbf{$S_P$} \\ \hline
All three intersection points are distinct real points & A smooth ruled quadric, camera centers not on a line \\ \hline
Two intersection points $F_P$ and $F_Q$, $L\cap\p{3\times3}_2$ has multiplicity 2 at $F_P$ & A smooth ruled quadric, camera centers lie on a line \\ \hline
Two intersection points $F_P$ and $F_Q$, $L\cap\p{3\times3}_2$ has multiplicity 2 at $F_Q$ & A cone, two camera centers not on a line, neither camera on a vertex \\ \hline
\end{tabular}
\caption{The cases where $L$ intersects the rank 2 locus finitely many times.}
\label{tab:1}
\end{table*}
%\FloatBarrier
\begin{remark}
In the published version of \cite{twoViews}, there is an error in \cref{tab:1} (not repeated here). In the published version, the two bottom columns are reversed.
\end{remark}

%\FloatBarrier
\begin{table*}[htb]
\begin{tabular}{@{}p{0.45\textwidth}p{0.50\textwidth}@{}}
\hline
\textbf{Intersection points} & \textbf{$S_P$} \\ \hline
The two other intersection points are complex conjugates & A smooth non-ruled quadric \\ \hline
The two other intersection points are of rank 1 & Union of two planes, camera centers in different planes \\ \hline
The two intersection points are equal to $F_P$ & A cone, both camera centers on a line, neither camera center at the vertex \\ \hline
\end{tabular}
\caption{The cases where there are no viable matrices.}
\label{tab:2}
\end{table*}
%\FloatBarrier
In the case where $L$ is contained in $\p{3\times3}_2$, all the matrices on $L$ are of rank 2 (with the possible exception of at most 2 that can be of rank 1). As such, rather than looking at where the intersections are, we look at the epipoles of the matrices in $L$. See \cref{tab:3}.
%\FloatBarrier

\begin{table*}[htb]
\begin{tabular}{@{}p{0.45\textwidth}p{0.50\textwidth}@{}}
\hline
\textbf{Epipoles} & \textbf{$S_P$} \\ \hline
All matrices have different epipoles & Two planes, camera centers lying in same plane \\ \hline
All matrices share the same right epipole, the left epipoles trace a line\footnotemark & Two planes, one camera center on the intersection of the planes \\ \hline
All matrices share the same right epipole, the left epipoles trace a conic\footnotemark[\value{footnote}] & Cone, one camera center at the vertex \\ \hline
All matrices share the same right and left epipole & Two (possibly complex) planes, both camera centers lying on the intersection of the planes \\ \hline
All matrices share the same right and left epipole AND the two rank 1 matrices on $L$ coincide & Double plane (as a set it is equal to a plane, but every point has multiplicity 2) \\ \hline
\end{tabular}
\caption{The cases where $L$ lies in the rank 2 locus.}
\label{tab:3}
\end{table*}
%\FloatBarrier
\footnotetext{The statement also holds if we swap "left" and "right".}

With this, we know exactly which quadrics form critical configurations. These are summarized in \cref{tab:configurations_and_their_conjugates}. To distinguish these from other ruled quadrics we give the following definition:
\begin{definition}
Let $P_1,P_2$ be two cameras. A quadric $S$ is said to be a \emph{critical quadric} (with respect to $P_1,P_2$) if $S$ and the camera centers $p_1,p_2$ form one of the configurations in \cref{tab:configurations_and_their_conjugates} (illustrated in \cref{fig:critical_quadrics}). %The quadrics in \cref{tab:configurations_and_their_conjugates} are called \emph{critical quadrics} (illustrated in \cref{fig:critical_quadrics}).
\end{definition}

\begin{theorem}
\label{thr:critical_conf_for_two_views}
A configuration $(P_1,P_2,X)$ is non-trivially critical if and only if the two camera centers and all the points $X$ lie on a critical quadric.
\end{theorem}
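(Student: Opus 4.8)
The plan is to prove both implications by running the dictionary assembled in \cref{subsec:crit_for_two_views}. On one side, \cref{lem:correspndence_between_lines_and_quadrics} identifies quadrics through $p_1,p_2$ with lines through $F_P$ in $\p{3\times3}$; on the other, \cref{thr:critical_configurations_lie_on_quadrics} and \cref{lem:critical_if_pullback_of_rank2_form} say that a configuration $(P_1,P_2,X)$ with $X\cup\{p_1,p_2\}$ lying on a quadric $S_P$ is non-trivially critical if and only if the line $L$ associated to $S_P$ carries a real matrix of rank $2$ other than $F_P$ (the pullback of whose fundamental matrix is $S_P$). So the theorem reduces to the assertion that the lines with this property are exactly those whose quadric-and-camera-centre data appear in \cref{tab:configurations_and_their_conjugates}, which is precisely what the case analysis behind \cref{tab:1,tab:2,tab:3} provides: \cref{tab:1} and \cref{tab:3} list the lines (equivalently, the quadrics) that do carry a viable rank-$2$ matrix, and \cref{tab:2} lists those that do not.

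($\Rightarrow$) Suppose $(P_1,P_2,X)$ is non-trivially critical with conjugate $(Q_1,Q_2,Y)$. Then $X\subseteq S_P=P_1^{T}F_QP_2$, and $p_1,p_2\in S_P$ by \cref{lem:properties_of_the_quadrics}. By \cref{def:non_trivial_configuration} we have $F_Q\ne F_P$, and by \cref{lem:fundamental_form} applied to $Q_1,Q_2$ the matrix $F_Q$ is real of rank $2$ (its entries are real since $Q_1,Q_2$ are real cameras). Let $L$ be the line through $F_P$ and $F_Q$; it corresponds to $S_P$ under \cref{lem:correspndence_between_lines_and_quadrics}. Since $L$ contains the real rank-$2$ matrix $F_Q\ne F_P$, the case analysis summarised in \cref{tab:1,tab:3} shows that $(S_P;p_1,p_2)$ is one of the entries of \cref{tab:configurations_and_their_conjugates}; i.e. $S_P$ is a critical quadric, so the camera centres and all of $X$ lie on a critical quadric.

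($\Leftarrow$) Suppose $p_1,p_2$ and all points of $X$ lie on a critical quadric $S$. By the definition of critical quadric together with \cref{tab:1,tab:3}, the line $L$ through $F_P$ corresponding to $S$ under \cref{lem:correspndence_between_lines_and_quadrics} contains a real matrix $F_Q$ of rank $2$ with $F_Q\ne F_P$ and $S=P_1^{T}F_QP_2$; in particular $S$ is ruled and passes through $p_1,p_2$. Applying \cref{lem:critical_if_pullback_of_rank2_form} to this $F_Q$ yields a configuration $(Q_1,Q_2,Y)$ conjugate to $(P_1,P_2,X)$, and since $F_Q\ne F_P$ this conjugate witnesses that $(P_1,P_2,X)$ is \emph{non-trivially} critical.

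The substantive part — and the step I expect to be the main obstacle — is the verification of the classification underlying \cref{tab:1,tab:2,tab:3}, on which the definition of \enquote{critical quadric} rests. Concretely, for each line $L$ through $F_P$ in $\p{3\times3}\cong\p8$ one must decide whether $L$ meets the rank-$\le 2$ locus $\p{3\times3}_2$ — a cubic hypersurface — in a real point distinct from $F_P$, and then identify the quadric $S_P=P_1^{T}FP_2$ and the incidence of $p_1,p_2$ with $S_P$ that this produces. This breaks into the generic case, where the residual intersection of $L$ with the cubic is two further points (two distinct real points, a complex-conjugate pair, or a double point, possibly of rank $1$), and the degenerate case $L\subseteq\p{3\times3}_2$, which is instead analysed by tracking how the left and right epipoles of the matrices on $L$ vary (constant, tracing a line, or tracing a conic). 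A delicate point throughout is that criticality depends not merely on the projective type of $S_P$ but on the position of $p_1,p_2$ on it — for instance a cone through $p_1,p_2$ is critical exactly when it avoids the line $\overline{p_1p_2}$ and neither centre sits at the vertex. These explicit computations are carried out in \cite{twoViews}, and we take them as established here.
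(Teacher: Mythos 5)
Your proposal is correct and follows essentially the same route as the paper: the theorem is obtained by combining \cref{thr:critical_configurations_lie_on_quadrics}, \cref{lem:critical_if_pullback_of_rank2_form}, and the line--quadric dictionary of \cref{lem:correspndence_between_lines_and_quadrics}, and then reducing everything to the case analysis of \cref{tab:1,tab:2,tab:3}, whose computations are deferred to \cite{twoViews} exactly as the paper does. Your explicit handling of $F_Q\neq F_P$ to guarantee non-triviality matches the paper's convention that this inequality is what rules out equivalence of the two reconstructions.
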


\begin{table*}[htbp]
\begin{tabular}{@{}p{0.50\textwidth}p{0.30\textwidth}p{0.125\textwidth}@{}}
\toprule
\textbf{Quadric $S_P$} & \textbf{Conjugate quadric} & \textbf{Conjugates}\\
\midrule
Smooth ruled quadric, camera centers not on a line & Same & 2 \\ \midrule
Smooth ruled quadric, camera centers on a line & Cone, camera centers not on a line & 1\\ \midrule
Cone, camera centers not on a line & Smooth ruled quadric, camera centers on a line & 1\\ \midrule
Cone, one camera center at vertex, other one not & Same & $\infty$ \\ \midrule
Two planes, camera centers in the same plane & Same &$\infty$ \\ \midrule
Two planes, one camera center on the singular line & Same &$\infty$ \\ \midrule
Two planes, camera centers on the singular line\footnotemark & Same &$\infty$ \\ 
\midrule
A double plane, camera centers in the plane & Same &$\infty$ \\
\bottomrule
\end{tabular}
\caption{A list of all possible non-trivial critical configurations for two views, and their conjugates, as well as the number of conjugates for each configuration.}
\label{tab:configurations_and_their_conjugates}
\end{table*}
\footnotetext{This includes the case where the two planes are complex conjugates. In this case, the only real part of the quadric is the line spanned by the camera centers.}
\begin{figure*}[]
\begin{center}
\includegraphics[width = 0.9\textwidth]{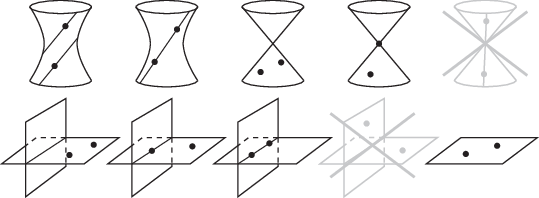}
\end{center}
\caption{Illustration of all configurations of two points on a ruled quadric. Apart from the two marked with crosses, all these configurations are critical.}\label{fig:critical_quadrics}
\end{figure*}

Recall that by \cref{def:equvalent_configurations}, we required two conjugate configurations to not be projectively equivalent. Yet by \cref{thr:critical_conf_for_two_views}, most critical configurations have conjugates that are of the same type. Now, while there is indeed some $H\in\PGL(4)$ taking any smooth quadric $S_P$ to any other smooth quadric $S_Q$, we will soon show that the map taking a point in $S_P$ to its conjugate on $S_Q$ certainly does not lie in $\PGL(4)$. In the next subsections, we describe the map taking a point to its conjugate, to show that it is not a projective transformation.

\subsection{Epipolar lines}
Before we can describe the map taking a point to its conjugate, we need to point out a special pair of lines on $S_P$. Given two pairs of cameras $(P_1,P_2)$ and $(Q_1,Q_2)$, let $S_P$ be the quadric $P_1^{T}F_QP_2$ and define
\begin{align}
\label{eq:g1}
\begin{aligned}
g_{P_1}^{2}=\overline{P_1^{-1}(e_{Q_1}^{2})}=\overline{P_1^{-1}(Q_1(q_2))},\\
g_{P_2}^{1}=\overline{P_2^{-1}(e_{Q_2}^{1})}=\overline{P_2^{-1}(Q_2(q_1))}.
\end{aligned}
\end{align}
The lines $g_{P_1}^{2}$ and $g_{P_2}^{1}$ both lie on $S_P$. They are the pullback of the epipoles $e_{Q_i}^{j}$ from the other set of cameras, so we call them \emph{epipolar lines}.

Epipolar lines are key in understanding the relation between points on $S_P$ and its conjugate $S_Q$. They also play an important role in the study of critical configurations for more than 2 cameras, so let us give a brief analysis of these lines. 

\begin{lemma}[{Lemma 5.10, Definition 5.11 in \cite{HK}}]
\label{lem:permissible_lines}
\begin{enumerate}
\item The line $\gp{i}{j}$ lies on $S_P$ and passes through $p_i$.
\item Any point lying on both $\gp{1}{2}$ and $\gp{2}{1}$ is a singular point on $S_P$.
\item Any point in the singular locus of $S_P$ that lies on one of the lines also lies on the other.
\item If $S_P$ is the union of two planes, $\gp{1}{2}$ and $\gp{2}{1}$ lie in the same plane.
\end{enumerate}
\end{lemma}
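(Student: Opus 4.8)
The plan is to analyze the two lines $\gp{1}{2}$ and $\gp{2}{1}$ directly via their defining equations \eqref{eq:g1}, using the fact that both lie on the quadric $S_P = P_1^T F_Q P_2$. Item (1) is essentially immediate: the line $\gp{i}{j} = \overline{P_i^{-1}(e_{Q_i}^{j})}$ is by construction a fiber of the linear projection $P_i$, hence a line through the camera center $p_i$; and the fact that it lies on $S_P$ is exactly the computation already carried out in the proof of \cref{lem:properties_of_the_quadrics}, since $F_Q$ annihilates the epipoles $e_{Q_i}^{j}$ on the appropriate side. So the real content is in items (2)--(4), which concern what happens where the two lines meet.

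For item (2), suppose $x$ lies on both $\gp{1}{2}$ and $\gp{2}{1}$. The idea is to exhibit two distinct lines through $x$ lying entirely on $S_P$; for a quadric surface in $\p3$ this forces $x$ to be a singular point (a smooth point of a quadric surface lies on at most the lines of the two rulings, and those are distinct unless the point is singular — more concretely, the tangent plane at a smooth point meets the quadric in a conic, which contains two lines through the point only when the point is singular, i.e.\ the conic degenerates). The two lines I would use are $\gp{1}{2}$ and $\gp{2}{1}$ themselves: they are distinct because they pass through the distinct camera centers $p_1 \neq p_2$ (recall the standing assumption that camera centers are distinct), and — provided $x \neq p_1, p_2$ — neither line is the line $\overline{p_1 p_2}$, so they genuinely span a plane. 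I should check separately the degenerate sub-case where $x$ equals one of the camera centers, say $x = p_1$: then $p_1 \in \gp{2}{1}$, and one argues that the two lines $\gp{1}{2}$, $\gp{2}{1}$ through $p_1$ are still distinct (they have different "second" directions unless $\gp{2}{1} = \overline{p_1 p_2}$, which can be excluded or handled), again giving two lines on $S_P$ through $p_1$. For item (3), the argument runs in the reverse direction: if $x$ is a singular point of $S_P$ lying on, say, $\gp{1}{2}$, then every line through $x$ lying on $S_P$ is cut out by intersecting $S_P$ with planes through $x$; since $\gp{1}{2} \subset S_P$ passes through the singular point $x$, and $\gp{2}{1} \subset S_P$ as well, I want to show $\gp{2}{1}$ must pass through $x$ too. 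This is where I would use the structure of $S_P$: a singular quadric surface is a cone (or a pair of planes, or a double plane), and on a cone every line on the surface passes through the vertex; hence the unique singular point forces both lines through it. If $S_P$ is a pair of planes the singular locus is a whole line and the statement needs the observation that each of $\gp{1}{2}$, $\gp{2}{1}$ either lies in this singular line or meets it, which is handled together with item (4).

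For item (4), when $S_P = \Pi_1 \cup \Pi_2$ is a union of two planes: each line on $S_P$ lies in $\Pi_1$ or in $\Pi_2$ (a line not contained in either plane meets $S_P$ in at most two points). I want to rule out $\gp{1}{2} \subset \Pi_1$, $\gp{2}{1} \subset \Pi_2$ with the two planes distinct. The natural handle is the epipolar geometry on the conjugate side together with the transpose symmetry: note that $\gp{1}{2}$ and $\gp{2}{1}$ are related to the conjugate quadric $S_Q$, and in the two-plane case (see \cref{tab:2,tab:3}) the conjugate is again a pair of planes with the camera centers distributed in a specific way; tracing through the correspondence from \cref{lem:correspndence_between_lines_and_quadrics} shows the two epipolar lines must share a point (typically on the singular line $\Pi_1 \cap \Pi_2$), hence lie in a common plane. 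Alternatively, and perhaps more cleanly, I would combine items (2) and (3): in the two-plane case the lines $\gp{1}{2}$ and $\gp{2}{1}$ each pass through some point of the singular line $\Pi_1 \cap \Pi_2$ (being lines on $S_P$ through a camera center, one checks where they meet the singular locus), and then one shows these are the same point, putting both lines in, say, $\Pi_1$.

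The main obstacle I anticipate is item (3), and within item (2) the bookkeeping of the degenerate cases where $x$ coincides with a camera center or where one of the epipolar lines happens to equal $\overline{p_1 p_2}$. The clean conceptual statement — "two lines through a point on a quadric surface $\Rightarrow$ singular point, and conversely a line through the singular point of a cone is forced" — is standard, but making it uniform across all the quadric types in \cref{fig:ruled_quadrics} (smooth ruled quadric, cone, pair of planes, double plane) requires care, since only the non-smooth ones have items (2)--(4) with nonempty content, and the pair-of-planes and double-plane cases behave differently from the cone. I would therefore organize the proof by first disposing of item (1) in one line, then proving (2) via the "two lines on a quadric" lemma with the degenerate-position check, then deducing (3) from the cone/pair-of-planes structure of singular quadrics, and finally handling (4) as a special consequence in the two-plane case.
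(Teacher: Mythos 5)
There is a genuine gap, concentrated in items 2 and 4. For item 2 you invoke the principle that two distinct lines of $S_P$ through a point force that point to be singular; this is false. On a smooth ruled quadric every point lies on exactly two lines of the surface, one from each ruling, and these are precisely the two components of the (always degenerate) conic cut out by the tangent plane — so a smooth point of a smooth quadric already carries two distinct lines of the surface, contrary to your parenthetical claim. The actual content of item 2 is that $\gp{1}{2}$ and $\gp{2}{1}$ belong to the same ruling (the paper relies on this later, e.g.\ in \cref{def:type_of_curve}), so they are disjoint on a smooth quadric and meet only where the quadric degenerates; a purely synthetic "two lines through a point" lemma cannot see this. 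The paper instead verifies the singularity criterion directly, computing $(S_P+S_P^{T})x = P_1^{T}F_Qe_{Q_2}^{1}+(e_{Q_1}^{2})^{T}F_QP_2 = 0$ for $x\in\gp{1}{2}\cap\gp{2}{1}$, which settles the point in one line with no case distinctions. The same computation run with only the hypothesis $x\in\gp{1}{2}$ gives $(S_P+S_P^{T})x=P_1^{T}F_QP_2x$, and full rank of the camera matrices then forces $x=p_2$ or $P_2x=e_{Q_2}^{1}$, i.e.\ $x\in\gp{2}{1}$ — that is item 3, again without any appeal to the cone/two-planes taxonomy, and without the circular dependence on item 4 that your sketch introduces.

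For item 4 your fallback argument does not close: establishing that $\gp{1}{2}$ and $\gp{2}{1}$ meet at a point of the singular line $\Pi_1\cap\Pi_2$ is fully compatible with the two lines lying in different planes (one line in each plane, meeting on the common line, is a consistent picture), so "they share a point" does not yield "they lie in the same component". Your primary suggestion — tracing through \cref{lem:correspndence_between_lines_and_quadrics} — is not an argument as stated. The paper's proof here is genuinely different and global: assuming the lines lie in different planes, it examines the image under $\phi_P$ of the plane $\Pi$ containing both camera centers, observes that this image is the product of a line through $e_{Q_1}^{2}$ with a line missing $e_{Q_2}^{1}$, and derives a contradiction because a plane of $S_Q$ mapping to that product would have to pass through both $q_1$ and $q_2$ while also avoiding $q_1$. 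Some argument of this kind, involving the conjugate configuration rather than only the local intersection pattern on $S_P$, is needed to finish item 4.
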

The first two properties are taken from Lemma 5.10 in \cite{HK}. The last two are neither stated nor proven in \cite{HK}. Nevertheless, the authors seem to have been aware of all four properties.
\begin{proof}
\begin{enumerate}
\item The line $\gp{i}{j}$ is defined as the preimage of a point in the $i$-th image, every such preimage is a line through $p_i$. Next, let $x\in\gp{2}{1}$, then
\begin{align*}
x^{T}S_Px&=x^{T}P_1^{T}F_QP_2x,\\
&=x^{T}P_1^{T}F_Qe_{Q_2}^{1},\\
&=0,
\end{align*} 
so the line lies on $S_P$.
\item A point $x$ is a singular point on $S_P$ if it satisfies $(S_P+S_P^{T})x=0$, let $x\in \gp{1}{2}\cap\gp{2}{1}$, then
\begin{align*}
(S_P+S_P^{T})x&=P_1^{T}F_QP_2x+x^{T}P_1^{T}F_QP_2,\\
&=P_1^{T}F_Qe_{Q_2}^{1}+(e_{Q_1}^{2})^{T}F_QP_2,\\
&=0+0.
\end{align*}
\item If $x\in\gp{1}{2}$ lies in the singular locus of $S_P$, then we have
\begin{align*}
(S_P+S_P^{T})x&=(P_1^{T}F_QP_2+P_2^{T}F_Q^{T}P_1)x,\\
&=P_1^{T}F_QP_2x+(e_{Q_1}^{2})^{T}F_QP_2,\\
&=P_1^{T}F_QP_2x.
\end{align*}
However, since $x$ lies in the singular locus of $S_P$, this expression is equal to zero. Since all camera matrices are of full rank, the only way we can get zero is if $x=p_2$ or if $P_2x=e_{Q_2}^{1}$. In either case, it follows that $x$ lies on $\gp{2}{1}$ as well.
\item[4.] Assume there exist cameras $P_1,P_2,Q_1,Q_2$ such that $S_P$ is the union of two planes and the epipolar lines $g_{P_i}^{j}$ lie in different planes. When the quadric $S_P$ consists of two planes, one of the planes, which we denote by $\Pi$, will (by \cref{thr:critical_conf_for_two_views}) contain both camera centers. As such, the only way that the epipolar lines can lie in different planes is if one of the camera centers, say $p_2$, lies on the intersection of the two planes and the other does not (if both lie on the intersection, then by 3., the epipolar lines are both equal to the intersection of the two planes). 

Recall that the quadric $S_P$ and its conjugate $S_Q$ (also two planes) are both pullbacks of the surface $\mathcal{M}_P\cap\mathcal{M}_Q\subset\pxp$. The map $\phi_P$ takes the plane $\Pi$ to the product of two lines in $\pxp$. The line in the first image passes through the epipole $e_{Q_1}^{2}$, whereas the line in the second image does not pass through the epipole $e_{Q_2}^{1}$ (this is because $\Pi$ contains one of the epipolar lines but not the other). The problem is now that neither of the planes on $S_Q$ can map to the product of these two lines since any such plane would have to be both 
\begin{enumerate}
\item a plane passing through $q_2$ but not through $q_1$ (because in the second image, the line does not pass through $e_{Q_2}^{1}$) and
\item a plane passing through both $q_1$ and $q_2$ (because the plane maps to a line in both images),
\end{enumerate}
which gives us a contradiction. It follows that there are no $P_1,P_2,Q_1,Q_2$ such that $S_P$ is the union of two planes and the epipolar lines $g_{P_i}^{j}$ lie in different planes. So whenever $S_P$ is the union of two planes, the epipolar lines lie in the same plane. \qedhere
\end{enumerate}
\end{proof}

\begin{definition}
\label{def:permissible_lines}
Let $S_P$ be a quadric surface and let $p_1,p_2$ be two distinct points on $S_P$. A pair of lines $g_{P_1}^{2},g_{P_2}^{1}$ is called \emph{permissible} if $g_{P_1}^{2},g_{P_2}^{1}$ satisfy the four conditions in \cref{lem:permissible_lines}.
\end{definition}

Hence epipolar lines are always permissible. The next result shows that for each pair of permissible lines on $S_P$, we can find a conjugate configuration making them the epipolar lines.

\begin{proposition}
\label{prop:conjugates_and_permissible_lines}
Let $P_1,P_2$ be two cameras, and let $S_P$ be a quadric passing through their camera centers. The configuration $(P_1,P_2,S_P)$ is critical if and only if $S_P$ contains a permissible pair of lines. Furthermore, if $p_1,p_2$ do not both lie in the singular locus of $S_P$, then for each pair of permissible lines on $S_P$, there is a unique fundamental matrix $F_Q$ such that $S_P=P_1^{T}F_QP_2$ and such that the two permissible lines are the epipolar lines on $S_P$.\end{proposition}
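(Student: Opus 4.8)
The plan is to prove the two directions separately; the reverse implication, together with its uniqueness clause, carries essentially all of the work. \emph{Easy direction:} if $(P_1,P_2,S_P)$ is critical there is a conjugate $(Q_1,Q_2,Y)$, and by \eqref{eq:SP_and_SQ} its fundamental matrix $F_Q$ satisfies $S_P=P_1^{T}F_QP_2$ and has rank $2$ (\cref{lem:fundamental_form}); then the epipolar lines $g_{P_1}^{2},g_{P_2}^{1}$ of \eqref{eq:g1} lie on $S_P$ and satisfy the four properties of \cref{lem:permissible_lines}, so they form a permissible pair. This gives the ``only if'' part, and uses neither the singularity hypothesis nor anything delicate.

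\emph{Reduction of the converse.} Conversely, fix a permissible pair $g_1:=g_{P_1}^{2}$, $g_2:=g_{P_2}^{1}$ on $S_P$, with $p_1,p_2$ not both in the singular locus of $S_P$, and set $e^2:=P_1(g_1)\in\p2$, $e^1:=P_2(g_2)\in\p2$; these are well-defined points because $g_i$ is a fibre of $P_i$ through $p_i$. I would consider
\begin{equation*}
V:=\Set{F\in\p{3\times3}\mid Fe^1=0,\ (e^2)^{T}F=0},
\end{equation*}
which, choosing coordinates in the two images so that both epipoles become $(0,0,1)^{T}$, is visibly a projective $3$-plane consisting of matrices of rank $\le2$. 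The point is that if some $F\in V$ has rank exactly $2$ and satisfies $S_P=P_1^{T}FP_2$, then by \cref{lem:fundamental_form} $F$ is the fundamental matrix of a camera pair with epipoles $e^2,e^1$, so by \eqref{eq:g1} its epipolar lines are exactly $\overline{P_1^{-1}(e^2)}=g_1$ and $\overline{P_2^{-1}(e^1)}=g_2$; then \cref{lem:critical_if_pullback_of_rank2_form} makes $(P_1,P_2,S_P)$ critical and produces a conjugate, and uniqueness of $F_Q$ reduces to uniqueness of $F$. So the whole statement reduces to showing that the linear map $\Phi\colon F\mapsto P_1^{T}FP_2$ meets $S_P$ in exactly one point of $V$, and that that point has rank $2$.

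\emph{The linear-algebra step.} For uniqueness, recall from the proof of \cref{lem:correspndence_between_lines_and_quadrics} that $\ker\Phi=\langle F_P\rangle$; moreover $F_P\in V$ would force, via \cref{lem:fundamental_form}, $e^1=e_{P_2}^{1}$ and $e^2=e_{P_1}^{2}$, i.e.\ $g_1=g_2=\overline{p_1p_2}$, and then item~(2) of \cref{lem:permissible_lines} would put both camera centres in the singular locus of $S_P$, contrary to hypothesis. Hence $F_P\notin V$, so $\Phi|_V$ is injective and $\Phi^{-1}(S_P)\cap V$ is at most one point. For existence I would prove $\Phi(V)=\mathcal{Q}$, where $\mathcal{Q}$ denotes the space of quadrics in $\p3$ that vanish on $g_1\cup g_2$ and are singular at every point of $g_1\cap g_2$; since $S_P\in\mathcal{Q}$ is exactly properties~(1) and~(2) of permissibility, this yields $S_P\in\Phi(V)$ and hence the desired $F_Q$. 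The inclusion $\Phi(V)\subseteq\mathcal{Q}$ is a direct computation: for $x\in g_i$ the vector $P_ix$ is proportional to the corresponding epipole, so $x^{T}P_1^{T}FP_2x=0$ for every $F\in V$; and if moreover $x\in g_1\cap g_2$ then both $(P_1^{T}FP_2)x$ and $x^{T}P_1^{T}FP_2$ vanish, so $x$ is singular on $\Phi(F)$. The reverse inclusion follows from a dimension count: a quadric in $\p3$ vanishing on two distinct lines and singular at every point of their intersection depends on exactly $3$ projective parameters, the same as $\dim\Phi(V)$ now that $\Phi|_V$ is injective. This pins down a unique $F_Q\in V$ (automatically real, since $\Phi$, $S_P$, $e^1$, $e^2$ are) with $S_P=P_1^{T}F_QP_2$.

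\emph{Rank and the main obstacle.} Finally one must check that $F_Q$ has rank exactly $2$: were its rank $\le1$, $S_P$ would be a union of at most two planes, one through $p_1$ and one through $p_2$, and a short case analysis rules this out — the configurations in which both centres would lie in the singular locus are excluded outright, and the remainder by permissibility conditions~(3) and~(4); in particular~(4) rules out ``two planes with the camera centres in different planes'', which carries no permissible pair at all and is correctly non-critical. I expect this verification — checking, for each degenerate projective type of $S_P$, that the permissibility conditions really do force the constructed $F_Q$ to have rank $2$ and to be the unique solution — to be the main obstacle. For a smooth ruled $S_P$ it is immediate, since then $g_1,g_2$ are automatically skew and $F_Q$ cannot drop rank; the finitely many singular types are handled by reduction to projective normal forms, in the spirit of the computations in \cite{twoViews}.
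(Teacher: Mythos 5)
Your proof is correct, but it takes a genuinely different route from the paper's. The paper argues by comparison with the classification already recorded in \cref{tab:1,tab:2,tab:3,tab:configurations_and_their_conjugates}: for the first claim it matches the list of critical quadrics against the list of quadrics carrying a permissible pair, and for uniqueness it counts conjugates against pairs of permissible lines (finitely many of each for the smooth quadric and the non-vertex cone, a one-dimensional family of each for the cone with a center at the vertex and for the two-planes cases). Your argument instead builds $F_Q$ directly: the linear space $V$ of matrices annihilated by the prescribed epipoles, injectivity of $F\mapsto P_1^{T}FP_2$ on $V$ because $F_P\notin V$ under the singularity hypothesis, and the matching dimension count $\dim\Phi(V)=\dim\mathcal{Q}=3$ giving existence and uniqueness at once. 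What this buys is independence from the two-view tables, which the paper imports from \cite{twoViews} without reproving; what it costs is the rank verification you flag at the end, and that is indeed where the real content sits. The one genuinely delicate case is two planes with exactly one camera center, say $p_2$, on the singular line: there $V$ does contain a rank-one matrix pulling back to $S_P$ unless permissibility is used in full — condition (3) applied to the singular point $p_2\in g_{P_2}^{1}$ forces $g_{P_1}^{2}=\overline{p_1p_2}$ and forbids $g_{P_2}^{1}$ from being the singular line, and only then does the unique preimage in $V$ have rank $2$. Your sketch correctly attributes this to conditions (3) and (4), so the case analysis closes, but for a submitted proof that paragraph would need to be written out for each reducible type rather than asserted. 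One small point to make explicit: the hypothesis that $p_1,p_2$ are not both singular on $S_P$ also guarantees $g_{P_1}^{2}\neq g_{P_2}^{1}$ (equality forces both lines to be $\overline{p_1p_2}$ with every point singular by condition (2)), which is what licenses the "two distinct lines" dimension count for $\mathcal{Q}$.
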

\begin{proof}
The first part can be proven by comparing the quadrics in \cref{tab:configurations_and_their_conjugates} to the set of quadrics containing the required lines, and noting that they are the same. We leave this to the reader. 

The second part is immediate for the three cases where there is a finite number of conjugates since the number of conjugates is equal to the number of pairs of permissible lines and each conjugate comes with its unique choice of lines. For the remaining quadrics, (cone and two planes), we observe from \vref{tab:1,tab:2,tab:3} that each quadric has a one-dimensional family of conjugates, each conjugate coming with a pair of epipolar lines. These epipolar lines are unique, as long as the cameras do not both lie in the singular locus. This one-dimensional family matches exactly the one-dimensional family of pairs of permissible lines on these quadrics.
\end{proof}

\subsection{Maps between quadrics}
Let us now have a closer look at the map $\psi$ taking a point to its conjugate. Given two pairs of cameras $P_1,P_2$ and $Q_1,Q_2$ with camera centers $p_1,p_2$ and $q_1,q_2$ respectively, let the quadrics $S_P$ and $S_Q$ and the epipolar lines $g$ be defined as before (\cref{eq:SP_and_SQ,eq:g1}). 

Let $\psi_P$ be the map taking a point on $S_P$ to its conjugate:
\begin{align*}
\psi_P \from S_P &\dashrightarrow S_Q,\\
x&\mapsto l_1\cap l_2,
\end{align*} 
where $l_i$ is the line $\overline{Q_i^{-1}(P_i(x))}$. This map is not well-defined for all points on $S_P$, since some points might have more than one conjugate (see 2. and 3. in \cref{lem:relations_between_quadrics}).

\begin{lemma}
\label{lem:relations_between_quadrics}
The map $\psi_P$ has the following properties:
\begin{enumerate}
\item \label{item1}The conjugate to any point on the epipolar line $g_{P_i}^{j}$, apart from the camera center $p_i$ itself, is the camera center $q_j$.
\item \label{item2}The conjugate to the camera center $p_j$ is the epipolar line $g_{Q_i}^{j}$
\item \label{item3}The conjugate to the intersection of the two epipolar lines $g_{P_1}^{2}$ and $g_{P_2}^{1}$ (if it exists) is the line spanned by the two camera centers $q_1,q_2$.
\item \label{item4} The map $\psi_P$ is a bijective map everywhere else.
\end{enumerate}
\end{lemma}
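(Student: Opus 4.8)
The plan is to unwind the definition of $\psi_P$ using the explicit descriptions of the epipolar lines from \cref{eq:g1} and the fact (\cref{lem:joint-camera_map_is_isomorphism}) that the preimage of a point under a single camera is a line through its center. For item \ref{item1}, take a point $x\in g_{P_2}^{1}$ with $x\neq p_2$. By definition $P_2(x)=e_{Q_2}^{1}=Q_2(q_1)$, so the line $l_2=\overline{Q_2^{-1}(P_2(x))}$ is the line through $q_2$ and $q_1$. On the other hand $l_1=\overline{Q_1^{-1}(P_1(x))}$ is some line through $q_1$, since $P_1(x)$ can be anything. Hence $l_1\cap l_2$ contains $q_1$, and generically equals $\{q_1\}$; this identifies the conjugate as $q_1$ (the symmetric statement for $g_{P_1}^{2}$ is identical). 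For item \ref{item2}, one applies item \ref{item1} to the conjugate configuration: the map $\psi_Q\from S_Q\dashrightarrow S_P$ sends every point of $g_{Q_i}^{j}$ (other than $q_i$) to $p_j$, so the fiber $\psi_P^{-1}(p_j)$ is exactly $g_{Q_i}^{j}$, which is what it means to say the conjugate of $p_j$ is that line. This requires knowing that $\psi_Q$ is inverse to $\psi_P$ wherever both are defined, which is immediate from the symmetry of the defining condition $\phi_P(x)=\phi_Q(y)$.

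Item \ref{item3} combines the two previous computations: if $z\in g_{P_1}^{2}\cap g_{P_2}^{1}$ then $P_1(z)=e_{Q_1}^{2}=Q_1(q_2)$ and $P_2(z)=e_{Q_2}^{1}=Q_2(q_1)$, so $l_1$ is the line through $q_1$ and $q_2$ and $l_2$ is also the line through $q_1$ and $q_2$; thus $l_1=l_2$ is the full line $\overline{q_1q_2}$ rather than a single point, and every point on it is a conjugate of $z$. (By \cref{lem:permissible_lines} such a $z$ is a singular point of $S_P$, consistent with it having a non-trivial fiber.) For item \ref{item4}, away from the two epipolar lines $g_{P_1}^{2}, g_{P_2}^{1}$ and the camera centers, for a generic $x\in S_P$ the images $P_1(x), P_2(x)$ are not epipoles, so the lines $l_1\ni q_1$ and $l_2\ni q_2$ are distinct lines; since $\phi_P(x)\in\mathcal M_P\cap\mathcal M_Q$ (because $x$ lies on $S_P=P_1^TF_QP_2$, hence $\phi_P(x)\in\mathcal M_Q$), there is at least one point $y$ with $\phi_Q(y)=\phi_P(x)$, and by \cref{lem:joint-camera_map_is_isomorphism} applied to $\textbf{Q}$ this $y$ is unique unless $q_1,q_2$ and $y$ are collinear — but that collinear case is exactly the locus handled in item \ref{item3} on the $S_Q$ side, i.e. $x$ would then lie on an epipolar line of $S_P$. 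So $\psi_P$ is single-valued and, by the symmetric argument, has single-valued inverse $\psi_Q$ on the complementary locus, hence is bijective there.

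The main obstacle is bookkeeping the degenerate cases rather than any deep idea: one must be careful that $l_1\cap l_2$ really is a single point (and not empty or a line) exactly on the complement of the epipolar lines and centers, and that the "conjugate" of a point is being read correctly when the fiber of $\phi_P$ or $\phi_Q$ is positive-dimensional — this is why items \ref{item1}--\ref{item3} are phrased asymmetrically ("the conjugate of a point is a line" versus "a line maps to a point"). A clean way to organize this is to prove item \ref{item4} first on the open locus, then obtain items \ref{item1}--\ref{item3} as the description of what happens on the complement, using the $\psi_P\leftrightarrow\psi_Q$ symmetry to transfer statements about $\textbf{P}$'s epipolar lines to statements about $\textbf{Q}$'s, and invoking \cref{lem:permissible_lines} to match the singular-locus behaviour of $S_P$ with the non-generic fibers. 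No genuinely hard step is expected; the value is in the careful case division.
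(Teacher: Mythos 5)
Your proposal is correct and follows essentially the same route as the paper's proof: unwind the definition of $\psi_P$, observe that a point on $g_{P_i}^{j}$ forces $l_i$ to be the line $\overline{q_1q_2}$ while $l_j$ is a line through $q_j$, and treat items \ref{item2}--\ref{item4} by symmetry and by the case analysis of when $l_1$ and $l_2$ coincide. Your added care about why $l_1$ and $l_2$ actually meet (via $\phi_P(x)\in\mathcal{M}_Q$) is a small refinement the paper leaves implicit, but the argument is the same.
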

\begin{proof}
\begin{enumerate}
\item Let $x\in g_{P_i}^{j}$. Then $l_i$ is the line spanned by the two camera centers $q_1$ and $q_2$, while $l_j$ is a line through $q_j$. Thus the two lines intersect in $q_j$.
\item Same argument as in 1. but starting with the line $g_{Q_i}^{j}$ instead.
\item Let $x\in g_{P_1}^{2}\cap g_{P_1}^{2}$. Then $l_1$ and $l_2$ are both equal to the line spanned by the two camera centers $q_1$ and $q_2$, so any point on this line is conjugate to $x$.
\item If $x$ does not lie on either epipolar line, the two lines $l_1$ and $l_2$ are distinct, and meet in a point $y$ that is not a camera center. For any such point $y$, the lines $l_1$ and $l_2$ are the only pair of such lines meeting in this point (since $l_i$ passes through $q_i$). Hence $x$ only has $y$ as its conjugate and vice versa. \qedhere
\end{enumerate}
\end{proof}

\begin{remark}
\label{rem:method_for_computing}
When the two pairs of cameras $P_1,P_2$ and $Q_1,Q_2$ are known, one can explicitly compute the map $\psi_P$ as a rational function. For some representatives of a general point $X\in S_P$ and its conjugate $\psi_P(X)\in S_Q$, we have
\begin{align*}
&\overbrace{\begin{pmatrix}
Q_1&P_1X&0\\
Q_2&0&P_2X
\end{pmatrix}}^{A}\cdot\begin{pmatrix}
\psi_P(X)\\1\\1
\end{pmatrix}\\
=&\begin{pmatrix}
P_1&Q_1\psi_P(X)&0\\
P_2&0&Q_2\psi_P(X)
\end{pmatrix}\cdot\begin{pmatrix}
X\\1\\1
\end{pmatrix}=0.
\end{align*}
Using the adjugate of $A$, we can get an expression for $\psi_P$ as a rational function in the homogeneous coordinates of $X$.
\end{remark}
With these results in place, we can give two formulas for how $\psi_P$ acts on the curves on $S_P$:

\begin{definition}
\label{def:type_of_curve}
Let $S_P$ be a smooth quadric or cone and let $C_P$ be a curve on $S_P$. We say that $C_P$ is of type $(a,b,c_1,c_2)$, where $c_i$ is the multiplicity of $C_P$ in the camera center $p_i$ and where $a$ and $b$ is:
\begin{itemize}
\item If $S_P$ is smooth, $a$ is the number of times $C_P$ intersects a generic line in the same family as the epipolar lines $g_{P_i}^{j}$, and $b$ is the number of times it intersects the lines in the other family, (meaning $(a,b)$ is the bidegree of $C_P$).
\item If $S_P$ is a cone, $a$ is the number of times $C_P$ intersects each line outside of the vertex, and $b$ the number of times it intersects each line.
\end{itemize}
\end{definition}

\begin{proposition}[{\cite[Lemma 8.32]{HK}},{\cite[Proposition 5.7]{twoViews}}]
\label{prop:bidegree_on_P_->_bidegree_on_Q}
Let $S_P$ be a smooth quadric or a cone, and let $C_P\subset S_P$ be a curve of type $(a,b,c_1,c_2)$, such that $C_P$ does not contain either of the epipolar lines. Then the conjugate curve $C_Q\subset S_Q$ is of type $(a,a+b-c_1-c_2,a-c_2,a-c_1)$.
\end{proposition}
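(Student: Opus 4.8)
The plan is to track how the four numbers $(a,b,c_1,c_2)$ transform under $\psi_P$ by using three structural facts established earlier: first, that $\psi_P$ is a birational map $S_P\dashrightarrow S_Q$ which is an honest bijection away from the epipolar lines (\cref{lem:relations_between_quadrics}, item \ref{item4}); second, the precise description of what $\psi_P$ does on the epipolar lines and camera centers (\cref{lem:relations_between_quadrics}, items \ref{item1}, \ref{item2}); and third, the identification of curve-types with intersection numbers against rulings and with multiplicities at the camera centers (\cref{def:type_of_curve}). The idea is that the type of a curve is determined purely by intersection-theoretic data on the quadric, and $\psi_P$ moves this data around in a controlled way because it contracts the epipolar line $g_{P_i}^j$ to the point $q_j\in S_Q$ and, conversely, blows up $p_j\in S_P$ to the epipolar line $g_{Q_i}^j\subset S_Q$.

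First I would compute the new multiplicities $c_1',c_2'$ at the centers $q_1,q_2$ of $S_Q$. By item \ref{item1} of \cref{lem:relations_between_quadrics}, the whole epipolar line $g_{P_i}^j$ maps to $q_j$, so the local behaviour of $C_Q$ at $q_j$ is governed by how $C_P$ meets $g_{P_i}^j$; the number of branches of $C_Q$ through $q_j$ equals the number of intersection points of $C_P$ with $g_{P_i}^j$ away from $p_i$. Since $C_P$ does not contain the epipolar line, its intersection number with $g_{P_i}^j$ is $a$ (the epipolar lines are in the "$a$-family"), of which $c_i$ are absorbed at $p_i$, leaving $a-c_i$ genuine intersections mapping to $q_j$. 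Keeping track of which index goes where (the line $g_{P_1}^2$ carries multiplicity data at $p_1$ and maps to $q_2$), this gives $c_1'=a-c_2$ and $c_2'=a-c_1$, matching the claim.

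Next I would pin down the new bidegree $(a',b')$. For $a'$: a generic line of the epipolar family on $S_Q$ is the $\psi_P$-image of a generic line of the epipolar family on $S_P$ (this follows because $\psi_P$ sends rulings to rulings — one checks it on the level of the maps $\overline{Q_i^{-1}P_i(\cdot)}$, or cites the bidegree mechanics of \cref{prop:conjugates_and_permissible_lines}). Since $\psi_P$ is a bijection off a curve, $C_Q$ meets that generic epipolar line in the same number $a$ of points as $C_P$ meets a generic epipolar line, so $a'=a$. For $b'$: I would compute the intersection of $C_Q$ with a line $\ell_Q$ of the \emph{other} ruling through one of the $q_j$'s, using that such $\ell_Q$ is the $\psi_P$-image of the epipolar line $g_{P_i}^j$ together with a line of the non-epipolar family through $p_i$ minus the contracted stuff — equivalently, work on $S_P$ with the divisor class $[\ell_Q]$ pulled back. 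A cleaner route: use the known relations on $S_P\cong\p1\times\p1$ (or the cone), where the class of $C_P$ is $a\cdot f_1+b\cdot f_2$ with $f_i$ the ruling classes, $\psi_P$ is the composition of blowing up $p_1,p_2$ and blowing down the strict transforms of the two epipolar lines, and a Picard-group bookkeeping of exceptional divisors yields the class $a\cdot f_1+(a+b-c_1-c_2)f_2$ with the stated multiplicities. Either way, $b'=a+b-c_1-c_2$.

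The main obstacle I expect is making the blow-up/blow-down description of $\psi_P$ precise enough that the intersection-number bookkeeping is rigorous rather than heuristic — in particular verifying that $C_P$ not containing an epipolar line guarantees the strict transform is not contracted and that no unexpected base points appear, and handling the cone case (where $S_P\cong\p1\times\p1$ must be replaced by the Hirzebruch surface $\mathbb{F}_2$ or a direct ad hoc intersection count through the vertex) in parallel with the smooth case. Once the geometry of $\psi_P$ is set up correctly, the transformation of $(a,b,c_1,c_2)$ is a short linear computation; alternatively one can bypass the blow-up language entirely and argue directly with \cref{rem:method_for_computing}, degree-counting the rational expression for $\psi_P$ on a parametrized curve, at the cost of a messier calculation.
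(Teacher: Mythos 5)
Your proposal is correct in substance, and for the multiplicities $c_1',c_2'$ it coincides with the paper's argument: both count the $a$ intersections of $C_P$ with $g_{P_1}^{2}$, subtract the $c_1$ of them absorbed at $p_1$, and read off $c_2'=a-c_1$ from \cref{lem:relations_between_quadrics}. Where you diverge is in the remaining two coordinates. For $a'$ the paper avoids further geometry entirely: the formula $c_2'=a-c_1$ applies symmetrically in $P$ and $Q$, so $c_2=a'-c_1'=a'-(a-c_2)$ forces $a'=a$. You instead argue that a generic member of the epipolar ruling on $S_Q$ is the image of a generic member of the epipolar ruling on $S_P$; the conclusion is valid, but your parenthetical justification that ``$\psi_P$ sends rulings to rulings'' is too strong --- the paper's own computation shows that a generic line of the \emph{other} ruling maps to a conic of type $(1,1,1,1)$, so only the epipolar family is preserved. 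For $b'$ the paper computes the images of four specific line types by back-projection, namely $(1,0,1,0)\mapsto(1,0,1,0)$, $(1,0,0,0)\mapsto(1,1,1,1)$, $(0,1,0,0)\mapsto(0,1,0,0)$, and then solves for the coefficients in the ansatz $b'=\lambda_aa+\lambda_bb+\lambda_{c_1}c_1+\lambda_{c_2}c_2$; you propose the full elementary-transformation bookkeeping (blow up $p_1,p_2$, contract the strict transforms of the epipolar lines, track classes in the Picard group), which is precisely the route of the full intersection-theoretic proof in \cite{twoViews} that the paper explicitly declines to reproduce in favour of the shorter linearity trick. Your version buys rigour --- the linearity ansatz is justified only by the remark that $\psi_P$ is rational, whereas the divisor-class computation is airtight once the birational model is established --- at the cost of setting up that model carefully and treating the cone (as $\mathbb{F}_2$, or by a direct count through the vertex) separately; these are exactly the obstacles you identify, so the plan is sound.
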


The action of $\psi_P$ on the lines on a quadric/cone is illustrated in \cref{fig:conjugates1,fig:conjugates2}.

\begin{proof}
For a full proof using intersection theory, see the proof in \cite{twoViews}. Here we give a shorter and more intuitive version of the proof. Let the conjugate curve $C_Q$ be of type $(a',b',c_1',c_2')$. The curve $C_P$ intersects $g_{P_1}^{2}$ $a$ times, with $c_1$ of these intersection point being in the camera center $p_1$. By \cref{lem:relations_between_quadrics} \cref{item1}, every point on $g_{P_i}^{j}$ (apart from the camera center itself) is mapped to $q_2$. It follows that $c_2'=a-c_1$, and likewise for $c_1'$. Since this argument goes both ways, we get that $c_2=a'-c_1'=a'-(a-c_2)$. It follows that $a'=a$.

To find an expression for $b'$, we consider how $\psi_P$ treats various lines on $S_P$. We consider lines on $S_P$ different from the epipolar lines. So by \cref{lem:relations_between_quadrics} \cref{item4}, their conjugate is a curve on $S_Q$. Consider first a line through $p_1$ different from $g_{P_1}^{2}$. This maps to a point in the first image, and a line in the second. Hence its conjugate is a line on $S_Q$ passing through $q_1$ (type (1,0,1,0)). Next, we consider a line that intersects the epipolar lines but does not pass through any camera centers. In each image, this line appears as a line passing through $e_{Q_i}^{j}$. Back-projecting these lines using the cameras $Q$, we get two planes, both passing through the two camera centers $q_1,q_2$. The intersection of these two planes with $S_Q$ is a conic curve passing through both camera centers (type (1,1,1,1)). Lastly, if we take a line on $S_P$ in the same family as the epipolar lines, we get a line in each image that does not pass through any epipoles. Back-projecting these lines using the cameras $Q$, we get two distinct planes, whose intersection is a line not intersecting the epipolar lines (type (0,1,0,0)). To summarize, the map $\psi_P$ maps:
\begin{align*}
(1,0,1,0)&\mapsto(1,0,1,0), & &(1,0,0,1)\mapsto(1,0,0,1),\\
(1,0,0,0)&\mapsto(1,1,1,1), & &(0,1,0,0)\mapsto(0,1,0,0).
\end{align*}
Due to $\psi_P$ being a rational map, we have $b'=\lambda_aa+\lambda_bb+\lambda_{c_1}c_1+\lambda_{c_2}c_2$. Solving for $\lambda$ using the equations above, we get $b'=a+b-c_1-c_2$.
\end{proof}

\begin{definition}
Let $S_P$ be two planes with two camera centers not both lying on the intersection and let $C_P$ be a curve on $S_P$. We say that $C_P$ is of type $(a,b,c_0,c_1,c_2)$, where $a$ is the degree of the curve in the plane with the epipolar lines, $b$ the degree of the curve in the other plane, $c_0$ is the multiplicity of $C_P$ in the intersection of the epipolar lines $g_{P_i}^{j}$ and $c_1,c_2$ is the multiplicity in the camera centers $p_1,p_2$ respectively. 
\end{definition}

\begin{proposition}[{\cite[Proposition 5.9]{twoViews}}]
\label{prop:bidegree_on_P_->_bidegree_on_Q_planes}
Let $S_P$ be two planes with two camera centers not both lying on the intersection and let $C_P\subset S_P$ be a curve of type $(a,b,c_0,c_1,c_2)$ such that $C_P$ does not contain either of the epipolar lines or the line spanned by the camera centers. Then the conjugate curve $C_Q\subset S_Q$ is of type $(2a-c_0-c_1-c_2,b,a-c_1-c_2,a-c_0-c_2,a-c_0-c_1)$.
\end{proposition}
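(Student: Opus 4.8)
The plan is to run the strategy of the proof of \cref{prop:bidegree_on_P_->_bidegree_on_Q}, analysing the conjugation map $\psi_P\from S_P\dashrightarrow S_Q$ one plane at a time. Write $S_P=\Pi_1\cup\Pi_2$ and $S_Q=\Pi_1'\cup\Pi_2'$, where $\Pi_1,\Pi_1'$ are the planes containing the epipolar lines; by \cref{lem:permissible_lines} and \cref{tab:configurations_and_their_conjugates}, $\Pi_1$ also contains $p_1,p_2$, the point $r=g_{P_1}^2\cap g_{P_2}^1$ lies on the singular line $m=\Pi_1\cap\Pi_2$, and likewise for $\Pi_1',q_1,q_2,r',m'$. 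Since $\phi_P(S_P)=\phi_Q(S_Q)=\mathcal{M}_P\cap\mathcal{M}_Q$, and since $\phi_P(\Pi_1)$, $\phi_Q(\Pi_1')$ are products of two lines in $\pxp$ while $\phi_P(\Pi_2)$, $\phi_Q(\Pi_2')$ are copies of $\p2$ (non-isomorphic surface types), $\psi_P$ must restrict to birational maps $\Pi_2\dashrightarrow\Pi_2'$ and $\Pi_1\dashrightarrow\Pi_1'$. Accordingly I will track the degree-$b$ part of $C_P$ in $\Pi_2$ and the plane-curve data $(a,c_0,c_1,c_2)$ of its part in $\Pi_1$ separately.

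The part in $\Pi_2$ is easy. Recall $\psi_P(x)=l_1\cap l_2$ with $l_i=\overline{Q_i^{-1}(P_i(x))}$. For $x\in\Pi_2$, which contains no camera centre, $P_1|_{\Pi_2}\from\Pi_2\to\p2$ is an isomorphism; composing it with $Q_1^{-1}$ and with intersection against $\Pi_2'$ realises $\psi_P|_{\Pi_2}\from\Pi_2\to\Pi_2'$ as a composition of isomorphisms, hence a projective isomorphism. So the degree-$b$ part of $C_P$ is carried to a degree-$b$ curve in $\Pi_2'$, which gives the unchanged second entry of the type.

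The part in $\Pi_1$ carries the content. Here $\psi_P|_{\Pi_1}$ factors as $\Pi_1\dashrightarrow\phi_P(\Pi_1)\dashrightarrow\Pi_1'$ with $\phi_P(\Pi_1)=\phi_Q(\Pi_1')$, the first arrow being the product of the two projections of the plane $\Pi_1\cong\p2$ away from $p_1$ and away from $p_2$ onto $\p1\times\p1$, and the second the inverse of the analogous map built from $q_1,q_2$. A general line in $\Pi_1$ maps to a $(1,1)$-curve and the preimage of a $(1,1)$-curve is a conic, so $\psi_P|_{\Pi_1}$ is a \emph{quadratic} Cremona transformation; by \cref{lem:relations_between_quadrics} the three lines it contracts are exactly $g_{P_1}^2=\overline{p_1r}$, $g_{P_2}^1=\overline{p_2r}$ (collapsed to $q_2$, $q_1$) and $\overline{p_1p_2}$ (collapsed to $r'$). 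As these form the triangle with vertices $r,p_1,p_2$, the map is the standard quadratic Cremona transformation with base points $r,p_1,p_2$ on the source and $r',q_1,q_2$ on the target, matched (by tracing the contracted sides) as $r\leftrightarrow r'$, $p_1\leftrightarrow q_1$, $p_2\leftrightarrow q_2$. Feeding the classical transformation rule — a plane curve of degree $d$ with multiplicities $(m_0,m_1,m_2)$ at the base points goes to one of degree $2d-m_0-m_1-m_2$ with multiplicities $(d-m_1-m_2,\ d-m_0-m_2,\ d-m_0-m_1)$ at the image base points — with $(d,m_0,m_1,m_2)=(a,c_0,c_1,c_2)$ produces the remaining four entries, and the proposition follows.

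The main obstacle is the delicate bookkeeping at $r$ and $r'$, which lie on the singular lines: I must check that the multiplicities of $C_P$ and $C_Q$ at these points are the ones the ``type'' actually records (namely those of the $\Pi_1$-parts) and that components of $C_P$ running along $m$ are handled coherently by the two restrictions of $\psi_P$; this is also where the hypothesis that $C_P$ avoids the two epipolar lines and the line $\overline{p_1p_2}$ enters, these being precisely the three lines contracted by $\psi_P|_{\Pi_1}$, on which the formula degenerates. A secondary point is the two sub-cases with one camera centre on the singular line: then an epipolar line coincides with $m$ and $r$ with a camera centre, so the triangle $r,p_1,p_2$ collapses, and one must either specialise from the generic case or redo the (degenerate) Cremona computation. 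Finally, one can sidestep the Cremona language entirely and, exactly as in \cref{prop:bidegree_on_P_->_bidegree_on_Q}, pin down the linear type-transformation by computing the images under $\psi_P$ of a spanning family of special curves — a general line in each plane, general lines through $p_1$, $p_2$ and $r$, and a general conic in $\Pi_1$ — and solving the resulting linear system.
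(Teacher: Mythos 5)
Your argument reaches the correct formula by a genuinely different route from the paper's. The paper works one contracted line at a time: it counts the $a$ intersections of $C_P$ with $g_{P_1}^{2}$, with $g_{P_2}^{1}$, and with $\overline{p_1p_2}$, subtracts those absorbed at $p_1$, $p_2$ and at the intersection point $x_P$ of the epipolar lines, and reads off $c_2'=a-c_0-c_1$, $c_1'=a-c_0-c_2$, $c_0'=a-c_1-c_2$ from \cref{lem:relations_between_quadrics}; the degree $a'$ is then obtained not from a direct computation but from the observation that the same relations hold with the roles of $P$ and $Q$ exchanged, which forces $a'=2a-c_0-c_1-c_2$. You instead identify $\psi_P|_{\Pi_1}$ as the standard quadratic Cremona transformation based at the triangle $r,p_1,p_2$, whose three contracted sides are exactly $g_{P_1}^{2}$, $g_{P_2}^{1}$ and $\overline{p_1p_2}$, and quote the classical degree-and-multiplicity transformation rule, which yields all four entries $a',c_0',c_1',c_2'$ in one stroke; your treatment of $\Pi_2$ as an isomorphism coincides with the paper's one-line justification of $b'=b$. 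Your identification of the base points, the contracted lines, and the vertex matching $r\leftrightarrow r'$, $p_i\leftrightarrow q_i$ is consistent (each contracted side omits exactly the vertex whose image it is blown down to), so the Cremona rule does produce the stated type. The paper's route buys self-containedness at the cost of the indirect ``the formula goes both ways'' step; yours buys a direct derivation of the degree drop from a classical result. The caveats you flag --- multiplicity bookkeeping at $r,r'$ on the singular lines, components of $C_P$ meeting $\Pi_1\cap\Pi_2$, and the degenerate sub-case where a camera centre lies on that intersection so that $r$ collapses onto it --- are genuine, but the paper's own shorthand proof passes over exactly the same points and defers full details to \cite{twoViews}, so your proposal is at the same level of rigour as the printed argument.
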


\begin{proof}
See \cite{twoViews} for a fully detailed proof. We give a shorthand version here. Let the conjugate curve $C_Q$ be of type $(a',b',c_0',c_1',c_2')$, and denote the intersection of the two epipolar lines on $S_P$ and the two on $S_Q$ by $x_P$ and $x_Q$ respectively. The curve $C_P$ intersects $g_{P_1}^{2}$ $a$ times, with $c_1$ of these intersection point being in the camera center $p_1$ and $c_0$ of these being in $x_P$. Every point on $g_{P_i}^{j}$ (apart from $p_1$ and $x_P$) is mapped to $q_2$. It follows that $c_2'=a-c_0-c_1$, and likewise for $c_1'$. Moreover, $C_P$ intersects the line $\overline{p_1p_2}$ $a$ times, with $c_1+c_2$ of these intersection points being camera centers. By \cref{lem:relations_between_quadrics} \cref{item3} the conjugate to all these intersection points is the point $x_Q$ on $S_Q$, so $c_0'=a-c_1-c_2$. The formula goes both ways, so we get that 
\begin{align*}
c_0&=a'-c_1'-c_2',\\
&=a'-(a-c_0-c_2)-(a-c_0-c_1),\\
&=a'-2a+2c_0+c_1+c_2.
\end{align*}
It follows that $a'=2a-c_0-c_1-c_2$. Lastly, since the map $\psi_P$ acts as an isomorphism on the plane not containing the camera centers, we have $b'=b$.
\end{proof}

\begin{figure}[]
\begin{center}
\includegraphics[width = \linewidth]{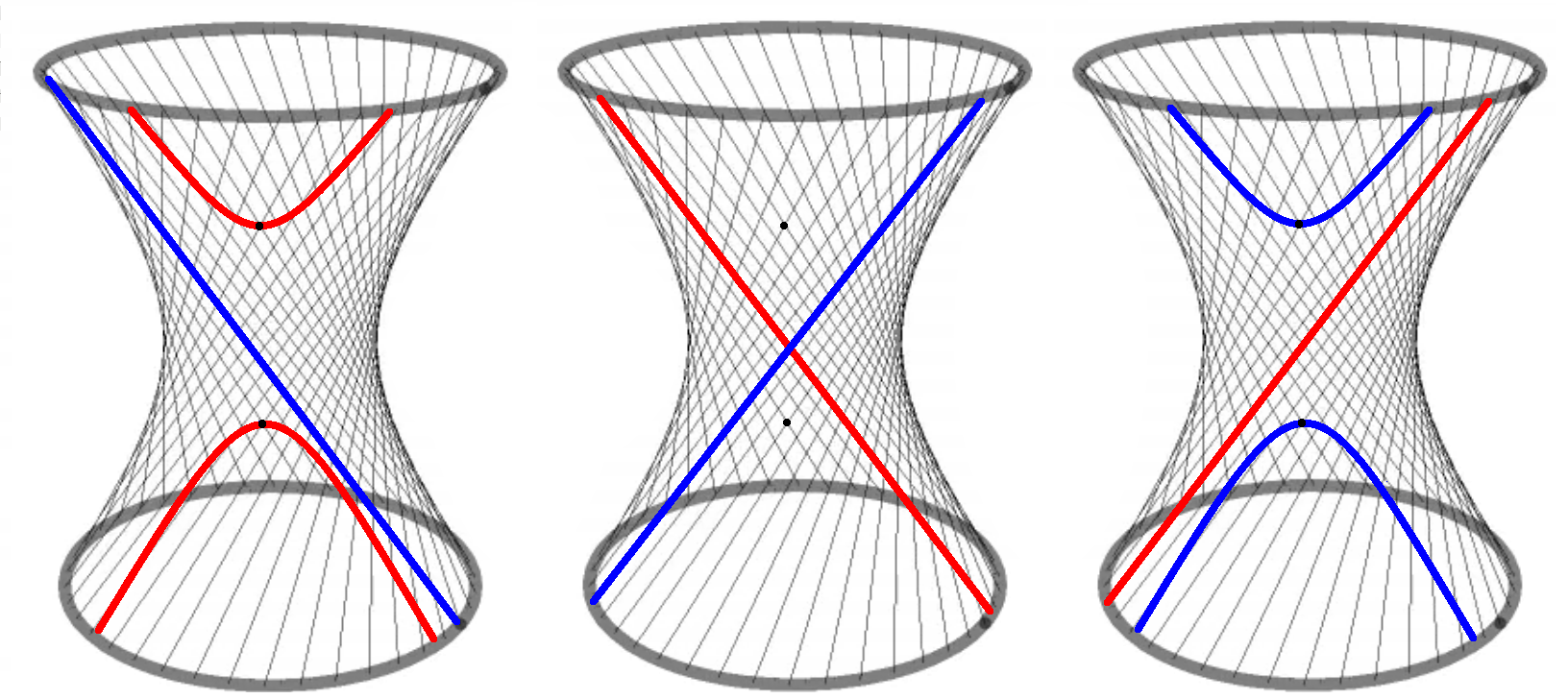}
\end{center}
\caption{The smooth quadric has two conjugates (original in the middle). The left is the one we get if we take the lines in the blue family to be the epipolar lines, whereas the one on the right is the one we get if we choose the red. In both cases the lines in the family with the epipolar lines are preserved, whereas the lines in the other family are mapped to conic curves.}
\label{fig:conjugates1}
\end{figure}

\begin{figure}[]
\begin{center}
\includegraphics[width = 0.7\linewidth]{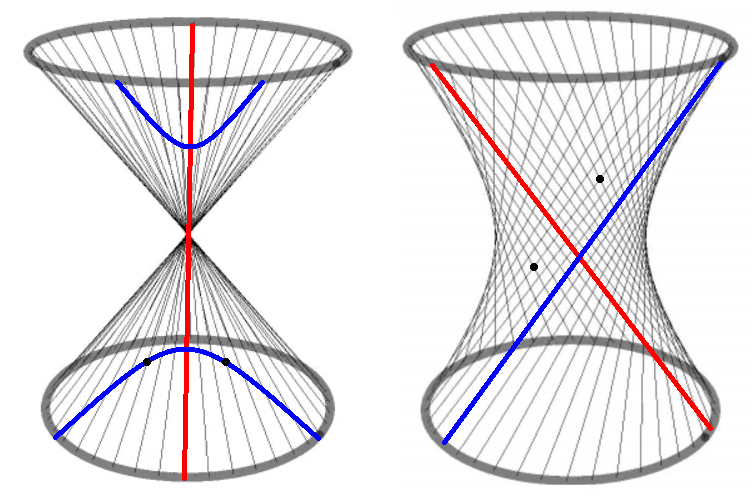}
\end{center}
\caption{The cone and smooth quadric with two cameras on a line are conjugate to one another. The line spanned by the camera centers (right) is mapped to the vertex on the cone. The other lines in this family map to conics. Lines in the other family are preserved.}
\label{fig:conjugates2}
\end{figure}

\section{Preliminaries for the three-view case}
\label{sec:preliminary_three_views}
We start with the observation that any critical configuration for three views is critical for each pair of views.
\begin{theorem}
\label{thr:critical_only_if_critical_for_fewer_views}
Let $(P_1,\ldots,P_n,X)$ be a non-trivial critical configuration, and let $\mathcal{P}\subset\Set{P_1,\ldots,P_n}$ be a subset of at least two cameras. Then $(\mathcal{P},X)$ is a critical configuration.

\end{theorem}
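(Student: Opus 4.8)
The plan is to witness criticality of $(\mathcal{P},X)$ by simply restricting the conjugate configuration of the full $n$-view configuration to the chosen subset of views. Since $(P_1,\ldots,P_n,X)$ is (non-trivially) critical, fix a conjugate $(Q_1,\ldots,Q_n,Y)$ with $\phi_{\textbf{P}}(X)=\phi_{\textbf{Q}}(Y)$, and write $\mathcal{P}=(P_{i_1},\ldots,P_{i_k})$ with $k\geq 2$. Set $\mathcal{Q}=(Q_{i_1},\ldots,Q_{i_k})$ and keep the same point sets $X,Y$. The first step is to observe that $\phi_{\mathcal{P}}(X)=\phi_{\mathcal{Q}}(Y)$: the joint camera map of a subtuple of cameras is the joint camera map of the full tuple followed by the coordinate projection $(\p2)^{n}\to(\p2)^{k}$ that forgets the indices not in $\mathcal{P}$, and applying the same projection to both sides of $\phi_{\textbf{P}}(X)=\phi_{\textbf{Q}}(Y)$ preserves the equality. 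Hence $(\mathcal{Q},Y)$ is a reconstruction of the images of $(\mathcal{P},X)$.

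The second and only substantive step is to check that $(\mathcal{P},X)$ and $(\mathcal{Q},Y)$ are not equivalent in the sense of \cref{def:equvalent_configurations}, so that they form a genuine conjugate pair and $(\mathcal{P},X)$ is critical by \cref{def:critical_configuration}. Here I would use the non-triviality hypothesis: since $(P_1,\ldots,P_n,X)$ is non-trivially critical, its fundamental matrices satisfy $F_P^{ij}\neq F_Q^{ij}$ for every pair of indices, and in particular $F_P^{i_1i_2}\neq F_Q^{i_1i_2}$, which is meaningful precisely because $k\geq 2$. By \cref{thr:fundforms_and_camera_pairs_are_1:1}, the pair $(P_{i_1},P_{i_2})$ is determined by $F_P^{i_1i_2}$ up to the action of $\PGL(4)$, so $(P_{i_1},P_{i_2})$ and $(Q_{i_1},Q_{i_2})$ lie in distinct $\PGL(4)$-orbits. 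Consequently there is no $H\in\PGL(4)$ with $P_{i_j}H^{-1}=Q_{i_j}$ for all $j$, since such an $H$ would in particular conjugate the two camera pairs to one another; thus the second condition of \cref{def:equvalent_configurations} fails. Therefore $(\mathcal{P},X)$ and $(\mathcal{Q},Y)$ are nonequivalent reconstructions of the same image set, i.e. conjugate configurations, and $(\mathcal{P},X)$ is critical.

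I do not expect a real obstacle. The one point to be careful about is that the restricted configuration could a priori drop to being a \emph{trivial} critical configuration; but the argument above shows it remains non-trivial exactly because the $k\geq 2$ hypothesis guarantees at least one pair of differing fundamental matrices survives the restriction, which is what forces inequivalence. A minor bookkeeping remark is that the camera centers of $\mathcal{P}$ form a subset of the pairwise-distinct centers of $\textbf{P}$, so the standing distinctness assumption is inherited automatically, and no genericity or maximality considerations enter.
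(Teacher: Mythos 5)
Your proposal is correct and follows the same route as the paper: restrict the conjugate $n$-view configuration to the corresponding subtuple of cameras, observe the images still agree, and invoke non-triviality to rule out equivalence. The only difference is that you spell out the inequivalence step (via $F_P^{i_1i_2}\neq F_Q^{i_1i_2}$ and \cref{thr:fundforms_and_camera_pairs_are_1:1}) which the paper leaves implicit, having already remarked after \cref{def:non_trivial_configuration} that differing fundamental matrices preclude the second condition of \cref{def:equvalent_configurations}.
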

\begin{proof}
Since $(P_1,\ldots,P_n,X)$ is a critical configuration, there exists a conjugate configuration $(Q_1,\ldots,Q_n,Y)$. Each camera $P_i$ has a conjugate camera $Q_i$. Let $\mathcal{Q}\subset\Set{Q_1,\ldots,Q_n}$ be the set of cameras conjugate to $\mathcal{P}$. Then $(\mathcal{P},X)$ and $(\mathcal{Q},Y)$ produce the same images. By the non-triviality, these two configurations are inequivalent and hence conjugate to one another.
%Moreover, since the initial configuration was non-trivial, and since we have at least two cameras, the fundamental matrices of $\mathcal{P}$ and $\mathcal{Q}$ are different making the configurations $(\mathcal{P},X)$ and $(\mathcal{Q},Y)$ inequivalent, and hence conjugate to one another, hence they are critical.
\end{proof}

\begin{definition}
Let $(P_1,\ldots,P_n,X)$ be a configuration of cameras and points, and let $\Set{S_P^{ij}}$ be a set of $n\choose2$ quadrics. The configuration $(P_1,\ldots,P_n,X)$ is said to \emph{lie on the intersection of the quadrics $\Set{S_P^{ij}}$} if each quadric $S_P^{ij}$ contains the camera centers $p_i,p_j$, and $X$.
\end{definition}
 
\begin{corollary}
\label{cor:critical_lies_on_intersection_of_quadrics}
For $n\geq2$ views, the critical configurations always lie on the intersection of $n\choose2$ critical quadrics. 
\end{corollary}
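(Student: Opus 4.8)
The plan is to reduce the $n$-view assertion entirely to the two-view classification that has already been established, so almost no new work is needed. First I would invoke \cref{thr:critical_only_if_critical_for_fewer_views}: since $(P_1,\ldots,P_n,X)$ is a (non-trivial) critical configuration, for every pair of indices $1\le i<j\le n$ the subconfiguration $(P_i,P_j,X)$ is again a non-trivial critical configuration. Then \cref{thr:critical_conf_for_two_views} applies to each such pair, yielding a critical quadric $S_P^{ij}$ (critical with respect to $P_i,P_j$) that passes through the two camera centers $p_i,p_j$ and contains all of $X$.

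The second step is a bookkeeping one: there are exactly $\binom{n}{2}$ unordered pairs $\{i,j\}$ with $1\le i<j\le n$, hence exactly $\binom{n}{2}$ quadrics $S_P^{ij}$ in the collection just produced. By construction $X\subseteq S_P^{ij}$ for every pair, and $p_k\in S_P^{ij}$ whenever $k\in\{i,j\}$. This is precisely the statement that the configuration lies on the intersection of these $\binom{n}{2}$ critical quadrics, in the sense of the definition immediately preceding the corollary; for $n=2$ it degenerates to a single quadric and is just \cref{thr:critical_conf_for_two_views} itself.

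The only point requiring a little care — and the main (mild) obstacle — is that \cref{thr:critical_conf_for_two_views} is phrased for \emph{non-trivial} critical configurations, so I must check that restricting to a pair of cameras preserves non-triviality, i.e.\ that the restricted configuration and its conjugate are genuinely inequivalent. But this is exactly what the non-triviality hypothesis supplies through \cref{thr:critical_only_if_critical_for_fewer_views}: the fundamental matrices $F_P^{ij}$ and $F_Q^{ij}$ differ by \cref{eq:non_trivial_configuration}, which by \cref{thr:fundforms_and_camera_pairs_are_1:1} forces the pair $(P_i,P_j)$ and its conjugate into different $\PGL(4)$-orbits, so the second condition of \cref{def:equvalent_configurations} fails and the pair lands in the two-view classification rather than in the trivial case. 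With that observation in place, the corollary follows immediately by combining the two cited theorems with the pair count.
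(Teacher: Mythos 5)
Your proposal is correct and follows essentially the same route as the paper, which derives the corollary from \cref{thr:critical_only_if_critical_for_fewer_views} combined with the two-view classification (\cref{thr:critical_conf_for_two_views}); your additional check that non-triviality passes to each pair of views is a sensible explicit verification of what the paper leaves implicit. The paper also notes a second, independent derivation from \cref{cor:critical_configurations_lie_on_quadrics_and_cubics}, but that is not needed for your argument.
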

\begin{proof}
This follows from \cref{thr:critical_only_if_critical_for_fewer_views} along with the fact that for two views, the critical configurations lie on critical quadrics. It also follows directly from \cref{cor:critical_configurations_lie_on_quadrics_and_cubics}.
\end{proof}

In particular, for the three-view case, the critical configurations lie on the intersection of three critical quadrics. The task of classifying the critical configurations for three views, however, is not as simple as just classifying the possible intersections of three quadrics. Even if a configuration is critical for each pair of views, it need not be critical for all three views. There are two issues, namely the \emph{compatibility of the quadrics}, and the \emph{ambiguous points}. Before we tackle these issues, let us state a result on the generators of the multi-view ideal for three views.

\begin{lemma}[{\cite[Proposition 5.6]{threeViews}}]
\label{lem:three_bilinear_and_10_trilinear}
For the three non-collinear cameras, the multi-view ideal is generated by three bilinear and one trilinear polynomial.\footnote{In the collinear case, it is generated by three bilinear and three trilinear polynomials, but this is harder to prove. See \cite[Proposition 5.6]{threeViews} for proof.}
\end{lemma}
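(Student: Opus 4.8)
The plan is to combine \cref{thr:ideal_of_multiview_variety} with an explicit reduction. By that theorem the multi-view ideal $I(\mathcal{M}_P)$ of the three cameras is generated by the three bilinear fundamental forms $F_P^{12},F_P^{13},F_P^{23}$, one per pair, together with the $7\times7$ minors of the $9\times7$ matrix
\begin{align*}
M=\begin{bmatrix}
P_1&\textnormal{\textbf{x}}&\textnormal{\textbf{0}}&\textnormal{\textbf{0}}\\
P_2&\textnormal{\textbf{0}}&\textnormal{\textbf{y}}&\textnormal{\textbf{0}}\\
P_3&\textnormal{\textbf{0}}&\textnormal{\textbf{0}}&\textnormal{\textbf{z}}
\end{bmatrix}.
\end{align*}
Set $J=(F_P^{12},F_P^{13},F_P^{23})$. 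There are $\binom{9}{7}=36$ such minors, and I want to show that modulo $J$ they all lie in the span of a single trilinear polynomial $T$; then $I(\mathcal{M}_P)=J+(T)$, which is the assertion.

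First I would dispose of the minors that omit two rows of $M$ lying in the same block. If both omitted rows come from block $i$, then in the surviving $7\times7$ matrix the column carrying the variable vector of the $i$-th image has a single nonzero entry, one coordinate of that vector; expanding the determinant along it leaves the $6\times6$ matrix $\begin{bmatrix}P_j&\ast&\textnormal{\textbf{0}}\\ P_k&\textnormal{\textbf{0}}&\ast\end{bmatrix}$ that defines $F_P^{jk}$ (\cref{lem:fundamental_form}), with $\{j,k\}=\{1,2,3\}\setminus\{i\}$. Hence such a minor is a coordinate of the $i$-th image point times $F_P^{jk}$, so it lies in $J$. This handles $9$ of the $36$ minors; the remaining $27$ omit one row from each of two blocks and are multihomogeneous of multidegree $(1,1,1)$.

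For the remaining $27$ minors I would pass to a normal form. The number of ideal generators is unaffected by a change of coordinates of $\p3$ (which only re-coordinatizes $M$) or of the three images (which act linearly, hence invertibly, on the generators), so it is enough to treat one representative of the non-collinear triple. Moving the three centers to $e_1,e_2,e_3$ forces the $i$-th column of $P_i$ to vanish, and the remaining $3\times3$ block is invertible because $P_i$ has rank $3$; a change of image coordinates then brings the triple to $P_1X=(X_1,X_2,X_3)$, $P_2X=(X_0,X_2,X_3)$, $P_3X=(X_0,X_1,X_3)$. In these coordinates $F_P^{12},F_P^{13},F_P^{23}$ are the obvious rank-$2$ forms, the trilinear polynomial $T=x_1y_2z_1-x_2y_1z_2$ is (up to sign) one of the $27$ minors, and $T\notin J$ because neither of its monomials occurs in $z_iF_P^{12}$, $y_iF_P^{13}$ or $x_iF_P^{23}$. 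A Laplace expansion of each of the other $26$ minors, followed by reduction against $F_P^{12},F_P^{13},F_P^{23}$, then exhibits each of them as a scalar multiple of $T$ modulo $J$; the $S_3$-symmetry permuting the three cameras cuts the $27$ cases down to a few. Combining with the previous paragraph gives $I(\mathcal{M}_P)=J+(T)$.

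The routine parts are the set-up and the first reduction; the real work — and the main obstacle — is the last step, verifying that all $27$ multidegree-$(1,1,1)$ minors collapse to a single one modulo $J$. The delicate point is that $T$ is \emph{not} in $J$, so each reduction must produce $\pm T$ exactly (modulo $J$), not merely a coordinate times $T$; keeping track of which of the three fundamental forms is used at each cancellation is the fiddly bookkeeping, and exploiting the symmetry of the normal form is what keeps it manageable. By contrast, the fact that $J+(T)$ has no further generators in higher multidegree is free from \cref{thr:ideal_of_multiview_variety} and needs no separate argument. (The collinear case is genuinely different: more trilinear minors survive the reduction, giving the three-trilinear count in the footnote; I would not pursue it and would cite \cite[Proposition 5.6]{threeViews}.)
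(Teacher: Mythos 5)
Your route is genuinely different from the paper's. You start from the explicit generating set of \cref{thr:ideal_of_multiview_variety} and reduce the $36$ minors directly: the $9$ minors omitting two rows of the same block factor as an image coordinate times a fundamental form, and the remaining $27$ minors of multidegree $(1,1,1)$ are to be shown proportional, modulo $J=(F_P^{12},F_P^{13},F_P^{23})$, to a single trilinear polynomial $T$. The paper instead counts dimensions: the $27$-dimensional space of multidegree-$(1,1,1)$ forms pulls back onto the $17$-dimensional space of cubics through the three camera centers, so the $(1,1,1)$-graded piece of the multi-view ideal is $27-17=10$-dimensional; the nine products $z_kF_P^{12}$, $y_kF_P^{13}$, $x_kF_P^{23}$ account for nine of those dimensions, leaving room for exactly one further trilinear generator. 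Your set-up is sound — the disposal of the nine degenerate minors, the normal form for a non-collinear triple, and the monomial check that $T\notin J$ all hold up — but the step you yourself flag as the main obstacle, namely that the other $26$ minors collapse to scalar multiples of $T$ modulo $J$, is asserted rather than carried out, and as planned it amounts to expanding $26$ determinants of size $7\times7$. The paper's count makes that collapse automatic: once the $(1,1,1)$-piece of the ideal has dimension $10$ and the nine products are linearly independent (a point both arguments leave implicit), the quotient by $J$ in that multidegree is one-dimensional, so any two elements of the ideal outside $J$ are proportional modulo $J$ without any determinant expansion. Importing that count closes your remaining gap; conversely, your explicit $T$ together with the verification $T\notin J$ supplies the one fact the paper's shorthand takes for granted, that a tenth, genuinely new trilinear polynomial exists.
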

\begin{proof}
A full proof, that also shows that one needs three trilinear polynomials in the collinear case, can be found in \cite{threeViews}. We give a shorthand version here.

The fact that it takes three bilinear polynomials (the fundamental matrices) is simply an application of \cref{thr:ideal_of_multiview_variety} to the case of three cameras. Next, we show that the ideal contains a ten-dimensional family of trilinear polynomials. There is a 27-dimensional family of trilinear polynomials on $\pxpxp$, which pulls back to a 17-dimensional family of cubics passing through the three camera centers (20 cubics, less three for the constraint of passing through the camera centers). Hence a 10-dimensional family of trilinear polynomials pull back to the zero polynomial and thus lie in the multi-view ideal.

Each of the bilinear polynomials generates three trilinear polynomials in the ideal, by multiplying it with one of the three variables from the last $\p2$, for instance multiplying $\textnormal{\textbf{x}}^{T}F_P^{12}\textnormal{\textbf{y}}$ with $z_1,z_2$ and $z_3$ will give three trilinear polynomials. As such, one only needs one final trilinear polynomial to generate the ideal in the general case, that is, in the case where the cameras are not collinear.
\end{proof}

\subsection{Compatible triples of quadrics}
\label{sec:compatible_triples_of_quadrics}
In the two-view case, any fundamental matrix ($3\times3$ matrix of rank 2) comes from some pair of cameras (\cref{thr:fundforms_and_camera_pairs_are_1:1}). In the three-view case, however, three arbitrary fundamental matrices will generally not come from some triple of cameras. This motivates the following definition:

\begin{definition}
Let $\Set{F_Q^{ij}\mid 1\leq i\leq j\leq n}$ be a set of $n\choose 2$ fundamental matrices. The matrices are said to be \emph{compatible} if there exist $n$ cameras $Q_1,...,Q_n$ such that $F_Q^{ij}$ is the fundamental matrix of $Q_i,Q_j$.
\end{definition}

\begin{definition}
Given $n$ cameras $P_1,...,P_n$ we say a set of quadrics $\Set{S_P^{ij}}$ is \emph{compatible} (with respect to the cameras $P_i$), if there exists a compatible set of fundamental matrices $F_Q^{ij}$ such that
\begin{align*}
S_P^{ij}=P_i^{T}F_Q^{ij}P_j.
\end{align*}
\end{definition}

For a configuration $(P_1,P_2,P_3,X)$ to be critical, it is not enough that it lies on the intersection of three critical quadrics, the three quadrics also need to be compatible. Only then can we find three conjugate cameras $Q_1,Q_2,Q_3$ which in turn can be used to construct a conjugate configuration. A detailed study of compatible fundamental matrixes can be found in \cite{compatibility,hartleyzisserman}. Recall that the \emph{epipoles} $e_{Q_i}^{j}$ are the unique points satisfying
\begin{align}
\tag{\ref{eq:epipole_is_nullspace}}
(e_{P_1}^{2})^{T}F_P=F_Pe_{P_2}^{1}=0.
\end{align}

\begin{theorem}[{\cite[Proposition 5.8]{compatibility}}]
\label{thr:compatible_forms_collinear}
Let $F_Q^{12}$, $F_Q^{13}$, $F_Q^{23}$ be a triple of fundamental matrices, and let $e_{Q_i}^{j}$ be their epipoles. Then the triple is compatible and comes from a collinear triple of cameras if and only if
\begin{align}
\label{eq:collinear}
e_{Q_1}^{2}=e_{Q_1}^{3}, \quad e_{Q_2}^{1}=e_{Q_2}^{3}, \quad e_{Q_3}^{1}=e_{Q_3}^{2},
\end{align}
and
\begin{align}
\label{eq:compatible_collinear}
(F_Q^{12})^{T}[e_{Q_1}^{2}]_\times F_Q^{13}=F_Q^{23}.\footnotemark
\end{align}
\end{theorem}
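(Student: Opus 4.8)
The plan is to prove both directions of the equivalence by translating everything into statements about cameras and their epipoles, making use of the fact that a collinear triple of cameras admits a particularly rigid normal form.

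\textbf{The forward direction.} Suppose $F_Q^{12},F_Q^{13},F_Q^{23}$ are compatible and come from a collinear triple of cameras $Q_1,Q_2,Q_3$. First I would establish \cref{eq:collinear}. Since $q_1,q_2,q_3$ all lie on a common line $\ell$, the camera $Q_1$ maps both $q_2$ and $q_3$ onto the image of $\ell$ under $Q_1$, which is a single point (the line $\ell$ passes through the center $q_1$, so its image is one point); by \cref{def:epipole}, $e_{Q_1}^2=Q_1(q_2)$ and $e_{Q_1}^3=Q_1(q_3)$, and both equal that point, so $e_{Q_1}^2=e_{Q_1}^3$. The same argument applied to $Q_2$ and $Q_3$ gives the other two equalities. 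For \cref{eq:compatible_collinear} I would choose convenient projective coordinates: after acting by $\PGL(4)$ we may place the common line as a coordinate line and take $Q_1=[I\mid 0]$ in a standard form, so that $Q_2$ and $Q_3$ differ from $Q_1$ only in a controlled way (this is the usual reduction used for collinear cameras, e.g. as in \cite{hartleyzisserman}). Then one computes the three fundamental matrices directly from \cref{lem:fundamental_form} as $6\times6$ determinants, obtaining expressions of the form $F_Q^{1j}=[e_{Q_1}^j]_\times A_j$ for suitable matrices $A_j$, and verifies the identity $(F_Q^{12})^T[e_{Q_1}^2]_\times F_Q^{13}=F_Q^{23}$ by a short matrix manipulation; the collinearity is exactly what forces the epipoles appearing in all three expressions to match up so that this product telescopes correctly.

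\textbf{The converse direction.} Suppose the epipole conditions \cref{eq:collinear} and the product relation \cref{eq:compatible_collinear} hold. I would first use \cref{eq:collinear} together with known facts about pairs of fundamental matrices (each $F_Q^{1j}$ individually is realizable by \cref{thr:fundforms_and_camera_pairs_are_1:1}) to build candidate cameras: fix $Q_1=[I\mid 0]$, and use the standard parametrization expressing $F_Q^{12}$ and $F_Q^{13}$ in terms of their epipoles to write down $Q_2$ and $Q_3$ so that they realize $F_Q^{12}$ and $F_Q^{13}$ respectively. The condition $e_{Q_2}^1=e_{Q_2}^3$ (and symmetrically for $Q_3$) lets one choose these cameras so that all three centers land on one line — the epipole coincidences are precisely the vanishing of the obstruction to placing $q_2$ and $q_3$ on the line through $q_1$ determined by the back-projected epipoles. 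It then remains to check that the pair $(Q_2,Q_3)$ so constructed has $F_Q^{23}$ as its fundamental matrix; by \cref{thr:fundforms_and_camera_pairs_are_1:1} its fundamental matrix is some rank-2 matrix, and one computes it to be $(F_Q^{12})^T[e_{Q_1}^2]_\times F_Q^{13}$, which equals $F_Q^{23}$ by hypothesis \cref{eq:compatible_collinear}. Hence $Q_1,Q_2,Q_3$ is a collinear triple realizing all three given matrices, so the triple is compatible and collinear.

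\textbf{Main obstacle.} The routine but delicate part is the bookkeeping in the coordinate normal form: one must pin down exactly which scalings and which choice of the $3\times4$ camera matrices are consistent with a prescribed triple of fundamental matrices (fundamental matrices are only defined up to scale, and cameras only up to $\PGL(4)$), and then verify that the product formula \cref{eq:compatible_collinear} holds on the nose rather than merely up to an unwanted scalar. The conceptual heart, which is comparatively clean, is recognizing that \cref{eq:collinear} is equivalent to "the three back-projected epipolar lines meet the camera centers in a configuration that can be straightened to collinear," and that \cref{eq:compatible_collinear} is the single remaining algebraic constraint once collinearity is imposed — everything else being automatically satisfied by the pairwise realizability from \cref{thr:fundforms_and_camera_pairs_are_1:1}. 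I expect the verification of \cref{eq:compatible_collinear} via the $6\times6$ determinant formula of \cref{lem:fundamental_form} to be the step most prone to sign and scaling errors, so I would organize it as an explicit but compact computation with a fixed normal form rather than an invariant argument.
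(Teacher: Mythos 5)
First, a point of comparison: the paper does not prove this theorem at all --- it is imported from \cite{compatibility} (Proposition 5.8 there), so there is no internal proof to measure your argument against. Your sketch has the right overall architecture, and the cleanest observation in it is correct: for a collinear triple, the line through the three centers passes through each center, so each camera sends the other two centers to the same image point, which gives \cref{eq:collinear}; conversely, with $Q_1$ normalized to $[I\mid 0]$, any camera realizing $F_Q^{12}$ has its center on the back-projection of $e_{Q_1}^{2}$ under $Q_1$, and likewise for $F_Q^{13}$ and $e_{Q_1}^{3}$, so the single equality $e_{Q_1}^{2}=e_{Q_1}^{3}$ already forces collinearity of every realization. (In your converse you attribute collinearity to $e_{Q_2}^{1}=e_{Q_2}^{3}$; that condition instead pins down the kernels of $F_Q^{23}$ and is in fact implied by \cref{eq:compatible_collinear}, whose left and right nullspaces are $e_{Q_2}^{1}$ and $e_{Q_3}^{1}$.)

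The genuine gap is that the identity $(F_Q^{12})^{T}[e_{Q_1}^{2}]_\times F_Q^{13}=F_Q^{23}$ --- the entire content of the theorem beyond the epipole bookkeeping --- is never verified; both directions of your argument lean on it, and ``a short matrix manipulation'' is doing all the work. Two things need to be established there. First, the equality only holds up to the scale ambiguity inherent in fundamental matrices, which is how the paper later uses it (in \cref{prop:compatible_collinear} only the quadric pulled back from $(F_Q^{12})^{T}[e_{Q_1}^{2}]_\times F_Q^{13}$ and its nullspaces matter), so the statement to prove is projective and your worry about pinning the scale ``on the nose'' dissolves. Second, and more substantively, the fundamental matrix of the constructed pair $(Q_2,Q_3)$ a priori depends on the residual freedom in the construction: each of $Q_2,Q_3$ is determined by its fundamental matrix with $Q_1$ only up to the stabilizer of $Q_1$ in $\PGL(4)$, and these two stabilizer actions are independent, so the formula is asserting that in the collinear case this dependence disappears. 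Once that invariance and the formula itself are checked in a normal form, your converse collapses to one line: build any $Q_2,Q_3$ realizing $F_Q^{12},F_Q^{13}$ via \cref{thr:fundforms_and_camera_pairs_are_1:1} with distinct centers, observe they are automatically collinear with $q_1$, and apply the forward identity together with the hypothesis \cref{eq:compatible_collinear}. So the plan is workable, but as written it records where the proof lives rather than carrying it out.
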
 
\footnotetext{Here $[a]_{\times}$ denotes the unique $3\times3$ matrix such that $[a]_{\times}b=a\times b$ for all $b$}

\begin{theorem}[{\cite[Section 15.4]{hartleyzisserman}}]
\label{thr:compatible_forms_noncollinear}
Let $F_Q^{12}$, $F_Q^{13}$, $F_Q^{23}$ be a triple of rank 2 fundamental matrices, and let $e_{Q_i}^{j}$ be their epipoles. Then the triple is compatible and comes from a non-collinear triple of cameras if and only if
\begin{align}
\label{eq:non_collinear}
e_{Q_1}^{2}\neq e_{Q_1}^{3}, \quad e_{Q_2}^{1}\neq e_{Q_2}^{3}, \quad e_{Q_3}^{1}\neq e_{Q_3}^{2},
\end{align}
and
\begin{align}
\label{eq:compatible_non_collinear}
(e_{Q_1}^{3})^{T}F_Q^{12}e_{Q_2}^{3}=(e_{Q_1}^{2})^{T}F_Q^{13}e_{Q_3}^{2}=(e_{Q_2}^{1})^{T}F_Q^{23}e_{Q_3}^{1}=0.
\end{align}
\end{theorem}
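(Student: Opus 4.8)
The plan is to prove the two implications separately; the forward one is short, and the converse is the real content.

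\emph{Forward implication.} Suppose $F_Q^{12},F_Q^{13},F_Q^{23}$ are the fundamental matrices of a non-collinear triple $Q_1,Q_2,Q_3$ with centers $q_1,q_2,q_3$. If two of the epipoles in \cref{eq:non_collinear} agreed, say $e_{Q_1}^{2}=e_{Q_1}^{3}$, then $Q_1(q_2)=Q_1(q_3)$, so $q_2$ and $q_3$ would both lie on the fiber line $\overline{Q_1^{-1}(e_{Q_1}^{2})}$ through $q_1$, forcing $q_1,q_2,q_3$ collinear --- a contradiction; hence \cref{eq:non_collinear}. For \cref{eq:compatible_non_collinear}, recall from \cref{lem:fundamental_form} that $\mathbf{x}^{T}F_Q^{ij}\mathbf{y}=0$ whenever $\mathbf{x}$ and $\mathbf{y}$ are the images in views $i$ and $j$ of one common space point. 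Applying this with the space point $q_3$, whose images are $e_{Q_1}^{3}$ in view $1$ and $e_{Q_2}^{3}$ in view $2$, gives $(e_{Q_1}^{3})^{T}F_Q^{12}e_{Q_2}^{3}=0$; the space points $q_2$ and $q_1$ give the other two vanishings.

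\emph{Converse.} Now assume \cref{eq:non_collinear} and \cref{eq:compatible_non_collinear} and build the cameras. Using all of the projective freedom, normalize $Q_1=[I\mid 0]$ and let $Q_2$ be a camera for which $(Q_1,Q_2)$ has fundamental matrix $F_Q^{12}$, chosen so as to exhaust the remaining gauge (\cref{thr:fundforms_and_camera_pairs_are_1:1}); no residual projective freedom is left. By the same correspondence, the cameras $Q_3$ for which $(Q_1,Q_3)$ realizes $F_Q^{13}$ form a genuine four-parameter family, and the only remaining task is to pick one of them so that the pair $(Q_2,Q_3)$ has fundamental matrix $F_Q^{23}$ up to scale. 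As $Q_3$ runs over the four-parameter family, its fundamental matrix with $Q_2$ sweeps an at most four-dimensional subset of the seven-dimensional variety of $3\times 3$ matrices of rank $\le 2$ up to scale, so landing on the prescribed $F_Q^{23}$ is a condition of codimension $\le 3$; the assertion is that it is cut out precisely by the three scalar equations of \cref{eq:compatible_non_collinear}. The geometry behind this: the center $q_3$ of $Q_3$ is forced onto the line $\overline{Q_1^{-1}(e_{Q_1}^{3})}$ (since $e_{Q_1}^{3}$, the left null vector of $F_Q^{13}$, equals $Q_1(q_3)$), and, if $F_Q^{23}$ is to come out right, also onto $\overline{Q_2^{-1}(e_{Q_2}^{3})}$; the first equation $(e_{Q_1}^{3})^{T}F_Q^{12}e_{Q_2}^{3}=0$ is exactly the condition that these two lines in $\p3$ intersect --- so it supplies a candidate center $q_3$ --- while the remaining two equations force the rest of $Q_3$ to be simultaneously consistent with $F_Q^{13}$ and $F_Q^{23}$. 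The inequalities \cref{eq:non_collinear} enter to keep the construction non-degenerate: they guarantee that the two lines above are distinct (so their intersection is a single point, not a whole line) and that the recovered centers $q_1,q_2,q_3$ are non-collinear, as required.

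The step I expect to be the main obstacle is this matching: an honest, though essentially routine, linear-algebra computation with the parametrized cameras showing that the four parameters of $Q_3$ can be chosen making its fundamental matrix with $Q_2$ proportional to $F_Q^{23}$ if and only if \cref{eq:compatible_non_collinear} holds, and that no hidden extra constraint survives. A workable way to organize the computation is in stages --- use the first relation of \cref{eq:compatible_non_collinear} to pin down $q_3$, then the other two to determine $Q_3$ through $q_3$ compatibly with $F_Q^{13}$ and $F_Q^{23}$ --- verifying at each stage that \cref{eq:non_collinear} keeps the relevant points and lines in general position. The fully worked direct construction of the three cameras from the three fundamental matrices and their epipoles is carried out in \cite[Section~15.4]{hartleyzisserman}.
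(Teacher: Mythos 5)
The paper does not prove this theorem at all: it is imported verbatim from \cite[Section 15.4]{hartleyzisserman}, so there is no internal argument to compare yours against. Judged on its own terms, your forward implication is complete and correct: distinctness of the epipoles follows because $e_{Q_1}^{2}=e_{Q_1}^{3}$ would put $q_2$ and $q_3$ on the same fiber line through $q_1$, and each vanishing in \cref{eq:compatible_non_collinear} is the epipolar constraint applied to the images of the third camera center, exactly as \cref{lem:fundamental_form} licenses.

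The converse, however, is a plan rather than a proof, and you say so yourself. The gauge-fixing and the identification of the four-parameter family of cameras $Q_3$ realizing $F_Q^{13}$ are fine, and the observation that $(e_{Q_1}^{3})^{T}F_Q^{12}e_{Q_2}^{3}=0$ is precisely the condition that the two back-projected lines $\overline{Q_1^{-1}(e_{Q_1}^{3})}$ and $\overline{Q_2^{-1}(e_{Q_2}^{3})}$ meet, giving a candidate center $q_3$, is the right geometric picture. But the decisive claim --- that the remaining two equations are exactly what is needed for some $Q_3$ with that center to realize both $F_Q^{13}$ and $F_Q^{23}$ simultaneously, with ``no hidden extra constraint'' --- is asserted, supported only by a dimension count that shows the condition is of the right codimension, not that it is cut out by those three polynomials. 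A codimension-$3$ locus in the $7$-dimensional rank-$2$ variety need not coincide with the zero set of three given bilinear forms; one has to exhibit the camera $Q_3$ (e.g.\ in the explicit parametrization $[\,[e']_\times F+e'v^{T}\mid\lambda e'\,]$) and verify that the two residual equations are equivalent to solvability for $(v,\lambda)$. That verification is the entire content of the theorem, and deferring it to \cite[Section~15.4]{hartleyzisserman} leaves the proof with the same status as the paper's bare citation --- acceptable as a quotation of a known result, but not a proof.
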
 

Since we are working with quadrics in $\p3$, we want to translate these conditions on the fundamental matrices, to conditions on the quadrics. The remainder of this subsection is used to give two results stating these conditions. We start with the collinear case, that of \cref{thr:compatible_forms_collinear} (clarification: this is NOT the case where the camera centers $p_i$ are collinear, but the case where the camera centers $q_i$ of the conjugate configuration are collinear).

\begin{proposition}[\textbf{Collinear case}]
\label{prop:compatible_collinear}
Let $P_1$, $P_2$, $P_3$ be a triple of cameras, and let $S_P^{12}$, $S_P^{13}$, $S_P^{23}$ be a triple of critical quadrics. Then the three quadrics come from a triple of fundamental matrices that satisfy the conditions in \cref{thr:compatible_forms_collinear}, if and only if there exist three lines $L_{1}$, $L_{2}$, $L_{3}$ satisfying:
\begin{enumerate}
\item $p_i\in L_i$.
\item $L_i$ and $L_j$ form a permissible pair on $S_P^{ij}$.
\item $L_i\subseteq S_P^{ij}\cap S_P^{ik}$.
\item Any point $x\in S_P^{12}\cap S_P^{13}$ not lying on $L_1$ also lies on $S_P^{23}$.
\end{enumerate}
\end{proposition}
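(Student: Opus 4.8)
The plan is to translate each of the four algebraic conditions of \cref{thr:compatible_forms_collinear} into the geometric language of quadrics via the pullback dictionary established in \cref{sec:preliminary_two_views}, and then show the two lists of conditions are equivalent. Throughout, I will use that a triple of critical quadrics $S_P^{ij}$ arises from a triple of rank-$2$ matrices $F_Q^{ij}$ with $S_P^{ij}=P_i^TF_Q^{ij}P_j$, and that by \cref{prop:conjugates_and_permissible_lines} choosing such an $F_Q^{ij}$ (equivalently, choosing cameras $Q_i,Q_j$ up to $\PGL(4)$) is the same as choosing a permissible pair of lines on $S_P^{ij}$, with the lines being the epipolar lines $g_{P_i}^{j}=\overline{P_i^{-1}(e_{Q_i}^{j})}$ and $g_{P_j}^{i}=\overline{P_j^{-1}(e_{Q_j}^{i})}$. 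Set $L_i$ to be the common epipolar line coming from camera $Q_i$; the content of the proposition is that the three lines $L_1,L_2,L_3$ are well-defined (i.e. the epipole of $Q_i$ into image $i$ does not depend on which of the other two cameras it is paired with) and that this well-definedness, together with the cubic relation \eqref{eq:compatible_collinear}, is captured by conditions (1)--(4).

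First I would establish the forward direction. Assume the $F_Q^{ij}$ satisfy \eqref{eq:collinear} and \eqref{eq:compatible_collinear}. Condition \eqref{eq:collinear} says $e_{Q_i}^{j}=e_{Q_i}^{k}$ for each $i$, so pulling back by $P_i^{-1}$ gives a single line $L_i:=\overline{P_i^{-1}(e_{Q_i}^{\ast})}$ which is simultaneously the epipolar line on both $S_P^{ij}$ and $S_P^{ik}$; by \cref{lem:permissible_lines}(1) it passes through $p_i$, giving (1), and it lies on both $S_P^{ij}$ and $S_P^{ik}$, giving (3). Since $L_i,L_j$ are the two epipolar lines on $S_P^{ij}$ they form a permissible pair by \cref{lem:permissible_lines}, giving (2). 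For (4) I would argue on the multi-view side: a point $x\in S_P^{12}\cap S_P^{13}$ maps under $\phi_{(P_1,P_2)}$ and $\phi_{(P_1,P_3)}$ into $\mathcal{M}_Q^{12}$ and $\mathcal{M}_Q^{13}$ respectively, so $P_1(x),P_2(x),P_3(x)$ are the images of a common space point under the collinear camera triple $Q_1,Q_2,Q_3$ — unless $x$ lies on the line $L_1$ collapsed by the pair (this is exactly the exceptional locus of \cref{lem:joint-camera_map_is_isomorphism} / \cref{lem:relations_between_quadrics}\cref{item3}). Off $L_1$, the point $(P_1(x),P_2(x),P_3(x))$ lies in $\mathcal{M}_Q$, and its $(2,3)$-component satisfies the bilinear form of $F_Q^{23}$ precisely because $F_Q^{23}=(F_Q^{12})^T[e_{Q_1}^{2}]_\times F_Q^{13}$ by \eqref{eq:compatible_collinear}; hence $x\in S_P^{23}$, which is (4).

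For the converse, suppose lines $L_1,L_2,L_3$ satisfying (1)--(4) exist. By (2) and \cref{prop:conjugates_and_permissible_lines}, for each pair $ij$ the permissible pair $L_i,L_j$ on $S_P^{ij}$ determines (uniquely, when not both $p_i,p_j$ are singular on $S_P^{ij}$) a fundamental matrix $F_Q^{ij}$ whose epipolar lines are exactly $L_i$ and $L_j$; thus $P_i(L_i)=e_{Q_i}^{j}$. Condition (3) forces $L_i$ to be the $Q_i$-epipolar line on both $S_P^{ij}$ and $S_P^{ik}$, so $e_{Q_i}^{j}=P_i(L_i)=e_{Q_i}^{k}$, which is precisely \eqref{eq:collinear}. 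It remains to recover \eqref{eq:compatible_collinear}: both $F_Q^{23}$ and $(F_Q^{12})^T[e_{Q_1}^{2}]_\times F_Q^{13}$ are rank-$2$ matrices with the same left and right nullspaces ($e_{Q_2}^{3}=e_{Q_2}^{1}$ and $e_{Q_3}^{2}=e_{Q_3}^{1}$, using \eqref{eq:collinear}); I would show they define the same bilinear form up to scale by testing on enough points, using condition (4) to identify their common vanishing locus after pullback by $(P_2,P_3)$. Concretely, pulling $(F_Q^{12})^T[e_{Q_1}^{2}]_\times F_Q^{13}$ back by $P_2^T(\cdot)P_3$ gives a quadric through $p_2,p_3$ that agrees with the union $S_P^{23}\cup(\text{plane through }L_1)$-type locus cut out by (4); matching this with $S_P^{23}$ and invoking \cref{thr:compatible_forms_collinear} in the reverse direction completes the argument. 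The main obstacle I anticipate is exactly this last step — degenerate-case bookkeeping: when $S_P^{23}$ is reducible or $p_2,p_3$ lie in its singular locus, the line $L_i$ need not be the \emph{unique} epipolar line (see the caveat in \cref{prop:conjugates_and_permissible_lines}), so one must check that the one-parameter families of conjugates still match up and that condition (4) is strong enough to pin down $F_Q^{23}$ among them rather than merely constraining its nullspaces.
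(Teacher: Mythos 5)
Your proposal is correct and follows essentially the same route as the paper: in the forward direction the coinciding epipoles give the three lines $L_1,L_2,L_3$ and the identity \eqref{eq:compatible_collinear} yields condition (4), while the converse recovers \eqref{eq:collinear} from condition (3) and identifies $F_Q^{23}$ with $(F_Q^{12})^{T}[e_{Q_1}^{2}]_\times F_Q^{13}$ by comparing their pullback quadrics. The one place the paper is sharper than your ``test on enough points'' step is the final identification: both candidate quadrics contain the residual cubic curve $(S_P^{12}\cap S_P^{13})-L_1$ as well as the two lines $L_2,L_3$, and a cubic curve together with two lines determines a quadric uniquely.
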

\begin{proof}
($\Rightarrow$):
Let $S_P^{12}$, $S_P^{13}$, $S_P^{23}$ be a compatible (in the collinear sense) triple of critical quadrics. This means they come from a triple of fundamental matrices satisfying the conditions in \cref{thr:compatible_forms_collinear}. By the first part of \ref{prop:conjugates_and_permissible_lines}, each quadric contains a pair of permissible lines, the pullback of the epipoles. By \cref{eq:collinear} the two epipoles coincide in each image, so the two epipolar lines passing through $p_i$ have to coincide as well. This gives us three lines satisfying the first three conditions in \cref{prop:compatible_collinear}. 

Next, let $x$ be any point in $S_P^{12}\cap S_P^{13}$ not lying on $L_1$, and denote $x_i=P_i(x)$. Since $x$ lies on $S_P^{1i}$, $F_Q^{1i}x_i$ is the unique line in the first image spanned by $x_1$ and the epipole $e_1^{2}=e_1^{3}$. Now, since $F_Q^{12}x_2$ and $F_Q^{13}x_3$ define the same line, we have
\begin{align*}
0&=x_2^{T}(F_Q^{12})^{T}[e_{Q_1}^{2}]_\times F_Q^{13}x_3,\\
&=x_2^{T}F_Q^{23}x_3,\\
&=x^{T}P_2'^{T}F_Q^{23}P_3'x,\\
&=x^{T}S_P^{23}x.
\end{align*}
It follows that $x$ lies on $S_P^{23}$ as well

($\Leftarrow$):
Let $S_P^{ij}$ be three quadrics satisfying the conditions above. Since $L_i$ and $L_j$ form a permissible pair on $S_P^{ij}$, there exists (by \cref{prop:conjugates_and_permissible_lines}) fundamental matrices $F_Q^{ij}$ such that $S_P^{ij}=P_i^{T}F_Q^{ij}P_j$, and the lines $L_i,L_j$ are the epipolar lines on $S_P^{ij}$. Since the line $L_i$ is the preimage of both the epipole $e_{Q_i}^{j}$ and the epipole $e_{Q_i}^{k}$, the two epipoles are equal. Hence, the epipoles satisfy \cref{eq:collinear}. 

Let $S_P^{23'}$ denote the quadric defined by the matrix
\begin{align*}
P_2'^{T}F_Q^{12T}[e_{Q_1}^{2}]_\times F_Q^{13}P_1.
\end{align*}
For the next part, we want to show that the $S_P^{23'}=S_P^{23}$. If so, the three quadrics $S_P^{ij}$ come from the three fundamental matrices $F_Q^{12},F_Q^{13}$ and $F_Q^{12T}[e_{Q_1}^{2}]_\times F_Q^{13}$, since this triple of fundamental matrices is compatible, so is the triple of quadrics. By assumption, the set $(S_P^{12}\cap S_P^{13})-L_1$ lies on all three quadrics. If the quadrics do not contain a common surface, this set is a (possibly reducible) cubic curve $C$. The curve $C$ lies on both $S_P^{23}$ and $S_P^{23'}$. Moreover, since $F_Q^{12T}[e_{Q_1}^{2}]_\times F_Q^{13}$ has the same left and right nullspaces as $F_Q^{23}$, $S_P^{23'}$ must also contain the two lines $L_2,L_3$. The union of a cubic curve with two lines uniquely determines a quadric, so we have $S_P^{23}=S_P^{23'}$. A similar argument can be made if the quadrics contain a common surface, and in particular, if they are equal.
\end{proof}

Among other things, this shows that as long as the cameras are not in the position shown in \cref{fig:impossible_configurations}, three copies of the same quadric form a compatible triple. For the remainder of the section, we assume that compatible triples of quadrics come from triples of fundamental matrices that satisfy the non-collinear conditions (\cref{eq:compatible_non_collinear,eq:non_collinear}), rather than \cref{eq:compatible_collinear,eq:collinear}.

\begin{theorem}[\textbf{General case}]
\label{thr:compatible_quadrics}
Let $P_1,P_2,P_3$ be a triple of cameras, and let $S_P^{12}$, $S_P^{13}$, $S_P^{23}$ be a triple of critical quadrics. Then the three quadrics come from a triple of fundamental matrices that satisfy the conditions in \cref{thr:compatible_forms_noncollinear}, if and only if there exist six lines $g_{P_i}^{j}$ such that:
\begin{enumerate}
\item $g_{P_i}^{j}$ and $g_{P_j}^{i}$ form a permissible pair on $S_P^{ij}$ (see \cref{def:permissible_lines}).
\item The intersection of the plane spanned by $g_{P_i}^{j}$ and $g_{P_i}^{k}$ with the plane spanned by $g_{P_j}^{i}$ and $g_{P_j}^{k}$ lies on $S_P^{ij}$.
\end{enumerate}
\end{theorem}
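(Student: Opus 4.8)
The plan is to prove both implications by using \cref{prop:conjugates_and_permissible_lines} to pass back and forth between fundamental matrices and their epipolar lines, thereby translating the algebraic compatibility conditions \eqref{eq:non_collinear}--\eqref{eq:compatible_non_collinear} of \cref{thr:compatible_forms_noncollinear} into the incidence conditions 1--2 on the quadrics, in close analogy with the collinear case (\cref{prop:compatible_collinear}).

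For the forward direction, suppose the $S_P^{ij}$ come from compatible fundamental matrices $F_Q^{ij}$ satisfying \eqref{eq:non_collinear} and \eqref{eq:compatible_non_collinear}, and set $g_{P_i}^{j}=\overline{P_i^{-1}(e_{Q_i}^{j})}$. By \cref{prop:conjugates_and_permissible_lines} each pair $g_{P_i}^{j},g_{P_j}^{i}$ is the pair of epipolar lines on $S_P^{ij}$, hence permissible; this is condition~1. For condition~2, note that \eqref{eq:non_collinear} gives $e_{Q_i}^{j}\neq e_{Q_i}^{k}$, so the span of $g_{P_i}^{j}$ and $g_{P_i}^{k}$ is the plane $\Pi_i=\overline{P_i^{-1}(\ell_i)}$, where $\ell_i\subset\p2$ is the line joining $e_{Q_i}^{j}$ and $e_{Q_i}^{k}$. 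For $x\in\Pi_i\cap\Pi_j$ one has $P_i(x)\in\ell_i$, $P_j(x)\in\ell_j$, and $x^{T}S_P^{ij}x=(P_i x)^{T}F_Q^{ij}(P_j x)$. Now $F_Q^{ij}$, viewed as a bilinear form, vanishes on all four products spanning $\ell_i\times\ell_j$: on $(e_{Q_i}^{j},\cdot)$ and $(\cdot,e_{Q_j}^{i})$ by the nullspace relations \eqref{eq:epipole_is_nullspace}, and on $(e_{Q_i}^{k},e_{Q_j}^{k})$ by \eqref{eq:compatible_non_collinear}. Hence it vanishes identically on $\ell_i\times\ell_j$, so $\Pi_i\cap\Pi_j\subseteq S_P^{ij}$.

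For the converse, given six lines satisfying conditions 1--2, condition~1 together with \cref{prop:conjugates_and_permissible_lines} furnishes for each pair a fundamental matrix $F_Q^{ij}$ with $S_P^{ij}=P_i^{T}F_Q^{ij}P_j$ and with $g_{P_i}^{j},g_{P_j}^{i}$ as its epipolar lines; write $e_{Q_i}^{j}=P_i(g_{P_i}^{j})$. Since condition~2 presupposes that $g_{P_i}^{j}$ and $g_{P_i}^{k}$ span a plane, they are distinct, so $e_{Q_i}^{j}\neq e_{Q_i}^{k}$ and \eqref{eq:non_collinear} holds. It remains to derive \eqref{eq:compatible_non_collinear}; by the bookkeeping above it suffices, for each pair $\{i,j\}$ with third index $k$, to show $(e_{Q_i}^{k})^{T}F_Q^{ij}e_{Q_j}^{k}=0$. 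Let $\Pi_i,\Pi_j$ be the planes of condition~2 and $m=\Pi_i\cap\Pi_j\subseteq S_P^{ij}$. The point $x^{\ast}=g_{P_i}^{k}\cap m$ (both lines lie in $\Pi_i$) satisfies $P_i(x^{\ast})=e_{Q_i}^{k}$, and from $x^{\ast}\in S_P^{ij}$ we get $(e_{Q_i}^{k})^{T}F_Q^{ij}P_j(x^{\ast})=0$. Since also $(e_{Q_i}^{k})^{T}F_Q^{ij}e_{Q_j}^{i}=0$ and both $P_j(x^{\ast})$ and $e_{Q_j}^{i}$ lie on $\ell_j=P_j(\Pi_j)$, the linear form $(e_{Q_i}^{k})^{T}F_Q^{ij}$ must cut out all of $\ell_j$ as soon as $P_j(x^{\ast})\neq e_{Q_j}^{i}$, and in particular annihilates $e_{Q_j}^{k}\in\ell_j$, as desired. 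If instead $P_j(x^{\ast})=e_{Q_j}^{i}$ one runs the mirror-image argument from $x^{\ast\ast}=g_{P_j}^{k}\cap m$, and the remaining degenerate cases (where $S_P^{ij}$ is a pair of planes or a cone, or the three quadrics share a surface) are handled, exactly as in \cref{prop:compatible_collinear}, by showing that the quadric defined via the triple-product formula and $S_P^{ij}$ share enough lines and curves to force them equal. Once \eqref{eq:non_collinear} and \eqref{eq:compatible_non_collinear} hold for all three pairs, \cref{thr:compatible_forms_noncollinear} shows the $F_Q^{ij}$ are compatible and non-collinear, i.e.\ the quadrics $S_P^{ij}$ are compatible.

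The main obstacle is the converse direction's degenerate sub-cases: when the auxiliary points $x^{\ast}$ (and $x^{\ast\ast}$) land on the "wrong" epipolar line, the two available vanishing points of the relevant linear form coincide and no longer pin it down, and when the quadrics are non-smooth or share a common surface the curve-counting argument must be set up by hand. Pinning down exactly which critical-quadric types from \cref{tab:configurations_and_their_conjugates} permit these degeneracies, and checking each, is the delicate part; the generic argument above is otherwise a routine translation through \cref{prop:conjugates_and_permissible_lines}.
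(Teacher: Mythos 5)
Your proposal follows the paper's proof almost step for step: both directions pass between fundamental matrices and permissible/epipolar lines via \cref{prop:conjugates_and_permissible_lines}, and your forward-direction observation that $F_Q^{ij}$ vanishes on all four basis products of $\ell_i\times\ell_j$ is exactly the paper's expansion of $y^{T}S_P^{ij}y$ for $y\in\Pi_i\cap\Pi_j$, phrased more compactly. The converse is also structurally identical: both extract $(e_{Q_i}^{k})^{T}F_Q^{ij}e_{Q_j}^{k}=0$ from the fact that the line $m=\Pi_i\cap\Pi_j$ lies on $S_P^{ij}$, using its intersections with $g_{P_i}^{k}$ and $g_{P_j}^{k}$.

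The one place your write-up falls short of a complete proof is the doubly degenerate sub-case of the converse: if $P_j(x^{\ast})=e_{Q_j}^{i}$ \emph{and} $P_i(x^{\ast\ast})=e_{Q_i}^{j}$ (equivalently, $g_{P_i}^{k}$ meets $g_{P_j}^{i}$ and $g_{P_j}^{k}$ meets $g_{P_i}^{j}$), then each of your two linear forms has only one known zero on its line, so neither the direct nor the mirror argument pins anything down. Deferring the residue to "the same method as \cref{prop:compatible_collinear}" does not repair this, since that proposition's cubic-curve argument targets the collinear identity \cref{eq:compatible_collinear}, not \cref{eq:compatible_non_collinear}, and no curve-counting is needed here. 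The paper avoids the case split altogether by restricting the quadratic form to the whole line spanned by $x^{\ast}$ and $x^{\ast\ast}$: writing a general point as $\alpha x^{\ast}+\beta x^{\ast\ast}$ and expanding, the nullspace relations kill every term except a multiple of $(e_{Q_i}^{k})^{T}F_Q^{ij}e_{Q_j}^{k}$ whose coefficient is a nonzero quadratic form in $[\alpha:\beta]$ (in your doubly degenerate case that coefficient is simply $\alpha\beta$), so a generic choice of $[\alpha:\beta]$ forces the vanishing; the case $x^{\ast}=x^{\ast\ast}$ yields the identity directly. Substituting that uniform computation for the linear-form bookkeeping closes the gap.
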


\begin{proof}
($\Rightarrow$):
If the triple of quadrics is compatible, they come from a compatible triple of fundamental matrices. Such quadrics always come with two epipolar lines, which are always permissible. Hence, the first part is satisfied.

Among the six epipolar lines, there are 2 lines on each quadric and 2 through each camera center. By (\ref{eq:non_collinear}), for each camera center, the two lines passing through it are distinct. Hence the two lines through each camera center $p_i$ span a plane, which we denote by $\Pi_i$.

Let $y$ be any point lying in the intersection of $\Pi_{i}$ and $\Pi_j$. Recall that epipolar lines satisfy $P_i(g_{P_i}^{j})=e_{Q_i}^{j}$. We then have
\begin{align*}
P_iy=\alpha e_{Q_i}^{j}+\beta e_{Q_i}^{k},\\
P_jy=\gamma e_{Q_j}^{i}+\delta e_{Q_j}^{k}.
\end{align*}
for some $[\alpha:\beta],[\gamma:\delta]\in\p1$. Hence
\begin{align*}
y^{T}S_P^{ij}y &=y^{T}P_i^{T}F_Q^{ij}P_jy\\
&=(\alpha e_{Q_i}^{j}+\beta e_{Q_i}^{k})^{T}F_Q^{ij}(\gamma e_{Q_j}^{i}+\delta e_{Q_j}^{k})\\
&=(\beta e_{Q_i}^{k})^{T}F_Q^{ij}(\delta e_{Q_j}^{k})\\
&=0,
\end{align*}
(final equality follows from (\ref{eq:compatible_non_collinear})) which proves that $y$ lies on $S_P^{ij}$. Hence $S_P^{ij}$ contains the whole intersection of $\Pi_i,\Pi_j$.

($\Leftarrow$): For the converse, assume that the quadrics contain six lines satisfying the two conditions. Since $1.$ is satisfied, there exists a $F_Q^{ij}$ such that $S_P^{ij}$ is the pullback of $F_Q^{ij}$, and the lines $g_{P_i}^{j}$ and $g_{P_j}^{i}$ are its epipolar lines (by \cref{prop:conjugates_and_permissible_lines}). Denote the plane spanned by $\gp{i}{j}$ and $\gp{i}{k}$ by $\Pi_i$.

Let $x$ and $y$ be the two points where $\Pi_i\cap\Pi_j$ intersects $\gp{i}{k}$ and $\gp{j}{k}$ respectively (in the case where $\Pi_i\cap\Pi_j$ is a plane, let $x$ and $y$ be any two points lying on $\gp{i}{k}$ and $\gp{j}{k}$ respectively). If $x$ and $y$ are equal, we get
\begin{align*}
0&=x^{T}S_P^{ij}x\\
&=x^{T}P_i^{T}F_Q^{ij}P_jy\\
&=(e_{Q_i}^{k})^{T}F_Q^{ij}e_{Q_j}^{k}),
\end{align*}
proving that $F_Q^{ij}$ satisfies (\ref{eq:compatible_non_collinear}). On the other hand, if they are different, they span a line lying in $\Pi_i\cap\Pi_j$, which in turn lies in $S_P^{ij}$. In other words:
\begin{align*}
\alpha x+\beta y \in S_{P}^{ij}
\end{align*}
for all $[\alpha:\beta]\in\p1$. Now
\begin{align*}
P_ix&=e_{Q_i}^{k} &&P_jx=\gamma' e_{Q_j}^{i}+\delta' e_{Q_j}^{k}\\
P_iy&=\gamma e_{Q_i}^{j}+\delta e_{Q_i}^{k} &&P_jy=e_{Q_j}^{k}
\end{align*}
for some $[\gamma:\delta],[\gamma':\delta']\in\p1$, so
\begin{align*}
0&=(\alpha x+\beta y)^{T}S_P^{ij}(\alpha x+\beta y)\\
&=(P_i(\alpha x+\beta y))^{T}F_Q^{ij}P_j(\alpha x+\beta y)\\
&=(\alpha e_{Q_i}^{k}+\beta(\gamma e_{Q_i}^{j}+\delta e_{Q_i}^{k}))^{T}F_Q^{ij}(\alpha(\gamma' e_{Q_j}^{i}+\delta' e_{Q_j}^{k})+\beta e_{Q_j}^{k})\\
&=(\alpha e_{Q_i}^{k}+\beta\delta e_{Q_i}^{k})^{T}F_Q^{ij}(\alpha\delta' e_{Q_j}^{k}+\beta e_{Q_j}^{k})\\
&=(\alpha+\beta\delta)(\alpha\delta'+\beta)(e_{Q_i}^{k})^{T}F_Q^{ij}e_{Q_j}^{k}.
\end{align*}
This holds for all $[\alpha:\beta]\in\p1$. In particular, we can take $[\alpha:\beta]$ to be such that $(\alpha+\beta\delta)(\alpha\delta'+\beta)\neq0$, proving that $F_Q^{ij}$ satisfies (\ref{eq:compatible_non_collinear}). Hence, $S_P^{12}$, $S_P^{13}$, $S_P^{23}$ form a compatible triple of quadrics in this case also.
\end{proof}

\subsection{Ambiguous points}
\label{sec:ambigious_points}
Having dealt with the first issue, the fact that not all triples of quadrics provide three conjugate cameras $Q_1,Q_2,Q_3$, we move on to the second issue, namely that even if a configuration $(P_1,P_2,P_3,X)$ lies on the intersection of three compatible, critical quadrics, it might not be a critical configuration.

By \cref{cor:critical_configurations_lie_on_quadrics_and_cubics}, the set of critical points is the intersection of three quadrics and some number of cubic surfaces. Hence, there might be some points in the intersection $S_P^{12}\cap S_P^{13}\cap S_P^{23}$ which are not critical.

\begin{proposition}
Given two triples of cameras $(P_1,P_2,P_3)$ and $(Q_1,Q_2,Q_3)$, and a point $x\in S_P^{12}\cap S_P^{13}\cap S_P^{23}$. The three lines
\begin{align*}
l'_{Q_i}=\overline{Q_i^{-1}(P_ix)}\subset\p3
\end{align*}
either have a point in common or all lie in the same plane.
\end{proposition}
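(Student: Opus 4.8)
The plan is to show that the three lines $l'_{Q_i} = \overline{Q_i^{-1}(P_i x)}$ either concur or are coplanar, by translating the condition $x \in S_P^{12} \cap S_P^{13} \cap S_P^{23}$ into a statement about the images $y_i := P_i(x)$ lying simultaneously in the images of all three multi-view varieties $\mathcal{M}_Q^{ij}$. Concretely, writing $S_P^{ij} = P_i^T F_Q^{ij} P_j$ as in \cref{eq:SP_and_SQ}, the hypothesis $x^T S_P^{ij} x = 0$ says exactly that $(y_i, y_j)$ satisfies the epipolar constraint $y_i^T F_Q^{ij} y_j = 0$ for every pair. So the triple $(y_1, y_2, y_3) \in (\p2)^3$ is compatible with all three pairwise fundamental matrices of the $Q$-cameras, and the question becomes: for such a triple, what can be said about the back-projected lines $l'_{Q_i} = \overline{Q_i^{-1}(y_i)}$?

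First I would dispose of degenerate cases: if two of the lines $l'_{Q_i}, l'_{Q_j}$ coincide, they trivially lie in a common plane, so we may assume the three lines are pairwise distinct. Next, the key step: by \cref{lem:joint-camera_map_is_isomorphism} applied to the pair $(Q_i, Q_j)$, the epipolar constraint $y_i^T F_Q^{ij} y_j = 0$ means $(y_i, y_j)$ lies in the multi-view variety $\mathcal{M}_Q^{ij}$, which by \cref{lem:fundamental_form} is precisely the closure of the image of $\phi_{(Q_i,Q_j)}$; hence there is a point $z_{ij} \in \p3$ with $Q_i(z_{ij}) = y_i$ and $Q_j(z_{ij}) = y_j$, i.e. $z_{ij} \in l'_{Q_i} \cap l'_{Q_j}$ (or one of the $y$'s is an epipole, a boundary case I would handle separately by noting the corresponding line passes through a camera center $q_j$). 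So the three lines meet pairwise: $l'_{Q_1} \cap l'_{Q_2} \ni z_{12}$, $l'_{Q_1} \cap l'_{Q_3} \ni z_{13}$, $l'_{Q_2} \cap l'_{Q_3} \ni z_{23}$.

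The final step is the elementary projective-geometry fact that three pairwise-intersecting, pairwise-distinct lines in $\p3$ either all pass through one point or all lie in one plane. I would argue: if $z_{12} = z_{13}$ then all three lines pass through this common point (since $l'_{Q_2}$ and $l'_{Q_3}$ both contain it, being $z_{12} \in l'_{Q_2}$ and $z_{13} \in l'_{Q_3}$), and we are done; otherwise $z_{12} \neq z_{13}$, so these two distinct points span the line $l'_{Q_1}$, and the plane $\Pi$ spanned by $l'_{Q_1}$ and $l'_{Q_2}$ (which is a genuine plane since the two lines are distinct and meet at $z_{12}$) contains $z_{12}, z_{13}$ and all of $l'_{Q_2}$; since $l'_{Q_3}$ meets $l'_{Q_1}$ at $z_{13} \in \Pi$ and meets $l'_{Q_2}$ at $z_{23} \in l'_{Q_2} \subset \Pi$, the line $l'_{Q_3}$ shares two points with $\Pi$ (or one point and coincides — excluded), hence lies in $\Pi$.

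The main obstacle I anticipate is not the clean case above but the \textbf{boundary cases} where some $y_i$ equals an epipole $e_{Q_i}^{j}$, so that $(y_i, y_j)$ lies in $\mathcal{M}_Q^{ij}$ only as a point of the closure and the back-projected intersection point degenerates to the camera center $q_j$ lying on $l'_{Q_i}$ — here one must check that the "meet pairwise" conclusion still holds with the intersection point being a camera center, and that the concur-or-coplanar dichotomy is unaffected. A secondary subtlety is whether the lines $l'_{Q_i}$ are genuinely well-defined (i.e. $y_i$ is not a point for which $Q_i^{-1}$ is empty), but since $y_i = P_i(x)$ is a bona fide image point and $Q_i$ is a surjective projection, $\overline{Q_i^{-1}(y_i)}$ is always a line through $q_i$, so this causes no trouble.
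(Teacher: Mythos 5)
Your proposal is correct and follows essentially the same route as the paper: the condition $x\in S_P^{ij}$ gives a conjugate point for each pair (your $z_{ij}$), so the lines $l'_{Q_i}$ meet pairwise, and three pairwise-intersecting lines in $\p3$ either concur or are coplanar. The paper states this more tersely, leaving the final projective-geometry dichotomy and the epipole boundary cases implicit, both of which you correctly fill in.
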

\begin{proof}
For any point $x\in S_P^{12}$ there exists a point $y$ (its conjugate with respect to the two cameras $Q_1,Q_2$) satisfying $P_1x=Q_2y$ and $P_2x=Q_2y$. It follows that the two lines $l'_{Q_1}$ and $l'_{Q_2}$ intersect. So if $x\in S_P^{12}\cap S_P^{13}\cap S_P^{23}$ each pair of lines $l'_{Q_i}$ will intersect. This leaves two options, the $l'_{Q_i}$ are either three lines lying in the same plane, or they have at least one point in common.
\end{proof}

\begin{definition}
Given two triples of cameras, the intersection $S_P^{12}\cap S_P^{13}\cap S_P^{23}$ is the union of two subvarieties: the critical points, for which $l'_{Q_i}$ pass through a common point, and the \emph{ambiguous points}, for which the lines $l_{Q_i}$ are coplanar.
\end{definition}

\begin{remark}
The subvarieties are not always disjoint, three lines may all lie in the same plane \emph{and} have a point in common.
\end{remark}

The issue of these ambiguous points is well-known, and descriptions can be found in \cite[Section 15.3]{hartleyzisserman}, and, for a more reconstruction-oriented angle, in \cite[Section 6]{HK}. In this section, we recall a key result by Hartley and Kahl \cite[Theorem 6.17]{HK}, although approached from a more algebraic setting; we also provide a few additional results on the ambiguous points in the case where the cameras are collinear. 

\subsubsection{The non-collinear case}
\begin{lemma}
\label{lem:residual_is_intersection_of_three_planes}
Let $P_1$, $P_2$, $P_3$, and $Q_1$, $Q_2$, $Q_3$ be two triples of cameras, such that the camera centers $q_i$ are not collinear. Then the ambiguous points on $S_P^{12}\cap S_P^{13}\cap S_P^{23}$ are the intersection of three planes, one through each camera center, where the plane through the $i$-th camera center is:
\begin{align*}
\Pi_i&=\vspan(g_{P_i}^{j},g_{P_i}^{k})\\
&=\vspan(P_i^{-1}(e_{Q_i}^{j}),P_i^{-1}(e_{Q_i}^{k}))\\
&=\vspan(P_i^{-1}(Q_i(q_j)),P_i^{-1}(Q_i(q_k))
\end{align*} 
\end{lemma}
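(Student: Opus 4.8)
The plan is to identify the ambiguous points directly with the intersection $\Pi_1 \cap \Pi_2 \cap \Pi_3$ by analyzing what the coplanarity condition on the three lines $l'_{Q_i} = \overline{Q_i^{-1}(P_i x)}$ means in terms of the epipoles $e_{Q_i}^{j}$. First I would recall that, since the $q_i$ are non-collinear, Theorem \ref{thr:compatible_forms_noncollinear} and the discussion preceding it guarantee the three quadrics come from a genuine triple of cameras $Q_1,Q_2,Q_3$, and that the planes $\Pi_i = \vspan(g_{P_i}^{j}, g_{P_i}^{k})$ are honest (two-dimensional) planes because the two epipolar lines through $p_i$ are distinct (this is exactly (\ref{eq:non_collinear}), after pulling back through the full-rank linear map $P_i$). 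The three given descriptions of $\Pi_i$ are then just unwindings of the definition $g_{P_i}^{j} = \overline{P_i^{-1}(e_{Q_i}^{j})} = \overline{P_i^{-1}(Q_i(q_j))}$, so that part is immediate.

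The core of the argument is a chain of equivalences. Fix $x \in S_P^{12}\cap S_P^{13}\cap S_P^{23}$. The three lines $l'_{Q_i}$ pairwise intersect (as in the proposition just stated), so they are coplanar exactly when they do \emph{not} all pass through a single point, i.e.\ exactly when $x$ is ambiguous rather than (merely) critical. I would show: $x \in \Pi_i$ for all $i$ $\iff$ for each $i$ the image point $P_i x$ lies on the line $\overline{e_{Q_i}^{j}\, e_{Q_i}^{k}}$ in the $i$-th image $\iff$ the line $l'_{Q_i} = \overline{Q_i^{-1}(P_i x)}$ lies in the plane $\vspan(q_1,q_2,q_3)$ (using that $Q_i^{-1}$ of the line through the two epipoles $e_{Q_i}^{j}, e_{Q_i}^{k}$ is the plane through $q_i, q_j, q_k$, since $Q_i(q_j)=e_{Q_i}^{j}$) $\iff$ all three lines $l'_{Q_i}$ lie in the common plane $\vspan(q_1,q_2,q_3)$, hence are coplanar. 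For the forward direction of the last step one uses non-collinearity of the $q_i$ so that $\vspan(q_1,q_2,q_3)$ is a genuine plane and is the \emph{unique} plane containing any two of the lines; for the reverse direction, if the $l'_{Q_i}$ are coplanar in some plane $\Sigma$, then since $q_i \in l'_{Q_i}\subset \Sigma$ and the $q_i$ are non-collinear we get $\Sigma = \vspan(q_1,q_2,q_3)$, so each $l'_{Q_i}$ lies in this plane and we can run the equivalence backwards to conclude $x \in \Pi_i$.

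It remains to see that this accounts for \emph{all} ambiguous points and only those, i.e.\ that the coplanarity in the definition of ambiguous point is equivalent to coplanarity in the specific plane $\vspan(q_1,q_2,q_3)$. This is where I expect the one subtlety: a priori the three pairwise-intersecting lines $l'_{Q_i}$ could be coplanar in a plane not containing all of $q_1,q_2,q_3$ — but that cannot happen, because $q_i \in l'_{Q_i}$ for each $i$, so any plane containing all three lines contains all three camera centers, and by non-collinearity such a plane is unique and equals $\vspan(q_1,q_2,q_3)$. (The degenerate sub-case where two of the lines $l'_{Q_i}$ coincide, or where $x$ is itself one of the $q_i$ or lies on an epipolar line, should be handled separately but is covered by \cref{lem:relations_between_quadrics}: there the conjugate is a whole line and such points lie on the epipolar lines, hence trivially in the $\Pi_i$.) Assembling the chain gives: $x$ ambiguous $\iff$ $x \in \Pi_1 \cap \Pi_2 \cap \Pi_3$, which is the claim.

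The main obstacle is bookkeeping rather than depth: one must be careful that $\Pi_i$ is genuinely a plane (not all of $\p3$, which would be the collinear-$q_i$ situation excluded by hypothesis, and not a line), and one must correctly identify $\overline{Q_i^{-1}(\ell)}$ for a line $\ell$ through two epipoles as the plane through the corresponding three camera centers — this uses only that $Q_i$ is a full-rank linear projection with center $q_i$ and that $Q_i(q_j) = e_{Q_i}^{j}$. Everything else is the equivalence chain above.
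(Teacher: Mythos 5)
Your proposal is correct and follows essentially the same route as the paper: coplanarity of the three lines $l'_{Q_i}$ forces them into the unique plane $\vspan(q_1,q_2,q_3)$ (since each passes through $q_i$ and the $q_i$ are non-collinear), and membership of $l'_{Q_i}$ in that plane translates, via $Q_i$ and then $P_i^{-1}$, into $x\in\Pi_i$. The only quibble is your parenthetical equating "ambiguous" with "not concurrent" — the paper's remark points out the two subvarieties can overlap — but this does not affect your argument, which correctly characterizes ambiguity as coplanarity throughout.
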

\begin{proof}
The ambiguous points on $S_P^{12}\cap S_P^{13}\cap S_P^{23}$ correspond exactly to where the three lines $l'_{Q_i}$ all lie in the same plane. Since $l'_{Q_i}$ passes through $q_i$ and the three camera centers are not collinear, the three lines $l'_{Q_i}$ lie in the same plane only if they lie in the plane $\Pi_Q$ spanned by the camera centers $q_i$. In each image, $Q_i(\Pi_Q)$ is the line spanned by the two epipoles $e_{Q_i}^{j},e_{Q_i}^{k}$, so $P_i^{-1}(Q_i(\Pi_Q))$ is the plane $\Pi_i$ spanned by $g_{P_i}^{j}$ and $g_{P_i}^{k}$. The ambiguous points are the intersection of these three planes.
\end{proof}

\begin{theorem}[{\cite[{Theorem 6.17}]{HK}}]
\label{thr:compatible_implies_critical}
Let $P_1,P_2,P_3$ be a triple of cameras, let $S_P^{12}$, $S_P^{13}$, $S_P^{23}$ be a compatible (in the non-collinear sense) triple of quadrics, and let $X$ be their intersection. The compatible triple of quadrics comes with six permissible lines, spanning three planes. Let $x$ be the intersection of these three planes. Then the cameras $P_i$ along with the points in $\overline{X-x}$ constitute a critical configuration.
\end{theorem}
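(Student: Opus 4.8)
The plan is to manufacture a conjugate configuration directly from the compatibility data and then show it realises the same images as $(P_1,P_2,P_3,\overline{X-x})$. Since $S_P^{12},S_P^{13},S_P^{23}$ form a compatible triple in the non-collinear sense, by definition there are fundamental matrices $F_Q^{ij}$, arising from an honest triple of cameras $Q_1,Q_2,Q_3$ with non-collinear centres $q_i$, such that $S_P^{ij}=P_i^{T}F_Q^{ij}P_j$. By \cref{thr:compatible_quadrics} together with \cref{prop:conjugates_and_permissible_lines}, the six permissible lines are exactly the epipolar lines $g_{P_i}^{j}=\overline{P_i^{-1}(Q_i(q_j))}$; and, the $q_i$ being non-collinear, the two such lines through $p_i$ are distinct and span a plane $\Pi_i\ni p_i$, with $x=\Pi_1\cap\Pi_2\cap\Pi_3$.

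Next I would run the coplanar/concurrent dichotomy for the lines $l'_{Q_i}$ (the proposition opening \cref{sec:ambigious_points}): for every $z\in X=S_P^{12}\cap S_P^{13}\cap S_P^{23}$ the three lines $l'_{Q_i}=\overline{Q_i^{-1}(P_iz)}$ meet pairwise, hence either pass through a common point $y$ — in which case $Q_i(y)=P_i(z)$ for all $i$, so $\phi_Q(y)=\phi_P(z)$ and $z$ lies in the set $X'$ of critical points of $\textbf{P}$ with respect to $\textbf{Q}$ (\cref{def:set_of_critical_points}) — or are coplanar, which is the defining condition of an ambiguous point. By \cref{lem:residual_is_intersection_of_three_planes}, applicable precisely because the $q_i$ are non-collinear, the ambiguous locus equals $\Pi_1\cap\Pi_2\cap\Pi_3=x$. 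Hence every point of $X$ other than $x$ (and, trivially, the at most three camera centres $p_i$, where $l'_{Q_i}$ is undefined) has a conjugate, so the dense open subset $X\setminus\{x,p_1,p_2,p_3\}$ of $X$ lies in $X'$; since $X'$ is closed, $\overline{X-x}\subseteq X'$.

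Finally I would close the loop. The configuration is non-trivial: if $F_Q^{ij}=F_P^{ij}$ then for every $w\in\p3$ we would have $w^{T}(P_i^{T}F_P^{ij}P_j)w=(P_iw)^{T}F_P^{ij}(P_jw)=0$, forcing $S_P^{ij}$ to be the zero quadric and contradicting that it is a genuine critical quadric; so $F_P^{ij}\neq F_Q^{ij}$ and, by \cref{thr:fundforms_and_camera_pairs_are_1:1}, no $H\in\PGL(4)$ satisfies $P_iH^{-1}=Q_i$ for all $i$. Then \cref{prop:critical_points_give_critical_configuration} gives that $(\textbf{P},X')$ is a critical configuration (with conjugate $(\textbf{Q},Y)$, where $Y$ is the critical points of $\textbf{Q}$ with respect to $\textbf{P}$), and since $\overline{X-x}\subseteq X'$ and every subconfiguration of a critical configuration is again critical, the cameras $P_1,P_2,P_3$ together with $\overline{X-x}$ form a critical configuration.

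The one step that requires real care — and the main obstacle — is the middle one: pinning down the ambiguous locus as the single point $x$ and confirming that $\overline{X-x}$ is exactly the locus of points that admit a conjugate. Here \cref{lem:residual_is_intersection_of_three_planes} does the heavy lifting, but one must also handle the exceptional loci: the camera centres $p_i$ possibly lying on $X$, the planes $\Pi_i$ degenerating to a common line or plane, and $X$ meeting one of the lines $g_{P_i}^{j}$ — where, by \cref{lem:relations_between_quadrics}\cref{item1}, the conjugate is a camera centre $q_j$, to be read in the closure as permitted by our conventions on reconstructions. In each such case one checks that passing to the closure $\overline{X-x}$ retains exactly the points with a genuine conjugate, which is precisely why the statement excises the residual set $x$ rather than asserting that all of $X$ is critical.
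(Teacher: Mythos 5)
Your proposal is correct and follows essentially the same route as the paper's proof: obtain non-collinear conjugate cameras $Q_i$ from compatibility, identify the ambiguous locus as $x$ via \cref{lem:residual_is_intersection_of_three_planes}, and pass to the closure. You merely make explicit the coplanar/concurrent dichotomy, the closedness argument, and the non-triviality check that the paper's three-sentence proof leaves implicit.
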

\begin{proof}
Since the three quadrics $S_P^{ij}$ are compatible, there exist three non-collinear cameras $Q_1,Q_2,Q_3$ giving rise to these quadrics, so we can find a conjugate configuration. By \cref{lem:residual_is_intersection_of_three_planes}, the ambiguous points on $X$ are exactly the points $x$, so $(P_1,P_2,P_3,X-x)$ is a critical configuration. By continuity, so is $(P_1,P_2,P_3,\overline{X-x})$.
\end{proof}

\begin{remark}
In the case where $x$ is a plane, it might happen that the configuration $(P_1,P_2,P_3,\overline{X-x})$ is not maximal, as the intersection of the critical and ambiguous points need not lie in the closure of $X-x$. For instance, in the case where the three quadrics are all equal to the union of two planes, $x$ is the plane containing the camera centers. $(P_1,P_2,P_3,\overline{X-x})$ is not a maximal critical configuration in this case, as there is a conic curve in $x$ which also lies in the set of critical points (see \cref{prop:plane+conic_critical}).
\end{remark}

\cref{thr:compatible_implies_critical} tells us all we need to classify the (non-collinear) critical configurations for three views. We need only check in what ways three compatible quadrics can intersect, and then remove a point/line/plane if needed. In particular, if the intersection $S_P^{12}\cap S_P^{13}\cap S_P^{23}$ does not contain a plane, a line, or any isolated points, the whole intersection is part of a critical configuration.

\subsubsection{The collinear case}
We now consider the other case, where $(P_1,P_2,P_3)$ and $(Q_1,Q_2,Q_3)$ are two triples of cameras, such that the camera centers $q_1,q_2,q_3$ lie on a line. In this case, the ambiguous points form all of $S_P^{12}\cap S_P^{13}\cap S_P^{23}$. Indeed, whenever the lines $l'_{Q_i}$ intersect in a point, the three lines are coplanar, since the camera centers $q_i$ lie on a line.

Since the set of critical points is a subset of the ambiguous points, it is not enough to simply remove a point, line, or plane. It follows that \cref{thr:compatible_implies_critical} does not hold in the case where the cameras on the other side are collinear. Unlike the non-collinear case, the permissible lines on the quadrics $S_P^{ij}$ are not enough to determine the critical points either. We do, however, have the following result, which holds for both the collinear and non-collinear cases:
\begin{proposition}
\label{prop:reconstructible_codim_1_in_residual}
The intersection of the critical points with the ambiguous points forms a subvariety of the latter of codimension at most one\footnote{exactly one if we assume that the ambiguous points are not contained in the set of critical points}.
\end{proposition}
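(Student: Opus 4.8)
The plan is to set up a dimension count based on the structure of the set of critical points established in \cref{cor:critical_configurations_lie_on_quadrics_and_cubics}. Recall that, for a maximal critical configuration, the set $X$ of critical points is the intersection of the three quadrics $S_P^{12},S_P^{13},S_P^{23}$ together with cubic surfaces coming from the trilinear generators of $\mathcal{M}_Q$. By \cref{lem:three_bilinear_and_10_trilinear}, in the non-collinear case only one extra trilinear generator is needed (in the collinear case three), so the set of critical points is cut out inside the ambiguous locus $R\coloneqq S_P^{12}\cap S_P^{13}\cap S_P^{23}$ by just a few additional cubic equations — in fact, since every trilinear polynomial obtained by multiplying a fundamental matrix by a linear form in the third image already vanishes on $X$ automatically, the critical points are $R$ intersected with the vanishing locus of essentially one cubic in the non-collinear case. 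Imposing one hypersurface condition on a variety drops its dimension by at most one; hence the critical points inside $R$ form a subvariety of codimension at most one, which is exactly the claim.

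First I would make precise what "the critical points" and "the ambiguous points" are as subvarieties of $R$: by the preceding proposition, $R$ decomposes set-theoretically as the union of the critical locus $X$ and the ambiguous locus $A$ (the points whose three back-projected lines $l'_{Q_i}$ are coplanar), and the statement concerns $X\cap A$ as a subvariety of $A$. Second, I would argue that $X$ itself is cut out in $\p3$ by the three quadrics $S_P^{ij}$ plus the pullbacks of the trilinear generators of $\mathcal{M}_Q$; by \cref{cor:critical_configurations_lie_on_quadrics_and_cubics} these pullbacks are cubic surfaces, and by \cref{lem:three_bilinear_and_10_trilinear} one (resp. three) of them, beyond those already implied by the $S_P^{ij}$, suffice. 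Third — the key move — I would observe that $X\cap A$ is obtained from $A$ by imposing these finitely many cubic conditions, and that each such condition either cuts $A$ down by exactly one in dimension on each component where it does not vanish identically, or vanishes identically; the footnote's hypothesis ($A\not\subseteq X$) rules out the degenerate possibility on the relevant component, so on that component the codimension is exactly one, and in general it is at most one.

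The main obstacle is bookkeeping rather than a deep difficulty: one must be careful that "codimension at most one" is interpreted componentwise and that the finitely many cubic conditions, taken together, still only drop the dimension by one — a priori imposing several hypersurfaces could drop dimension by more. The resolution is that $X$ is \emph{defined} as the residual of $A$ inside $R$ (Proposition just above), i.e. $R = X \cup A$ as a set, so $X\cap A$ is contained in the singular-type intersection of two components of a variety $R$ cut out by the three quadrics alone; thus $X\cap A$ is forced to sit inside $R$ in codimension $\ge 1$ relative to $A$ (it is a proper closed subset under the stated hypothesis), while the upper bound "$\le 1$" comes from the fact that adding the single cubic (non-collinear) or three cubics (collinear) to the ideal of $A$ can lower dimension by at most the number of added generators — and here one checks, using the explicit trilinear description, that it is at most one. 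I would handle the collinear case by the same dimension count, using that there the ambiguous locus is all of $R$, so the claim reduces to showing $X$ has codimension at most one in $R$, which again follows from $X$ being cut out in $R$ by the pullbacks of the trilinear forms and the fact that, by maximality, $X$ is a genuine (possibly reducible) curve or surface, never empty or a finite point set sitting in codimension two or more.
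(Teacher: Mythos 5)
Your approach is genuinely different from the paper's: you count ideal generators of $\mathcal{M}_Q$ (via \cref{lem:three_bilinear_and_10_trilinear}) and argue that the critical locus is cut out inside the ambiguous locus by the pullbacks of the extra trilinear generators, whereas the paper argues geometrically in $(\p2)^{3}$ that, within the family of triples of pairwise-intersecting lines through the centers $q_i$, concurrency of the three lines is (generically) a single linear condition, hence a codimension-one constraint. In the non-collinear case your argument is sound: only one trilinear generator beyond the bilinear ones is needed, so $X\cap A=A\cap V(c)$ for a single cubic $c$, and intersecting with one hypersurface drops the dimension of each component by at most one.

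The gap is the collinear case, which is precisely the case the proposition exists for (it is what \cref{prop:rational_curves} relies on; in the non-collinear case \cref{thr:compatible_implies_critical} already does most of the work). There, \cref{lem:three_bilinear_and_10_trilinear} requires three trilinear generators, so your dimension count only gives codimension at most three, and neither of your attempted fixes closes this. The claim that ``one checks, using the explicit trilinear description, that it is at most one'' is the assertion to be proved, not a proof; and the appeal to ``by maximality, $X$ is a genuine curve or surface, never \ldots a finite point set'' is circular, since the proposition is exactly the tool the paper later uses to conclude that $X$ is a curve rather than a finite set. To repair the argument you would either have to exhibit the relations among the three extra cubics on $R$ showing they cut out a common divisor there, or fall back on the paper's concurrency argument, which treats both cases uniformly.
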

\begin{proof}
Given three cameras $Q_1,Q_2,Q_3$, consider the variety 
\begin{align*}
A=V(F_Q^{12},F_Q^{13},F_Q^{23})\subset\pxpxp.
\end{align*}
For each point $(x_1,x_2,x_3)\in A$, the three lines $l'_{Q_i}=\overline{Q_i^{-1}(P_ix)}\subset\p3$ either lie in the same plane or have a common point, or both. In the family of triples of coplanar lines, one through each camera center, requiring the third line to pass through the intersection of the first two is (generally) a linear constraint. Hence the triples of coplanar lines all having at least one point in common form a codimension one subvariety of this family. To get to $S_P^{12}\cap S_P^{13}\cap S_P^{23}$ however, we need to intersect $A$ with the multi-view variety $\mathcal{M}_P$. This intersection is a point, line, or plane (as discussed before), in the event of a line or plane, the critical points form a codimension one subvariety of the line/plane. If the intersection is a single point, this point may or may not be critical, we will see both happen in the following section.
\end{proof}

\section{Critical configurations for three views}
\label{sec:critical_configurations_for_three_views}
In this section, we classify all the critical configurations for three views. By \cref{cor:critical_lies_on_intersection_of_quadrics}, and the discussion in \cref{sec:compatible_triples_of_quadrics}, any critical configuration for three views lies on the intersection of three compatible quadrics. Furthermore, any such intersection is critical, with the possible exception of some number of ambiguous points (usually a point, line, or plane). As such, our approach is to consider all possible intersections of three quadrics, and for each case check whether it exists as the intersection of a compatible triple, and if so, remove any potential ambiguous points. This gives us all critical configurations for three views. We start with the cases where the quadrics intersect in a surface (\cref{ssec:quadrics_intersecting_in_surface}), then a curve (\cref{sec:quadrics_intersecting_in_a_curve}), and finally in a finite number of points (\cref{sec:quadrics_intersecting_in_points}).

\begin{remark}
The intersection of quadrics may sometimes contain non-reduced components\footnote{Components with multiplicity greater than one}. One such example is three lines through a single point, with one line having multiplicity 2, which appears as the intersection of three reducible quadrics. However, since the definition of critical configurations is a set-theoretical one, it does not see multiplicity of components. As such, we consider all components to be reduced.
\end{remark}

\begin{remark}
While the critical configurations for two views tend to have a finite number of conjugates, this is generally not the case for three views. The only critical configurations for three views that have a finite number of conjugates are the ones consisting of six or seven points. The rest all have an infinite number of conjugates (although the dimension of the family of conjugates may vary). The proofs usually revolve around constructing a single conjugate, but the choices made in the construction should make it clear that many conjugates exist when different choices are made.
\end{remark}

\subsection{Quadrics intersecting in a surface}
\label{ssec:quadrics_intersecting_in_surface}
In the two-view case, the points in a maximal critical configuration always form a surface. We begin by checking whether any such configurations remain critical in the three-view case. In particular, we want to examine whether there exist any critical configurations where the points form a surface. There are two ways three quadrics can intersect in a surface, either by all being equal or by all being reducible and sharing a common plane.

\subsubsection{The smooth quadric/cone}
\label{sec:all_quadrics_coincide}
We start with the case where the quadrics $S_P^{ij}$ are all the same smooth quadric or cone (denoted simply $S_P$). In this case, it is impossible to find six lines satisfying the conditions in \cref{thr:compatible_quadrics}, so one cannot find a non-collinear conjugate triple of cameras $Q_1,Q_2,Q_3$. However, as long as $S_P$ is not one of the following configurations (illustrated in \cref{fig:impossible_configurations})
\begin{itemize}
\item $S_P$ is smooth and there is a camera center such that the two lines passing through it, each pass through one other camera center.
\item $S_P$ is a cone and two camera centers lie on the same line, but neither of the two lies at the vertex
\end{itemize} 
then three copies of $S_P$ satisfy the conditions of \cref{prop:compatible_collinear}, and hence form a compatible triple, coming from a triple of collinear cameras $Q_i$.

\begin{figure}[]
\begin{center}
\includegraphics[width = 0.8\linewidth]{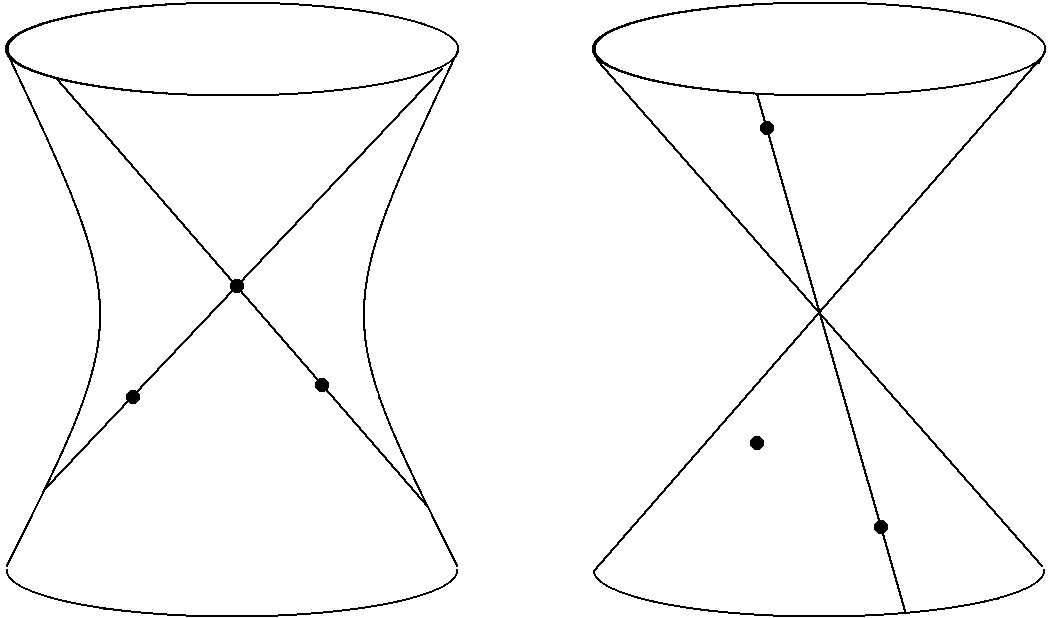}
\end{center}
\caption{If the camera centers are positioned as shown, the three quadrics (all three equal) do not constitute a compatible triple.}\label{fig:impossible_configurations}
\end{figure}

\begin{proposition}
\label{prop:rational_curves}
Let $P_1,P_2,P_3$ be three cameras, and let $S_P^{12},S_P^{13},S_P^{23}$ be a compatible triple of quadrics. If the three quadrics are all equal to the same irreducible quadric $S_P$, the set of critical points $X$ is a rational curve of degree at most four passing through all camera centers. The conjugate configuration is a curve of one degree less, not passing through any camera centers (plus the line passing through the camera centers, if $S_P$ is a cone).
\end{proposition}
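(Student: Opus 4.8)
The plan is to exploit the map $\psi_P$ taking a point of $S_P$ to its conjugate, together with the degree formula in \cref{prop:bidegree_on_P_->_bidegree_on_Q}, to pin down both the critical curve $X$ and its conjugate. Since the three quadrics coincide, we are in the collinear conjugate case (\cref{prop:compatible_collinear}): there is a single triple of collinear cameras $Q_1,Q_2,Q_3$ with collinear centers $q_1,q_2,q_3$, and three lines $L_1,L_2,L_3\subset S_P$ through $p_1,p_2,p_3$ respectively, with $L_i,L_j$ permissible on $S_P$. The key geometric fact is condition 4 of \cref{prop:compatible_collinear}: every point of $S_P^{12}\cap S_P^{13}$ off $L_1$ also lies on $S_P^{23}$, so set-theoretically $S_P^{12}\cap S_P^{13}\cap S_P^{23} = S_P^{12}\cap S_P^{13}$ minus (at most) $L_1$. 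But $S_P^{12}=S_P^{13}=S_P$, so this intersection is all of $S_P$ — i.e. the ambiguous locus is huge and we are precisely in the situation of the collinear case of \cref{sec:ambigious_points}, where the critical points form a proper subvariety of $S_P$ of codimension one, hence a curve, by \cref{prop:reconstructible_codim_1_in_residual}.

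Next I would identify that curve. For a point $x\in S_P$ the three lines $l'_{Q_i}=\overline{Q_i^{-1}(P_i x)}$ are pairwise incident (each pair meets because $x$ lies on the corresponding quadric), and since the $q_i$ are collinear they are automatically coplanar; the critical condition is that they meet in a common point — a single extra linear-type constraint on $x\in S_P$, cutting out a curve $X$. Equivalently, $X$ is the locus where $\psi_P^{12}(x)$, $\psi_P^{13}(x)$, $\psi_P^{23}(x)$ agree; by \cref{rem:method_for_computing} each $\psi_P^{ij}$ is a rational map on $S_P$ and this coincidence locus is a curve. To get its type, use \cref{prop:bidegree_on_P_->_bidegree_on_Q}: the conjugate $Y$ of $X$ lies on the conjugate quadric $S_Q$, and since $Q_1,Q_2,Q_3$ are collinear, $S_Q^{12}=S_Q^{13}=S_Q^{23}=:S_Q$ as well, so $Y$ is a single curve on $S_Q$. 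Writing $X$ of type $(a,b,c_1,c_2,c_3)$ (multiplicity $c_i$ at $p_i$, bidegree governed by the rulings, and noting $S_P$ smooth or cone), the formula forces the conjugate type to be $(a,\,a+b-c_i-c_j,\,a-c_j,\,a-c_i)$ when read pairwise; consistency of all three pairwise applications, plus the fact that $\psi_P$ sends each epipolar line $L_i$ to the camera center $q_j$ (so $X$ must pass through each $p_i$ with the right multiplicity for $Y$ to be well-behaved), pins the total degree of $X$ to be $\le 4$ and that of $Y$ to be one less. Rationality of $X$ then follows because $X$ is the image under a birational map of a plane section / is a rational normal-type curve on the rational surface $S_P$ — concretely, a curve of bidegree $(1,k)$ or $(2,k)$ with enough base points on a quadric is rational, and I would check the possible numerical types give only such curves.

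The cleanest route to the degree bound, which I'd carry out explicitly, is: $X = S_P\cap S'$ for some cubic surface $S'$ coming from the trilinear generator of $\mathcal{M}_Q$ (by \cref{cor:critical_configurations_lie_on_quadrics_and_cubics}), so $X$ has degree $\le 6$ a priori; but $X$ must pass through $p_1,p_2,p_3$ and — crucially — through the whole epipolar configuration in a way that strips off the lines $L_1,L_2,L_3$ from the naive intersection. Each $L_i$ lies on $S_P$, and one checks that $S'$ contains $L_i$ (because points of $L_i$ map to camera centers $q_j$ under $\psi_P$, hence trivially satisfy the trilinear relation), so $S_P\cap S' \supseteq L_1\cup L_2\cup L_3$ as a curve of degree $3$, leaving a residual curve $X$ of degree $\le 6-3 = 3$ when $L_i$ are distinct rulings... but depending on whether $S_P$ is smooth or a cone and on the line configuration, the accounting gives degree up to $4$, and in the cone case the line $\overline{q_1q_2q_3}$ appears on the conjugate side, accounting for the parenthetical remark. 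Finally, rationality: a reduced irreducible curve of degree $\le 4$ on a smooth quadric has bidegree $(a,b)$ with $a+b\le 4$, and all such curves with $a\le 2$ are rational; the case bidegree $(2,2)$ passing through the three camera centers has arithmetic genus $1$ but is forced to be singular (it must pass through $p_1,p_2,p_3$ which on the conjugate side collapse onto the collinear $q_i$), dropping the geometric genus to $0$. I would treat the cone case analogously, using \cref{def:type_of_curve}'s cone convention.

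\emph{Main obstacle.} The hard part is the bookkeeping that simultaneously (i) shows the residual curve $X$ after removing the three epipolar lines has degree exactly $\le 4$ in every sub-case (smooth vs.\ cone, and the various permissible-line configurations excluded in \cref{fig:impossible_configurations}), and (ii) rules out an elliptic $(2,2)$-curve by exhibiting a forced singularity from the triple incidence at the collinear centers $q_i$. Everything else — existence of the collinear conjugate from \cref{prop:compatible_collinear}, the structure of the ambiguous locus from \cref{prop:reconstructible_codim_1_in_residual}, and the conjugate-type formula from \cref{prop:bidegree_on_P_->_bidegree_on_Q} — is already in hand; assembling them into the precise numerical classification is where the real work lies.
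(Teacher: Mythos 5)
Your skeleton matches the paper's: both start from the collinear conjugate (\cref{prop:compatible_collinear}), invoke \cref{prop:reconstructible_codim_1_in_residual} to conclude the critical locus is a curve $C_P\subset S_P$, and then try to pin down its numerical type via \cref{prop:bidegree_on_P_->_bidegree_on_Q}. But the decisive step is missing, and the one concrete shortcut you offer does not work. The paper's argument runs: $C_P$ is cut on $S_P$ by cubics through all three camera centers, so it has type $(a,b,c_i,c_j)$ with $a,b\le 3$ and $c_i\ge 1$ on each of the three (equal) quadrics; since the conjugate curve $C_Q$ is the same regardless of which pair of cameras one uses, the formula $(a,b,c_1,c_2)\mapsto(a,a+b-c_1-c_2,a-c_2,a-c_1)$ forces equal multiplicities at all three centers; the conjugate lies on three \emph{distinct} quadrics $S_Q^{ij}$ (when the $p_i$ are not collinear), so $a',b'\le 2$; and in the smooth case $C_Q$ misses all $q_i$, while in the cone case the line $\overline{q_1q_2q_3}$ splits off as the conjugate of the vertex. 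Running the enumeration against these constraints leaves exactly the types $(1,3,1,1)$, $(1,2,1,1)$, $(1,1,1,1)$ — whence $a=1$, degree $\le 4$, rationality (a $(1,b)$-curve meets one ruling once), and a conjugate of degree one less. You gesture at "consistency of the three pairwise applications" but never extract the equal-multiplicity condition or the bounds $a',b'\le 2$ that make the enumeration finite and conclusive; this is precisely the content of the proposition, not a routine check.

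The "cleanest route" you propose — $X$ as the residual of $L_1\cup L_2\cup L_3$ in a degree-$6$ intersection $S_P\cap S'$ — yields degree $\le 3$, which contradicts the rational quartic $(1,3,1,1)$ that actually occurs; you notice the mismatch ("the accounting gives degree up to $4$") but do not resolve it, so this route cannot be salvaged as written. Likewise, the claim that a $(2,2)$-curve through the camera centers is "forced to be singular" is asserted rather than proved; the paper never needs it, because type $(2,2,c,c)$ is eliminated by the conjugate-type formula together with the multiplicity and degree constraints. So the approach is the right one, but the proof is not yet there: the numerical classification that the statement amounts to is exactly the part left undone.
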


\begin{remark} 
The two cases in \cref{fig:impossible_configurations} are exactly the two where $S_P$ does not contain a smooth rational curve passing through all three camera centers.
\end{remark}

\begin{proof}
Since the three cameras $Q_i$ on the other side have to be collinear, our critical configuration does not consist of all the points on $S_P$. By \cref{prop:reconstructible_codim_1_in_residual}, the critical points on $S_P$ form a codimension one subvariety, that is, a curve $C_P\subset S_P$.

By \cref{cor:critical_configurations_lie_on_quadrics_and_cubics}, the set of critical points is the intersection of three quadric and some number of cubic surfaces. The three quadrics are all equal to $S_P$, so $C_P$ is the intersection of $S_P$ with one or more cubic surfaces, all of which contain all three camera centers. It follows that $C_P$ is of type $(a,b,c_1,c_2)$ (recall \cref{def:type_of_curve}) with $a,b\leq3$ and $c_1,c_2\geq1$.

The curve $C_P$ lies on all three quadrics $S_P^{ij}$ and its type $(a,b,c_i,c_j)$ depends on which quadric we consider since the multiplicity in each camera center could be different. However, the total degree of the conjugate curve $C_Q$ depends (by \cref{prop:bidegree_on_P_->_bidegree_on_Q}) on the multiplicity of $C_P$ in the camera centers. Since the curve $C_Q$ is the same regardless of which quadric we consider, it follows that $C_P$ needs to have the same multiplicity in all three camera centers, the same is true for $C_Q$.

If we assume that the three camera centers $p_i$ are not collinear, the three quadrics $S_Q^{ij}$ on the other side cannot be the same quadric. It follows that the conjugate curve $C_Q\subset S_Q^{12}\cap S_Q^{13}\cap S_Q^{23}$ is of type $(a',b',c_1',c_2')$ with $a',b'\leq2$. If, on the other hand, the camera centers $p_i$ are collinear, the resulting curve is still the intersection of quadrics and cubics, and so needs to satisfy $a',b'\leq3$.

Furthermore, if $S_P$ is smooth, the curve $C_Q$ cannot pass through any of the three camera centers $q_i$, since this would imply that all three quadrics $S_Q$ contain the line spanned by the camera centers, which (by \cref{tab:configurations_and_their_conjugates}) violates our condition that $S_P$ is smooth.

On the other hand, if $S_P$ is a cone, then the three quadrics $S_Q^{ij}$ must (by \cref{tab:configurations_and_their_conjugates}) contain the line $l_Q$ spanned by the camera centers. However, the whole line $l_Q$ is conjugate to a single point, namely the vertex of $S_P$ (by \cref{lem:relations_between_quadrics}). So in the case where $S_P$ is a cone, $C_Q$ is the union of $l_Q$ and some other curve with $a',b'\leq2$.

This leaves only a finite number of options for what kind of curve $C_P$ can be. Finally, the curves should satisfy the result in \cref{prop:bidegree_on_P_->_bidegree_on_Q}: that if $C_P$ is of type $(a,b,c_1,c_2)$, then $C_Q$ is of type $(a,a+b-c_1-c_2,a-c_2,a-c_1)$. We can now check every possible type for $C_P$ and remove all cases where $a'$ or $b'$ are either negative or greater than 3, the cases where $c_i'$ is negative, and the cases where the multiplicity in the camera centers is greater than what the degree of the curve allows (for instance, a curve of degree one can not have multiplicity 2 in a camera center).

This leaves only three options, listed in \cref{tab:curves_on_same_quadric}. Note that if $S_P$ is a cone, then the curve on the other side will also include the line $l_Q$ in addition to the component given in the table.
\end{proof}

\begin{table}[h]
\centering
\captionsetup{width=1\linewidth}
\begin{tabular}{@{}p{0.18\textwidth}p{0.18\textwidth}@{}}
\toprule
Curve on $S_P$ & Curve on $S_Q$ \\ \midrule
(1,3,1,1) & (1,2,0,0) \\ %\midrule
(1,2,1,1) & (1,1,0,0) \\ %\midrule
(1,1,1,1) & (1,0,0,0) \\ \bottomrule
\end{tabular}
\caption{Recall that a curve of type $(a,b,c_1,c_2)$ intersects the family containing the epipolar lines $a$ times, the lines in the other family $b$ times, and passes $c_i$ times through the camera center $p_i$.}
\label{tab:curves_on_same_quadric}
\end{table}

\begin{remark}
This proves that any critical configuration where the three quadrics on one side coincide has to be one of the types in \cref{tab:curves_on_same_quadric}. In \cref{sec:quadrics_intersecting_in_a_curve} we prove the converse, that any configuration of such type is critical. The general case is the rational quartic \footnote{degree four curve} on one side, and a twisted cubic on the other, while the two other configurations appear as degenerates of this one. 
\end{remark}

\subsubsection{Two planes}

\begin{proposition}
\label{prop:plane+conic_critical}
When the three quadrics $S_P^{ij}$ are all equal to the same two planes, the set of critical points is the union of a plane and a conic passing through all three camera centers. Conversely, any configuration where the points lie on the union of a plane and a conic curve passing through the camera centers is a critical configuration. Its conjugate is of the same type (plane + conic curve).
\end{proposition}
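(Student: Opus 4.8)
The plan is to read off the geometry of $S_P$ from the two-view classification, then build an explicit non-collinear conjugate triple and use it to pin down the critical points. By \cref{thr:critical_conf_for_two_views} and \cref{tab:configurations_and_their_conjugates}, whenever the critical quadric of a pair of cameras is a union of two planes, one of the planes contains both camera centers; applying this to all three pairs, with the \emph{same} quadric $S_P=\Pi\cup\Pi'$ each time, forces $p_1,p_2,p_3$ onto a single plane $\Pi$. Write $\ell_0=\Pi\cap\Pi'$. To get a conjugate triple, for each pair $(i,j)$ pick a generic point $z_{ij}\in\ell_0$ and set $g_{P_i}^{j}=\overline{p_iz_{ij}}\subset\Pi$; then $g_{P_i}^{j},g_{P_j}^{i}$ is a permissible pair on $\Pi\cup\Pi'$ (they meet only at $z_{ij}\in\ell_0$, which is the singular locus they meet, and both lie in the common plane $\Pi$), and since $\vspan(g_{P_i}^{j},g_{P_i}^{k})=\Pi$ the second condition of \cref{thr:compatible_quadrics} holds trivially. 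So the three copies of $S_P$ form a compatible triple giving conjugate cameras $Q_1,Q_2,Q_3$, and $e_{Q_i}^{j}=P_i(z_{ij})\neq P_i(z_{ik})=e_{Q_i}^{k}$ shows the centers $q_i$ are non-collinear. (Subcases where some $p_i$ lies on $\ell_0$, in particular three collinear centers, are handled the same way, with a remark at the end.)

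With the $q_i$ non-collinear, \cref{lem:residual_is_intersection_of_three_planes} identifies the ambiguous locus in $S_P^{12}\cap S_P^{13}\cap S_P^{23}=\Pi\cup\Pi'$ with $\Pi_1\cap\Pi_2\cap\Pi_3$, where $\Pi_i=\vspan(g_{P_i}^{j},g_{P_i}^{k})=\Pi$; hence the ambiguous locus is $\Pi$, so $\Pi'=\overline{(\Pi\cup\Pi')-\Pi}$ is critical by \cref{thr:compatible_implies_critical}. Independently, \cref{cor:critical_configurations_lie_on_quadrics_and_cubics} presents the set of critical points as $(\Pi\cup\Pi')\cap T$ for a single cubic surface $T=S^{123}$ passing through $p_1,p_2,p_3$. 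Since $\Pi'$ is critical it lies in $T$, so $T=\Pi'\cup Q'$ for a quadric $Q'$, and $Q'$ contains each $p_i$ (as $T$ does and $p_i\notin\Pi'$). Therefore the critical points are exactly $\Pi'\cup C$, where $C:=Q'\cap\Pi$ is a conic through $p_1,p_2,p_3$ — the first claim.

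That the conjugate has the same shape follows from \cref{prop:bidegree_on_P_->_bidegree_on_Q_planes}: $\psi_P$ restricts to an isomorphism on the plane $\Pi'$ (which contains no camera center), carrying it to the corresponding plane of $S_Q$, and it sends $C$, of type $(2,0,0,1,1)$ with respect to each pair, to a conic of the same type through $q_1,q_2,q_3$; so the conjugate is again a plane together with a conic through its camera centers. For the converse, given a plane $\Pi'$ and a conic $C$ through $p_1,p_2,p_3$, set $\Pi=\vspan(C)$ (which contains the $p_i$); then $S_P=\Pi\cup\Pi'$ is a critical quadric for every pair, and the construction above exhibits $(P_1,P_2,P_3,\Pi'\cup C_P)$ as a critical configuration for some conic $C_P\subset\Pi$ through the camera centers, depending on the chosen points $z_{ij}\in\ell_0$. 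As every subconfiguration of a critical configuration is critical, it then remains only to choose the $z_{ij}$ so that $C_P=C$.

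This last matching is where I expect the real work to lie: one must understand how $C_P=Q'\cap\Pi$ depends on $(z_{12},z_{13},z_{23})$ and show the dependence is dominant onto the two-parameter family of conics through the three camera centers in $\Pi$. The parameter count (three free points on $\ell_0\cong\p1$ versus a two-dimensional family of conics through three fixed points) is comfortably in favour of this, but making it precise means computing $T=S^{123}$ from the $Q_i$ — for instance via the adjugate formula of \cref{rem:method_for_computing} — and reading off its quadric factor $Q'$. The same computation is what makes the forward direction fully rigorous, by confirming that $T$ genuinely splits as a plane plus a quadric, equivalently that the critical locus inside the ambiguous plane $\Pi$ is exactly a conic and never a curve of higher degree.
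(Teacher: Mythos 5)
Your forward direction is sound and runs parallel to the paper's: permissible pairs meeting on $\ell_0=\Pi\cap\Pi'$ give a compatible triple with non-collinear conjugate centers, \cref{lem:residual_is_intersection_of_three_planes} identifies the ambiguous locus with $\Pi$, and the single trilinear generator (\cref{lem:three_bilinear_and_10_trilinear}) cuts the critical locus down to $\Pi'$ together with a conic through $p_1,p_2,p_3$. The description of the conjugate via \cref{prop:bidegree_on_P_->_bidegree_on_Q_planes} is also fine.

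The gap is exactly where you place it: the converse. The dominance of $(z_{12},z_{13},z_{23})\mapsto C_P$ is never established, and even granting it, dominance only yields a dense family of conics; criticality is an existential condition and does not obviously pass to limits of configurations, so ``every conic'' does not follow from ``the generic conic'' without additional work. The paper sidesteps the computation of the cubic $T$ entirely by identifying $C_P$ synthetically. The three points $a_j=g_{P_i}^{j}\cap g_{P_k}^{j}$ are critical, each being conjugate to the camera center $q_j$, so $C_P$ passes through the six points $p_1,p_2,p_3,a_1,a_2,a_3$. These six points and the six epipolar lines form a hexagon whose opposite sides $g_{P_i}^{j},g_{P_j}^{i}$ meet at three collinear points on $\ell_0$, so the Braikenridge--Maclaurin theorem places the six points on a conic, which is therefore $C_P$. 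Run backwards, this handles an arbitrary conic $C$ through the camera centers: choose $a_1\in C$, set $g_{P_2}^{1}=\overline{p_2a_1}$ and $g_{P_3}^{1}=\overline{p_3a_1}$, let permissibility determine $g_{P_1}^{2}$ and hence $a_2$ as its second intersection with $C$, and continue around the hexagon; Pascal's theorem guarantees that the final pair of lines again meets on $\ell_0$, hence is permissible. Replacing your parameter count with this construction, together with the observation that the $a_j$ lie on $C_P$, closes the gap. (You also defer the subcase of all three centers on $\ell_0$ to an unwritten remark; the statement there is that no non-collinear conjugate triple exists, which is why the paper excludes it.)
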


\begin{proof}
Let the three quadrics $S_P^{ij}$ all be equal to the same two planes $S_P$. By \cref{thr:critical_conf_for_two_views}, all three camera centers need to lie in the same plane, with any number of them lying on the intersection of both. In this case, unlike the previous one, there exist six lines satisfying the conditions in \cref{thr:compatible_quadrics} as long as not all three camera centers lie on the intersection of the two planes. An example is given in \cref{fig:two_planes}. Hence, the three cameras $Q_i$ on the other side can be chosen to not be collinear. 

Since the three cameras $Q_i$ on the other side can be assumed to not be collinear, \cref{thr:compatible_implies_critical} proves that the plane not containing all camera centers is part of the set of critical points. Furthermore, \cref{prop:reconstructible_codim_1_in_residual} proves that there is a curve $C_P$ in the plane with the camera centers which is also part of this set. Since the cameras $Q_i$ are not collinear, \cref{lem:three_bilinear_and_10_trilinear} tells us that the set of critical points is the intersection of the two planes with a single cubic surface passing through the three camera centers, so the curve $C_P$ is the union of a conic passing through the three camera centers and the line where the two planes intersect. 

For the converse, we want to determine the conic curve $C$ in the set of critical points. We already know that $C_P$ passes through the three camera centers. Furthermore, the intersection of the two permissible lines $g_{P_i}^{j}$ and $g_{P_k}^{j}$ (denote it by $a_j$) also lies in the set of critical points since it has a conjugate point, namely $q_j$. This gives us three more points on the conic. Usually, there is no conic passing through six given points. In this case, however, the six points $p_1,p_2,p_3,a_1,a_2,a_3$ along with the six lines $g_{P_i}^{j}$ span a hexagon whose opposite sides ($g_{P_i}^{j}$ and $g_{P_j}^{i}$) meet at three points which lie on a line (the intersection of the two planes) so by the Braikenridge–Maclaurin theorem \cite[p.76]{conversePascal} the six points $p_1,p_2,p_3,a_1,a_2,a_3$ lie on a conic.

Finally, for any conic $C_P$ through $p_1,p_2,p_3$ one may choose the six lines $g_{P_i}^{j}$ in such a way that the three points $a_i$ lie on $C_P$ \emph{and} each pair of lines form a permissible pair. This can be done by picking $a_1$ to be any\footnote{not equal to a camera center and not lying in the intersection of the two planes} point on $C_P$ and then taking $a_2$ to be the intersection of $g_{P_1}^{2}$ with $C_P$. Since we can get any conic by making the appropriate choice of permissible lines, this proves that any plane + conic configuration is critical. 
\end{proof}

\begin{figure}[]
\begin{center}
\includegraphics[width = 0.90\linewidth]{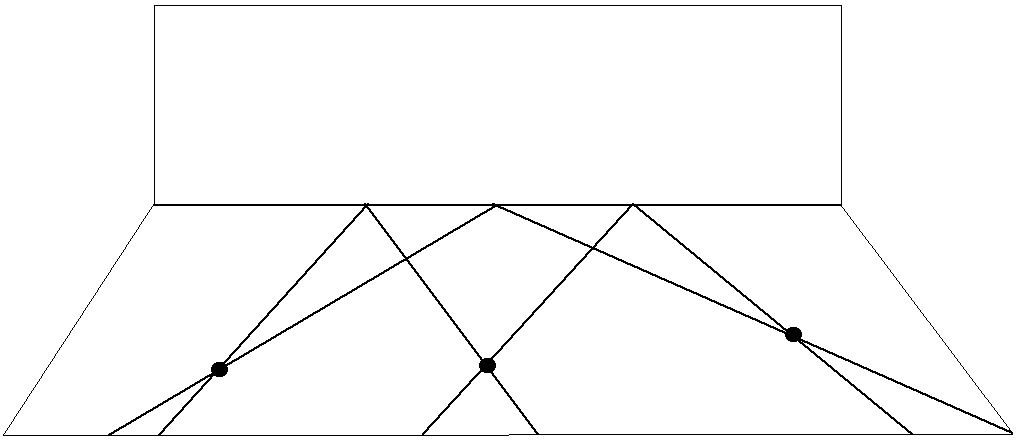}
\end{center}
\caption{One example of six lines satisfying the conditions in \cref{thr:compatible_quadrics}}\label{fig:two_planes}
\end{figure}

\subsubsection{Single plane}
With this, there is only one way left that the three quadrics $S_P^{ij}$ can intersect in a surface, namely if they are all reducible, and contain only one common plane. In this case, the set of critical points is at most a plane plus a line. Hence, this is not a maximal critical configuration, as it is always a subconfiguration of the plane + conic configuration in \cref{prop:plane+conic_critical}.

\subsection{Quadrics intersecting in a curve}
\label{sec:quadrics_intersecting_in_a_curve}
Curves that appear as the intersection of quadrics have at most degree 4. Moreover, any curve that lies on the union of a plane and a conic passing through the camera centers is critical by \cref{prop:plane+conic_critical}, which further reduces the number of curves we need to check. We analyze all curves appearing as the intersection of quadrics sorted by the degree of their highest degree component. In \cref{sec:quartic_curves}, we analyze the curves where the highest degree component is of degree 4, covering the elliptic quartic as well as the singular rational quartics. \cref{sec:cubic_curves} covers the curves containing a twisted cubic, while \cref{sec:curves_of_lower_degree} covers the curves containing no components of degree higher than 2.

For most curves, the main issue is to find a compatible (in the non-collinear sense) triple of quadrics containing it. The issue of having to remove a plane, line, or isolated point (as per \cref{thr:compatible_implies_critical}) is less relevant since the curves will usually not contain any such component. To find compatible triples of quadrics, the following lemmas are useful:

\begin{lemma}
\label{lem:compatible}
Let $P_1$, $P_2$, $P_3$ be a triple of cameras, and let $S_P^{12}$, $S_P^{13}$, $S_P^{23}$ be a triple of critical quadrics, each passing through two camera centers (indicated by the superscript). Then the three quadrics are compatible if
\begin{enumerate}
\item[A)] The intersection of $S_P^{ij}$ and $S_P^{ik}$ does not contain a line passing through $p_i$.
\item[B)] There exists a point $x$ lying on all three quadrics, as well as three distinct lines $l_{12}$, $l_{13}$, $l_{23}$ passing through $x$, satisfying
\begin{enumerate}
\item[1.] $l_{ij}\subset S_P^{ij}$.
\item[2.] $p_i\in \vspan(l_{ij},l_{ik})$.
\item[3.] $l_{ij}\notin \vspan(l_{ik},l_{jk})$.
\item[4.] $p_i,p_j\notin l_{ij}.$
\end{enumerate}
\end{enumerate}
\end{lemma}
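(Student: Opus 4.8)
I would prove the lemma by using hypothesis B to verify the two conditions of \cref{thr:compatible_quadrics}; since a triple of quadrics arising from a non-collinear camera triple is in particular compatible, this suffices. Concretely, the task becomes: produce six lines $g_{P_i}^{j}$, one pair on each $S_P^{ij}$, which form permissible pairs (\cref{def:permissible_lines}) and whose spanning planes meet back inside the quadrics.

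\textbf{Constructing the six lines.} For each $i$ put $\Pi_i=\vspan(l_{ij},l_{ik})$. Condition B.3 says $l_{12},l_{13},l_{23}$ are not coplanar, so the $\Pi_i$ are honest, pairwise distinct planes through $x$ satisfying $\Pi_i\cap\Pi_j=l_{ij}$, and $p_i\in\Pi_i$ by B.2. Since $S_P^{ij}$ is a critical quadric for $P_i,P_j$, it contains $p_i$, so $p_i$ lies on the conic $S_P^{ij}\cap\Pi_i$ (assume for now $\Pi_i\not\subseteq S_P^{ij}$). This conic contains the line $l_{ij}$ by B.1, and since it also contains $p_i$ while $p_i\notin l_{ij}$ by B.4, it is not a double line: it splits as $l_{ij}$ plus a residual line, which then passes through $p_i$. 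I take that residual line to be $g_{P_i}^{j}$, and define all six lines this way, so that $p_i\in g_{P_i}^{j}\subseteq S_P^{ij}\cap\Pi_i$.

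\textbf{The spanning-plane condition.} The lines $g_{P_i}^{j}$ and $g_{P_i}^{k}$ lie in $\Pi_i$ and pass through $p_i$; they must be distinct, for a common line would lie in $S_P^{ij}\cap S_P^{ik}$ and pass through $p_i$, against hypothesis A. Hence $\vspan(g_{P_i}^{j},g_{P_i}^{k})=\Pi_i$, so $\vspan(g_{P_i}^{j},g_{P_i}^{k})\cap\vspan(g_{P_j}^{i},g_{P_j}^{k})=\Pi_i\cap\Pi_j=l_{ij}\subseteq S_P^{ij}$, which is exactly condition~2 of \cref{thr:compatible_quadrics}. This is the only place hypothesis A enters.

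\textbf{Permissibility --- the main obstacle.} It remains to verify that each $(g_{P_i}^{j},g_{P_j}^{i})$ is a permissible pair on $S_P^{ij}$, i.e.\ meets the four conditions of \cref{lem:permissible_lines}. The first holds by construction; the others I would check by splitting on the type of the critical quadric $S_P^{ij}$ (\cref{tab:configurations_and_their_conjugates}). If $S_P^{ij}$ is smooth: $g_{P_i}^{j}$ is coplanar with $l_{ij}$ (both lie in $\Pi_i$) and distinct from it (it meets $p_i\notin l_{ij}$), hence lies in the ruling opposite $l_{ij}$, and similarly $g_{P_j}^{i}$; so the two lines share a ruling, are distinct (if equal the line would lie in $\Pi_i\cap\Pi_j=l_{ij}$ yet pass through $p_i$), hence skew, and conditions 2--4 become vacuous. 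If $S_P^{ij}$ is a cone: B.4 excludes the critical cone with a camera center at the vertex, and for the remaining critical cone the two centers are not on a common ruling, so $g_{P_i}^{j},g_{P_j}^{i}$ meet only at the vertex, which is singular, handling conditions 2--3 (and 4 is vacuous). If $S_P^{ij}$ is two planes $\Pi'\cup\Pi''$ with $p_i,p_j\in\Pi'$: the construction forces $g_{P_i}^{j}=\Pi_i\cap\Pi'$, $g_{P_j}^{i}=\Pi_j\cap\Pi'$, and $l_{ij}\subseteq\Pi''$, so condition~4 holds, and $g_{P_i}^{j}\cap g_{P_j}^{i}=l_{ij}\cap\Pi'\subseteq\Pi'\cap\Pi''$ lies in the singular locus, handling conditions 2--3. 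The genuinely degenerate leftovers ($\Pi_i\subseteq S_P^{ij}$, double planes, both centers on the singular line) would be dispatched by analogous, simpler arguments. This case bookkeeping --- and checking that the residual lines are always genuinely new and real --- is the delicate part; once it is done, \cref{thr:compatible_quadrics} furnishes a non-collinear conjugate triple $Q_1,Q_2,Q_3$, so the three quadrics are compatible.
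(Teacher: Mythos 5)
Your proposal follows essentially the same route as the paper: both use hypothesis B to manufacture six lines $g_{P_i}^{j}$ through the camera centers whose spanning planes are exactly $\vspan(l_{ij},l_{ik})$, invoke condition A to ensure the two lines through each $p_i$ are distinct, and conclude via \cref{thr:compatible_quadrics}; your residual-line construction produces the same lines the paper obtains by picking a permissible pair meeting $l_{ij}$. The only difference is presentational — you verify permissibility case by case where the paper asserts it with a one-sentence remark — so the argument is correct and matches the paper's proof.
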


\begin{proof}
Given three quadrics $S_P^{ij}$, assume there exists a point $x$ and three lines $l_{ij}$ satisfying the four conditions. For each quadric $S_P^{ij}$, pick two lines $g_{P_i}^{j}$ and $g_{P_j}^{i}$ such that they both intersect $l_{ij}$, and form a permissible pair. The only case where such a choice is not possible is if the quadric $S_P^{ij}$ contains the line spanned by $p_i$ and $p_j$, and the line $l_{ij}$ intersects this line, but this would violate either condition 2, 3 or 4.

This gives us six lines $g_P$, two on each quadric, and two through each camera center. Condition $A)$ ensures that the two lines through each camera center are distinct, so they span a plane. By construction, the plane spanned by $l_{ij}$ and $l_{ik}$ is the same as the one spanned by $\gp{i}{j}$ and $\gp{i}{k}$. This means that the lines $g_P$ satisfy the conditions in \cref{thr:compatible_quadrics}. It follows that the three quadrics are compatible.
\end{proof}

\begin{lemma}
\label{lem:compatible_line}
Let $P_1$, $P_2$, $P_3$ be a triple of cameras, and let $S_P^{12}$, $S_P^{13}$, $S_P^{23}$ be a triple of critical quadrics. If there exists a line $L$ lying on all three quadrics, but not passing through the camera centers such that the intersection of $S_P^{ij}$ and $S_P^{ik}$ does not contain a line passing through $p_i$ and intersecting $L$, then the quadrics form a compatible triple.
\end{lemma}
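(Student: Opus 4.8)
The plan is to reduce to \cref{thr:compatible_quadrics}, mirroring the proof of \cref{lem:compatible} but with the line $L$ playing the role that the common point $x$ played there. Concretely, I would use $L$ to select, on each quadric $S_P^{ij}$, a permissible pair of lines and then verify the two conditions of \cref{thr:compatible_quadrics}.

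\emph{Step 1 (choosing the six lines).} For each pair $ij$, since $L\subseteq S_P^{ij}$ and $L$ avoids the camera centers, I would pick a line $g_{P_i}^{j}$ on $S_P^{ij}$ through $p_i$ and meeting $L$, and a line $g_{P_j}^{i}$ on $S_P^{ij}$ through $p_j$ and meeting $L$, chosen so that $(g_{P_i}^{j},g_{P_j}^{i})$ is permissible in the sense of \cref{def:permissible_lines}. When $S_P^{ij}$ is a smooth ruled quadric, $L$ lies in one ruling and the required lines are the unique members of the opposite ruling through $p_i$ and through $p_j$; two lines of a common ruling automatically form a permissible pair since the quadric is smooth. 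The cone and pair-of-planes cases are handled analogously, using \cref{lem:permissible_lines}(2)--(4) (for a pair of planes, $L$ lies in the plane of $S_P^{ij}$ that also contains $p_i,p_j$ by \cref{thr:critical_conf_for_two_views}, so both chosen lines can be taken in that plane); only those critical-quadric types that can carry a line missing the camera centers actually arise, so the case analysis is short. This produces six lines satisfying condition~1 of \cref{thr:compatible_quadrics} by construction.

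\emph{Step 2 (the plane condition).} Fix $i$. If $g_{P_i}^{j}=g_{P_i}^{k}$, that common line would lie on $S_P^{ij}\cap S_P^{ik}$, pass through $p_i$, and meet $L$, contradicting the hypothesis; so these are distinct lines through $p_i$, and since each meets $L$ they must do so in distinct points (otherwise they would share that point and $p_i$, hence coincide). Hence $\Pi_i:=\vspan(g_{P_i}^{j},g_{P_i}^{k})$ contains two points of $L$, so $L\subseteq\Pi_i$; likewise $L\subseteq\Pi_j$, and therefore $L\subseteq\Pi_i\cap\Pi_j$. If $\Pi_i\neq\Pi_j$ their intersection is exactly the line $L\subseteq S_P^{ij}$ and condition~2 holds; the degenerate possibility $\Pi_i=\Pi_j$ makes $p_i,p_j,L$ coplanar, which (since a plane section of a quadric cannot contain three non-concurrent lines) forces $S_P^{ij}$ to be reducible with $\Pi_i$ as one of its components, and again $\Pi_i\cap\Pi_j=\Pi_i\subseteq S_P^{ij}$. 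Applying \cref{thr:compatible_quadrics} then shows the triple is compatible.

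\emph{Main obstacle.} The only real work is Step~1: checking case by case that the lines through $p_i$ and through $p_j$ can always be chosen to land on $L$ and be permissible at the same time, which requires tracking which ruling $L$ belongs to for smooth quadrics and cones, and which plane $L$ sits in for a pair of planes. The coincidences $g_{P_i}^{j}=g_{P_i}^{k}$ and $\Pi_i=\Pi_j$ look threatening but are respectively excluded and rendered harmless by the hypothesis, so once Step~1 is in place the conclusion is an immediate appeal to \cref{thr:compatible_quadrics}.
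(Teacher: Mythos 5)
Your proposal is correct and follows essentially the same route as the paper, which simply takes $g_{P_i}^{j}$ to be the line of $S_P^{ij}$ through $p_i$ meeting $L$ and invokes ``the same arguments as in the proof of \cref{lem:compatible}''; your Step~2 just spells out the detail the paper leaves implicit, namely that each plane $\Pi_i$ contains $L$, so $\Pi_i\cap\Pi_j$ is $L$ (or a common plane component) and lies on $S_P^{ij}$. One small inaccuracy: for a pair of planes $L$ need not lie in the plane containing $p_i,p_j$ (it may lie in the other plane and meet the chosen lines only at the singular line), but your argument goes through unchanged since the chosen lines can still be taken in the camera-center plane and your distinct-intersection-points reasoning still forces $L\subseteq\Pi_i$.
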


\begin{proof}
On each quadric $S_P^{ij}$, take $g_{P_i}^{j}$ to be the line through $p_i$ intersecting $L$. By the same arguments as in the proof of \cref{lem:compatible}, this gives us six lines satisfying \cref{thr:compatible_quadrics}.
\end{proof}

\begin{remark} 
In \cref{lem:compatible,lem:compatible_line}, the lines $l_{ij}$ and $L$ all intersect the permissible lines, meaning they are of type $(1,0,0,0)$ on their respective quadrics.
\end{remark}

\begin{remark}
\cref{lem:compatible,lem:compatible_line} have the condition that the intersection of two quadrics does not contain a line passing through their common camera center. This is to ensure that the two lines $g_{P_i}^{j}$ and $g_{P_i}^{k}$ span a plane, which is a necessary condition for the quadrics to be compatible (in the non-collinear sense). This is an open condition, and in the following sections, whenever we apply either of the two lemmas, the cases where two lines through a camera center coincide can easily be avoided, if such cases exist at all. To keep the arguments relatively short, the verification of this fact is left to the reader.
\end{remark}

\subsubsection{Quartic curves}
\label{sec:quartic_curves}
We begin with the case where the intersection of the quadrics is a quartic curve. This happens when the quadrics lie in the same pencil\footnote{pencil: a one-dimensional family of objects}, that is, when the intersection of the three quadrics is equal to the intersection of any two of them.

\begin{remark}
Not every quartic curve appears as the intersection of a pencil of quadrics, for instance, four skew lines do not. The ones that do are the elliptic quartic curve, the singular rational quartic (cusp or node), the union of a twisted cubic and a secant\footnote{secant to a curve: a line spanned by two points on the curve} line, and the union of two (possibly reducible) conics intersecting in two (possibly complex) points. When we talk about quartics in the rest of this section, it is to be understood that we mean quartics appearing as the intersection of quadrics.
\end{remark}

As it turns out, a set of three cameras and any number of points lying on a quartic curve $C$ form a critical configuration if every camera center lies on $C$, but is not a singular point on $C$. We give a different proof of this statement than what appears in \cite{threeViews}, instead giving one similar to that in \cite{HK} since this will more easily generalize to the case of 4+ views. To do this we first introduce the notion of quasi-parallel lines.

\begin{definition}
Given a quartic curve $C$, two secants $L_1$ and $L_2$ to $C$ are called \emph{quasi-parallel} if they lie on the same quadric in the pencil of quadrics containing $C$, and form a permissible pair on this quadric. We use the notation $L_1\parallel L_2$ when $L_1$ is quasi-parallel to $L_2$.
\end{definition}

\begin{lemma}[{\cite[Theorem 8.21]{HK}}]
\label{lem:quasi_parallel}
Let $p_1,p_2,p_3,y_1,y_2,y_3$ be six distinct points in the non-singular locus of a quartic curve $C$, such that $\overline{p_1y_2}\parallel\overline{p_2y_1}$ and such that $\overline{p_1y_3}\parallel\overline{p_3y_1}$. Then $\overline{p_2y_3}\parallel\overline{p_3y_2}$.
\end{lemma}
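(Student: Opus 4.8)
The plan is to exploit the fact that quartic curves in question lie in a pencil of quadrics, together with the correspondence between permissible pairs of lines and conjugate configurations from \cref{prop:conjugates_and_permissible_lines}, and the action of $\psi_P$ on curves (\cref{prop:bidegree_on_P_->_bidegree_on_Q}). The relation $\overline{p_iy_j}\parallel\overline{p_jy_i}$ says these two secants sit on a common quadric $S^{ij}$ in the pencil and form a permissible pair there. By \cref{prop:conjugates_and_permissible_lines}, choosing this permissible pair on $S^{ij}$ produces a (unique) fundamental matrix $F_Q^{ij}$, hence a pair of conjugate cameras whose epipolar lines are exactly $\overline{p_iy_j}$ and $\overline{p_jy_i}$; under the conjugacy map $\psi$ the secant $\overline{p_iy_j}$ is collapsed (by \cref{lem:relations_between_quadrics}\,\cref{item1}) to the camera center $q_j$ of the conjugate camera, and symmetrically $\overline{p_jy_i}\mapsto q_i$. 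The idea is that the hypotheses on the two pairs $(p_1,y_2),(p_2,y_1)$ and $(p_1,y_3),(p_3,y_1)$ give us conjugate cameras $Q_1,Q_2,Q_3$ that are \emph{mutually compatible} (they all arise from the $Q$-configuration read off from the pencil), so the third fundamental matrix $F_Q^{23}$ is forced, and its epipolar lines on the quadric through $p_2,p_3$ in the pencil must then be $\overline{p_2y_3}$ and $\overline{p_3y_2}$ — which is exactly the assertion $\overline{p_2y_3}\parallel\overline{p_3y_2}$.

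Concretely I would proceed as follows. First, let $\{S_\lambda\}$ be the pencil of quadrics through $C$; each of $\overline{p_1y_2},\overline{p_2y_1}$ lies on a unique member $S^{12}$ (they are quasi-parallel, so the same member), and likewise $\overline{p_1y_3},\overline{p_3y_1}$ on $S^{13}$. Second, use \cref{prop:conjugates_and_permissible_lines} to get $F_Q^{12}$ from the permissible pair on $S^{12}$ and $F_Q^{13}$ from the permissible pair on $S^{13}$; this determines cameras $Q_1,Q_2,Q_3$ up to the ambiguity in assembling a triple from two fundamental matrices with a common index. The key point is that $Q_1$ is determined compatibly: the line through $q_1$ in the first conjugate image coming from the $S^{12}$ side and the one coming from the $S^{13}$ side are the images of $\overline{p_1y_2}$ and $\overline{p_1y_3}$ — two distinct secants through $p_1$ (distinctness from $y_2\neq y_3$ and $p_1$ nonsingular) — so $q_1$ together with $q_2,q_3$ is genuinely a triple of cameras, and we must check it is a \emph{compatible} triple in the sense of \cref{thr:compatible_quadrics} / \cref{thr:compatible_forms_noncollinear} (or the collinear variant). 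Third, once compatibility of $(Q_1,Q_2,Q_3)$ is established, the third quadric $S^{23}=P_2^TF_Q^{23}P_3$ is determined, it must lie in the same pencil (since the whole construction came from $C$, which lies on $S^{23}$), and its epipolar lines through $p_2$ and $p_3$ are the pullbacks of the epipoles $e_{Q_2}^3$, $e_{Q_3}^2$; tracing through $\psi$ as in \cref{lem:relations_between_quadrics}, these epipolar lines must pass through the conjugates of $q_3$ and $q_2$ respectively, forcing them to be $\overline{p_2y_3}$ and $\overline{p_3y_2}$ — i.e. these two secants form a permissible pair on $S^{23}$, which is the definition of $\overline{p_2y_3}\parallel\overline{p_3y_2}$.

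The main obstacle is the second and third steps: verifying that the cameras $Q_1,Q_2,Q_3$ read off from the two given quasi-parallel relations actually form a \emph{compatible} triple, and that the induced $S^{23}$ genuinely lies in the pencil and has the claimed epipolar lines. This is where the compatibility criteria of \cref{thr:compatible_quadrics} (non-collinear $q_i$) or \cref{prop:compatible_collinear} (collinear $q_i$) must be invoked and checked against the geometry of the six points on $C$ — in particular one must rule out, or separately handle, the degenerate situation where the three conjugate centers $q_i$ collapse or become collinear. Once compatibility is in hand, the rest follows formally from \cref{prop:conjugates_and_permissible_lines} and \cref{lem:relations_between_quadrics}, and the statement is essentially a transitivity property of the relation "have a common conjugate camera." I would also keep in mind that the hypotheses name six \emph{distinct} points in the nonsingular locus, which is exactly what is needed to guarantee all the relevant secants are distinct and that the permissible pairs are non-degenerate, so that \cref{prop:conjugates_and_permissible_lines} applies cleanly at each stage.
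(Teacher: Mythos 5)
Your proposal has a genuine gap at exactly the point you flag as ``the main obstacle'': the compatibility of the triple $(Q_1,Q_2,Q_3)$. Two fundamental matrices $F_Q^{12}$ and $F_Q^{13}$ sharing the index $1$ do not determine a camera triple; after gluing along $Q_1$ there remains a positive-dimensional family of choices for $Q_3$ relative to $(Q_1,Q_2)$, hence a family of candidate matrices $F_Q^{23}$ and quadrics $S^{23}$, so $F_Q^{23}$ is not ``forced'' as you assert. Moreover, the compatibility condition of \cref{thr:compatible_forms_noncollinear} that you would need, $(e_{Q_1}^{3})^{T}F_Q^{12}e_{Q_2}^{3}=0$, is obtained in the paper by substituting $e_{Q_2}^{3}=P_2(y_3)$, i.e.\ by using that the epipolar line $g_{P_2}^{3}$ is $\overline{p_2y_3}$ --- and that is essentially the conclusion of the lemma, not something available as input. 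This is why \cref{lem:quartic_lemma} invokes \cref{lem:quasi_parallel} \emph{first} and only then deduces compatibility from $y_k\in S_P^{ij}$; running the implication in the opposite direction, as you propose, is circular unless you supply an independent argument for compatibility, and the results you cite do not provide one.

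The paper's actual proof is elementary and bypasses the camera machinery entirely: project $C$ from a generic point $x\in C$ to obtain a plane cubic $C'$ with marked points $p_i',y_j'$; two secants $\overline{p_iy_j}$, $\overline{p_jy_i}$ are quasi-parallel exactly when their images $\overline{p_i'y_j'}$, $\overline{p_j'y_i'}$ meet in a point of $C'$. The two hypotheses then give two such incidences, and the Cayley--Bacharach theorem applied to the nine points cut out by suitable reducible cubics (triples of these lines) forces the third incidence, which is $\overline{p_2y_3}\parallel\overline{p_3y_2}$. If you want to retain your conjugate-camera picture, you still need this (or an equivalent) incidence argument to close the compatibility gap.
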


\begin{proof}
Project from a generic point $x$ on $C$, if a plane contains more than one of the lines $\overline{p_iy_j}$, we pick a point $x$ not lying in this plane. This projection gives us a plane (possibly reducible) cubic curve $C'$ along with six points $p_1',p_2',p_3',y_1',y_2',y_3'$ on $C'$. The secants $\overline{p_iy_j}$ and $\overline{p_jy_i}$ are only quasi-parallel if the two lines $\overline{p_i'y_j'}$ and $\overline{p_j'y_i'}$ intersect in a point on $C'$. Since $\overline{p_2'y_1'}$ intersects $\overline{p_1'y_2'}$ in a point on $C'$, and the same is the case for $\overline{p_1'y_3'}$ and $\overline{p_3'y_1'}$, the Cayley–Bacharach theorem \cite{bacharach} (see \cref{fig:cayley_bacharach}) states that $\overline{p_2'y_3'}$ intersects $\overline{p_3'y_2'}$. This means that $\overline{p_2y_3}\parallel\overline{p_3y_2}$, thus completing the proof.
\begin{figure}[]
\begin{center}
\includegraphics[width = 0.9\linewidth]{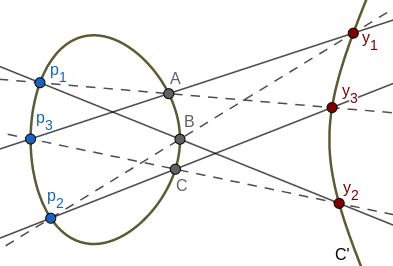}
\end{center}
\caption{Illustration of the Cayley–Bacharach theorem. Given 9 points appearing as the intersection of two cubics, any cubic containing 8 of them also contains the ninth. This figure shows three cubics, two of which are reducible (the union of three lines).}\label{fig:cayley_bacharach}
\end{figure}
\end{proof}

\begin{lemma}
\label{lem:quartic_lemma}
Let $C$ be a quartic curve, let $p_1,\ldots,p_n\in C$ be camera centers, and let $y_1,\ldots,y_n\in C$ be points such that $\overline{p_1y_i}\parallel\overline{p_iy_1}$ for all $i\leq n$. Let $S_P^{ij}$ be the unique quadric containing $C$ and the line $\overline{p_iy_j}$, then for all $i,j,k\leq n$ the triple $S_P^{ij},S_P^{ik},S_P^{jk}$ is compatible.
\end{lemma}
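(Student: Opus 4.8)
The plan is to verify the two conditions of \cref{thr:compatible_quadrics} directly, taking the six lines to be $g_{P_a}^{b}:=\overline{p_ay_b}$ for the three relevant indices. So fix indices $i,j,k$ among $1,\ldots,n$.

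First I would upgrade the quasi-parallel hypothesis, which is stated only relative to the index $1$, to a statement about all pairs. Given $\overline{p_1y_a}\parallel\overline{p_ay_1}$ for every $a$, then for any $a,b\neq1$ the six points $p_1,p_a,p_b,y_1,y_a,y_b$ satisfy the hypotheses of \cref{lem:quasi_parallel}, so $\overline{p_ay_b}\parallel\overline{p_by_a}$; together with the cases involving the index $1$ this shows $\overline{p_ay_b}\parallel\overline{p_by_a}$ for all $a,b$. Unwinding the definition of quasi-parallelism, $\overline{p_ay_b}$ and $\overline{p_by_a}$ then lie on a common quadric of the pencil through $C$ and form a permissible pair on it; since that quadric is the unique member of the pencil through the secant $\overline{p_ay_b}$, it is precisely $S_P^{ab}$. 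Hence $g_{P_i}^{j},g_{P_j}^{i}$ is a permissible pair on $S_P^{ij}$, and similarly for the other two quadrics, which is condition $1$ of \cref{thr:compatible_quadrics}. (Incidentally, a quadric carrying a permissible pair is a critical quadric by \cref{prop:conjugates_and_permissible_lines}, so \cref{thr:compatible_quadrics} does apply.)

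Next I would check condition $2$. Because $g_{P_i}^{j}=\overline{p_iy_j}$ and $g_{P_i}^{k}=\overline{p_iy_k}$ are distinct lines through $p_i$, their span is the plane $\Pi_i:=\vspan(p_i,y_j,y_k)$, and likewise $\Pi_j:=\vspan(p_j,y_i,y_k)$; I must show $\Pi_i\cap\Pi_j\subseteq S_P^{ij}$. The line $\Pi_i\cap\Pi_j$ passes through three points of $S_P^{ij}$: the point $y_k$, common to both planes and lying on $C\subseteq S_P^{ij}$; the intersection point $\overline{p_iy_j}\cap\Pi_j$, which lies on $\Pi_i\cap\Pi_j$ and on $\overline{p_iy_j}\subseteq S_P^{ij}$; and the intersection point $\overline{p_jy_i}\cap\Pi_i$, which lies on $\Pi_i\cap\Pi_j$ and on $\overline{p_jy_i}\subseteq S_P^{ij}$. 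A line meeting a quadric in three distinct points lies on it, so $\Pi_i\cap\Pi_j\subseteq S_P^{ij}$; the same argument applies to the other index pairs. By \cref{thr:compatible_quadrics} the triple $S_P^{ij},S_P^{ik},S_P^{jk}$ is then compatible.

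The step I expect to be the real obstacle is not this generic computation but the bookkeeping of degenerate positions: I would need to rule out (or handle by a limiting argument) the cases $\overline{p_ay_b}=\overline{p_by_a}$, $y_j=y_k$, $\Pi_i=\Pi_j$, and the cases where the three points produced on $\Pi_i\cap\Pi_j$ fail to be pairwise distinct (so that they yield only a tangency). My intention is to observe that each such coincidence forces the points $p_a$ and $y_a$ into a common plane or onto a shared line --- exactly the degeneracies flagged in the proof of \cref{lem:quasi_parallel} --- so that by perturbing the $y_a$ along $C$, or by noting that the curve then collapses to one treated in the later lower-degree subsections, the claim reduces to the generic case; I would confine this discussion to a remark rather than expand it in the proof.
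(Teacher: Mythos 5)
Your proof is correct and its core is the same as the paper's: both arguments first apply \cref{lem:quasi_parallel} to upgrade the quasi-parallelism hypothesis from pairs involving the index $1$ to all pairs $a,b$, identify $S_P^{ab}$ as the quadric on which $\overline{p_ay_b},\overline{p_by_a}$ form a permissible pair, and then exploit the single fact that $y_k$ lies on $S_P^{ij}$. The difference is only in how that fact is converted into compatibility. The paper plugs $y_k$ directly into the algebraic criterion of \cref{thr:compatible_forms_noncollinear}: with $\overline{p_iy_k}$ as the epipolar line $g_{P_i}^{k}$ one has $e_{Q_i}^{k}=P_i(y_k)$, so
$(e_{Q_i}^{k})^{T}F_Q^{ij}e_{Q_j}^{k}=y_k^{T}P_i^{T}F_Q^{ij}P_jy_k=y_k^{T}S_P^{ij}y_k=0$,
and the triple-wise condition holds in one line. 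You instead verify the geometric reformulation \cref{thr:compatible_quadrics} by producing three points of $S_P^{ij}$ on the line $\Pi_i\cap\Pi_j$. That is sound, but it is exactly what creates the bookkeeping you flag at the end (distinctness of the three points, $\Pi_i\neq\Pi_j$, non-tangency), none of which arises in the algebraic computation: the identity above holds whenever $y_k\in S_P^{ij}$, with no genericity required. The only non-degeneracy genuinely needed in either version --- that the two lines $\overline{p_iy_j}$ and $\overline{p_iy_k}$ through each camera center are distinct, so that the epipoles in each image differ as \cref{thr:compatible_forms_noncollinear} demands --- is arranged in the paper by the choice of the points $y_a$ where the lemma is applied. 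So either switch your last step to the epipole computation, or promote the degenerate-case discussion from a remark into the proof proper.
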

\begin{proof}
By \cref{lem:quasi_parallel}, $\overline{p_1y_i}\parallel\overline{p_iy_1}$ for all $i$, implies $\overline{p_iy_j}\parallel\overline{p_jy_i}$ for all $i,j$. Let $S_P^{ij}$ be the unique quadric containing $C$ and having $\overline{p_iy_j}\parallel\overline{p_jy_i}$ as its epipolar lines. Since $y_k$ lies on $C$, and hence on $S_P^{ij}$, we have 
\begin{align*}
e_{Q_i}^{k}F_Q^{ij}e_{Q_j}^{k}&=y_k^{T}P_i^{T}F_Q^{ij}P_jy_k\\
&=y_k^{T}S_P^{ij}y_k,\\
&=0,
\end{align*}
Which by \cref{thr:compatible_forms_noncollinear} means that each triple of quadrics is compatible.
\end{proof}

\begin{proposition}
\label{prop:elliptic_curve_is_critical}
Let $C$ be a quartic curve appearing as the intersection of a pencil of quadrics. Let $P_1,P_2,P_3$ be three cameras whose centers are non-singular points on $C$. Then $(P_1,P_2,P_3,C)$ is a critical configuration. The conjugate is a quartic curve passing through the camera centers.
\end{proposition}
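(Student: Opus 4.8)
The plan is to reduce the statement to the compatibility machinery already developed, and then invoke \cref{thr:compatible_implies_critical} to conclude. The core observation is that once we produce a compatible (in the non-collinear sense) triple of quadrics $S_P^{12}, S_P^{13}, S_P^{23}$ all passing through $C$, the intersection $S_P^{12}\cap S_P^{13}\cap S_P^{23}$ contains $C$, and since all three quadrics lie in the pencil through $C$, this intersection \emph{equals} $C$ (the pencil being one-dimensional, the intersection of any two of its members is already $C$). Because $C$ contains no isolated point, line, or plane component, \cref{thr:compatible_implies_critical} then gives that $(P_1,P_2,P_3,C)$ is critical with no ambiguous points to remove.

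\textbf{Step 1: produce the quasi-parallel data.} First I would fix the camera centers $p_1,p_2,p_3$ on the non-singular locus of $C$. Choose $y_1\in C$ generic (non-singular, distinct from the $p_i$, and not creating degenerate incidences). On the quadric $S_P^{12}$ in the pencil determined by the condition that $\overline{p_1 y_1}$ sits in one ruling, the other ruling through $p_2$ meets $C$ in a unique further point, which we call $y_2$ — more precisely, $y_2$ is the second intersection of $C$ with the line of the opposite ruling through $p_2$, chosen so that $\overline{p_1 y_2}\parallel \overline{p_2 y_1}$. Repeating with $p_3$ in place of $p_2$ gives $y_3$ with $\overline{p_1 y_3}\parallel \overline{p_3 y_1}$. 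By \cref{lem:quasi_parallel} we automatically get $\overline{p_2 y_3}\parallel \overline{p_3 y_2}$, so all three quasi-parallel relations hold simultaneously.

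\textbf{Step 2: apply \cref{lem:quartic_lemma} and \cref{thr:compatible_implies_critical}.} With the $y_i$ in hand, \cref{lem:quartic_lemma} directly yields that the triple of quadrics $S_P^{ij}$ (each being the unique quadric in the pencil containing $C$ with $\overline{p_i y_j}\parallel\overline{p_j y_i}$ as epipolar lines) is compatible in the non-collinear sense. Hence there exist non-collinear cameras $Q_1,Q_2,Q_3$ realizing these fundamental matrices, and a conjugate configuration exists. By \cref{thr:compatible_implies_critical}, $(P_1,P_2,P_3,\overline{X-x})$ is critical, where $X = S_P^{12}\cap S_P^{13}\cap S_P^{23} = C$ and $x$ is the intersection of the three planes $\Pi_i = \vspan(g_{P_i}^j, g_{P_i}^k)$. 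Since $C$ is irreducible of degree $4$ and the $p_i$ are non-singular on it, $C$ contains no plane, line, or isolated point, so $x\notin C$ (or more carefully, removing $x$ does not remove any point of $C$), and therefore $(P_1,P_2,P_3,C)$ itself is critical.

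\textbf{Step 3: identify the conjugate.} For the shape of the conjugate, I would apply \cref{prop:bidegree_on_P_->_bidegree_on_Q} to the curve $C$ viewed inside each smooth quadric $S_P^{ij}$. On $S_P^{ij}$ the curve $C$ has type $(2,2,1,1)$: it is a quartic meeting each ruling in two points, with the camera centers $p_i,p_j$ non-singular (multiplicity one) on it. Then $\psi_P$ sends it to a curve of type $(2, 2+2-1-1, 2-1, 2-1) = (2,2,1,1)$ on $S_Q^{ij}$, i.e.\ again a quartic passing (simply) through $q_i$ and $q_j$. Carrying this out for all three pairs shows the conjugate curve $C_Q$ is a quartic passing through $q_1,q_2,q_3$, as claimed. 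The main obstacle is the genericity bookkeeping in Step 1 — ensuring that the chosen $y_i$ are non-singular, mutually distinct, distinct from the $p_i$, that the secants $\overline{p_i y_j}$ are genuine (not tangent) lines lying in the respective pencil quadrics, and that the resulting six lines $g_{P_i}^j$ span honest planes so the non-collinear hypotheses of \cref{thr:compatible_quadrics} and \cref{thr:compatible_forms_noncollinear} are met — but since there is a positive-dimensional choice of $y_1$ and all the failure loci are proper closed conditions, such a choice exists.
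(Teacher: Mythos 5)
Your proof follows the paper's own argument step for step: the same construction of $y_2,y_3$ from a generic $y_1$ via quasi-parallel secants, the same appeal to \cref{lem:quasi_parallel} and \cref{lem:quartic_lemma} for triple-wise compatibility, the same use of \cref{thr:compatible_implies_critical} together with \cref{lem:residual_is_intersection_of_three_planes} to dispose of the ambiguous points, and the same type computation $(2,2,1,1)\mapsto(2,2,1,1)$ via \cref{prop:bidegree_on_P_->_bidegree_on_Q} to identify the conjugate as a quartic through the camera centers.

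There is one genuine gap. In Step 2 you argue that, since $C$ is irreducible of degree $4$, it contains no plane, line, or isolated point, so removing the ambiguous locus removes nothing from $C$. But the proposition is stated for every quartic arising as the base locus of a pencil of quadrics, and by the remark preceding it this includes the union of a twisted cubic with a secant line and the union of two (possibly reducible) conics — curves that do contain line components. These reducible cases are not decorative: \cref{prop:new_configurations} later deduces criticality of the cubic-plus-secant configurations directly from this proposition. For such a $C$ the ambiguous locus, namely the intersection of the three planes $\Pi_i=\vspan(p_i,y_j,y_k)$, could a priori be exactly a line lying on $C$, in which case $\overline{X-x}$ would lose that component and your conclusion would not follow. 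The paper closes this case by observing that the $y_i$ can always be chosen so that the three planes do not meet along a line of $C$. A smaller omission of the same flavour: your type computation assumes each $S_P^{ij}$ is smooth, whereas when all three quadrics in the pencil that get selected are reducible one has to use \cref{prop:bidegree_on_P_->_bidegree_on_Q_planes} instead (the paper treats this subcase separately). Neither point changes the architecture of the argument, but both are needed for the proposition as stated.
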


\begin{proof}
Let $y_1$ be any smooth point on $C$ not lying in the plane spanned by the camera centers. Let $S_P^{12}$ denote the unique quadric containing $C$ and the secant $\overline{p_2y_1}$. The quadric $S_P^{12}$ contains a unique line $L$ through $p_1$ such that $l$ and $\overline{p_2y_1}$ form a permissible pair. The line $L$ intersects $C$ in two points, one being $p_1$, we denote the other one by $y_2$. We construct $S_P^{13}$ and $y_3$ in a similar fashion, using $p_3$ in place of $p_2$. Let $S_P^{23}$ be the quadric containing $C$ and the line $\overline{p_2y_3}$. By construction, $y_1,y_2,y_3$ satisfy the conditions in \cref{lem:quartic_lemma}, proving that the three quadrics $S_P^{ij}$ are compatible, and thus, that the configuration is critical, with the possible exception of some ambiguous points. 

By \cref{lem:residual_is_intersection_of_three_planes}, the ambiguous points form a plane, line or an isolated point lying on $C$. In particular, they are the intersection of the three planes
\begin{align*}
\Pi_i=\vspan(p_i,y_j,y_k) \quad\quad\forall i\neq j\neq k.
\end{align*}
The quartic curve $C$ can never contain a plane or an isolated point, and in most cases, it does not contain any line $L$ either. If it does, one can always choose the $y_i$ such that the three planes $\Pi_i$ do not intersect in $L$. Hence avoiding any trouble with ambiguous points. %The only case where this is unavoidable is when there are two camera centers, say $p_1,p_2$, lying on $L$. In this case, consider $p_1$ and $p_3$, one lies on $L$ and the other one does not, it follows that the same is the case for $y_1$ and $y_3$. Since the construction above does not allow $y_1$ to lie in the plane spanned by the camera centers, the one lying on $L$ must be $y_3$. one has to take $y_3$ to also lie on $L$,  otherwise 

Assuming at least one of the quadrics $S_P^{ij}$ is irreducible, the curve $C$ (a quadric passing through all camera centers) is of type $(2,2,1,1)$ on $S_P^{ij}$, so by \cref{prop:bidegree_on_P_->_bidegree_on_Q}, its conjugate is also of type $(2,2,1,1)$. If none of the quadrics are irreducible, it is of type $(2,2,1,1,1)$, in which case the conjugate, $(2,2,1,1,1)$ is still of the same type.
\end{proof}

%\begin{remark}
%The proof above shows that nodal or cuspidal quartic curves are critical by constructing a conjugate configuration that is of the same type, nodal or cuspidal curves respectively. The nodal and cuspidal quartic curves also have another family of conjugates (the elliptic ones do not). The conjugate depends on whether there is a camera center at the singular point. Unlike the other cases, the irreducible singular curves are rational, this means they appear as critical configurations when the three quadrics $S_P^{ij}$ coincide as a cone (the ones of type (1,3,1,1) in \cref{tab:curves_on_same_quadric}). This is proven in \cref{prop:twisted_cubic_and_secant_line}, where the conjugates are twisted cubics with a secant or tangent line respectively.
%\end{remark}

In the construction in the proof of \cref{prop:elliptic_curve_is_critical}, we take $y_1$ to not lie in the plane spanned by the camera centers, which will generally ensure that $y_2,y_3$ do not lie in this plane either, this in turn means that the conjugate configuration consists of points on a curve isomomorphic to $C$. There is, however, one case where it is unavoidable that at least one of the $y_i$ lie in the plane spanned by the camera centers, this is in the case where the curve $C$ contains a line $L$ passing through at least two of the camera centers. In this case, the conjugate curve is singular, with the singular point being conjugate to the entire line $L$ (recall \cref{lem:relations_between_quadrics}, \cref{item3}). Moreover, if the line $L$ passes through \emph{exactly} two of the camera centers, say $p_1, p_2$, then the conjugate configuration has the conjugate to the final camera, in this case, $q_3$, lying in the singular locus. We prove any such configuration is critical:

\begin{proposition}
\label{prop:elliptic_curve_is_critical_even_if_singular}
Let $C$ be a singular quartic curve appearing as the intersection of a pencil of quadrics. Let $P_1,P_2,P_3$ be three cameras whose centers lie on $C$ and with the camera center of $P_3$ lying in the singular locus of $C$. Then $(P_1,P_2,P_3,C)$ is a critical configuration. The conjugate is a reducible curve consisting of a line $L$ through the conjugate camera centers $q_1,q_2$ and a twisted cubic curve through $q_3$, having $L$ as a secant. 
\end{proposition}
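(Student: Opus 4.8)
The plan is to adapt the construction in the proof of \cref{prop:elliptic_curve_is_critical}, arranging the conjugate so that the singular point $p_3$ of $C$ becomes the vertex of a quadric cone.

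First I would single out a distinguished member of the pencil of quadrics through $C$: any two quadrics vanishing on $C$ are tangent to one another at the singular point $p_3$, so their gradients there are proportional, and a suitable combination $\mathcal{C}_0$ in the pencil has vanishing gradient at $p_3$; generically $\mathcal{C}_0$ has rank $3$, i.e.\ is a cone with vertex $p_3$. I would build the conjugate around $\mathcal{C}_0$ by setting $S_P^{12}:=\mathcal{C}_0$, the quadric of the pair of non-singular cameras. On $\mathcal{C}_0$ the only line through $p_i$ is the ruling $\overline{p_ip_3}$, so the epipolar lines on $S_P^{12}$ are forced to be $\gp{1}{2}=\overline{p_1p_3}$ and $\gp{2}{1}=\overline{p_2p_3}$, which meet only at the singular vertex $p_3$ and hence form a permissible pair by \cref{lem:permissible_lines}. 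For $S_P^{13}$ and $S_P^{23}$ I would take two further, generically smooth, members of the pencil together with rulings $\gp{1}{3},\gp{3}{1}$ resp.\ $\gp{2}{3},\gp{3}{2}$ on them, chosen so that the six lines satisfy the conditions of \cref{thr:compatible_quadrics}; this is the same mechanism as in \cref{lem:quartic_lemma} and \cref{lem:compatible}, the one new feature being that $\Pi_1\cap\Pi_2$ is automatically a line through $p_3$ and must be made a ruling of $\mathcal{C}_0$. Since the three quadrics then lie in one pencil, $S_P^{12}\cap S_P^{13}\cap S_P^{23}$ is the base locus $C$ up to extra components (excluded for generic choices), and by \cref{lem:residual_is_intersection_of_three_planes} the ambiguous locus $\Pi_1\cap\Pi_2\cap\Pi_3$ is at worst the single point $p_3$; hence by \cref{thr:compatible_implies_critical} the configuration $(P_1,P_2,P_3,C)$ is critical, with a conjugate triple $Q_1,Q_2,Q_3$.

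Then I would identify the conjugate curve $C_Q$. The key point is the conjugate of $p_3$: since $\gp{1}{2}=\overline{p_1p_3}$ and $P_1$ collapses this line to $P_1(p_3)$, we get $e_{Q_1}^{2}=P_1(\gp{1}{2})=P_1(p_3)=e_{P_1}^{3}$, and symmetrically $e_{Q_2}^{1}=e_{P_2}^{3}$. Because $p_3$ is the camera center of $P_3$, the third image puts no constraint on the conjugate of $p_3$, so that conjugate is $\Set{y\in\p3\mid Q_1y=e_{P_1}^{3},\ Q_2y=e_{P_2}^{3}}$; by the two identities $Q_1y=Q_1q_2$ and $Q_2y=Q_2q_1$, so $y$ lies on the line $L=\overline{q_1q_2}$, and conversely every $y\in L$ satisfies these two equations (both $Q_1$ and $Q_2$ are constant along $L$) while $Q_3y$ is then unconstrained. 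Thus $L\subseteq C_Q$ — this is the cone-to-smooth-quadric duality seen in \cref{lem:relations_between_quadrics} and \cref{fig:conjugates2}, the vertex of $\mathcal{C}_0$ opening up to the ruling through $q_1$ and $q_2$ on $S_Q^{12}$. A degree count (applying \cref{prop:bidegree_on_P_->_bidegree_on_Q} on a smooth member of the pencil, where $C$ has type $(2,2,1,2)$) then shows the residual component $C_Q'=\overline{\psi_P(C\setminus\{p_3\})}$ has degree $4-1=3$; it is rational (being the image of the rational curve $C$) and lies on the smooth quadric $S_Q^{12}$, hence has bidegree $(1,2)$ or $(2,1)$, i.e.\ is a twisted cubic. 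That $L$ is a secant of $C_Q'$ follows from the two branches of $C$ at $p_3$, which limit under $\psi_P$ to two distinct points of $L$ (distinct because $Q_3$ is injective on $L$), giving the two points of $L\cap C_Q'$. Finally $q_3\in C_Q'$, because the conjugate of $q_3$ viewed from the $Q$-side is the point $\gp{1}{3}\cap\gp{2}{3}\in C$, which differs from $p_3$ and so maps into $\psi_P(C\setminus\{p_3\})$.

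The hard part is precisely this last identification. One cannot naively use \cref{prop:bidegree_on_P_->_bidegree_on_Q} on $\mathcal{C}_0=S_P^{12}$: the quartic $C$ passes through the vertex $p_3$ with multiplicity two — exactly the point where the conjugacy map blows up — and the type formula there returns a curve of degree $6$, inconsistent with $\deg C_Q=4$. One must instead analyze the map at $p_3$ directly, as above, to see that it is the vertex that splits off the line $L=\overline{q_1q_2}$, that the residual cubic is a genuine (non-planar) twisted cubic, and that $L$ meets it in the two points coming from the two branches of the singularity. A secondary issue is the degenerate subcases — $\mathcal{C}_0$ of rank $\le 2$, $C$ reducible, or a camera center lying on a line component of $C$ — which force $C$ onto a union of planes or make $C$ contain one of the forced epipolar lines; these are reduced to \cref{prop:plane+conic_critical} or to earlier cases, or handled by a limiting argument from the generic cone case.
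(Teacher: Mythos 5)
Your construction matches the paper's: both take $S_P^{12}$ to be the cone of the pencil with vertex at the singular point $p_3$, so that the epipolar lines through $p_1$ and $p_2$ are forced to be the rulings through the vertex (a permissible pair meeting only at the singular point), and then choose the remaining quadrics and epipolar lines within the pencil so that the pairwise intersections of epipolar lines land on the appropriate quadrics, giving compatibility via \cref{thr:compatible_forms_noncollinear}. The only substantive difference is one of explicitness: the paper exhibits the required intersection point of $g_{P_1}^{3}$ and $g_{P_2}^{3}$ concretely (as a point $y_3$ where a plane through $\overline{p_1p_2}$ meets $C$ in two coinciding residual points), where you assert such a choice exists by analogy with \cref{lem:quartic_lemma}, and conversely the paper settles the line-plus-twisted-cubic description of the conjugate by citing \cref{prop:bidegree_on_P_->_bidegree_on_Q} and \cref{lem:relations_between_quadrics}, which you instead derive in detail, including the correct observation that the vertex opens up into the secant line $L=\overline{q_1q_2}$.
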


\begin{proof}
Let $S_P^{12}$ be the unique cone containing $C$ and having a vertex at $p_3$. We take the epipolar lines $g_{P_1}^{2}$ and $g_{P_2}^{1}$ to be the lines $\overline{p_1p_2}$ and $\overline{p_2p_3}$, these form a permissible pair on $S_P^{12}$. Next, consider the set of planes containing the line spanned by $p_1,p_2$, each plane intersects $C$ in two, usually distinct, points. There are however three planes where the two intersection points coincide, one where they are both equal to $p_3$ and two where they coincide as a point on $C$ distinct from any camera center, denote either of these points by $y_3$. Let $S_P^{13}$ and $S_P^{23}$ both be equal to the unique quadric containing both the secants $\overline{p_1y_3}$ and $\overline{p_2y_3}$ (by construction of $y_3$, these secants lie on the same quadric). Let the epipolar line $g_{P_i}^{3}$ be the line $\overline{p_iy_3}$ and let the epipolar lines $g_{P_3}^{i}$ be such that $g_{P_i}^{3}$ and $g_{P_3}^{i}$ form a permissible pair on $S_P^{i3}$. Since each pair of epipolar lines $e_{P_i}^{k}, e_{P_j}^{k}$ intersect in a point on $S_P^{ij}$, we get that $(e_{Q_i}^{k})F_Q^{ij}e_{Q_j}^{k}=0$, hence proving that the quadrics are compatible (by \cref{thr:compatible_forms_noncollinear}). This in turn means that the configuration $(P_1,P_2,P_3,C)$ is critical.

The second half of the statement follows from \cref{prop:bidegree_on_P_->_bidegree_on_Q} along with \cref{lem:relations_between_quadrics}, \cref{item3}.
\end{proof}

We round off the quartic curve section by proving that the curves in 

\begin{lemma}The critical configurations in \cref{prop:elliptic_curve_is_critical,prop:elliptic_curve_is_critical_even_if_singular} are the only critical configurations where the points lie on a quartic curve.
\end{lemma}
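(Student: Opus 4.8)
The plan is to reduce to maximal critical configurations, recognise the quartic $C$ as the base locus of a pencil of quadrics, show that every camera center lies on $C$, and then match the possible positions of the centers on $C$ to the two propositions. So suppose $(P_1,P_2,P_3,X)$ is critical with $X$ contained in a quartic curve. Since any subconfiguration of a critical configuration is critical, it suffices to treat the maximal case, where $X=C$ is the whole set of critical points, a quartic curve. By \cref{cor:critical_lies_on_intersection_of_quadrics} together with the compatibility analysis of \cref{sec:compatible_triples_of_quadrics}, $C$ lies on three compatible critical quadrics $S_P^{12},S_P^{13},S_P^{23}$, and by \cref{thr:compatible_implies_critical} the set of critical points is their intersection minus at most a plane, a line, or finitely many ambiguous points, none of which alters the closure of a quartic curve. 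Because the complete intersection of any two quadrics already has degree $4$, the three quadrics must lie in one pencil and $C$ is the support of its base locus; hence $C$ is one of the four quartics that arise this way, namely the elliptic quartic, the nodal or cuspidal rational quartic, the union of a twisted cubic with a secant line, or the union of two conics meeting in two points (with the conics possibly reducible).

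Next I would prove that every camera center lies on $C$. For each $i$, the center $p_i$ lies on both $S_P^{ij}$ and $S_P^{ik}$; since a point off the base locus of a pencil lies on a unique member of it, either $p_i\in C$ or $S_P^{ij}=S_P^{ik}$. If the latter held for two indices the three quadrics would coincide and their intersection would be a surface, not a curve, so at most one center, say $p_1$, can fail to lie on $C$, and only when $S_P^{12}=S_P^{13}\neq S_P^{23}$. I expect ruling this case out to be the main obstacle. The argument I have in mind works on the common quadric $S=S_P^{12}=S_P^{13}$: when $S$ is smooth a permissible pair consists of two distinct lines of one ruling (by \cref{lem:permissible_lines}, since a smooth quadric has no singular points), so the two epipolar lines through $p_1$ are the two rulings through $p_1$ and span the tangent plane $T_{p_1}S$, which meets $S$ in precisely those two lines. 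Feeding this into the three instances of condition~(2) of \cref{thr:compatible_quadrics} and chasing the incidences --- a plane meets a smooth quadric in a conic, hence in at most two of its lines, and two distinct lines of a single ruling are skew --- forces either a repeated line occurring as a permissible pair, which contradicts \cref{lem:permissible_lines}, or $S_P^{23}=S$, which contradicts $S_P^{23}\neq S$. The cases where $S$ is a cone or a pair of planes, and the possibility that the forced coincidence of epipolar lines puts the conjugate cameras into the collinear regime of \cref{prop:compatible_collinear}, would be dealt with by the same kind of line-incidence bookkeeping. One concludes $p_i\in C$ for all $i$.

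Finally, with all three centers on $C$, the classification falls out. If none of the centers is a singular point of $C$, then \cref{prop:elliptic_curve_is_critical} both applies and covers this case. If some center is singular, relabel so that it is the center of $P_3$; then \cref{prop:elliptic_curve_is_critical_even_if_singular} applies, because its hypotheses demand only that the center of $P_3$ be a singular point and those of $P_1,P_2$ lie on $C$ --- here one uses that projecting $C$ away from a double point produces a conic, so the cone over that conic is a quadric cone through $C$ with vertex at the chosen point, which is exactly the quadric used in that proof. Conversely, every configuration satisfying the hypotheses of the two propositions is critical by them, so \cref{prop:elliptic_curve_is_critical,prop:elliptic_curve_is_critical_even_if_singular} describe exactly the critical configurations with points on a quartic curve. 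The two spots that need care are the excluded case $S_P^{12}=S_P^{13}\neq S_P^{23}$ above and, for a reducible $C$, checking that configurations with several centers at singular points do not escape: such a configuration either falls under one of the two propositions, or, as a subconfiguration, is absorbed into a surface configuration (a plane plus a conic through the camera centers) and so does not arise here as a maximal quartic-curve configuration.
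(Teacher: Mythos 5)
Your overall strategy mirrors the paper's: reduce to the maximal case, identify the quartic as the base locus of a pencil of quadrics, show the camera centers lie on the curve, and match the positions of the centers to \cref{prop:elliptic_curve_is_critical,prop:elliptic_curve_is_critical_even_if_singular}. The divergence, and the problem, is in the step you yourself flag as the main obstacle: the case $S_P^{12}=S_P^{13}\neq S_P^{23}$ with $p_1\notin C$. You propose to kill it by showing the three quadrics cannot be compatible, via an incidence chase with permissible pairs, tangent planes, and condition~(2) of \cref{thr:compatible_quadrics} --- but you only sketch the smooth-quadric case and explicitly defer the cone, the pair of planes, and the possibility that the two epipolar lines through $p_1$ coincide (pushing the conjugate cameras into the collinear regime of \cref{prop:compatible_collinear}). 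Even in the smooth case the chase is not closed: after forcing $\Pi_1\cap\Pi_2$ and $\Pi_1\cap\Pi_3$ to be the two rulings through $p_1$, the remaining condition $\Pi_2\cap\Pi_3\subset S_P^{23}$ does not visibly produce the dichotomy (repeated permissible line or $S_P^{23}=S$) you assert. As written, the central case of the lemma is not proved.

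The paper closes this case with a two-line numerical argument that you have available and that avoids all the casework: if $p_1\notin C$, then $C$ is of type $(2,2,1,0)$ on the repeated quadric (it meets each ruling twice and passes through only one of the two camera centers), so by \cref{prop:bidegree_on_P_->_bidegree_on_Q} its conjugate would be of type $(2,3,2,1)$, a curve of degree five; but the conjugate of a critical configuration lies on an intersection of quadrics, which contains no quintic. This rules out criticality directly --- no need to decide whether the quadrics are compatible, and no smooth/cone/planes/collinear split. If you replace your incidence argument with this degree count, the rest of your write-up (centers on the base locus, matching singular versus non-singular centers to the two propositions) goes through essentially as in the paper.
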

\begin{proof}
By \cref{thr:critical_only_if_critical_for_fewer_views} any quartic curve that is part of a critical configuration must lie on the intersection of three critical quadrics. Any quartic curve appearing as the intersection of three different quadrics where each quadric contains two camera centers contains all three camera centers. The only way for such a quartic curve to not contain all camera centers is if two of the quadrics are equal. Then the curve only contains two camera centers. In this case the curve is of type $(2,2,1,0)$ on the two equal quadrics and $(2,2,1,1)$ on the last. If such a configuration was critical, the conjugate curve would have to be of type $(2,3,2,1)$ on two of the quadrics and $(2,2,1,1)$ on the last, but a curve of degree five does not appear as the intersection of quadrics, so this can not be a critical configuration.
\end{proof}

\subsubsection{The twisted cubic}
\label{sec:cubic_curves}
Having covered the quartic curves, we move one degree lower, to the cubics. Plane cubics do not appear as the intersection of quadrics, so the only irreducible cubic curve that might be critical is the twisted cubic curve. The results in \cref{prop:elliptic_curve_is_critical,prop:elliptic_curve_is_critical_even_if_singular} already prove that any configuration of points and cameras lying on the union of a twisted cubic and a secant line are critical, with a conjugate of the same type. There are, however, some cases where the conjugate is not of the same type, we show these here. We also give a critical configuration that is not a subconfiguration of the quartic curve ones.

\begin{proposition}
\label{prop:new_configurations}
Any configuration consisting of three cameras along with a set of points lying on a twisted cubic $C$ passing through two of the camera centers is critical. The same is the case if $C$ passes through only one camera center, as long as the two remaining camera centers lie on a secant to $C$.
\end{proposition}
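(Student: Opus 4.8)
The plan is to reduce the statement to the quartic-curve results already proven, \cref{prop:elliptic_curve_is_critical} and \cref{prop:elliptic_curve_is_critical_even_if_singular}, by completing the twisted cubic $C$ to a reducible quartic curve $Q$ of the form ``twisted cubic $\cup$ secant line'' that passes through all three camera centers. Once such a $Q$ is found, $(P_1,P_2,P_3,Q)$ is critical by one of those two propositions, and since $C\subseteq Q$ and every subconfiguration of a critical configuration is critical when there is more than one camera, the subconfiguration $(P_1,P_2,P_3,C)$ (and hence the configuration on any subset of $C$) is critical as well.

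First I would treat the case where $C$ passes through $p_1$ and $p_2$ but not $p_3$. The secant variety of a twisted cubic is all of $\p3$, so there is a secant line $L$ of $C$ through $p_3$ (or, if $p_3$ lies on the tangent developable, a tangent line); put $Q=C\cup L$, which is one of the quartics appearing as the intersection of a pencil of quadrics. Then $p_1,p_2\in C\subseteq Q$ and $p_3\in L\subseteq Q$. The singular locus of $Q$ is $L\cap C$, at most two points, and $p_3\notin C$ is automatically a smooth point of $Q$; for $p_3$ outside a proper closed subset the line $L$ also misses $p_1$ and $p_2$, so all camera centers are smooth points of $Q$ and \cref{prop:elliptic_curve_is_critical} gives criticality of $(P_1,P_2,P_3,Q)$, hence of $(P_1,P_2,P_3,C)$. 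The case where $C$ passes through $p_1$ only while $p_2,p_3$ lie on a secant $M$ of $C$ is handled the same way with $Q=C\cup M$: all three centers lie on $Q$, and generically they are smooth points of it, so again \cref{prop:elliptic_curve_is_critical} applies.

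The main thing to be careful about is the bookkeeping of the degenerate positions in which a camera center falls on one of the (at most two) nodes of the auxiliary quartic $Q$ --- for instance, in the first case when $\overline{p_1p_3}$ happens to be a secant of $C$, so that $L=\overline{p_1p_3}$ and $p_1$ is a node of $Q$, or in the second case when $p_1$ lies on $M$ or one of $p_2,p_3$ lies on $C$. In each such situation the camera center sitting at the node can be relabelled so as to play the role of the third camera in \cref{prop:elliptic_curve_is_critical_even_if_singular}, which again shows $(P_1,P_2,P_3,Q)$ is critical; the remaining fully collinear sub-cases (all three centers on a single secant of $C$, or $L$ degenerating to a tangent) follow by the same device or by continuity, using that the locus of critical configurations is closed. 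No new geometry beyond the quartic-curve propositions is needed; once the completing secant and the correct labelling are chosen, criticality is immediate, so the only work is organising this short case analysis.
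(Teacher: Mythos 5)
Your proposal is correct and takes essentially the same route as the paper: complete $C$ with a secant line through the remaining camera center(s) to obtain a quartic of the type covered by \cref{prop:elliptic_curve_is_critical}, then pass to the subconfiguration. The paper's proof is a one-line version of this argument; your extra bookkeeping of the degenerate positions, where a camera center lands on a node of the auxiliary quartic and \cref{prop:elliptic_curve_is_critical_even_if_singular} is invoked instead, addresses the smoothness hypothesis that the paper's one-liner glosses over.
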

\begin{proof}
Both configurations consist of cameras and points lying on the union of a twisted cubic + secant line, so by \cref{prop:elliptic_curve_is_critical}, they are critical.
\end{proof}

\begin{remark}
Neither of the configurations in \cref{prop:new_configurations} appear in \cite{HK}, the latter does not appear in \cite{threeViews} either, demonstrating that both previous classifications are incomplete.
\end{remark}

\begin{proposition}
\label{prop:twisted_cubic_is_critical}
Any configuration consisting of three cameras and any number of points all lying on a twisted cubic is critical. A conjugate consists of points on a conic curve not passing through the camera centers. 
\end{proposition}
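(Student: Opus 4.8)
The plan is to reduce the statement to the pencil-of-quadrics machinery already developed for quartic curves, namely \cref{prop:elliptic_curve_is_critical} and especially \cref{lem:quartic_lemma}, which only uses that we can find quadrics through a curve $C$ with prescribed quasi-parallel secants through pairs of camera centers. The key observation is that a twisted cubic $C$ is \emph{not} the intersection of a pencil of quadrics — it lies on a net (two-dimensional family) of quadrics — so for each pair of camera centers $p_i,p_j$ on $C$ there is more freedom in choosing the quadric $S_P^{ij}$ containing $C$ and a prescribed secant line. I would first set up this family: the quadrics containing a fixed twisted cubic $C$ form a $\p2$, and through any two points of $C$ (spanning a secant $L$) there is a pencil of quadrics in this net containing $L$, each of which is either a smooth quadric or a cone having $C\cup L$ as part of its ruling. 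So for each ordered pair of camera centers we have lots of quadrics $S_P^{ij}$ containing $C$ and having $\overline{p_ip_j}$, or more generally $\overline{p_iy_j}$ for a chosen point $y_j\in C$, as one ruling line.

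Next I would carry out the explicit construction mirroring the proof of \cref{prop:elliptic_curve_is_critical}. Pick a generic smooth point $y_1\in C$ not in the plane spanned by $p_1,p_2,p_3$. Let $S_P^{12}$ be a quadric containing $C$ with $\overline{p_2y_1}$ as a ruling line; it contains a unique ruling line through $p_1$ in the opposite family, which meets $C$ in $p_1$ and a second point $y_2$, so that $\overline{p_1y_2}\parallel\overline{p_2y_1}$. Construct $y_3$ analogously from $p_3$, and let $S_P^{23}$ be a quadric through $C$ containing $\overline{p_2y_3}$. Then $\overline{p_1y_i}\parallel\overline{p_iy_1}$ for $i=2,3$, so by \cref{lem:quasi_parallel} (whose proof uses Cayley--Bacharach, valid since the cubic $C$ lies on enough quadrics) we get $\overline{p_2y_3}\parallel\overline{p_3y_2}$ as well, and \cref{lem:quartic_lemma} gives compatibility of the triple $S_P^{12},S_P^{13},S_P^{23}$. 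Since the curve $C$ (degree 3, passing through all camera centers) contains no plane, line, or isolated point, \cref{lem:residual_is_intersection_of_three_planes} shows the ambiguous locus — the intersection of the three planes $\Pi_i=\vspan(p_i,y_j,y_k)$ — cannot contain $C$, and a generic choice of $y_1$ avoids it lying on $C$ at all; so by \cref{thr:compatible_implies_critical} the configuration $(P_1,P_2,P_3,C)$ is critical.

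For the conjugate, note that on an irreducible $S_P^{ij}$ the twisted cubic is of type $(2,1,1,1)$ (it meets each ruling line once outside a camera center, twice counting through it — more carefully, $C$ has bidegree $(1,2)$ on a smooth quadric, and through two camera centers we get type $(2,1,1,1)$ in the normalization used here). Applying \cref{prop:bidegree_on_P_->_bidegree_on_Q} with $(a,b,c_1,c_2)=(2,1,1,1)$ yields conjugate type $(2,a+b-c_1-c_2,a-c_2,a-c_1)=(2,1,1,1)$ — so I would need to recompute the type carefully: the correct statement must give a conic not through the camera centers, i.e. type $(1,1,0,0)$, which forces the original type to be read off as $(1,2,1,1)$ on each $S_P^{ij}$. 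Checking: $(1,2,1,1)\mapsto(1,1+2-1-1,1-1,1-1)=(1,1,0,0)$, a conic missing all three $q_i$. This matches \cref{tab:curves_on_same_quadric} and the remark after \cref{prop:rational_curves}. The main obstacle I anticipate is precisely this bookkeeping — pinning down the correct $(a,b)$-type of the twisted cubic on each quadric of the net (it depends on which ruling family the secants lie in) and making sure the degenerate cases (when some $S_P^{ij}$ is a cone, or when $p_i$ forces coincidences) are handled, as well as verifying genericity of $y_1$ so that the quadrics are non-collinear-compatible and the ambiguous locus is empty.
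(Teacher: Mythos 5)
Your proposal does not follow the paper's route, and as written it has genuine gaps. The paper's proof is short: pick a secant line $L$ to the twisted cubic $C$ that avoids the camera centers; the quadrics containing $C\cup L$ form a pencil, and \cref{lem:compatible_line} says that any three distinct members of this pencil form a compatible triple, with epipolar lines $g_{P_i}^{j}$ taken to be the lines through $p_i$ meeting $L$. Those lines meet $C$ only at $p_i$, so $C$ has type $(1,2,1,1)$ and \cref{prop:bidegree_on_P_->_bidegree_on_Q} gives a conjugate of type $(1,1,0,0)$, the conic avoiding the camera centers. You instead transplant the quartic machinery of \cref{prop:elliptic_curve_is_critical}, but \cref{lem:quasi_parallel} and \cref{lem:quartic_lemma} are stated for quartic curves cut out by a \emph{pencil} of quadrics: quasi-parallelism is defined through ``the pencil of quadrics containing $C$'', which does not exist for a twisted cubic (the quadrics through $C$ form a net, a secant only cuts out a pencil inside it, so the partner line through $p_1$, hence $y_2$, depends on the chosen quadric $S_P^{12}$), and the Cayley--Bacharach proof of \cref{lem:quasi_parallel} projects from a point of the quartic to obtain a plane cubic, whereas projecting a twisted cubic from one of its points yields a conic, so that argument does not transfer; the parenthetical ``valid since the cubic lies on enough quadrics'' is not a substitute. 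A repair is conceivable without Cayley--Bacharach: since containing $C$ is only a net, a dimension count yields a quadric containing $C$, $\overline{p_2y_3}$ and $\overline{p_3y_2}$, after which permissibility of that pair still needs checking — but that is a different argument from the one you give.

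The second problem is the type bookkeeping, which in your construction comes out against the statement. Your epipolar lines are secants $\overline{p_iy_j}$ of $C$, hence lie in the ruling family whose lines meet $C$ twice, so $C$ has type $(2,1,1,1)$ on each $S_P^{ij}$, and \cref{prop:bidegree_on_P_->_bidegree_on_Q} sends $(2,1,1,1)$ to $(2,1,1,1)$: a twisted cubic through the conjugate camera centers, not the conic claimed. You noticed the mismatch and then ``forced'' the type to read $(1,2,1,1)$, but that reading contradicts your own choice of epipolar lines; obtaining $(1,2,1,1)$ requires epipolar lines in the family meeting $C$ once, which is precisely what the paper's auxiliary secant $L$ not through the camera centers produces. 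A smaller slip: on a smooth quadric a permissible pair consists of two disjoint lines, i.e.\ lines of the \emph{same} ruling, so the partner of $\overline{p_2y_1}$ through $p_1$ lies in the same family, not the opposite one.
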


\begin{proof}
Given a twisted cubic, there is a two-dimensional family of quadrics containing it. This leaves us 2 degrees of freedom when choosing each of the quadrics. In other words, for each line intersecting the cubic, there is a unique quadric containing both the cubic and the line (unless the line is a secant, in which case a whole pencil of quadrics contains it). 

Take a line $L$, secant to $C$, and not passing through the camera centers. There exists a pencil of quadrics containing both $C$ and $L$. By \cref{lem:compatible_line}, any three distinct quadrics in this pencil form a compatible triple of quadrics. The line $L$ has bidegree $(1,0)$ and is a secant to $C$, so it intersects $C$ twice. It follows that the twisted cubic is of type $(1,2,1,1)$, so the conjugate is of type $(1,1,0,0)$, a conic not passing through the camera centers.
\end{proof}

\cref{prop:twisted_cubic_is_critical} shows that three cameras on a twisted cubic also has other conjugates than the ones in \ref{prop:elliptic_curve_is_critical}. As for the case where the camera centers do not all lie on the twisted cubic, these will generally only be critical if the cameras not on the twisted cubic all lie on the same secant line. There is, however, one exception:

\begin{proposition}
\label{prop:twisted_cubic+line_critical}
Any configuration consisting of three collinear cameras along with a set of points lying on a twisted cubic not intersecting the line with the camera centers is critical. The conjugate is a rational quartic curve containing the three camera centers.
\end{proposition}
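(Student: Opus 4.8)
The plan is to exhibit $(P_1,P_2,P_3,C)$ as a critical configuration by building a compatible (non-collinear) triple of critical quadrics through the prescribed camera centres in the sense of \cref{thr:compatible_quadrics}, and then to read off the conjugate curve from the bidegree formula \cref{prop:bidegree_on_P_->_bidegree_on_Q}. Since any subconfiguration of a critical configuration is critical, it suffices to treat $X=C$. Write $\ell$ for the line through $p_1,p_2,p_3$; by hypothesis $\ell\cap C=\emptyset$.

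First I would produce the quadrics. The quadrics through the twisted cubic $C$ form a net (a $\p2$), and no member of it contains a line disjoint from $C$: on a smooth member $C$ has odd bidegree, hence meets every ruling line, while on a singular member $C$ passes through the vertex (a curve of odd degree on a cone must), hence meets every line through it. So restriction to $\ell$ identifies this net with the $\p2$ of degree-$2$ divisors on $\ell$, and $p_i+p_j$ determines a unique quadric $S_P^{ij}$ with $S_P^{ij}\cap\ell=\{p_i,p_j\}$. A real quadric carrying a real twisted cubic is neither non-ruled (those carry only curves of even bidegree) nor reducible ($C$ is irreducible and non-planar), so $S_P^{ij}$ is a smooth ruled quadric or a cone; since $\ell\not\subseteq S_P^{ij}$ and no $p_i$ lies on $C$, it is a critical quadric through $p_i,p_j$ as in \cref{tab:configurations_and_their_conjugates}. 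On $S_P^{ij}$ the curve $C$ meets one ruling twice (the secants of $C$ among the ruling lines) and the other once. I would then take $g_{P_i}^{j}$ to be the line of the \emph{unisecant} ruling of $S_P^{ij}$ through $p_i$, and $g_{P_j}^{i}$ the unisecant line through $p_j$: distinct lines of one ruling, hence skew, hence a permissible pair, so condition $(1)$ of \cref{thr:compatible_quadrics} holds, and with respect to this choice $C$ is of type $(1,2,0,0)$ on $S_P^{ij}$. This choice is actually forced, since the secant-ruling line through $p_i$ is the unique secant of $C$ through $p_i$, independent of $j$, and would collapse $\vspan(g_{P_i}^{j},g_{P_i}^{k})$ to a line.

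The crux is condition $(2)$ of \cref{thr:compatible_quadrics}: with $\Pi_i=\vspan(g_{P_i}^{j},g_{P_i}^{k})$, one must show $\Pi_i\cap\Pi_j\subseteq S_P^{ij}$ for $\{i,j,k\}=\{1,2,3\}$. Pulling back the criterion of \cref{thr:compatible_forms_noncollinear}, this says the point $y_k=g_{P_i}^{k}\cap g_{P_j}^{k}$ (when it exists) lies on $S_P^{ij}$; and since $g_{P_i}^{k}$ joins $p_i$ to the single point $c_i^{k}=g_{P_i}^{k}\cap C$, this becomes a projective incidence identity among $p_1,p_2,p_3$ and the six points $c_i^{j}\in C$. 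My plan for it is to project $C$ from a general point to a plane nodal cubic and translate the three conditions into a single Cayley--Bacharach statement there, in the spirit of how \cref{lem:quasi_parallel} is used to let a choice at one camera propagate to all pairs; alternatively one writes the conjugate cameras $Q_i$ explicitly from the six lines as in \cref{rem:method_for_computing} and checks the relations of \cref{thr:compatible_forms_noncollinear} directly. I expect this to be the main obstacle, precisely because the $S_P^{ij}$ are rigid and the identity must be forced entirely by the geometry of the rational normal cubic.

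Granting condition $(2)$, \cref{thr:compatible_quadrics} yields a compatible triple, hence conjugate cameras $Q_1,Q_2,Q_3$, which are non-collinear because the two lines $g_{P_i}^{j},g_{P_i}^{k}$ through each $p_i$ are distinct (so \cref{eq:non_collinear} holds). By \cref{thr:compatible_implies_critical} together with \cref{lem:residual_is_intersection_of_three_planes}, the critical points are $S_P^{12}\cap S_P^{13}\cap S_P^{23}$ with the locus $\Pi_1\cap\Pi_2\cap\Pi_3$ (at most a plane) removed; as $C$ is an irreducible curve not contained in it, $\overline{C\setminus(\Pi_1\cap\Pi_2\cap\Pi_3)}=C$, so $(P_1,P_2,P_3,C)$ is critical. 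Finally, $C$ has type $(1,2,0,0)$ on each $S_P^{ij}$, so \cref{prop:bidegree_on_P_->_bidegree_on_Q} gives the conjugate curve type $(1,\,1{+}2{-}0{-}0,\,1{-}0,\,1{-}0)=(1,3,1,1)$ on $S_Q^{ij}$: a degree-$4$ rational curve through $q_i$ and $q_j$ with multiplicity one for each pair, hence a rational quartic passing through all three conjugate camera centres, as claimed.
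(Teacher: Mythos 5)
Your setup matches the paper's: you take $S_P^{ij}$ to be the unique quadric through $C$, $p_i$, $p_j$, you choose the permissible line $g_{P_i}^{j}$ to be the unisecant ruling line through $p_i$, you correctly reduce compatibility to showing that $g_{P_i}^{k}$ and $g_{P_j}^{k}$ meet in a point of $S_P^{ij}$, and your bidegree computation $(1,2,0,0)\mapsto(1,3,1,1)$ for the conjugate is exactly the paper's. But the step you yourself flag as ``the main obstacle'' is left as a plan, not a proof, and it is precisely the content of the argument. The paper closes it with \cref{lem:cubic_and_line}: for a fixed camera center $p_k$ on the line $L$, the plane spanned by $L$ and the unique secant $s$ of $C$ through $p_k$ meets $C$ in a third point $x_k$ (off $s$), and for \emph{every} $q\in L$ the quadric through $C$, $p_k$, $q$ contains the line $\overline{x_kq}$ — because that line meets the quadric in the three points $x_k$, $q$, and a point of $s$. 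Applied with $q=p_i$ and $q=p_j$, this shows the two unisecant lines $g_{P_i}^{k}$ and $g_{P_j}^{k}$ actually concur at the single point $x_k\in C\subset S_P^{ij}$, whence $(e_{Q_i}^{k})^{T}F_Q^{ij}e_{Q_j}^{k}=x_k^{T}S_P^{ij}x_k=0$ and \cref{thr:compatible_forms_noncollinear} applies. Neither of your two suggested routes (a Cayley--Bacharach statement on a projected nodal cubic, or an explicit computation of the $Q_i$) is carried out, and it is not evident that the Cayley--Bacharach route would deliver the concurrence of the two lines at a point \emph{of $C$}, which is what makes the verification one line; so as written the proposal does not establish compatibility.

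A secondary gap: you assert the conjugate cameras are non-collinear because the two lines $g_{P_i}^{j}$, $g_{P_i}^{k}$ through each $p_i$ are distinct, but this can fail — the unisecant lines through $p_i$ on $S_P^{ij}$ and on $S_P^{ik}$ may coincide. The paper treats this degeneration separately: then the three points $x_k$ coincide, the three quadrics are cones, compatibility follows from the collinear criterion \cref{prop:compatible_collinear}, and the conjugate quartic has collinear camera centers. Your argument that the \emph{secant} line through $p_i$ cannot be used does not rule out this coincidence of the unisecant lines.
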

 
To prove this, we first give this, seemingly unrelated, lemma:

\begin{lemma}
\label{lem:cubic_and_line}
Let $C\in\p3$ be a twisted cubic, let $l$ be a line in $\p3$, not secant to $C$, and let $p$ be a fixed point on $l$ not lying on the twisted cubic. There exists a point $x\in C$ such that for every point $q\in l$, such that $q\neq p$ and $q\notin C$, the unique quadric surface containing $C$, $p$, and $q$, also contains the line $\overline{xq}$.
\end{lemma}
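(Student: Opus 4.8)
The plan is to analyse the pencil of quadrics through $C$ and $p$ and to read off $x$ from its base locus. First I would recall the relevant twisted-cubic geometry. Quadrics containing $C$ form a net (a $2$-dimensional linear system), whose base locus is exactly $C$; imposing passage through $p\notin C$ cuts this down to a pencil $\mathcal{P}_p\cong\p1$. Two general members of $\mathcal{P}_p$ meet in a complete intersection curve of degree $4$ containing $C$, so the residual component is a line; it passes through $p$, since every member of $\mathcal{P}_p$ does, and an arithmetic genus count ($p_a=1$ for the complete intersection) forces this line to meet $C$ in two points, i.e.\ to be a chord. Since a general point of $\p3$ lies on a unique chord of $C$ (the twisted cubic has one apparent double point), this line is the unique secant line $L_p$ of $C$ through $p$, and the base locus of $\mathcal{P}_p$ is $C\cup L_p$. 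I will assume $l$ is disjoint from $C$; the contact cases are degenerate limits, and if $l$ actually met $C$ one checks the statement is either vacuous (no quadric contains $C$, $p$ and a general $q\in l$) or holds trivially with $x$ the point of $l\cap C$. Then $l\ne L_p$, so $l$ and $L_p$ meet only at $p$, and for $q\in l$ with $q\ne p$ we have $q\notin C\cup L_p$; hence the unique quadric $S_q$ through $C$, $p$, $q$ is precisely the member of $\mathcal{P}_p$ vanishing at $q$, and in particular $L_p\subset S_q$.

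With this in place, I would define $x$ as follows. Let $\Lambda$ be the plane spanned by $L_p$ and $l$ (these are distinct lines through $p$, so their span is a plane). Because $C$ is non-planar, $\Lambda\cap C$ is a length-$3$ subscheme of $C$; it contains $L_p\cap C$, which has length $2$, so the residual is a single point $x\in C$ --- the ``third'' intersection point of $\Lambda$ with $C$ in the generic situation.

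The core of the proof is then a one-line computation inside $\Lambda$. Fix a valid $q\in l$. Since $C$ is non-planar we have $\Lambda\not\subset S_q$, so $S_q\cap\Lambda$ is a plane conic; it contains $L_p$ because $L_p\subset S_q$ and $L_p\subset\Lambda$, hence it splits as $S_q\cap\Lambda=L_p\cup M_q$ for a line $M_q\subset\Lambda$. The conic $S_q\cap\Lambda$ also contains the scheme $C\cap\Lambda$, in particular the point $x$; since $x\notin L_p$, it lies on $M_q$. Likewise $q\in S_q\cap\Lambda$ and $q\notin L_p$ (as $q\ne p$), so $q\in M_q$. Finally $x\in C$ while $q\in l$ and $l\cap C=\emptyset$, so $x\ne q$, whence $M_q=\overline{xq}$ and therefore $\overline{xq}\subset S_q$ --- exactly the assertion, and the same $x$ works for every admissible $q$.

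The only delicate point is the degenerate case in which $\Lambda$ is tangent to $C$ at one of the two points of $L_p\cap C$, so that the residual point $x$ actually lies on $L_p$ and the step ``$x\notin L_p$, hence $x\in M_q$'' breaks down. I would handle this either by a local intersection-multiplicity check at $x$ (the conic $L_p\cup M_q$ contains the length-$2$ scheme $C\cap\Lambda$ supported at $x$, and since $L_p$ is not the tangent line to $C$ at $x$, the branch $M_q$ must pass through $x$), or --- more economically --- by a specialization argument: the point $x=x(l)$ is an algebraic function of the line $l$ through $p$, the property ``$\overline{x(l)\,q}\subset S_q$ for all admissible $q$'' is a closed condition on $l$, and it holds on the dense open set of non-tangent $l$, hence for all admissible $l$. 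Apart from this, pinning down the base locus of $\mathcal{P}_p$ (the genus bookkeeping and the one-apparent-double-point fact) is the main input; everything after it is the conic splitting, which is immediate.
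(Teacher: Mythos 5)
Your proposal is correct and follows essentially the same route as the paper: both identify the unique secant $s=L_p$ of $C$ through $p$ as lying on every quadric of the pencil, take $x$ to be the third point of $C$ on the plane spanned by $s$ and $l$, and conclude that $\overline{xq}$ lies on the quadric through $C$, $p$, $q$ via its intersection with that plane. Your version adds the justification of the base locus $C\cup L_p$ and a treatment of the tangent-plane degeneration, which the paper omits, but the underlying geometric idea is the same.
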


\begin{proof}
Let $s$ denote the unique secant line of $C$ passing through $p$. The plane spanned by $s$ and $l$ intersects $C$ in three points. Two of these points lie on $s$, the last one does not. We denote this one by $x$. 

In the pencil of quadrics containing $C$ and $p$, each quadric also contains the secant $s$. For any point $q$ as above, the quadric containing $C$, $p$, and $q$, intersects the line $\overline{xq}$ in three points, $x$, $q$, and a third point lying on $s$. Since it contains three points on $\overline{xq}$, it contains the whole line. 
\end{proof}

\begin{proof}[Proof of \cref{prop:twisted_cubic+line_critical}]
Let $C$ denote the twisted cubic, let $p_i$ denote the camera centers, and let $L$ denote the line spanned by the camera centers.

Let $S_P^{ij}$ be the unique quadric containing $C$, $p_i$, and $p_j$. If $S_P^{ij}$ is smooth there are two lines through $p_i$. One is a secant to $C$, the other line intersects $C$ exactly once; denote the latter line by $g_{P_i}^{j}$. If $S_P^{ij}$ is a cone, there is only one line through $p_i$, in this case, we take this one to be $\gp{i}{j}$. Repeating this process for each pair of cameras on each of the three quadrics, we get $6$ lines, two on each quadric, and two through each camera center. 

By \cref{lem:cubic_and_line}, the lines $g_{P_i}^{k}$ and $g_{P_j}^{k}$ intersect $C$ at the same point, which we denote by $x_k$ (see \cref{fig:cubicline}). Since $x_k$ lies on $S_P^{ij}$, we have:
\begin{align*}
x_k^{T}S_P^{ij}x_k&=x_k^{T}P_i^{T}F_Q^{ij}P_jx_k,\\
&=e_{Q_i}^{k}F_Q^{ij}e_{Q_j}^{k},\\
&=0.
\end{align*}
Meaning the quadrics $S_P^{12},S_P^{13},S_P^{23}$ do indeed form a compatible triple.

\begin{figure}
\begin{center}
\includegraphics[width = 0.7\linewidth]{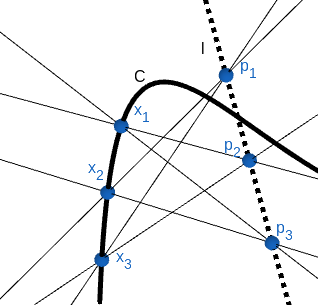}
\end{center}
\caption{The twisted cubic $C$ and the line $l$. The line $g_{p_i}^{j}$ is the line spanned by $p_i$ and $x_j$.}
\label{fig:cubicline}
\end{figure}

Since the twisted cubic is of type $(1,2,0,0)$, its conjugate is of type $(1,3,1,1)$, a rational quartic curve passing through all three camera centers. This is one of the curves from \cref{prop:rational_curves}, appearing when all the quadrics $S_Q^{ij}$ on the other side coincide.

In this construction, the pair of lines $g_{P_i}^{j}$ through each camera center may coincide. In this case, the three points $x_i$ also coincide, and the three quadrics $S_P^{ij}$ are all cones. By \cref{prop:compatible_collinear} the quadrics are compatible in this case also, and the camera centers on the rational quartic on the other side are collinear.
\end{proof}

In the case where the line containing the cameras intersects the twisted cubic, but is not a secant or tangent, the three quadrics $S_P^{ij}$ all have to be equal. In this case, the only way the lines $g_{P_i}^{j}$ can be permissible is if they are all secant to the twisted cubic, but in this case, the twisted cubic is of type $(2,1,0,0)$ (or type $(3,1,1,1)$ if you include the line) and neither of these appear as critical configurations when the three quadrics intersect. Hence no such critical configurations exist.

\subsubsection{Curves of lower degree}
\label{sec:curves_of_lower_degree}
We are now left with curves appearing as the intersection of quadrics where all components are of degree at most 2. These are all either subsets of degenerate quartic curves or degenerate quartic curves themselves, or subsets of the \enquote{plane + conic} configuration. Hence critical in all three cases, having conjugates that are of the same type (see \cref{prop:plane+conic_critical,prop:elliptic_curve_is_critical}). 

However, among the configurations that \emph{are} subsets of the \enquote{plane + conic} configuration or the degenerate quartic configuration, some also have conjugates that are not of the same type, as was the case for three cameras on a twisted cubic. This happens because for curves of low degree one has more freedom when choosing the quadrics, so more conjugates may appear. For the sake of completeness, we give a summary of them here. No proof is given, but the reader can verify that the curves lie on the intersection of three compatible irreducible quadrics. This task should be fairly straightforward when the curves are of a low degree.
\begin{enumerate}
\item Two intersecting lines on one side, and two disjoint lines on the other. The number of cameras lying on the two lines is equal on both sides.
\item Cameras lying on a conic curve. The conjugate is a line not passing through the camera centers.
\item Two lines not passing through the camera centers. This one has two conjugates, the first is three cameras lying on the union of three lines (one intersecting the other two). The second conjugate consists of points lying on the union of a conic and a line, and the cameras lying on the conic. This triple of conjugates is a degeneration of the \enquote{twisted cubic $\leftrightarrow$ conic} configuration shown in \cref{prop:twisted_cubic_is_critical}.
\end{enumerate}

\subsection{Quadrics intersecting in a finite number of points}
\label{sec:quadrics_intersecting_in_points}
Lastly, we have the case where three quadrics intersect in a finite number of points. This is the \enquote{general} case in the sense that with two general triples of cameras, the set of critical points on each side is finite. Indeed, on each side, we get a compatible triple of quadrics, whose intersection is 8 points (counted with multiplicity and including possible complex points), seven of these points lie in the set of critical points, whereas the last one is an ambiguous point (recall \cref{lem:residual_is_intersection_of_three_planes}).

Conversely: given a set of seven points $X$ and three cameras $P_i$, when is $(P_1,P_2,P_3,X)$ a critical configuration? Since 9 points\footnote{7 points and two camera centers.} are enough to fix a quadric, there is generally only one triple of quadrics $S_P^{ij}$ such that $p_i,p_j,X\in S_P^{ij}$. Now IF this triple of quadrics is compatible, it comes with six epipolar lines, spanning three planes. The three planes intersect in one of the 8 points lying on the intersection of the three quadrics. This point is, by \cref{lem:residual_is_intersection_of_three_planes}, not critical. As such, the configuration $(P_1,P_2,P_3,X)$ is critical if and only if the triple of quadrics is compatible \emph{and} the three planes do not intersect in one of the points in $X$ (by \cref{thr:compatible_implies_critical}). This gives us the compatibility condition for seven general points.

\begin{proposition}
\label{prop:7_points_critical}
Let $P_1,P_2,P_3$ be three cameras, and let $X$ be a set consisting of seven points. Then $(P_1,P_2,P_3,X)$ is critical if and only if there exists three quadrics $S_P^{ij}$ satisfying:
\begin{enumerate}
\item $p_i,p_j,X\in S_P^{ij}$.
\item $S_P^{12},S_P^{13},S_P^{23}$ is compatible.
\item The intersection of the three planes spanned by the six epipolar lines $\gp{i}{j}$ does not lie in $X$.
\end{enumerate}
\end{proposition}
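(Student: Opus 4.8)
The plan is to translate back and forth between the conjugate triple of cameras $Q_1,Q_2,Q_3$ and the triple of quadrics $S_P^{ij}=P_i^{T}F_Q^{ij}P_j$, exactly as in the two- and three-view analyses above, and to use \cref{thr:compatible_implies_critical} together with \cref{lem:residual_is_intersection_of_three_planes} to pin down the single non-critical point in the intersection of the quadrics.

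For the direction ($\Leftarrow$), suppose quadrics $S_P^{12},S_P^{13},S_P^{23}$ satisfying (1)--(3) exist. By (2), and the standing convention that ``compatible'' means compatible in the non-collinear sense, there is a non-collinear triple $Q_1,Q_2,Q_3$ whose fundamental matrices are the $F_Q^{ij}$ and with $S_P^{ij}=P_i^{T}F_Q^{ij}P_j$. Write $Z=S_P^{12}\cap S_P^{13}\cap S_P^{23}$ and let $x_0$ be the common point of the three planes spanned by the six epipolar lines of this compatible triple. By \cref{thr:compatible_implies_critical}, $(P_1,P_2,P_3,\overline{Z-x_0})$ is a critical configuration. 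Condition (1) gives $X\subseteq Z$ and condition (3) gives $x_0\notin X$, so $X\subseteq Z-x_0\subseteq\overline{Z-x_0}$; since any subconfiguration of a critical configuration of at least two cameras is again critical, $(P_1,P_2,P_3,X)$ is critical.

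For the direction ($\Rightarrow$), suppose $(P_1,P_2,P_3,X)$ is critical, with conjugate $(Q_1,Q_2,Q_3,Y)$, and set $S_P^{ij}=P_i^{T}F_Q^{ij}P_j$. Because $P_i(p_i)=0$, each $S_P^{ij}$ is a quadric containing $p_i$ and $p_j$; and because $\phi_P(X)=\phi_Q(Y)\subseteq\mathcal{M}_Q$, which by \cref{thr:ideal_of_multiview_variety} lies in the common zero set of the bilinear forms $F_Q^{12},F_Q^{13},F_Q^{23}$, pulling back along $\phi_P$ yields $X\subseteq S_P^{12}\cap S_P^{13}\cap S_P^{23}$, so (1) holds. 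The $F_Q^{ij}$ are the genuine fundamental matrices of the triple $Q_1,Q_2,Q_3$, hence compatible, hence the quadrics $S_P^{ij}$ are compatible, so (2) holds. For (3), one first rules out that the $Q_i$ are collinear: by the collinear analysis of \cref{sec:ambigious_points} and \cref{prop:reconstructible_codim_1_in_residual}, a collinear conjugate triple would make every point of $Z$ ambiguous and cut the critical points down to a proper subvariety of the finite set $Z$, leaving too few critical points to contain the seven points of $X$. With the $Q_i$ non-collinear, \cref{lem:residual_is_intersection_of_three_planes} identifies the intersection of the three epipolar planes with the ambiguous locus of $Z$; since $Z$ is eight points (B\'ezout), seven of them critical and one ambiguous, and $X$ is exactly the seven critical ones, the three planes meet in a point outside $X$, which is (3).

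The step I expect to be the main obstacle is (3) in the forward direction: checking carefully that the three epipolar planes meet in a single point, that this point is precisely the unique ambiguous (non-critical) point of $Z$, and hence that it cannot belong to $X$, while simultaneously excluding the collinear conjugate triple. One must also dispose of the degenerate situations in which $Z$ is positive-dimensional, that is, in which $X$ is a seven-point subconfiguration of one of the curve- or surface-type critical configurations classified earlier in the section; but in those cases both implications follow directly from that earlier classification together with the fact that subconfigurations of critical configurations are critical.
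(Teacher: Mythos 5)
Your proposal is correct and follows essentially the same route as the paper, which justifies this proposition only through the informal discussion preceding it (nine points generically fix each quadric, the compatible triple's six epipolar lines span three planes meeting in the one ambiguous point of the eight-point intersection, and criticality is then read off from \cref{thr:compatible_implies_critical} and \cref{lem:residual_is_intersection_of_three_planes}). Your write-up is in fact more careful than the paper's, notably in separating the two implications and in ruling out a collinear conjugate triple, a case the paper leaves implicit.
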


\begin{remark}
A sextuple of epipolar lines $\gp{i}{j}$ only exists if the quadrics are compatible, so the first two conditions need to be satisfied for the third to make sense. 
\end{remark}

There are, of course, certain positions of seven points and three camera centers such that they do not fix all three quadrics, for instance when points and camera centers coincide. If $S_P^{ij}$ is not fixed, this is because there exists a (possibly degenerate or reducible) elliptic quartic curve that passes through $p_i,p_j$ and $X$. In the previous section we have shown that if the curve also passes through the final camera, the whole curve is part of a critical configuration, so the seven points along with the cameras is a critical (but not maximal) configuration in this case.

For the sake of completeness, we examine what happens if $X$ does not consist of exactly seven points. For eight or more points, a configuration is critical if and only if it is a subconfiguration of one of the critical configurations mentioned earlier in the section, i.e. if the points lie on a critical curve or surface. For six or fewer points, the configuration is always critical \cite{6points}, but not maximal, since any configuration is contained in a critical configuration consisting of at least seven points. In particular, the generic 6-point configuration is contained in exactly three critical 7-point configurations.

This completes the classification of the critical configurations for three views:

\begin{theorem}
\label{thr:critical_configuration_three_views}
A configuration of three projective cameras $P$ and any number of points $X$ form a critical configuration if and only if the points $X$ form one of the surfaces/curves/point sets described in \cref{fig:12critical}, or if they form a subset of one of these.
\end{theorem}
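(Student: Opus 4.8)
The plan is to assemble the statement from the case analysis already carried out in Sections \ref{ssec:quadrics_intersecting_in_surface}, \ref{sec:quadrics_intersecting_in_a_curve} and \ref{sec:quadrics_intersecting_in_points}. First I would reduce to the maximal case: every critical configuration is contained in a maximal one, and by \cref{thr:critical_only_if_critical_for_fewer_views} every subconfiguration of a critical configuration is again critical, so it suffices to prove that the maximal critical configurations are exactly those depicted in \cref{fig:12critical}. The ``or a subset of one of these'' clause then takes care of the non-maximal configurations, such as those consisting of eight or more points that merely lie on a critical curve or surface.

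For the forward implication, let $(P_1,P_2,P_3,X)$ be a maximal critical configuration with conjugate $(Q_1,Q_2,Q_3,Y)$. By \cref{prop:critical_points_give_critical_configuration}, $X$ is the set of critical points of $\textbf{P}$ with respect to $\textbf{Q}$, so by \cref{cor:critical_lies_on_intersection_of_quadrics} and the compatibility discussion of \cref{sec:compatible_triples_of_quadrics}, $X$ is the intersection of a compatible triple of critical quadrics $S_P^{12},S_P^{13},S_P^{23}$, possibly with the ambiguous points of \cref{lem:residual_is_intersection_of_three_planes} removed. The intersection of three quadrics in $\p3$ is a surface, a curve, or a finite point set, and I would go through these three possibilities in turn. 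In the surface case the quadrics must coincide (or share a plane), which by \cref{prop:rational_curves,prop:plane+conic_critical} --- together with \cref{prop:reconstructible_codim_1_in_residual} when the $Q_i$ are collinear --- yields the rational quartic/cubic/conic families and the plane-plus-conic. In the curve case one argues degree by degree using \cref{prop:elliptic_curve_is_critical,prop:elliptic_curve_is_critical_even_if_singular,prop:twisted_cubic_is_critical,prop:twisted_cubic+line_critical} and the enumeration in \cref{sec:curves_of_lower_degree}, each time checking that the curve does arise from a compatible triple and removing the (at most) plane/line/point of ambiguous points where \cref{thr:compatible_implies_critical} applies. The zero-dimensional case is exactly \cref{prop:7_points_critical}. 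Since each of these subsections proves that its list is exhaustive for the corresponding intersection type, their union is the full list of maximal critical configurations.

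For the converse, every entry of \cref{fig:12critical} has already been shown to be critical in the proposition cited above for its intersection type, and any subset of a critical configuration is critical by \cref{thr:critical_only_if_critical_for_fewer_views}; this establishes the reverse inclusion and completes the proof.

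I expect the main difficulty to be bookkeeping rather than any single hard step: one must be certain that the split on intersection type is genuinely exhaustive, that the compatibility criteria of \cref{thr:compatible_quadrics,prop:compatible_collinear} have been verified in every branch --- including the excluded configurations of \cref{fig:impossible_configurations} --- and that in the branches where the conjugate cameras are collinear, where \cref{thr:compatible_implies_critical} fails, the correct codimension-one subvariety of critical points has been isolated via \cref{prop:reconstructible_codim_1_in_residual}. A secondary subtlety is tracking which configurations are critical only as subconfigurations (the low-degree curves of \cref{sec:curves_of_lower_degree} and the six-point configuration), so that they are correctly absorbed into the ``subset'' clause of the statement rather than listed as separate maximal cases.
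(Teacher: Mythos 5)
Your proposal is correct and takes essentially the same route as the paper: there, \cref{thr:critical_configuration_three_views} carries no separate proof but stands as the summary of \cref{sec:preliminary_three_views,sec:critical_configurations_for_three_views}, organized exactly as you describe --- reduce to maximal configurations, realize them as intersections of a compatible triple of critical quadrics, split by whether that intersection is a surface, a curve, or a finite point set, and remove the ambiguous points via \cref{thr:compatible_implies_critical} or \cref{prop:reconstructible_codim_1_in_residual}. One small citation slip: the fact that deleting points from a critical configuration preserves criticality is the unlabelled observation following \cref{def:critical_configuration}, whereas \cref{thr:critical_only_if_critical_for_fewer_views} concerns subsets of the cameras, not of the points.
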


\begin{figure*}[p]
\begin{center}
\includegraphics[width = 0.85\textwidth]{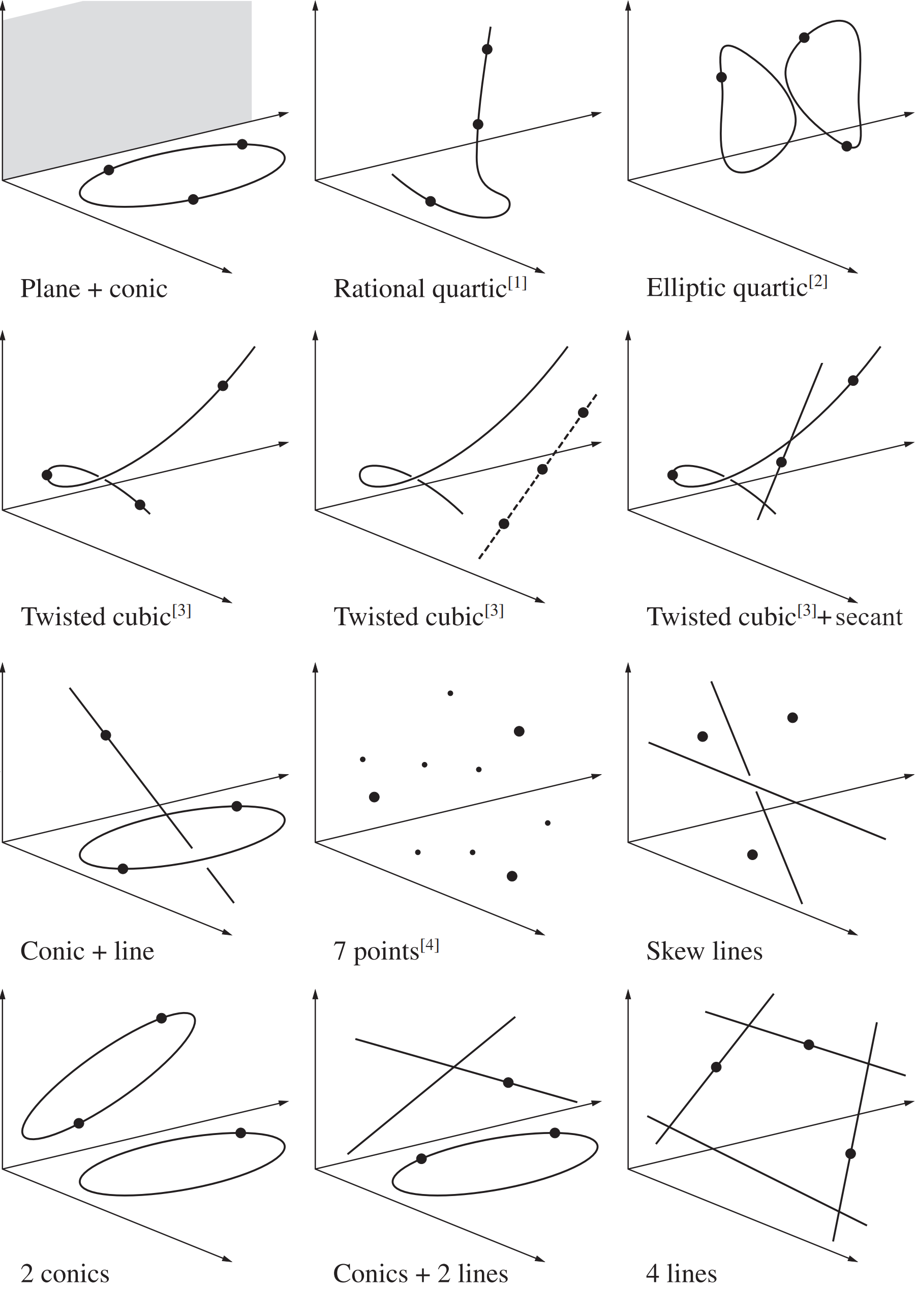}
\end{center}
\caption{\begin{small}
Critical configurations for three views.\\
{[1]} The rational quartic may have a node or cusp.\\
{[2]} The elliptic quartic may degenerate to the union of a twisted cubic and a secant line, or to the union of two (possibly reducible) conics intersecting in two points.\\
{[3]} The twisted cubic may degenerate to the union of a conic and a line, or to three lines.\\
{[4]} The 7-point configuration must satisfy the conditions in \cref{prop:7_points_critical}.
\end{small}}
\label{fig:12critical}
\label{fig:cubic+line}
\end{figure*}

\section{More than three views}
\label{sec:four_views}
\subsection{A counterexample}
\label{sec:counterexample}
By \cref{thr:critical_only_if_critical_for_fewer_views}, any critical configuration for four or more views has to be critical for each triple of views. Contrary to what is claimed in \cite[Theorem 9.34]{HK}, however, the converse is not true. Although a configuration is critical for each triple of views, the whole configuration might not be critical. The simplest counterexample is that of four cameras and six points in general position.

\begin{theorem}[\cite{6points}]
\label{thm:6points}
A set of six points and any number of cameras is a critical configuration if and only if the six points and all the cameras all lie on a smooth ruled quadric.
\end{theorem}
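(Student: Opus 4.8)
The plan is to prove the two implications separately. Necessity comes essentially from the classifications already obtained for two and three views together with the compatibility analysis of \cref{sec:compatible_triples_of_quadrics}, while sufficiency requires exhibiting one explicit conjugate configuration, which is the construction of \cite{6points}.

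\textbf{Necessity.} Suppose $(P_1,\dots,P_n,X)$ with $|X|=6$ is critical. By \cref{thr:critical_only_if_critical_for_fewer_views} it is critical for every pair and every triple of views, and by \cref{cor:critical_lies_on_intersection_of_quadrics} the six points of $X$, together with the relevant pair of camera centers, lie on $\binom n2$ critical quadrics $S_P^{ij}$; every critical quadric is ruled (\cref{fig:ruled_quadrics}). Since all the $S_P^{ij}$ contain the six fixed points of $X$, they all lie in the three-dimensional linear system of quadrics through $X$, and the pencil of those also containing a given pair $p_i,p_j$ passes through any common member $S$. The real work is to show that for $n\geq4$ the triple-view compatibility conditions (\cref{thr:compatible_quadrics}) force, in general position, all the $S_P^{ij}$ to coincide in a single ruled quadric $S$ through $X$ and \emph{every} $p_\ell$ --- the only alternative being that $X$ together with all camera centers lies on a common (possibly degenerate) elliptic quartic curve, which itself sits on a pencil of quadrics and reduces to the same conclusion. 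Finally one replaces ``ruled quadric'' by ``smooth ruled quadric'': the $8$ or $9$ points at hand lie on a pencil of quadrics whose discriminant is a real polynomial of degree at most $4$; as the parameter runs over the real projective line the defining symmetric matrix returns to its negative, so the signature of the members of the pencil must pass through $(2,2)$, and a general member of signature $(2,2)$ is a smooth real ruled quadric containing all the points. Configurations in which this pencil degenerates (for example all points coplanar) are absorbed by the ``subset'' clause of the statement.

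\textbf{Sufficiency.} Suppose $X$ and $p_1,\dots,p_n$ all lie on a smooth ruled quadric $S$. By \cref{prop:critical_points_give_critical_configuration} it is enough to produce cameras $Q_1,\dots,Q_n$ with multi-view variety different from that of $\textbf{P}$ such that $X$ lies in the set of critical points of $\textbf{P}$ with respect to $\textbf{Q}$. Following \cite{6points}, and generalising the two-view picture of \cref{fig:conjugates1}, one chooses for each pair $(i,j)$ a critical quadric $S_P^{ij}$ through the eight points $X\cup\{p_i,p_j\}$ (these leave a pencil of choices) together with a permissible pair of epipolar lines on it, arranged so that the resulting $\binom n2$ fundamental matrices $F_Q^{ij}$ are compatible; then \cref{thr:compatible_quadrics} (or \cref{prop:compatible_collinear} in the degenerate sub-cases) yields cameras $Q_1,\dots,Q_n$, and \cref{lem:residual_is_intersection_of_three_planes} is used to check that the six points of $X$ are genuine critical points and not ambiguous ones, so that $(Q_1,\dots,Q_n,\psi_P(X))$ is the desired conjugate. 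A point worth stressing is that one cannot simply take all $S_P^{ij}$ equal to $S$: by \cref{prop:rational_curves} that choice makes only a rational curve of degree at most four on $S$ critical, far too small to contain six general points and $n$ camera centers; so genuinely varying critical quadrics must be selected, and the content of the argument is that, because $X$ and all the $p_\ell$ lie on the one smooth quadric $S$, they can be chosen coherently enough that the compatibility conditions hold for every triple at once.

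\textbf{Main obstacle.} The hard part is sufficiency for $n\geq4$: producing a \emph{single} coherent choice of critical quadrics and epipolar lines that is compatible for all triples simultaneously, not merely pairwise. This is precisely the phenomenon whose failure is the counterexample of \cref{sec:counterexample} --- six points in general position are critical for each triple of views but not for all four --- and what rescues the situation here is the rigidity that a common smooth quadric imposes on the pairwise conjugates supplied by the two-view theory. A secondary but pervasive technicality, in both directions, is keeping everything real: one must verify that the quadrics, epipolar lines and conjugate cameras produced can be taken real, so that ``ruled'' genuinely means signature $(2,2)$ in the sense of \cref{fig:ruled_quadrics} and not merely ``smooth over $\mathbb{C}$''.
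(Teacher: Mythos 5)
The paper does not actually prove \cref{thm:6points}: the statement is imported verbatim from \cite{6points} and used as a counterexample to \cite[Theorem 9.34]{HK}, with only the ``if'' half restated later as part of \cref{prop:finite_points_4_views} (again by citation). So your proposal is not competing with an argument in the text; it has to stand on its own, and at present it does not.

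Both directions defer their decisive step. For necessity you write that ``the real work is to show'' that the compatibility conditions force a common smooth ruled quadric through the six points and all camera centers --- and that work is not done. It is also not quite the right target: nothing forces the quadrics $S_P^{ij}$ themselves to coincide; what is needed is the existence of a single common quadric, and already for $n=2,3$ the classification only places the eight or nine points on one \emph{critical} quadric, which may be a cone or a pair of planes, so passing to a \emph{smooth ruled} member of the pencil through those points is itself a nontrivial step. Your signature argument is the right idea there, but as stated it can fail: the signature along a real pencil can jump $(3,1)\to(1,1)\to(1,3)$ through a rank-two member without ever equalling $(2,2)$, and the statement of \cref{thm:6points} has no ``subset'' clause to absorb such degenerations, contrary to what you assert. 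For sufficiency, the entire content --- choosing, for $n\ge4$, one quadric from each of the $\binom n2$ pencils through $X\cup\{p_i,p_j\}$ together with permissible lines so that all the triple-wise conditions of \cref{thr:compatible_quadrics} and the quadruple condition \cref{eq:quadruple_compatibility} hold simultaneously --- is exactly what you label ``the content of the argument'' and ``the hard part'', and it is left unproved. Your observation that one cannot take all $S_P^{ij}$ equal to the common quadric $S$ (by \cref{prop:rational_curves}) is correct and shows you have located the difficulty, but identifying the obstacle is not the same as overcoming it. As it stands the proposal is an accurate map of a proof, not a proof; the missing steps are precisely the ones \cite{6points} supplies.
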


While six points and three cameras in general position always lie on a smooth quadric, this is not true for four cameras. So while such a configuration is critical for each triple of views, it is not critical for all four views.

The proof of \cite[Theorem 9.34]{HK} fails on two points
\begin{enumerate}
\item Given a configuration $(P_1,P_2,P_3,P_4,X)$ which is critical for each triple of views, there is a compatible triple of quadrics $S_P^{12},S_P^{13},S_P^{23}$ containing $p_1,p_2,p_3$ and $X$, and there is a compatible triple of quadrics $S_P^{12'},S_P^{14'},S_P^{24'}$ containing $p_1,p_2,p_4$ and $X$. There is however, no reason why the quadric $S_P^{12}$ should equal $S_P^{12'}$, which seems to be assumed in \cite{HK}. This is what gives rise to the 6-point counterexample.
\item Until the recent results in \cite{compatibility}, it was commonly believed that a sextuple of quadrics $S_P^{12},S_P^{13},S_P^{23},S_P^{14},S_P^{24},S_P^{34}$ was compatible as long as each triple was compatible. As it turns out, this is not the case, and additional conditions are needed to ensure compatibility. Since a critical configuration for four views is the intersection of a compatible sextuple of quadrics, we need to check some additional conditions that do not appear in the three-view case.
\end{enumerate}

As such, the case of four cameras needs to be handled on its own and does not follow directly from the three-view case. That being said, while the proof of \cite[Theorem 9.34]{HK} is flawed, the result itself is not far from being true. As we will soon show, the only three-view critical configurations that fail to remain critical when we add cameras turn out to be the 6- and 7-point configurations.

\subsection{Preliminaries}

Since any critical configuration for four or more views has to be critical for each triple of views, any critical configurations for four or more views has to be either among the ones in \cref{fig:12critical}, or has to be the proper intersection of the ones in \cref{fig:12critical} with some number of quadric and cubic surfaces. Intersecting any of the critical configurations in \cref{fig:12critical} with a surface, however, leaves either a finite set of points or something that is already in \cref{fig:12critical}. As such, in order to classify the critical configurations for four views, one needs only check which of the critical configurations from the three-view case remain critical when one adds a fourth camera. We cover these cases one by one, in the same order as in \cref{sec:critical_configurations_for_three_views}, but using more computational proofs than in the previous section. For the most part, we either prove compatibility by explicitly giving a conjugate, or by checking that the quadrics come from a set of fundamental matrices satisfying:

\begin{proposition}[{\cite[Theorem 3.6]{compatibility}}] 
\label{thm: 4tuple-condition} Let $\Set{F^{ij}}$ be a sextuple of fundamental matrices such that the three epipoles in each image do not lie on a line. Then $\Set{F^{ij}}$ is compatible if and only if the triple-wise conditions hold and
\begin{small}
\begin{align}
\begin{aligned}
\label{eq:quadruple_compatibility}
&(e_1^4)^TF^{12}e_2^3(e_1^2)^TF^{13}e_3^4(e_1^3)^TF^{14}e_4^2\\ \cdot&(e_2^4)^TF^{23}e_3^1(e_2^1)^TF^{24}e_4^3(e_3^2)^TF^{34}e_4^1\\
=&(e_1^3)^TF^{12}e_2^4(e_1^4)^TF^{13}e_3^2(e_1^2)^TF^{14}e_4^3\\ \cdot&(e_2^1)^TF^{23}e_3^4(e_2^3)^TF^{24}e_4^1(e_3^1)^TF^{34}e_4^2.
\end{aligned}
\end{align}
\end{small}
\end{proposition}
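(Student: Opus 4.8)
The plan is to prove both implications, the forward one being routine and the reverse one carrying all the substance. For the forward direction, suppose $\{F^{ij}\}$ is realised by cameras $Q_1,\dots,Q_4$; restricting to each of the four triples and applying \cref{thr:compatible_forms_noncollinear} gives the triple-wise conditions (the hypothesis that the three epipoles in each image are non-collinear rules out the collinear alternative of \cref{thr:compatible_forms_collinear}). For the identity \cref{eq:quadruple_compatibility} I would place the four camera centres at the coordinate tetrahedron of $\p3$ — legitimate because the non-collinearity of the epipoles forces the centres into general position — so that the $i$-th column of $Q_i$ vanishes, each epipole $e_i^j$ is literally the $j$-th column of $Q_i$, and each $F^{ij}$ is the $6\times6$ determinant of \cref{lem:fundamental_form}. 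Both sides of \cref{eq:quadruple_compatibility} are then products of six scalars, each of degree one in one epipole and one fundamental matrix; since the two sides carry the same multidegree in every $e_i^j$ and every $F^{ij}$ the identity is projectively well posed, and expanding each scalar factor as a product of maximal minors of camera columns turns it into a polynomial identity that the two cyclic products satisfy term by term.

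For the reverse direction, assume the triple-wise conditions and \cref{eq:quadruple_compatibility}. By compatibility of $F^{12},F^{13},F^{23}$ (\cref{thr:compatible_forms_noncollinear}) fix cameras $Q_1,Q_2,Q_3$, non-collinear and unique up to $\PGL(4)$; the goal is a fourth camera $Q_4$ whose fundamental matrix with $Q_i$ is $F^{i4}$ for $i=1,2,3$. First I would locate its centre: the relation $(e_1^4)^TF^{12}e_2^4=0$ from the triple $\{1,2,4\}$ says the lines $\overline{Q_1^{-1}(e_1^4)}$ and $\overline{Q_2^{-1}(e_2^4)}$ meet, and the analogous relations from $\{1,3,4\}$ and $\{2,3,4\}$ make all three lines $\overline{Q_i^{-1}(e_i^4)}$ pairwise meeting. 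Three pairwise-meeting lines in space either are coplanar or share a common point; coplanarity would put the non-collinear centres $q_1,q_2,q_3$ in one plane, which would have to be the plane through them, yet $\overline{Q_i^{-1}(e_i^4)}$ lies in that plane only if the three epipoles in image $i$ are collinear, contrary to hypothesis. Hence the three lines concur at a point $q_4$, which is forced to be the centre of any valid $Q_4$.

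With $q_4$ fixed, the cameras centred at $q_4$ whose fundamental matrix with $Q_1$ is proportional to $F^{14}$ form a three-dimensional family $\mathcal{F}_1$ inside the eight-dimensional space of cameras centred at $q_4$, and one gets $\mathcal{F}_2,\mathcal{F}_3$ similarly from $F^{24},F^{34}$. The remaining, non-bilinear, parts of the triple-wise conditions for $\{1,2,4\},\{1,3,4\},\{2,3,4\}$ make each pairwise intersection $\mathcal{F}_i\cap\mathcal{F}_j$ non-empty; a common member $Q_4\in\mathcal{F}_1\cap\mathcal{F}_2\cap\mathcal{F}_3$ then exists if and only if one scalar vanishes. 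The plan is to compute this scalar — tracking the free projective scales of $F^{14},F^{24},F^{34}$ carefully, the computation being genuinely nonlinear because each $\mathcal{F}_i$ is cut out using matrix inverses rather than by linear equations — and to show it is a nonzero multiple of the difference of the two sides of \cref{eq:quadruple_compatibility}. Producing such a $Q_4$ then exhibits $\{F^{ij}\}$ as compatible.

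The main obstacle is exactly this last identification: matching the scalar obstruction to the specific cyclic-product form of \cref{eq:quadruple_compatibility} rather than to some other, a priori merely equivalent, polynomial, and carrying out the scale bookkeeping cleanly enough that the final equation is coordinate-free. This is the phenomenon absent from the three-view case — where \cref{thr:compatible_forms_noncollinear} shows the pairwise epipolar conditions already suffice and no product identity is needed — so one should expect it to take some care, including checking the degenerate sub-cases where two of the lines $\overline{Q_i^{-1}(e_i^4)}$, or two of the families $\mathcal{F}_i$, coincide, and confirming that the non-collinearity of the epipoles is precisely what keeps the relevant dimension counts in force.
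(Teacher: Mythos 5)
The paper offers no proof of this proposition at all: it is imported verbatim from the cited reference on compatibility of fundamental matrices (its Theorem~3.6), so there is no internal argument to measure your attempt against, and I can only judge it on its own merits.

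Your architecture is the standard and correct one. The forward direction via restriction to triples, the location of $q_4$ as the concurrence point of the three back-projected lines $\overline{Q_i^{-1}(e_i^4)}$ (with non-collinearity of the epipoles ruling out the coplanar alternative and the coincidence of two of the lines), the dimension count for $\mathcal{F}_1$, and the reformulation of compatibility as non-emptiness of $\mathcal{F}_1\cap\mathcal{F}_2\cap\mathcal{F}_3$ — where each pairwise intersection is a single camera by uniqueness of three-view reconstruction over fixed $Q_i,Q_j$ — are all sound. The genuine gap is that the proposal stops exactly where the content of the proposition begins. Everything you establish rigorously (existence and uniqueness of $q_4$, non-emptiness of each pairwise intersection) already follows from triple-wise compatibility alone; the added value of the statement is that the obstruction to the three candidate fourth cameras coinciding is \emph{precisely} the difference of the two cyclic products in \eqref{eq:quadruple_compatibility}, and neither direction of that identification is carried out. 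In the forward direction, the claim that after placing the centres at the coordinate tetrahedron the two products match "term by term" is an assertion, not a computation: each factor $(e_i^l)^TF^{ij}e_j^k$ is only projectively defined, so the match requires a consistent choice of scales and an explicit expansion into brackets of camera columns; your multidegree check shows the identity is well posed, not that it holds. In the reverse direction you explicitly defer both the computation of the scalar obstruction and its comparison with \eqref{eq:quadruple_compatibility}. Since that single computation \emph{is} the theorem, what you have is a correct proof strategy with its decisive step missing.
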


The case of more than four cameras is further simplified by the following fact:

\begin{proposition} [{\cite[Theorem 3.12]{compatibility}}]
\label{thm:compatible_if_each_sextuple_is_compatible}
Let $\Set{F^{ij}}$ be a complete set of $n\choose2$ fundamental matrices such that for all $i,j,k$ the triple $F^{ij},F^{ik},F^{jk}$ is compatible, and such that for all $i,j,k,l$, the sextuple $F^{ij},F^{ik},F^{jk},F^{il},F^{jl},F^{kl}$ is compatible. Then $\Set{F^{ij}}$ is compatible.

Moreover, if all epipoles in each image coincide, then triple-wise compatibility implies that $\Set{F^{ij}}$ is compatible. The reconstruction in this case is a set of cameras whose centers all lie on a line.
\end{proposition}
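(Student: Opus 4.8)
The plan is to argue by induction on $n$, adjoining one camera at a time. For $n\le 4$ there is nothing to prove: $n=2$ is \cref{thr:fundforms_and_camera_pairs_are_1:1}, $n=3$ is the triple-wise hypothesis, and $n=4$ is the sextuple-wise hypothesis. So fix $n\ge 5$ and assume the statement for all smaller complete families. From the matrices alone one can test, for each triple $F^{ij},F^{ik},F^{jk}$, whether it is of \emph{collinear type} (the epipole identities of \cref{eq:collinear} hold) or of \emph{non-collinear type}; running over all triples, either every triple is of collinear type --- equivalently, all epipoles in each image coincide, the hypothesis of the ``moreover'' clause --- or some triple is of non-collinear type, and after relabelling we take it to be $\{1,2,3\}$.

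\textbf{Non-collinear case.} Pick $a\neq b$ in $\{4,\dots,n\}$ (possible since $n\ge 5$) and apply the induction hypothesis to the $(n-1)$-element index sets $I_a=\{1,\dots,n\}\setminus\{a\}$ and $I_b=\{1,\dots,n\}\setminus\{b\}$, obtaining realizations $(Q_i)_{i\in I_a}$ and $(\widetilde Q_i)_{i\in I_b}$. Their overlap contains $\{1,2,3\}$, and in both realizations the cameras $1,2,3$ realize the non-collinear-type triple $F^{12},F^{13},F^{23}$, hence have non-collinear centres. By \cref{thr:fundforms_and_camera_pairs_are_1:1} the pairs $(Q_1,Q_2)$ and $(\widetilde Q_1,\widetilde Q_2)$ agree up to a unique $H\in\PGL(4)$; replacing $(\widetilde Q_i)$ by $(\widetilde Q_iH^{-1})$ we may assume $Q_1=\widetilde Q_1$ and $Q_2=\widetilde Q_2$. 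Using that a camera is pinned down by its fundamental matrices with two fixed cameras, provided the three centres are not collinear --- with the finitely many exceptional positions handled by bringing in $Q_3$ --- one gets $Q_i=\widetilde Q_i$ for every index in the overlap. Taking $Q_a$ from the first realization and $Q_b$ from the second now gives cameras $Q_1,\dots,Q_n$ realizing every $F^{ij}$ except possibly the cross pair $F^{ab}$, which is realized in neither subfamily. To settle $F^{ab}$, apply the sextuple-wise hypothesis to $\{a,b,1,2\}$: it yields cameras realizing $F^{ab},F^{a1},F^{a2},F^{b1},F^{b2},F^{12}$, and matching them first to $Q_1,Q_2$ and then, by the rigidity just used, to $Q_a$ and $Q_b$, shows that $F^{ab}$ is the fundamental matrix of $Q_a,Q_b$. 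Hence $\{F^{ij}\}$ is compatible.

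\textbf{Collinear case (the ``moreover'').} Now every triple is of collinear type, so each epipole $e_i^j$ is independent of $j$. Take $Q_1,Q_2$ realizing $F^{12}$ (\cref{thr:fundforms_and_camera_pairs_are_1:1}); their centres span a line $L$. For each $k\ge 3$, \cref{thr:compatible_forms_collinear} applied to the compatible triple $F^{12},F^{1k},F^{2k}$ furnishes a camera $Q_k$ with centre on $L$, sharing the first-image epipole, and realizing $F^{1k}$ and $F^{2k}$. For any further $j$, applying \cref{eq:compatible_collinear} once to the given triple $F^{1j},F^{1k},F^{jk}$ and once to the (collinear, hence compatible) triple $Q_1,Q_j,Q_k$ forces the fundamental matrix of $Q_j,Q_k$ to equal $F^{jk}$. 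Thus $Q_1,\dots,Q_n$ realizes the whole family and its centres lie on $L$; observe that only triple-wise compatibility was used, which also proves the ``moreover'' statement.

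\textbf{Main obstacle.} The delicate point --- and exactly where the argument behind \cite[Theorem 9.34]{HK} fails --- is the cross pair $F^{ab}$, which never appears inside either $(n-1)$-subfamily, so triple-wise data says nothing about it; it is the sextuple-wise hypothesis, ultimately the quadruple relation \cref{eq:quadruple_compatibility}, that closes the gluing. A secondary point, routine but requiring verification, is the rigidity invoked throughout: a camera tuple whose centres are not all collinear is determined by its fundamental matrices up to $\PGL(4)$, so that two reconstructions sharing a non-collinear triple can be glued rigidly along their overlap.
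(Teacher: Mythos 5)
The paper gives no proof of this proposition: it is imported verbatim from \cite[Theorem 3.12]{compatibility}, so there is no internal argument to compare yours against. Judged on its own terms, your induction-and-gluing strategy is reasonable. The dichotomy you set up is valid, since by \cref{thr:compatible_forms_collinear,thr:compatible_forms_noncollinear} a compatible triple cannot have a mixed epipole pattern, so either every triple has coinciding epipoles or some triple is of non-collinear type. Your treatment of the ``moreover'' clause is essentially complete: constructing each $Q_k$ on the line $L$ from the triple $F^{12},F^{1k},F^{2k}$ and then forcing $F_Q^{jk}=F^{jk}$ by applying \cref{eq:compatible_collinear} to both the given triple and the realized collinear triple uses only triple-wise compatibility, as the statement requires.

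Two points in the non-collinear case need repair or justification. First, the rigidity you invoke throughout --- that a compatible non-collinear triple of fundamental matrices has a \emph{unique} realization up to $\PGL(4)$, and that the stabilizer in $\PGL(4)$ of a pair of cameras with distinct centres is trivial --- is standard but appears nowhere in the paper; \cref{thr:compatible_forms_noncollinear} only asserts existence, so a proof or citation is needed before the gluing along the overlap is licensed. Second, and more seriously, the cross-pair step has a genuine gap: you apply the sextuple hypothesis to $\{a,b,1,2\}$ and conclude $R_a=Q_a$, $R_b=Q_b$ ``by the rigidity just used'', but that rigidity relied on a third camera as a fallback whenever the added centre lies on the line through the two fixed ones, and no third camera is available inside a quadruple. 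If $q_a$ (or $q_b$) lies on $\overline{q_1q_2}$, the triple $(1,2,a)$ is of collinear type, $R_a$ is not pinned down by $F^{1a},F^{2a}$ once $R_1,R_2$ are fixed, and the identification of $F^{ab}$ with the fundamental matrix of $(Q_a,Q_b)$ fails as written. The repair is short: each of $q_a,q_b$ lies on at most one of the three lines spanned by pairs from the non-collinear triple $\{q_1,q_2,q_3\}$, so some pair $\{i,j\}\subset\{1,2,3\}$ makes both $(i,j,a)$ and $(i,j,b)$ non-collinear, and the quadruple $\{a,b,i,j\}$ serves in place of $\{a,b,1,2\}$. With these two additions the argument goes through.
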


\begin{corollary}
\label{cor:compatible_if_each_sextuple_is_compatible_quadrics}
For $n\geq4$, an $n\choose2$-tuple of quadrics is compatible if and only if each sixtuple $(S^{ij},S^{ik},S^{jk},S^{il},S^{jl},S^{kl})$ is. 

Moreover, if the cameras on the other side are all collinear, an $n\choose2$-tuple of quadrics is compatible if and only if each triple $S^{ij},S^{ik},S^{jk}$ is.
\end{corollary}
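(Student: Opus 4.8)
The plan is to reduce the statement to its fundamental-matrix analogue, Proposition~\ref{thm:compatible_if_each_sextuple_is_compatible}, through the dictionary $S^{ij}=P_i^{T}F^{ij}P_j$ relating critical quadrics through $p_i,p_j$ to fundamental matrices (\cref{lem:correspndence_between_lines_and_quadrics,prop:conjugates_and_permissible_lines}).

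One implication is immediate. If the $\binom{n}{2}$-tuple $\Set{S^{ij}}$ is compatible, there are cameras $Q_1,\dots,Q_n$ whose pairwise fundamental matrices $F^{ij}$ satisfy $S^{ij}=P_i^{T}F^{ij}P_j$; restricting the $Q$'s to any six (resp.\ three) indices gives a compatible sextuple (resp.\ triple) of fundamental matrices pulling back to the corresponding quadrics, so every sextuple (resp.\ triple) of the $S^{ij}$ is compatible.

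For the converse it suffices to exhibit a single complete set of fundamental matrices $\Set{F^{ij}}$ with $S^{ij}=P_i^{T}F^{ij}P_j$ such that every triple $F^{ij},F^{ik},F^{jk}$ and every sextuple $F^{ij},F^{ik},F^{jk},F^{il},F^{jl},F^{kl}$ is compatible; Proposition~\ref{thm:compatible_if_each_sextuple_is_compatible} then produces cameras $Q_1,\dots,Q_n$ realizing the $F^{ij}$, and since these pull back to the $S^{ij}$ the whole tuple of quadrics is compatible. To build $\Set{F^{ij}}$, fix indices $1,2,3,4$ (using $n\geq4$); by hypothesis the sextuple $S^{12},S^{13},S^{23},S^{14},S^{24},S^{34}$ is compatible, so fix cameras $Q_1,\dots,Q_4$ realizing it, together with their fundamental matrices $F^{ij}$, $i,j\in\Set{1,2,3,4}$. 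For each further index $m\geq5$ the sextuple $S^{12},S^{13},S^{23},S^{1m},S^{2m},S^{3m}$ is compatible; the cameras realizing it restrict, up to an element of $\PGL(4)$, to $Q_1,Q_2,Q_3$ --- here one uses that a compatible triple of quadrics, equivalently a compatible triple of fundamental matrices, determines the three cameras up to $\PGL(4)$ in the non-collinear case (\cref{thr:compatible_forms_noncollinear} and \cite{compatibility}) --- so applying that $\PGL(4)$-element to the remaining camera defines $Q_m$ and the relations $S^{im}=P_i^{T}F^{im}P_m$ for $i\in\Set{1,2,3}$. The relations $S^{im}=P_i^{T}F^{im}P_m$ for $4\leq i<m$ then follow by repeating the argument on the sextuples indexed by $\Set{1,2,i,m}$, with $Q_1,Q_2,Q_i$ already fixed. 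Every triple and sextuple among the resulting $F^{ij}$ is compatible because it is realized by an honest sub-tuple of the cameras $Q$, so Proposition~\ref{thm:compatible_if_each_sextuple_is_compatible} applies. For the collinear addendum, \cref{prop:compatible_collinear} and \cref{thr:compatible_forms_collinear} identify a collinear conjugate configuration with the case where all epipoles coincide in every image; the ``moreover'' part of Proposition~\ref{thm:compatible_if_each_sextuple_is_compatible} then says triple-wise compatibility of the $F^{ij}$ already suffices, and the same construction, now needing only agreement across triples, gives the claim.

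The crux is the gluing step: one must check that the fundamental matrix forced on a pair $(i,j)$ by one sextuple agrees with the one forced by another, so that the local families of cameras assemble into one global family. This is routine when a critical quadric has only finitely many fundamental matrices in its fibre, but for the degenerate critical quadrics (cones with a camera at the vertex, pairs of planes, double planes), where the fibre is a positive-dimensional family, one has to verify that the extra freedom can be chosen coherently --- which is exactly the coherence encoded by the compatibility results of \cite{compatibility}, so that the corollary is in essence their translation from fundamental matrices to quadrics.
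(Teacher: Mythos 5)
Your first two paragraphs reproduce the paper's own argument: the paper dispatches this corollary in two sentences, observing that by definition a tuple of quadrics is compatible exactly when it is the pullback of a compatible tuple of fundamental matrices, so the \enquote{only if} direction is restriction and the \enquote{if} direction (including the collinear addendum) is \cref{thm:compatible_if_each_sextuple_is_compatible} verbatim.

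Your third paragraph goes beyond the paper and puts a finger on a genuine subtlety the paper passes over in silence --- different sextuples could a priori be witnessed by different fundamental matrices on a shared pair --- but your patch does not close it. The pivotal claim, that \enquote{a compatible triple of quadrics, equivalently a compatible triple of fundamental matrices, determines the three cameras up to $\PGL(4)$}, conflates the two objects: the passage from a critical quadric back to a fundamental matrix is not unique. A smooth critical quadric through two centres in general position is the pullback of two distinct fundamental matrices (\cref{tab:configurations_and_their_conjugates}), and the degenerate critical quadrics (cone with a centre at the vertex, pairs of planes, double plane) are pullbacks of one-parameter families (\cref{prop:conjugates_and_permissible_lines}); accordingly a single compatible triple of quadrics can be realized by a positive-dimensional family of mutually inequivalent camera triples, as in \cref{prop:plane+conic_critical}, where all three quadrics are the same pair of planes. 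Hence the element of $\PGL(4)$ you use to align the cameras of the sextuple $\Set{1,2,3,m}$ with $Q_1,Q_2,Q_3$ need not exist, and your closing sentence concedes exactly this for the degenerate quadrics without supplying the verification; appealing to \enquote{the coherence encoded by \cite{compatibility}} is circular, since those results concern one global set of fundamental matrices, which is precisely what is to be produced. The paper sidesteps the issue by implicitly reading the hypothesis as furnishing a single global set of fundamental matrices whose restrictions witness the sextuple-wise compatibility, and in every application it constructs such a global set explicitly (for instance via \cref{lem:quartic_lemma} or \cref{lem:pencil_has_one_compatible}); under your stronger, purely quadric-level reading of the hypothesis, the gluing step still lacks a proof.
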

\begin{proof}
By definition, a set of quadrics is compatible if and only if it is the pullback of a set of compatible fundamental matrices. So this result follows immediately from \cref{thm:compatible_if_each_sextuple_is_compatible}.
\end{proof}

\begin{corollary}
Any configuration that is critical for each quadruple of views is critical. 
\end{corollary}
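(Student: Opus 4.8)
The plan is to reduce the claim, via \cref{cor:compatible_if_each_sextuple_is_compatible_quadrics}, to producing a single compatible $\binom{n}{2}$-tuple of quadrics through $X$, and then to assemble a conjugate configuration from it. For $n\le 3$ there is nothing to prove and for $n=4$ the statement is a tautology, so assume $n\ge 5$ and let $(P_1,\dots,P_n,X)$ be critical for each quadruple of views. Applying \cref{thr:critical_only_if_critical_for_fewer_views} inside a quadruple containing a prescribed pair, the configuration is critical for every pair and every triple; in particular, for each pair $(i,j)$ there is a critical quadric through $p_i$, $p_j$ and $X$ (cf.\ \cref{thr:critical_conf_for_two_views}).

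Suppose first that this quadric is \emph{unique} for every pair, say $S_P^{ij}$. Fix a quadruple $\{i,j,k,l\}$: criticality of $(P_i,P_j,P_k,P_l,X)$ yields a conjugate configuration on those four views, hence a compatible sextuple of fundamental matrices whose six pulled-back quadrics $P_a^{T}F_Q^{ab}P_b$ all contain $X$ and the relevant camera centers; by uniqueness these are precisely $S_P^{ij},S_P^{ik},S_P^{jk},S_P^{il},S_P^{jl},S_P^{kl}$. Thus every sextuple of the family $\{S_P^{ij}\}$ is compatible, so by \cref{cor:compatible_if_each_sextuple_is_compatible_quadrics} the whole family is compatible, and there exist cameras $Q_1,\dots,Q_n$ with $S_P^{ij}=P_i^{T}F_Q^{ij}P_j$ for all $i,j$. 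For any $x\in X$ the lines $\overline{Q_i^{-1}(P_ix)}$ meet pairwise (because $x$ lies on every $S_P^{ij}$) and, for every triple $i,j,k$, meet in a single point (because within a quadruple containing $i,j,k$ the point $x$ has a conjugate point lying on all three of $\overline{Q_i^{-1}(P_ix)},\overline{Q_j^{-1}(P_jx)},\overline{Q_k^{-1}(P_kx)}$). Since the centers $q_1,\dots,q_n$ are distinct, such a family of lines has a common point $y_x$; setting $Y=\{y_x:x\in X\}$ gives $\phi_{\textbf{Q}}(Y)=\phi_{\textbf{P}}(X)$ with differing multi-view varieties, so $(Q_1,\dots,Q_n,Y)$ is a conjugate configuration and $(P_1,\dots,P_n,X)$ is critical.

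It remains to handle the case where, for some pair $(i,j)$, the quadric through $p_i,p_j$ and $X$ is not unique, i.e.\ $X\cup\{p_i,p_j\}$ lies in the base locus of a pencil of quadrics. That base locus is a curve $C$ of degree at most $4$ (possibly degenerate) containing $X$, and criticality for each triple — via the three-view classification of \cref{thr:critical_configuration_three_views} (see \cref{fig:12critical}) — forces $C$, together with the camera centers, to be one of the curve configurations analysed in \cref{sec:quartic_curves,sec:cubic_curves,sec:curves_of_lower_degree}; the constructions there depend only on the position of the camera centers on the curve and build a compatible $\binom{n}{2}$-tuple of quadrics through $C$, whence a conjugate configuration as above. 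Point sets too small to determine the quadrics are subconfigurations of these, or of the six-point configuration of \cref{thm:6points}, and are critical for the same reason.

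I expect the main obstacle to be exactly the consistency step hidden in "by uniqueness these are precisely $S_P^{ij},\dots$". A priori the conjugate cameras produced for two different quadruples need not agree on their common views, and three-view critical configurations have, in general, infinitely many conjugates, so one cannot simply extract consistency from uniqueness of the conjugate. Uniqueness of the quadric through $X$ and two camera centers is what forces agreement when $X$ is large enough; carving out and treating the remaining degenerate cases (the pencil/curve situation and the very small point sets) so that the per-quadruple compatible sextuples genuinely glue to a global compatible family is where the real work lies, and it is there that the hypothesis "critical for \emph{each} quadruple", rather than for a single one, is essential.
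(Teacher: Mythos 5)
Your proposal contains a genuine gap, and it sits in Case 1 at the step where the conjugate point set $Y$ is produced. Having obtained globally compatible cameras $Q_1,\ldots,Q_n$ from \cref{cor:compatible_if_each_sextuple_is_compatible_quadrics}, you argue that for $x\in X$ the lines $\overline{Q_i^{-1}(P_ix)},\overline{Q_j^{-1}(P_jx)},\overline{Q_k^{-1}(P_kx)}$ are concurrent because $x$ has a conjugate point inside some quadruple containing $i,j,k$. That conjugate point exists with respect to the conjugate cameras of that quadruple, and those need not agree (even up to $\PGL(4)$) with the restriction of your global $\textbf{Q}$: uniqueness of the quadric $S_P^{ij}$ does not give uniqueness of the fundamental matrix realizing it, since by \cref{tab:configurations_and_their_conjugates} a smooth critical quadric is the pullback of two distinct fundamental matrices and the degenerate ones of infinitely many, so two different compatible sextuples can pull back to the same six quadrics. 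Hence all that is available for the global $\textbf{Q}$ is pairwise intersection of the lines, and the coplanar-but-not-concurrent alternative --- the ambiguous points of \cref{lem:residual_is_intersection_of_three_planes} --- is not excluded. The configurations whose conjugate cameras are collinear (points and cameras on a rational quartic, \cref{prop:rational_quartic_critical_4+}) fall under your Case 1, and there the lines are automatically coplanar for \emph{every} point, so concurrency is precisely what requires proof and cannot be extracted from pairwise intersection plus per-quadruple criticality; identifying the critical locus as a proper subvariety and checking that $X$ lies in it is exactly the work the paper does configuration by configuration.

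Case 2 is a deferral rather than a proof: the assertion that the base-locus curve together with \emph{all} camera centers is forced into one of the configurations of \cref{sec:quartic_curves,sec:cubic_curves,sec:curves_of_lower_degree}, and that "the constructions there" yield a compatible $\binom{n}{2}$-tuple, presupposes the propositions of \cref{sec:four_views} (whose proofs run through explicit conjugates and the appendix computations) and passes over centers not lying on the curve, reducible base loci, and finite $X$. The six-point case with five or more cameras is a concrete hole: by \cref{thm:6points} criticality there is equivalent to one smooth critical quadric containing all points and all centers, whereas quadruple-wise criticality only provides one such quadric per quadruple of centers; "critical for the same reason" does not bridge this, and what does is a separate linear-algebra step (the centers cut out hyperplanes in the web of quadrics through the six points, and every four of these hyperplanes meeting forces all of them to meet, a rank condition on maximal minors) that is absent from the proposal; the seven-point case likewise calls for the argument of \cref{lem:7points_2cameras_fix_third_camera} rather than the curve analysis. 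For comparison, the paper offers no standalone proof of this corollary --- it is presented as an immediate consequence of \cref{cor:compatible_if_each_sextuple_is_compatible_quadrics}, with the substantive verification carried out afterwards for each configuration type --- so your attempt at a uniform derivation is a reasonable ambition, but it founders on the two points above.
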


\begin{lemma}
\label{lem:pencil_has_one_compatible}
Let $\Set{P_i}$ be a set of four cameras, and let $\Set{S_P^{ij}}$ be a set of six irreducible critical quadrics such that each triple of quadrics $S_P^{ij},S_P^{ik},S_P^{jk}$ is compatible (in the non-collinear sense). If we fix five of the quadrics and let the final one vary over all quadrics in space, then
\begin{enumerate}
\item The final quadric can move in a linear family without breaking triple-wise compatibility.
\item There is exactly one quadric in the pencil such that the whole sextuple of quadrics is compatible.
\end{enumerate}
\end{lemma}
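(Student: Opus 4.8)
The plan is: by the symmetry of the index set, assume the quadric that varies is $S_P^{34}$. From the five fixed quadrics, pass to a choice of rank-$2$ fundamental matrices $F_Q^{ij}$ representing them (so that $S_P^{ij}=P_i^{T}F_Q^{ij}P_j$) and chosen so that the two triples not involving $S_P^{34}$, namely $(S_P^{12},S_P^{13},S_P^{23})$ and $(S_P^{12},S_P^{14},S_P^{24})$, are realised by \emph{compatible} matrices; this is possible by the triple-wise hypothesis, though one must commit here to one of the (at most two, one per ruling) rank-$2$ representatives of each smooth quadric among the five, so that ``fix five of the quadrics'' is unambiguous. For the sixth quadric I instead hunt for its rank-$2$ representative $F$; by \cref{lem:correspndence_between_lines_and_quadrics} the pullback $F\mapsto P_3^{T}FP_4$ has fibres equal to lines through $F_P^{34}$, so classifying the admissible quadrics $S_P^{34}$ amounts to classifying the admissible $F=F_Q^{34}$.

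For (1): varying $S_P^{34}$ only affects the triples $(S_P^{13},S_P^{14},S_P^{34})$ and $(S_P^{23},S_P^{24},S_P^{34})$, so I would apply \cref{thr:compatible_forms_noncollinear} to each. Write $a$ and $b$ for the left and right nullspaces of the unknown $F$ (the epipoles $e_{Q_3}^{4}$ and $e_{Q_4}^{3}$). Two of the six resulting equations, $(e_{Q_1}^{4})^{T}F_Q^{13}a=0$ and $(e_{Q_2}^{4})^{T}F_Q^{23}a=0$, confine $a$ to the intersection of two \emph{fixed} lines in $\p2$, hence to a single point in the generic case where those lines are distinct; the two equations of the same shape in $b$ likewise determine $b$. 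The two remaining equations, $(e_{Q_3}^{1})^{T}Fe_{Q_4}^{1}=0$ and $(e_{Q_3}^{2})^{T}Fe_{Q_4}^{2}=0$, are linear in $F$ alone. The space of $3\times 3$ matrices with prescribed left and right nullspaces $a,b$ is four-dimensional, with generic member of rank exactly $2$; cutting by the two hyperplanes leaves a $\p1$ of admissible $F$. Pulling back gives a pencil of quadrics: $F\mapsto P_3^{T}FP_4$ is linear, and injective on this $\p1$ because $F_P^{34}$ (whose left nullspace $e_{P_3}^{4}$ differs generically from $a$) does not lie in the $2$-plane spanned by the $\p1$. Each quadric in the pencil passes through $p_3,p_4$ and is ruled (it contains $\overline{P_4^{-1}(b)}$, as in \cref{lem:properties_of_the_quadrics}), hence critical, and irreducible away from its finitely many rank-$1$ members. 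This is (1).

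For (2): by \cref{cor:compatible_if_each_sextuple_is_compatible_quadrics}, equivalently \cref{thm: 4tuple-condition} (whose hypothesis that the three epipoles in each image are non-collinear is generic, and is exactly what forces the scalar factors below to be nonzero), the whole sextuple is compatible precisely when the four triple-wise conditions hold — which they do throughout the pencil, by construction — and \eqref{eq:quadruple_compatibility} holds as well. The key observation is that in \eqref{eq:quadruple_compatibility} the unknown $F$ occurs only in the factor $(e_{Q_3}^{2})^{T}Fe_{Q_4}^{1}$ on the left and $(e_{Q_3}^{1})^{T}Fe_{Q_4}^{2}$ on the right, while the only epipoles of $F$ appearing anywhere are $e_{Q_3}^{4}=a$ and $e_{Q_4}^{3}=b$, already fixed by (1); hence every other factor is a fixed nonzero scalar, and \eqref{eq:quadruple_compatibility} reduces to $A\,(e_{Q_3}^{2})^{T}Fe_{Q_4}^{1}=B\,(e_{Q_3}^{1})^{T}Fe_{Q_4}^{2}$ with constants $A,B\neq 0$ — one linear condition on $F$. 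Parametrising the pencil as $F_\lambda=F_0+\lambda F_1$, this becomes $c_0+\lambda c_1=0$; in the generic situation $c_1\neq 0$ (otherwise compatibility would hold for every member of the pencil or for none, both excluded here), so there is a unique $\lambda$, hence a unique admissible $F$, hence — by the injectivity from (1) — a unique quadric $S_P^{34}$ in the pencil making the sextuple compatible. This is (2).

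I expect the load-bearing step to be the bookkeeping in \eqref{eq:quadruple_compatibility}: confirming that, once the five quadrics are fixed, every factor but one on each side is constant, so the quadruple obstruction collapses to a single linear equation in the last quadric — that is what upgrades an a priori algebraic condition to ``exactly one point of a pencil''. The remaining care is entirely in the non-degeneracy assumptions (a consistent ruling choice for the smooth quadrics among the five; distinctness of the two line pairs defining $a$ and $b$; non-collinearity of the epipoles in each image; $c_1\neq 0$), which are precisely the hypotheses under which ``a linear family'' and ``exactly one'' are literally true rather than only generically so.
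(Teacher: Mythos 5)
Your proposal is correct and takes essentially the same route as the paper's proof: pass to the fundamental matrices, observe that the six triple-wise conditions fix the epipoles $e_{Q_3}^{4},e_{Q_4}^{3}$ and restrict $F_Q^{34}$ to a pencil, and then see that \cref{thm: 4tuple-condition} reduces on that pencil to a single linear condition in $F_Q^{34}$, hence one solution. Your write-up is only more explicit in the bookkeeping (which equations determine the nullspaces, the choice of rank-2 representative for a smooth quadric, and the genericity behind ``exactly one''), where the paper settles for a degrees-of-freedom count.
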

\begin{proof}
Each irreducible critical quadric $S_P^{ij}$ is the pullback of a unique fundamental matrix $F_Q^{ij}$. The set of quadrics is compatible if it is the pullback of a set of compatible fundamental matrices. Fixing five of the quadrics corresponds to fixing five of the fundamental matrices. Let the final quadric (the one we do not fix) be the quadric $S_P^{34}$, the pullback of $F_Q^{34}$.
\begin{enumerate}
\item A triple of fundamental matrices is compatible if and only if
\begin{align*}
e_{Q_i}^{k}F_Q^{ij}e_{Q_j}^{k}=0
\end{align*} 
for all triples of indices. The matrix $F_P^{34}$ appears in two triples of cameras, namely $134$ and $234$. For each of these triples, triple-wise compatibility gives three conditions on $F_P^{34}$ for a total of 6. Since a fundamental matrix has seven degrees of freedom, the triple-wise conditions restrict $F_P^{34}$ to a pencil. 
\item By \cref{thm: 4tuple-condition}, a sextuple of fundamental matrices is compatible if each triple is compatible and it satisfies 
\begin{small}
\begin{align}
\tag{\ref{eq:quadruple_compatibility}}
\begin{aligned}
&(e_1^4)^TF^{12}e_2^3(e_1^2)^TF^{13}\underline{e_3^4}(e_1^3)^TF^{14}e_4^2\\ \cdot&(e_2^4)^TF^{23}e_3^1(e_2^1)^TF^{24}\underline{e_4^3}(e_3^2)^T\underline{F^{34}}e_4^1\\
=&(e_1^3)^TF^{12}e_2^4(e_1^4)^TF^{13}e_3^2(e_1^2)^TF^{14}\underline{e_4^3}\\ \cdot&(e_2^1)^TF^{23}\underline{e_3^4}(e_2^3)^TF^{24}e_4^1(e_3^1)^T\underline{F^{34}}e_4^2.
\end{aligned}
\end{align}
\end{small}
For ease of reading, the variables that are not fixed by fixing the first five quadrics are underlined. Moreover, the fact that $F_Q^{34}$ has to satisfy triple-wise compatibility with the triples 134 and 234 fixes its left and right nullspaces $e_{Q_3}^{4}$ and $e_{Q_4}^{3}$. This leaves $F_Q^{34}$ as the only non-fixed variable in \cref{eq:quadruple_compatibility}, turning it into a linear equation in the entries of $F_Q^{34}$. This in turn means that there is exactly one value of $F_Q^{34}$ satisfying \cref{eq:quadruple_compatibility} and hence exactly one quadric making the sextuple compatible. \qedhere
\end{enumerate}
\end{proof}

\subsection{Critical configurations}

\begin{proposition}
A set of points $X$ lying on the union of a plane and a conic curve together with any number of cameras whose centers lie on that same conic form a critical configuration.
\end{proposition}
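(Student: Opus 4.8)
The plan is to reduce the $n$-camera "plane + conic" statement to the three-camera case already established in \cref{prop:plane+conic_critical}, using the compatibility-gluing machinery of \cref{cor:compatible_if_each_sextuple_is_compatible_quadrics} and \cref{lem:pencil_has_one_compatible}. Since all camera centers $p_1,\ldots,p_n$ lie on the conic $C$, which together with the plane $\Pi$ carries the point set $X$, we want to produce a compatible $\binom{n}{2}$-tuple of critical quadrics $\{S_P^{ij}\}$, each of the form "$\Pi \cup$ (plane through $p_i,p_j$)", whose common intersection contains $X$. By \cref{cor:critical_configurations_lie_on_quadrics_and_cubics} and \cref{thr:compatible_implies_critical}, producing such a compatible tuple (with the ambiguous locus not meeting $X$) is exactly what is needed.

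First I would recall the structure from the three-view proof: for a conic $C$ through $p_i,p_j,p_k$, the relevant quadric $S_P^{ij}$ is $\Pi$ union the plane spanned by the secant line $\overline{p_ip_j}$ and one further choice, and the permissible-line data is governed by the Braikenridge–Maclaurin/Pascal configuration on the hexagon with vertices $p_i$, $a_j$ (intersections of epipolar lines). The key freedom is: once a single auxiliary point $a_1 \in C$ is chosen (not a camera center, not on $\Pi\cap S_P$), all the points $a_j$ and hence all the epipolar lines $g_{P_i}^{j}$ are forced, and each pair automatically forms a permissible pair lying in the common plane $\Pi$. So the plan is: pick one global auxiliary point $a\in C$ off $\Pi$; for each pair $(i,j)$ let $g_{P_i}^{j}$ be the line $\overline{p_i x}$ where $x$ is obtained from $a$ by the construction in \cref{prop:plane+conic_critical}, ensuring consistency by the Braikenridge–Maclaurin theorem applied on the full $n$-point/$n$-auxiliary hexagon configuration on $C$; this gives six lines per triple satisfying \cref{thr:compatible_quadrics}, so every triple is compatible.

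Next I would invoke \cref{cor:compatible_if_each_sextuple_is_compatible_quadrics}: it remains to check sextuple compatibility for each $\{i,j,k,l\}$, i.e., \cref{eq:quadruple_compatibility}. Here I would argue that because all $g_{P_i}^{j}$ lie in the single plane $\Pi$ and all the auxiliary points are chosen coherently from the one point $a$, the epipoles $e_{P_i}^{j}$ in the $i$-th image all lie on the line $P_i(\Pi)$; the degeneracy means \cref{thm: 4tuple-condition} does not apply directly (its hypothesis is that the three epipoles in each image are not collinear), so instead I would appeal to \cref{lem:pencil_has_one_compatible} together with the explicit reducible structure — or, more cleanly, argue directly: the reconstruction $(Q_1,\ldots,Q_n,Y)$ can be built one camera at a time, since each $F_Q^{ij}$ is the fundamental matrix of the conjugate pair and the conjugate configuration from \cref{prop:plane+conic_critical} (plane $+$ conic through the $q_i$) is consistent across all pairs by the same Braikenridge–Maclaurin argument on the conjugate side. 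This constructs a genuine conjugate configuration explicitly, bypassing the sextuple identity.

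The main obstacle is precisely this degenerate-epipole situation: the clean compatibility criteria (\cref{thm: 4tuple-condition}, \cref{thm:compatible_if_each_sextuple_is_compatible}) are stated under non-collinearity of the epipoles in each image, which fails here since $\Pi$ contains all camera centers and all epipolar lines. So the real work is to verify compatibility by hand in this collinear-epipole regime — most likely by exhibiting the conjugate cameras $Q_i$ directly (each $Q_i$ determined up to the $\PGL(3)$ freedom by its center $q_i$ on the conjugate conic, with the $q_i$ chosen so that $P_i^{-1}Q_i^{-1}$-conjugation sends $X\cap\Pi$ to itself and $X\cap C$ to the conjugate conic), and then checking that $\phi_{\textbf{Q}}$ agrees with $\phi_{\textbf{P}}$ on $X$ pairwise, which by \cref{lem:joint-camera_map_is_isomorphism} forces agreement for the full tuple. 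Finally I would note the conjugate is again of "plane $+$ conic" type with its camera centers on the conic, matching \cref{prop:plane+conic_critical}, and that the ambiguous locus (the plane $\Pi$ from \cref{lem:residual_is_intersection_of_three_planes}) is already part of the critical set, so no points of $X$ need be removed.
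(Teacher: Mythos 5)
Your overall route is the same as the paper's: reduce to four cameras via \cref{cor:compatible_if_each_sextuple_is_compatible_quadrics}, recognize that the generic sextuple criterion \cref{thm: 4tuple-condition} is unavailable because the epipoles in each image are collinear, and then establish criticality by producing a conjugate directly. (One caveat on your opening sentence: the reduction is to four views, not to three --- criticality for every triple does not imply criticality for a quadruple, as the six-point counterexample shows.) The paper's proof consists precisely of the part you leave open: after fixing coordinates it writes down the conjugate cameras $Q_1,\ldots,Q_4$ explicitly in terms of the parameters $\lambda_1,\lambda_2,\mu_1,\mu_2$ of the plane and the conic, together with explicit formulas for the conjugate of every point of $\Pi$ and of $C$; that computation is what proves compatibility in the degenerate regime. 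Your proposal defers exactly this verification (``most likely by exhibiting the conjugate cameras directly''), so as written it is a plan rather than a proof.

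The shortcuts you suggest for the missing step do not hold up. Pairwise agreement of $\phi_\textbf{Q}$ with $\phi_\textbf{P}$ on $X$ does not force agreement for the full tuple, and \cref{lem:joint-camera_map_is_isomorphism} does not supply this: injectivity of one joint camera map says nothing about whether the conjugate points obtained from different pairs coincide, and that cross-pair consistency is exactly the ambiguous-point phenomenon of \cref{lem:residual_is_intersection_of_three_planes}. Likewise \cref{lem:pencil_has_one_compatible} is inapplicable: it assumes six irreducible critical quadrics and triple-wise compatibility in the non-collinear sense, whereas here every $S_P^{ij}$ is forced to equal the single reducible quadric $\Pi\cup\Pi_C$ (it has to contain both the plane and the whole conic), so there is no pencil of candidates for a sixth quadric; for the same reason your description of the quadrics as ``$\Pi\cup$ a plane through $p_i,p_j$'' varying with the pair is off. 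Finally, the epipolar lines $g_{P_i}^{j}$ lie in $\Pi_C$, the plane of the conic and of the camera centers, not in $\Pi$, and the ambiguous locus is $\Pi_C$, which is not wholly critical --- only the conic inside it is --- so the justification ``the ambiguous plane is already part of the critical set'' is not right, even though the conclusion that no point of $X$ needs removal survives because $X\cap\Pi_C=C$.
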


\begin{proof}
By \cref{cor:compatible_if_each_sextuple_is_compatible_quadrics}, it suffices to show that this configuration is critical for four cameras. Let $X$ be the union of a plane $\Pi$ and a conic curve $C$ and let $P_1,\ldots,P_4$ be a set of four cameras whose centers all lie on the conic curve. Since we are free to choose coordinates in $\p3$ without affecting criticality, we can take the four camera centers to be 
\begin{align*}
p_1=\begin{bmatrix}
1:0:0:0
\end{bmatrix}, \quad p_3=\begin{bmatrix}
0:0:1:0
\end{bmatrix},\\  p_2=\begin{bmatrix}
0:1:0:0
\end{bmatrix}, \quad p_4=\begin{bmatrix}
1:1:1:0
\end{bmatrix}.
\end{align*}
By also requiring the plane $\Pi$ to pass through the point $[0:0:0:1]$, we have fixed the coordinate frame in $\p3$. Moreover, the only property of cameras that affects criticality is their camera centers, so we can choose coordinates in each image so that the four cameras are:
\begin{align*}
P_1&=\begin{bmatrix}
0&1&0&0\\
0&0&1&0\\
0&0&0&1
\end{bmatrix}, & \quad
P_2&=\begin{bmatrix}
1&0&0&0\\
0&0&1&0\\
0&0&0&1
\end{bmatrix},\\
P_3&=\begin{bmatrix}
1&0&0&0\\
0&1&0&0\\
0&0&0&1
\end{bmatrix}, & \quad
P_4&=\begin{bmatrix}
0&0&0&1\\
1&-1&0&0\\
0&1&-1&0
\end{bmatrix}.
\end{align*}
When $X$ is the union of a plane $\Pi$ and a conic $C$, there is only one quadric containing $X$, namely the reducible quadric consisting of the two planes $\Pi$ and $\Pi_C$ where $\Pi_C$ denotes the plane containing $C$. Denote the points in $\p3$ by $[x_1:x_2:x_3:x_4]$. We have chosen coordinates so that $\Pi_C$ is the plane where $x_4=0$. Assuming $\Pi$ does not pass through $p_3$ (which can always be avoided through a relabelling of the cameras) it will be the plane where $x_3-\lambda_1 x_1+\lambda_2 x_2=0$ for some $(\lambda_1,\lambda_2)\in\mathbb{R}^{2}$. Likewise, the conic $C$ will be given by $\mu_1x_2(x_1-x_3)+\mu_2x_3(x_1-x_2)=0$ for some $[\mu_1:\mu_2]\in\p1$.

The quadrics $S_P^{ij}$ must all be taken to be equal to the quadric consisting of the two planes $\Pi$ and $\Pi_C$. Moreover, the twelve epipolar lines $g_{P_i}^{j}$ must be chosen in such a way that:
\begin{enumerate}
\item $g_{P_i}^{j}$ passes through $p_i$ and lies in $\Pi_C$.
\item $g_{P_i}^{j}$ and $g_{P_j}^{i}$ intersect in a point on $\Pi$.
\item $g_{P_i}^{k}$ and $g_{P_j}^{k}$ intersect in a point on $C$.
\end{enumerate}
Once such a choice of lines is made (a one-dimensional family of valid choices exists), one can compute the fundamental matrices $F_Q^{ij}$ by letting $F_Q^{ij}$ be the unique fundamental matrix that pulls back to $S_P^{ij}$ (under the joint camera map of $P_i,P_j$) and whose epipoles pull back to $g_{P_i}^{j}$ and $g_{P_j}^{j}$ (under that same map). If the matrices $F_Q^{ij}$ are then compatible, and in this particular case, they always are, one can recover the conjugate cameras $Q_i$, and then find the conjugate points $Y$ by triangulation, thus constructing a conjugate configuration. One example of such a conjugate configuration consists of the cameras 
\begin{align*}
Q_1&=\begin{bmatrix}
1&0&0&0\\
0&1&0&0\\
0&0&1&0
\end{bmatrix}, \quad
Q_2=\begin{bmatrix}
\lambda_2& -1& 0& 0\\
0&0&0&1\\
0& 0& \lambda_1& 0
\end{bmatrix},\\
Q_3&=\begin{bmatrix}
-\lambda_2\mu_1& \mu_1 + \lambda_1\mu_1 + \lambda_1\mu_2&   0& \mu_1 + \mu_2\\
 \lambda_1\mu_1&           \lambda_1\mu_2&   0&     \mu_2\\
   0&             0& \lambda_1\mu_1&     0
\end{bmatrix},\\
Q_4&=\begin{bmatrix}
         0&   0& \lambda_1\mu_1&     0\\
-\mu_1(\lambda_1 + \lambda_2)&   \mu_1&   0&     0\\
       \lambda_1\mu_1& \lambda_1\mu_2&   0& \mu_1 + \mu_2
\end{bmatrix},
\end{align*}
where the conjugate to any point \begin{align*}
\begin{bmatrix}x_1\\ x_2\\ \lambda_1 x_1+\lambda_2 x_2\\ x_4\end{bmatrix}\in\Pi
\end{align*}
is the point
\begin{align*}
\begin{bmatrix} x_2\\ \lambda_1x_1 + \lambda_2 x_2\\ x_4\\ - \lambda_1^2x_1 - \lambda_1\lambda_2 x_2\end{bmatrix},
\end{align*} 
and the conjugate to any point 
\begin{align*}
\begin{bmatrix}	(\mu_1 + \mu_2)x_2x_3\\
				(\mu_1x_2 + \mu_2x_3)x_2\\
				(\mu_1x_2 + \mu_2x_3)x_3\\
 0\end{bmatrix}\in C 
 \end{align*}
is the point 
\begin{align*}
\begin{bmatrix} 
(\lambda_1+ \lambda_2)x_2^2\\
(\lambda_1+ \lambda_2)x_2x_3\\
0\\
-(x_3 - \lambda_2x_2)(\mu_1x_2 + \mu_2x_3)\end{bmatrix}.
\end{align*} \qedhere

\end{proof}

\subsubsection*{Curves of degree four}

\begin{proposition}[{\cite[Corollary 8.31]{HK}}]
\label{prop:rational_quartic_critical_4+}
Any configuration of points and cameras all lying on a smooth rational quartic curve is critical. So is any configuration of points all lying on a twisted cubic and cameras all lying on a line not intersecting the cubic. These two configurations are conjugate to one another.
\end{proposition}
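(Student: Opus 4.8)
The plan is to deduce the $n$-view statement from the three-view case (already contained in \cref{prop:twisted_cubic+line_critical}) by using the fact that, when the conjugate cameras are collinear, compatibility of a $\binom{n}{2}$-tuple of quadrics reduces to triple-wise compatibility (\cref{cor:compatible_if_each_sextuple_is_compatible_quadrics}). Since any subconfiguration of a critical configuration is critical, it suffices to show that $(P_1,\dots,P_n,\tilde C)$ is critical when $\tilde C$ is a smooth rational quartic and $p_1,\dots,p_n\in\tilde C$; the conjugacy claim will then come out of the construction.

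First I would record the geometry. A smooth rational quartic lies on a \emph{unique} quadric $S_P$: it lies on at least one by a dimension count on $\mathrm{Sym}^2$ of its spanning $4$-space against the conditions of vanishing on the curve, and on at most one, since two quadrics through it would make it a complete intersection, hence elliptic or singular. In the generic case $S_P$ is smooth and $\tilde C$ has bidegree $(1,3)$, meeting each $p_i$ with multiplicity one; the case where $S_P$ is a cone is handled by the same argument below. We take $S_P^{ij}:=S_P$ for every pair as the candidate critical quadrics and aim to prove this $\binom{n}{2}$-tuple compatible.

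Next I would fix permissible lines and verify compatibility triple by triple. Let $L_i$ be the line of the ruling of $S_P$ that meets $\tilde C$ exactly once, passing through $p_i$, so $L_i\cap\tilde C=\{p_i\}$. Any two of the $L_i$ lie in the same ruling of the smooth quadric $S_P$, hence are disjoint, so $(L_i,L_j)$ is a permissible pair on $S_P=S_P^{ij}$ for all $i\neq j$; and since this ruling line through each $p_i$ contains no other $p_j$, the configuration is never the excluded one of \cref{fig:impossible_configurations}. With these lines, each triple $(S_P,S_P,S_P)$ satisfies conditions (1)--(4) of \cref{prop:compatible_collinear} (the only nontrivial check being the permissibility just made), so every triple is compatible in the collinear sense; equivalently this is the content of \cref{prop:rational_curves}. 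Restricted to any three cameras, $(P_i,P_j,P_k,\tilde C)$ is therefore critical with a conjugate whose camera centers are collinear (the three-view case of the statement, \cref{prop:twisted_cubic+line_critical} read from the rational-quartic side). Since the conjugate cameras of each triple are collinear, \cref{cor:compatible_if_each_sextuple_is_compatible_quadrics} upgrades this to compatibility of the whole $\binom{n}{2}$-tuple $(S_P,\dots,S_P)$, producing cameras $Q_1,\dots,Q_n$ with collinear centers realizing it. Because $\tilde C\subseteq S_P=S_P^{ij}$ and (by the uniqueness of the fundamental matrix attached to a permissible pair, \cref{prop:conjugates_and_permissible_lines}) the restriction of $(Q_i)$ to any triple is the three-view conjugate of $(P_i,\tilde C)$ along $\tilde C$, the image $\phi_P(\tilde C)$ satisfies every bilinear and trilinear generator of the multi-view ideal of $Q_1,\dots,Q_n$ (\cref{thr:ideal_of_multiview_variety}), hence lies in $\mathcal{M}_Q$; so every point of $\tilde C$ has a conjugate and $\tilde C$ is contained in the critical locus. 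Finally, \cref{prop:bidegree_on_P_->_bidegree_on_Q} applied to $\tilde C$ of type $(1,3,1,1)$ on $S_P$ gives a conjugate curve of type $(1,2,0,0)$: a twisted cubic through none of the $q_i$, hence (the $q_i$ being collinear, with $S_Q$ meeting the line through them only in those points) disjoint from the line of camera centers. This identifies the conjugate as the twisted-cubic-plus-line configuration, which is thus critical as well, and establishes the mutual conjugacy.

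The step I expect to be the main obstacle is the identification in the previous paragraph: showing that $\tilde C$ itself — not merely some rational quartic on $S_P$ through the $p_i$ — is the set of critical points, i.e. that the globally built cameras $Q_i$ restrict consistently to the three-view conjugates on each triple; this is where \cref{prop:conjugates_and_permissible_lines} does the real work, and it also matters that choosing the $L_i$ in the other ruling would produce a conjugate curve of type $(3,2,2,2)$, which carries too many forced singular points to be an admissible irreducible curve, so the ruling is in fact forced. A minor additional point is the degenerate geometry ($S_P$ a cone, or several $p_i$ lying on a common trisecant of $\tilde C$), which causes no trouble because three copies of a cone are likewise collinear-compatible outside the excluded configuration of \cref{fig:impossible_configurations}.
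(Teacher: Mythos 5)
Your proof is correct and follows essentially the same route as the paper: establish triple-wise criticality on the rational-quartic side with collinear conjugate cameras, then invoke the second half of \cref{cor:compatible_if_each_sextuple_is_compatible_quadrics} to upgrade triple-wise compatibility to full compatibility. The extra work you do to pin down that the critical locus is exactly $\tilde C$ (via \cref{prop:conjugates_and_permissible_lines} and the bidegree formula) is a legitimate filling-in of a step the paper leaves implicit, not a different method.
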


\begin{proof}
We have shown that for three views, these configurations are both critical and conjugate to one another. Since they form a conjugate pair, it suffices to show that one of them remains critical when we add more cameras. Consider a rational quartic curve with any number of cameras all lying on the curve. For each triple of cameras this is a critical configuration with the camera centers of the conjugate configuration all lying on a line. By the second half of \cref{cor:compatible_if_each_sextuple_is_compatible_quadrics} however, triple-wise compatibility is sufficient in the case where the conjugate cameras all lie on a line, which is the case here. Hence the whole configuration is critical.
\end{proof}

\begin{proposition}
\label{prop:singular_quartic_critical_4+}
Any configuration of points and cameras all lying on a singular rational quartic curve is critical. So is any configuration of points all lying on a twisted cubic and cameras all lying on a line secant to the cubic: These two configurations are conjugate to one another.
\end{proposition}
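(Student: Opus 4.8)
The plan is to mirror the proof of the smooth case, \cref{prop:rational_quartic_critical_4+}: first show that for three views the two configurations in the statement form a conjugate pair, and then exploit the fact that one of them has collinear camera centers to bootstrap criticality from triples of views to arbitrarily many views.

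For the three-view conjugacy I would take the degenerate elliptic quartic $C=\widetilde C\cup\ell$, with $\widetilde C$ a twisted cubic and $\ell$ a secant of $\widetilde C$, and place all three camera centers on $\ell$ at smooth points of $C$. By \cref{prop:elliptic_curve_is_critical} the configuration $(P_1,P_2,P_3,C)$ is critical, hence so is every subconfiguration, in particular the one with the points restricted to $\widetilde C$ and the cameras kept on $\ell$, which is precisely the ``twisted cubic with cameras on a secant line'' configuration. The conjugate is identified by the bookkeeping of \cref{prop:bidegree_on_P_->_bidegree_on_Q} together with \cref{lem:relations_between_quadrics}: on each quadric in the pencil through $C$ the component $\widetilde C$ has the type of a twisted cubic meeting neither camera center, so its conjugate is a rational quartic through the three conjugate camera centers, while $\ell$, being the line carrying all three camera centers, is contracted by the joint camera map, and the two points of $\ell\cap\widetilde C$ are contracted with it, forcing the conjugate rational quartic to acquire a node or cusp at that image point. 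Thus the conjugate is a singular rational quartic carrying the cameras at smooth points, up to one harmless ambiguous point; this is the other configuration in the statement. (If a camera center is allowed to lie at the singular point one is instead in the situation of \cref{prop:elliptic_curve_is_critical_even_if_singular}, whose conjugate is a twisted cubic together with a secant line meeting two of the conjugate cameras; I would remark on this variant, which is handled by the same reduction.)

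Having the conjugate pair, it suffices to extend one side. I would argue on the singular rational quartic side: for every triple among the cameras lying on the curve, the three-view reconstruction just produced has conjugate camera centers lying on the secant $\ell$, hence collinear, so by the second half of \cref{cor:compatible_if_each_sextuple_is_compatible_quadrics} triple-wise compatibility of the associated quadrics already forces compatibility of the full $\binom{n}{2}$-tuple, and the configuration is critical for every $n$. Conjugacy then yields the same for ``twisted cubic with cameras on a secant line''. The main obstacle is the three-view step: one must check with some care that the conjugate of the degenerate quartic configuration is an irreducible singular rational quartic (rather than degenerating further) and that the conjugate cameras really land on it at smooth points, so that the collinearity shortcut of \cref{cor:compatible_if_each_sextuple_is_compatible_quadrics} applies uniformly across all triples; once that is settled, the passage to many views is immediate, exactly as in \cref{prop:rational_quartic_critical_4+}.
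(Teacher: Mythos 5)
Your proposal is correct and takes the paper's own route: the published proof is a one-line reduction to \cref{prop:rational_quartic_critical_4+}, namely establish the three-view conjugate pair and then invoke the collinearity shortcut in the second half of \cref{cor:compatible_if_each_sextuple_is_compatible_quadrics} to pass to any number of views. Your reconstruction of the three-view step from \cref{prop:elliptic_curve_is_critical} together with the bookkeeping of \cref{prop:bidegree_on_P_->_bidegree_on_Q} and \cref{lem:relations_between_quadrics} fills in detail the paper leaves implicit, but the strategy is the same.
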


\begin{proof}
This is proven the same way as \cref{prop:rational_quartic_critical_4+}.
\end{proof}

Next, we show that any configuration consisting of points and cameras all lying on a quartic curve $C$ appearing as the intersection of a pencil of quadrics is critical as long as no camera center lies on a singular point on $C$. We begin by giving three useful lemmas for proving criticality in these cases.

%\begin{lemma}
%\label{lem:4_points_on_quartic}
%Given four points $p_1,\ldots,p_4$ on an elliptic quartic curve $C$, for each regular point $y_1$ on $C$ different from the four points $p_i$, there exist 3 other points $y_2,y_3,y_4$ on $C$ such that $\overline{p_iy_j}\parallel\overline{p_jy_i}$.
%\end{lemma}
%% must be irreducible
%\begin{proof}
%We start by choosing a regular point on $C$ to be the point $y_1$. There is a unique quadric containing $C$ and the line $\overline{p_2y_1}$, denote this quadric by $S^{12}$. $S^{12}$ contains a unique line passing through $p_1$ and not intersecting $\overline{p_2y_1}$, this line intersects $C$ in two points, the first is $p_1$, let $y_2$ be the other one. By construction, we have $\overline{p_1y_2}\parallel\overline{p_2y_1}$. This step can be repeated with $p_3$ and $p_4$ taking the role of $p_2$ to give us the two remaining points $y_3$ and $y_4$. 
%
%We now have the four points $y_1,\ldots,y_4$ and three quadrics $S^{12},S^{13},S^{14}$, each containing one pair of quasi-parallel lines. Moreover, since $\overline{p_1y_2}\parallel\overline{p_2y_1}$ and $\overline{p_1y_3}\parallel\overline{p_3y_1}$, \cref{lem:quasi_parallel} tells us that $\overline{p_2y_3}\parallel\overline{p_3y_2}$. The same argument can be repeated to show that $\overline{p_2y_4}\parallel\overline{p_4y_2}$ and $\overline{p_3y_4}\parallel\overline{p_4y_3}$, hence completing the proof.
%\end{proof}

\begin{proposition}
\label{prop:elliptic_quartic_4+}
A set of any number of cameras and points all lying on an irreducible elliptic or singular quartic curve is critical as long as no camera center is a singular point on $C$.
\end{proposition}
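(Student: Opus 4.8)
The plan is to produce a conjugate configuration \emph{directly}, by extending the three-view conjugate to every remaining camera; this sidesteps verifying the quadruple compatibility condition \cref{eq:quadruple_compatibility} by hand. Assume first that $C$ is a smooth elliptic quartic; the reducible cases (a twisted cubic plus a secant, or two conics meeting in two points) follow from the same argument together with \cref{prop:twisted_cubic_is_critical,prop:plane+conic_critical,prop:singular_quartic_critical_4+}, and we may take $n\ge 3$ since $n\le 2$ is covered by the one- and two-view sections. Let $p_1,\dots,p_n\in C$ be the camera centers, none of them singular. Applying \cref{prop:elliptic_curve_is_critical} to $P_1,P_2,P_3$ gives a conjugate $(Q_1,Q_2,Q_3,Y)$ with $Y\subset\p3$ a non-degenerate quartic curve, together with the conjugation map $\psi\colon C\to Y$; since $\psi$ is a birational morphism of smooth curves it is an isomorphism, and $Q_i\circ\psi=P_i$ on $C$ for $i=1,2,3$ because the two configurations have equal images.

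Next I would pin down $\psi$ on the level of line bundles. Write $H=\mathcal O_C(1)$ and $H'=\mathcal O_Y(1)$. The map $P_i\big|_C$ is the projection from $p_i$, hence is given by the complete degree-$3$ linear system $|H-p_i|$, and likewise $Q_i\big|_Y$ is given by $|H'-q_i|$; so the identity $Q_i\circ\psi=P_i$ forces $\psi^{*}H'-\psi^{-1}(q_i)\sim H-p_i$ for $i=1,2,3$. Therefore the class $\tau:=\psi^{*}H'-H\in\mathrm{Pic}^0(C)$ does not depend on $i$, and $\psi^{-1}(q_i)$ is the unique point of $C$ linearly equivalent to $p_i+\tau$. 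This is the key structural fact: the ``offset'' $\tau$ read off from any one of the first three cameras is the same.

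Now extend to the remaining cameras. For each $k\ge 4$ let $p_k^{\tau}\in C$ be the point with $p_k^{\tau}\sim p_k+\tau$ and set $q_k:=\psi(p_k^{\tau})\in Y$; these points are pairwise distinct and distinct from $q_1,q_2,q_3$ because the $p_k$ are. Projection from $q_k$ is a full-rank camera, and since $q_k\in Y$ and $Y$ is non-degenerate its linear system is the complete system $|H'-q_k|$; pulling back under $\psi$ gives $|\psi^{*}H'-p_k^{\tau}|$, and as $\psi^{*}H'=H+\tau$ and $p_k^{\tau}\sim p_k+\tau$ this is $|H-p_k|$ — exactly the system defining $P_k\big|_C$. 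Choosing coordinates in the $k$-th image accordingly (permissible by \cref{prop:only_camera_centers_matter}) yields a camera $Q_k$ with $Q_k\circ\psi=P_k$ on $C$. Then $\phi_Q(Y)=\{(Q_1y,\dots,Q_ny):y\in Y\}=\{(P_1x,\dots,P_nx):x\in C\}=\phi_P(C)$, so $(Q_1,\dots,Q_n,Y)$ is a conjugate of $(P_1,\dots,P_n,C)$ — inequivalent because its three-view restriction already is — and the latter is critical; the conjugate is a quartic of the same type through the camera centers. As an organizational alternative one may instead run the quasi-parallel construction of \cref{lem:quasi_parallel,lem:quartic_lemma} to obtain a triple-wise compatible $\binom{n}{2}$-tuple of quadrics through $C$, reduce to $n=4$ via \cref{cor:compatible_if_each_sextuple_is_compatible_quadrics}, and invoke \cref{lem:pencil_has_one_compatible} to finish once a single conjugate on those quadrics is exhibited.

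The genuine obstacle is the second step: showing $\tau$ is camera-independent, so that one new center $q_k=\psi(p_k+\tau)$ already reproduces $P_k$ on $C$. In compatibility language this is exactly what \cref{eq:quadruple_compatibility} encodes, and a proof avoiding the Picard-group argument would have to verify \cref{eq:quadruple_compatibility} for the six fundamental matrices coming from the quasi-parallel quadrics. The remaining points — that $\psi\big|_C$ is a genuine morphism at the camera centers, that the quasi-parallel construction can be carried out avoiding coincidences of the two lines through a common center, and the generalized-Jacobian adaptation to nodal or cuspidal $C$ — are routine and I would dispatch them with short remarks or by appeal to the earlier propositions.
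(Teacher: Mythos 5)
Your argument for the smooth elliptic quartic is correct, and it takes a genuinely different route from the paper. The paper reduces to four cameras via \cref{cor:compatible_if_each_sextuple_is_compatible_quadrics}, runs the quasi-parallel construction of \cref{lem:quartic_lemma} to obtain triple-wise compatible quadrics, and then, rather than verifying \cref{eq:quadruple_compatibility} in general, exhibits an explicit parametrized conjugate (the cameras $Q_1,\ldots,Q_4$ and the conjugate quadrics, with the algebra checked by computer) in the appendix. You replace all of this by the observation that the three-view conjugation $\psi\colon C\to Y$ is an isomorphism of genus-one quartics twisting $\mathcal{O}(1)$ by a fixed class $\tau\in\operatorname{Pic}^0(C)$, so each additional camera is matched by translating its center by $\tau$; completeness of $|H-p_k|$ (Riemann--Roch in degree $3$ on a genus-one curve, plus non-degeneracy of $C$ and $Y$) then produces $Q_k$ with $Q_k\circ\psi=P_k$ on $C$, and inequivalence is inherited from the three-view pair. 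This buys a uniform, computation-free proof for any number of views that never touches the compatibility equations; the paper's computational route treats smooth and singular quartics on an equal footing, since the parametrized quadrics $S_P^{12},S_P^{13}$ in the appendix cover both.

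The one genuine gap is the singular case, which is part of the statement and which you defer as routine. For a nodal or cuspidal $C$ your bookkeeping needs several facts that do not come for free: that $\psi$ is defined at the singular point and is an isomorphism onto $Y$ (a birational map between integral curves of arithmetic genus one need not be one; here one has to argue that $Y$ is again a $(2,2)$ curve and keep the node off the epipolar lines so that \cref{lem:relations_between_quadrics} gives a well-defined bijection there); that $h^0(\mathcal{O}_C(H-p))=3$ at smooth points $p$, which on a singular curve rests on Gorenstein duality with trivial dualizing sheaf rather than Riemann--Roch for smooth curves; and that the smooth locus of $C$ is a torsor under the generalized Jacobian, so that $p_k^{\tau}$ exists, is unique, is a smooth point, and $q_k=\psi(p_k^{\tau})$ avoids the singular point of $Y$. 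None of this fails, but it is a real piece of the argument rather than a closing remark, and as written the proposal only establishes the smooth half of the proposition. A minor scope point: the proposition concerns irreducible quartics, so the opening appeal to \cref{prop:twisted_cubic_is_critical,prop:plane+conic_critical,prop:singular_quartic_critical_4+} for reducible degenerations is unnecessary here, and \cref{prop:singular_quartic_critical_4+} in any case concerns the rational quartics lying on a single quadric, not the pencil case treated in this statement.
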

\begin{proof}
By \cref{cor:compatible_if_each_sextuple_is_compatible_quadrics}, we need only prove the result for four cameras. Denote the quartic curve by $C$. A conjugate configuration can be constructed in the following way:

Using the same construction as in the three-view case, let $y_1$ be any regular point on $C$ such that it does not lie in a plane spanned by any three camera centers. In the pencil of quadrics containing $C$, there is exactly one, $S_P^{12}$, containing the secant line $\overline{p_2y_1}$. This quadric contains a unique line $L$ through $p_1$ such that $L\parallel\overline{p_2y_1}$. $L$ intersects $C$ in two points, one is $p_1$, denote the other one by $y_2$. In a similar fashion, we can get the two remaining points $y_3$ and $y_4$. These are constructed in such a way that the conditions of \cref{lem:quartic_lemma} are satisfied, so $C$ appears as the intersection of six quadrics $S_P^{ij}$ where each triple is compatible. 

Next, we need to show that the entire sextuple is compatible. This can be done by verifying that \cref{eq:quadruple_compatibility} is satisfied. As it turns out, this is always the case, regardless of which point we take to be the initial $y_1$. While this verification is fairly straightforward for any explicitly given $C$, proving this in the general case is a rather lengthy computation. Instead, we prove that it is critical by giving the conjugate configuration for a representative example (the conjugate is constructed using the algorithm described above). This is done in the appendix.
\end{proof}

\begin{proposition}
\label{prop:cubic+secant_4_views}
Let $C$ be the union of a twisted cubic and a secant line. A set of cameras and points all lying on $C$ is critical.
\end{proposition}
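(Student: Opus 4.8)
The plan is to reduce the general case to four cameras via \cref{cor:compatible_if_each_sextuple_is_compatible_quadrics}, and then handle the four-camera case by noting that $C=C'\cup L$ where $C'$ is a twisted cubic and $L$ a secant line already gives us, for \emph{each triple} of cameras, a critical configuration by \cref{prop:elliptic_curve_is_critical} (a twisted cubic plus a secant is one of the listed degenerations of the elliptic quartic). So triple-wise compatibility is automatic; the only thing to verify is the quadruple condition of \cref{thm: 4tuple-condition}, i.e.\ \cref{eq:quadruple_compatibility}.

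First I would set up the geometry: each camera center $p_i$ lies on $C=C'\cup L$, so either $p_i\in C'$ or $p_i\in L$. In either case, mimicking the construction in the proof of \cref{prop:elliptic_curve_is_critical} and \cref{lem:quartic_lemma}, pick a generic smooth point $y_1\in C$ and successively construct $y_2,y_3,y_4\in C$ so that $\overline{p_iy_j}\parallel\overline{p_jy_i}$ for all $i,j$ (using \cref{lem:quasi_parallel} to propagate quasi-parallelism), and let $S_P^{ij}$ be the unique quadric in the pencil containing $C$ and the secant $\overline{p_iy_j}$. By \cref{lem:quartic_lemma}, every triple $S_P^{ij},S_P^{ik},S_P^{jk}$ is compatible (in the non-collinear sense, assuming the camera centers are not positioned so as to force collinear conjugates — a degenerate subcase that can be treated separately via \cref{prop:compatible_collinear} or handled as a limit).

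Next I would invoke \cref{lem:pencil_has_one_compatible}: fixing five of the six quadrics, there is a pencil of choices for the sixth keeping triple-wise compatibility, and \emph{exactly one} of these makes the whole sextuple compatible. The point is that the construction above is flexible — the choice of $y_1$ (hence of the whole configuration of $y_i$'s and quadrics) varies in a positive-dimensional family — so it suffices to show that \emph{some} choice in this family lands on the compatible member of the pencil. Equivalently: as $y_1$ varies, the resulting sixth quadric $S_P^{34}$ sweeps out (at least) the whole pencil predicted by \cref{lem:pencil_has_one_compatible}, and in particular hits the unique compatible one. Alternatively, and more in the spirit of \cref{prop:elliptic_quartic_4+}, one can simply exhibit the conjugate configuration explicitly for a representative curve (e.g.\ a standard twisted cubic in $\p3$ together with one of its secants, with four camera centers placed in general position on this curve), compute the six quadrics $S_P^{ij}$ from the quasi-parallel secants, extract the conjugate cameras $Q_i$, and triangulate to recover the conjugate points $Y$; this reduces the verification of \cref{eq:quadruple_compatibility} to a finite symbolic computation. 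The conjugate curve, by \cref{prop:bidegree_on_P_->_bidegree_on_Q} applied to $C$ of type $(2,2,1,1)$ (or $(2,2,1,1,1)$ if a quadric is reducible), is again a quartic through the conjugate camera centers — in fact a twisted-cubic-plus-secant, since the secant line $L$ meets the plane of the conjugate camera centers.

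The main obstacle I expect is precisely showing that \cref{eq:quadruple_compatibility} holds — i.e.\ that one can always hit the compatible member of the pencil in \cref{lem:pencil_has_one_compatible}. The cleanest way around this, paralleling \cref{prop:elliptic_quartic_4+}, is to defer to an explicit example: since criticality for four views is a closed condition and the family of (curve, camera-center) data here is irreducible, verifying it for one representative in general position suffices, and the explicit conjugate cameras and point-map can be recorded (in the appendix, as with \cref{prop:elliptic_quartic_4+}). A secondary subtlety is the possible presence of reducible quadrics in the pencil and of the degenerate subcases where several camera centers lie on $L$ or on a common secant — these need a brief separate remark, but they are covered by the earlier three-view analysis (\cref{prop:elliptic_curve_is_critical_even_if_singular}, \cref{prop:twisted_cubic_is_critical}) combined with the collinear-conjugate shortcut in \cref{cor:compatible_if_each_sextuple_is_compatible_quadrics}.
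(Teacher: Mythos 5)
Your overall strategy coincides with the paper's for the main case: reduce to four cameras via \cref{cor:compatible_if_each_sextuple_is_compatible_quadrics}, construct points $y_i$ with quasi-parallel secants so that \cref{lem:quartic_lemma} yields triple-wise compatibility, and then settle sextuple compatibility. The gaps lie in how you settle that last step and in the degenerate subcase. Your first route — letting $y_1$ vary so that $S_P^{34}$ "sweeps out" the pencil of \cref{lem:pencil_has_one_compatible} — does not follow: changing $y_1$ changes all six quadrics at once, so you are never moving a single quadric in a pencil while the other five stay fixed, and nothing in the construction forces the sixth quadric to land on the unique compatible member. Your second route — verify one representative and conclude by "criticality is a closed condition on an irreducible family" — fails because criticality is not closed: for two views, a smooth ruled quadric with collinear camera centers is critical, while its degeneration to a cone with both centers on a line (neither at the vertex) is not (\cref{tab:2}); similarly condition 3 of \cref{prop:7_points_critical} is an open condition that can fail in a limit. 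The paper avoids both problems by carrying out the verification of \cref{eq:quadruple_compatibility} in the appendix for the constructed sextuple written in coordinates with free parameters $a_i,b_j$, so the computation covers the entire family, not a single general representative.

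The second genuine gap is the subcase with two or more camera centers on the secant line, which you dismiss as covered by the three-view results plus the collinear shortcut. It is not: there the points $y_i$ attached to the cameras on the cubic are forced onto the secant, the secant collapses to the singular point of the conjugate curve, and several conjugate camera centers collide, so the fundamental matrices $F_Q^{ij}$ and quadrics $S_P^{ij}$ on which the whole compatibility machinery rests are not even defined. The paper treats this case separately (its case 2), bypassing compatibility altogether and exhibiting an explicit conjugate configuration — a five-camera example with three centers on the secant and two on the cubic, whose conjugate is a singular quartic — which simultaneously handles the two- and three-cameras-on-the-secant situations. Also, your closing claim that the conjugate is again a twisted cubic plus secant is not accurate in this subcase. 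Your plan works for the case of at most one camera on the secant once the parameterized computation replaces the pencil/closedness arguments, but as written it does not prove the proposition.
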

\begin{proof}
By \cref{cor:compatible_if_each_sextuple_is_compatible_quadrics}, we need only prove the result for four cameras. Moreover, the case where all cameras lie on the secant line is already covered by \cref{prop:singular_quartic_critical_4+}, so we need only cover the cases where 1,2,3, or all 4 cameras lie on the twisted cubic. The approach generally follows the irreducible case, but we need to take a bit more care in which points we pick as the $y_i$:
\begin{enumerate}
%\item \label{i1} If all cameras lie on twisted cubic: Let $y_1$ be a generic point on the secant line. For each camera center $p_i\neq p_1$, let $S_P^{1i}$ be the unique quadric containing the line $\overline{p_iy_1}$. The quadric $S_P^{1i}$ contains a unique line through $p_1$ quasi-parallel to $\overline{p_iy_1}$, this line will intersect the secant line in a single point, which we label $y_i$. This construction will give us four points $y_i$ satisfying the conditions in \cref{lem:quartic_lemma}, hence proving that each triple of quadrics is compatible. The final part of the proof, showing that the entire sextuple is indeed compatible, is left to the appendix.

\item \label{i1}If at most one camera lies on the secant line: By a relabelling, we can always take the camera centers $p_1,p_2,p_3$ to lie on the cubic. Let $y_1$ be a generic point on the twisted cubic. For each camera center $p_i\neq p_1$, let $S_P^{1i}$ be the unique quadric containing the line $\overline{p_iy_1}$. The quadric $S_P^{1i}$ contains a unique line through $p_1$ quasi-parallel to $\overline{p_iy_1}$. This line intersects the $C$ in two points, $p_i$ and a point which we label $y_i$. This construction gives us four points $y_i$ satisfying the conditions in \cref{lem:quartic_lemma}, hence proving that each triple of quadrics is compatible. The final part of the proof, showing that the entire sextuple is indeed compatible, is left to the appendix.

\item \label{i2}If two or more cameras lie on the secant line: This situation is somewhat special since the conjugate curve $C_Q$ is singular and every point on the secant line on $C$ is conjugate to the singular point on $C_Q$. In this case, like in case \ref{i1}., for each camera $p_i$ on the twisted cubic, we must take the point $y_i$ to lie on the secant line. But since the points $y_1$ map to camera centers on $C_Q$, and since the secant line collapses to a singular point on $C_Q$, this means that more than one of the conjugate cameras $Q_i,Q_j$ may lie in the singular point, which in turn means that one can not recover any fundamental matrices $F_Q^{ij}$ or quadric $S_P^{ij}$. Nonetheless, one can construct such a conjugate.

Let the camera center $p_1$ lie on the twisted cubic, and let $p_4$ lie on the secant line. Let $y_1$ be any generic point on the secant line, and let $y_4$ be any generic point on the twisted cubic. Let $S_P^{i4}$ be the unique quadric containing $C$ and the line $\overline{p_iy_4}$, and let $S_P^{1i}$ be the unique quadric containing $C$ and the line $\overline{p_iy_i}$ ($S_P^{14}$ contains both). The quadrics $S_P^{1i}$ and $S_P^{i4}$ will each contain a line quasi-parallel to $\overline{p_iy_1}$ and $\overline{p_iy_4}$ respectively. These two lines intersect in a point which we denote by $y_i$. This gives us four points $y_i$ satisfying \cref{lem:quartic_lemma} (illustrated in \cref{fig:vridd}), so each triple is compatible. Proving that the entire sextuple is compatible, is left to the appendix. \qedhere
\end{enumerate}
\end{proof}

\begin{figure}
\begin{center}
\includegraphics[width = 0.85\linewidth]{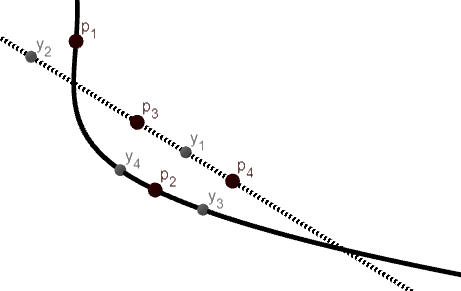}
\end{center}
\caption{One possible choice of four points $y_i$}
\label{fig:vridd}
\end{figure}

\begin{proposition}
\label{prop:two_conics_critical_4_views}
Let $C$ be the union of two (possibly reducible) conics appearing as the intersection of two quadrics. Any set of cameras and points all lying on $C$ is critical.
\end{proposition}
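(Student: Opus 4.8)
The plan is to follow the template established in \cref{prop:elliptic_quartic_4+} and \cref{prop:cubic+secant_4_views}: reduce to four cameras via \cref{cor:compatible_if_each_sextuple_is_compatible_quadrics}, build a compatible triple of quadrics for each triple of views using the quasi-parallel construction of \cref{lem:quartic_lemma}, and then dispose of the single remaining quadruple-compatibility identity \cref{eq:quadruple_compatibility}. Write $C=C_1\cup C_2$ for the two conics, meeting in two (possibly complex) points, and let $\mathcal{Q}$ be the pencil of quadrics with base locus $C$. I would first record the two facts that make $C$ behave like a degenerate elliptic quartic: $\mathcal{Q}$ contains the reducible quadric $\Pi_1\cup\Pi_2$ spanned by the planes of $C_1$ and $C_2$, and on any smooth member of $\mathcal{Q}$ each $C_i$ is a plane section, hence of bidegree $(1,1)$, so $C$ has type $(2,2,1,1)$.

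For the construction itself I would pick a regular point $y_1$ of $C$ not lying in any plane spanned by three of the camera centers, then for $i=2,3,4$ let $S_P^{1i}$ be the unique quadric of $\mathcal{Q}$ through the secant $\overline{p_iy_1}$, let $g_{P_1}^{i}$ be the line through $p_1$ quasi-parallel to $\overline{p_iy_1}$ on $S_P^{1i}$, and let $y_i$ be its second intersection with $C$; finally let $S_P^{ij}$ be the member of $\mathcal{Q}$ carrying $\overline{p_iy_j}$. As in the earlier propositions, \cref{lem:quasi_parallel} upgrades $\overline{p_1y_i}\parallel\overline{p_iy_1}$ to $\overline{p_iy_j}\parallel\overline{p_jy_i}$ for all $i,j$, and \cref{lem:quartic_lemma} then yields compatibility of each triple $S_P^{ij},S_P^{ik},S_P^{jk}$. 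I expect to need a short case distinction according to how the four camera centers distribute among $C_1$, $C_2$ and the two nodes $C_1\cap C_2$: when several centers lie on a single component the relevant quadric degenerates to $\Pi_1\cup\Pi_2$ and one must take $y_1$ and the auxiliary points on the appropriate components, exactly as in the case of \cref{prop:cubic+secant_4_views} where two or more cameras lie on the secant line; and when the quasi-parallel lines through some $p_i$ are forced to coincide one perturbs $y_1$, after which triple-wise compatibility still holds (now with a collinear conjugate triple) by \cref{prop:compatible_collinear} and the conclusion follows as in \cref{prop:rational_quartic_critical_4+}.

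With triple-wise compatibility established, the final step is to show that the whole sextuple $\{S_P^{ij}\}$ is compatible, equivalently that the underlying fundamental matrices satisfy \cref{eq:quadruple_compatibility}; granting this, the sextuple is the pullback of a compatible set of fundamental matrices, one recovers conjugate cameras $Q_1,\dots,Q_4$, and triangulating the points of $C$ against them produces a conjugate configuration, with the conjugate curve again of type $(2,2,1,1)$ by \cref{prop:bidegree_on_P_->_bidegree_on_Q} (or $(2,2,1,1,1)$, via \cref{prop:bidegree_on_P_->_bidegree_on_Q_planes}, when the relevant quadric is reducible), hence a quartic of the same kind. The hard part, exactly as in \cref{prop:elliptic_quartic_4+} and \cref{prop:cubic+secant_4_views}, is this quadruple-compatibility identity: it is not a formal consequence of triple-wise compatibility and I see no short conceptual reason for it in this case, so I would establish it by a direct computation that holds regardless of the initial choice of $y_1$, and to keep the main text readable I would relegate the symbolic verification to the appendix in the form of an explicit representative conjugate configuration.
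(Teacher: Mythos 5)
Your proposal follows essentially the same route as the paper: reduce to four cameras via \cref{cor:compatible_if_each_sextuple_is_compatible_quadrics}, build the quadrics through quasi-parallel secants so that \cref{lem:quartic_lemma} gives triple-wise compatibility, split into cases by how the centers distribute over the two conics (the paper treats all-centers-on-one-conic via \cref{prop:plane+conic_critical}, three-on-one, and two-on-each, where $S_P^{12}$ is taken to be the reducible two-plane quadric), and defer the sextuple compatibility of \cref{eq:quadruple_compatibility} to an explicit computation with a representative conjugate in the appendix. The paper's appendix carries out exactly the symbolic verification you anticipate, so your plan matches its proof in substance.
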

\begin{proof}
As before, we just need to prove this for four cameras. This result covers several cases, which we cover one by one. For all cases, denote the conic containing $p_1$ by $C_1$ and the other conic by $C_2$
\begin{enumerate}
\item In the event that all four cameras lie on the same conic, this is a subcase of the plane+conic configuration and as such, is critical.
\item \label{itemp2} In the event that three of the cameras lie on the same conic, and the final one (say, $p_1$) does not, we do as follows:

Let $y_1$ be a generic point on $C_1$, not lying in the plane containing $C_2$. Let $S_P^{1i}$ be the unique quadric containing $C_1$ and $C_2$ as well as the secant $\overline{p_iy_1}$. Chosen this way, the quadric $S_P^{1i}$ contains a unique line $L_i$ through $p_1$ such that $L_i$ and $\overline{p_iy_1}$ form a permissible pair. %, that is: if $S_P^{i4}$ is smooth, $L_i$ is the unique line through $p_4$ such that it does not intersect $\overline{p_iy_4}$, if $S_P^{i4}$ is a cone, $L_i$ is the unique line through $p_4$. 
Denote the intersection point of $L_i$ with $C_2$ by $y_i$. The points $p_i$ and $y_i$ now satisfy the conditions in \cref{lem:quartic_lemma}, proving triple-wise compatibility. The proof that the whole quadruple is compatible is in the appendix.

\item \label{itemp3}In the event that there are two camera centers on each conic, we do as follows (let $p_3,p_4$ be the two camera centers on $C_2$):

First, take $S_P^{12}$ to be the reducible quadric consisting of the two planes containing $C_1$ and $C_2$. Let $y_1$ be a generic point on $C_1$, not lying in the plane containing $C_2$ (if $C_1$ is reducible, let $y_1$ be a generic point on the component not containing $p_1$) and let $y_2$ be the unique point on $C$ such that the lines $\overline{p_1y_2}$ and $\overline{p_2y_1}$ form a permissible pair on $S_P^{12}$ (i.e. they intersect in a singular point on $S_P^{12}$). 
Next, let $S_P^{13}$ be the unique quadric containing $C_1,C_2$, and the line $\overline{p_jy_1}$. The quadric $S_P^{13}$ contains a unique line $L$ such that $L$ and $\overline{p_3y_1}$ form a permissible pair on $S_P^{13}$. $L$ intersects $C_2$ in a point which we denote by $y_3$. In a similar fashion, we can construct a point $y_4$. The points $p_i$ and $y_i$ now satisfy the conditions in \cref{lem:quartic_lemma}, proving triple-wise compatibility. The proof that the whole quadruple is compatible is in the appendix. \qedhere
\end{enumerate}
\end{proof}

\subsubsection*{Curves of degree three}

\begin{proposition}
A configuration consisting of any number of points and cameras all lying on a (possibly reducible) twisted cubic is critical. Its conjugate consists of points all lying on a conic not passing through the camera centers.
\end{proposition}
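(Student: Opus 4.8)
The plan is to follow the three-view argument of \cref{prop:twisted_cubic_is_critical} and then clear the extra hurdle of quadruple compatibility. By \cref{cor:compatible_if_each_sextuple_is_compatible_quadrics}, together with \cref{prop:twisted_cubic_is_critical} for the three-view case, it suffices to prove criticality for four cameras. So let $C$ be the twisted cubic, let $p_1,\dots,p_4\in C$ be the camera centres, and choose a line $L$ that is secant to $C$ and avoids all four camera centres; a generic secant works since there are only finitely many $p_i$. The quadrics through $C\cup L$ form a pencil $\mathcal Q$, and every member of $\mathcal Q$ is a smooth ruled quadric or a cone, since an irreducible (nonplanar) twisted cubic lies on no reducible quadric; moreover $L$, being a secant of $C$ lying on these quadrics, is of type $(1,0,0,0)$ in the sense of the remark following \cref{lem:compatible_line}. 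For each pair $i,j$ I pick a quadric $S_P^{ij}\in\mathcal Q$, keeping the six pencil parameters distinct enough that the two lines through any $p_i$ (one from $S_P^{ij}$, one from $S_P^{ik}$) are distinct; on $S_P^{ij}$ I let $g_{P_i}^{j}$ be the unique line through $p_i$ that meets $L$.

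Triple-wise compatibility is then immediate: within any triple $S_P^{ij},S_P^{ik},S_P^{jk}$ the three quadrics share the line $L$, which passes through no camera centre, so \cref{lem:compatible_line} applies, the open condition that $S_P^{ij}\cap S_P^{ik}$ contain no line through $p_i$ meeting $L$ being arrangeable by the choice of parameters. This produces, triple by triple, a conjugate triple of cameras, but not yet a single conjugate for all four views. On each $S_P^{ij}$ the twisted cubic meets the family containing $g_{P_i}^{j}$ once and the complementary family twice, and passes simply through $p_i,p_j$, so $C$ is of type $(1,2,1,1)$ there; by \cref{prop:bidegree_on_P_->_bidegree_on_Q} its conjugate is of type $(1,1,0,0)$, a conic not through the conjugate camera centres, exactly as claimed and exactly as in the three-view case.

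The remaining, and principal, step is the quadruple condition \cref{eq:quadruple_compatibility}; this is the part I expect to be the main obstacle. I would attack it in one of two ways. The structural route: by \cref{lem:pencil_has_one_compatible}, once five of the six quadrics are fixed, $S_P^{34}$ may vary in a pencil of triple-wise--compatible completions, exactly one member of which makes the whole sextuple compatible; the point is to verify that this pencil of completions is precisely $\mathcal Q$, so that the unique compatible $S_P^{34}$ still contains $C$ and the whole curve genuinely lies in $\bigcap_{i<j}S_P^{ij}$. The computational route: each factor $(e_{Q_i}^{l})^{T}F_Q^{ij}e_{Q_j}^{k}$ appearing in \cref{eq:quadruple_compatibility} equals, up to a scalar that cancels between the two sides of the identity (every epipole and every fundamental matrix occurs once on each side), the value of $S_P^{ij}$ on the point where $g_{P_i}^{l}$ meets $L$ and the point where $g_{P_j}^{k}$ meets $L$; since $L$ lies on every quadric of $\mathcal Q$, the restriction of each $S_P^{ij}$ to $L$ is a scalar multiple of the unique skew bilinear form on $L\cong\p1$, and the location on $L$ cut by the ruling line through $p_i$ depends on the pencil parameter through a projective isomorphism $\p1\xrightarrow{\sim}L$. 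Thus \cref{eq:quadruple_compatibility} reduces to a cross-ratio identity in the six parameters, which can be checked directly; following the pattern of \cref{prop:elliptic_quartic_4+}, the cleanest presentation is probably to defer this verification to a representative example in the appendix.

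Finally, the reducible cases, where $C$ degenerates to a conic plus a line or to three lines, are treated by the same pencil-of-quadrics construction with $L$ chosen as a secant of an appropriate component, or else recognised as degenerations of configurations already handled (plane-plus-conic, or the lower-degree curve configurations of \cref{sec:curves_of_lower_degree}); in every case the conjugate is a possibly reducible conic missing the camera centres.
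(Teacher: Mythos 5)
Your ``structural route'' is exactly the paper's proof: reduce to four cameras via \cref{cor:compatible_if_each_sextuple_is_compatible_quadrics}, take a secant $L$ to $C$ missing the camera centres, get triple-wise compatibility for quadrics in the pencil through $C\cup L$ from \cref{lem:compatible_line}, and then use \cref{lem:pencil_has_one_compatible} to pick the unique $S_P^{34}$ making the whole sextuple compatible, with the same treatment of the degenerate cubics (choose $L$ so that $C\cup L$ is a degenerate quartic). The one step you flag as still needing verification --- that the pencil of triple-wise-compatible completions is precisely the pencil $\mathcal{Q}$ of quadrics through $C\cup L$ --- closes immediately (and is what the paper asserts): every member of $\mathcal{Q}$ already gives a triple-wise-compatible completion by \cref{lem:compatible_line}, and since the triple-wise-compatible completions form a single linear pencil by the first item of \cref{lem:pencil_has_one_compatible}, the two pencils coincide, so the unique compatible $S_P^{34}$ contains $C$.
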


\begin{proof}
By \cref{cor:compatible_if_each_sextuple_is_compatible_quadrics}, we need only prove this for the four camera case. Let $\textbf{P}$ be a set of four cameras, and let $X$ be a set of points all lying on a smooth twisted cubic $C$ passing through all camera centers. Let $L$ be a secant to the twisted cubic not passing through any camera centers. By \cref{lem:compatible_line}, any three distinct quadrics containing both $C$ and $L$ form a compatible triple. Fix five of the quadrics $S_P^{ij}$ (all but $S_P^{34)}$) to be five distinct quadrics all containing both $C$ and $L$. Then by the first item of \cref{lem:pencil_has_one_compatible}, the final quadric can move along a linear pencil without breaking triple-wise compatibility. This is exactly the pencil of quadrics containing $C$ and $L$. Moreover, by the second item of \cref{lem:pencil_has_one_compatible}, there is one quadric in this pencil which would make the whole sextuple compatible. Let $S_P^{34}$ be this quadric. Then we have a compatible sextuple of quadrics all containing $X$, making $(\textbf{P},X)$ a critical configuration. 

The same argument holds in the cases where the twisted cubic degenerates to the union of a conic and a line, or to three lines. In this case, take the line $L$ to be a secant such that $C\cup L$ is a degenerate elliptic quartic curve. That is:
\begin{enumerate}
\item If $C$ is the union of a line and a conic, let $L$ be a generic line intersecting both.
\item If $C$ is the union of three lines, two of these lines do not intersect. Let $L$ be a generic line intersecting these two lines.
\end{enumerate}
The remaining proof is as in the smooth case.
\end{proof}

\begin{proposition}
A configuration consisting of any number of cameras, along with points all lying on a twisted cubic passing through all but one camera center is critical.
\end{proposition}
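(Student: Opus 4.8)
The plan is to exhibit the configuration as a subconfiguration of a critical configuration already classified, namely one supported on a degenerate elliptic quartic of the form ``twisted cubic $\cup$ secant line'' (\cref{prop:cubic+secant_4_views}). First, let $p_n$ be the single camera center not lying on the twisted cubic $C$, and let $p_1,\dots,p_{n-1}\in C$. Through a general point of $\p3$ off $C$ there passes a secant line to $C$; pick such a secant $L$ through $p_n$. Then $C\cup L$ is a quartic curve appearing as the intersection of a pencil of quadrics, it is a degeneration of the elliptic quartic, and every camera center lies on it, with $p_1,\dots,p_{n-1}$ on the cubic component and $p_n$ on the line component. Moreover every point of $X$ lies on $C$, hence on $C\cup L$.

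Second, apply \cref{prop:cubic+secant_4_views}: the configuration formed by the cameras $\textbf{P}$ together with all points of $C\cup L$ is critical, its conjugate being a quartic of the same (reducible) type through the conjugate camera centers by \cref{prop:elliptic_curve_is_critical}. Since $(\textbf{P},X)$ with $X\subseteq C\cup L$ is a subconfiguration of this critical configuration, and any subconfiguration of a critical configuration is itself critical, $(\textbf{P},X)$ is critical; this also identifies its conjugate as the restriction of the conjugate quartic to the cubic component, which is again a twisted cubic passing through all but one of the conjugate camera centers.

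The only work is in the boundary cases, which is also where I expect the mild obstacle to lie: if $p_n$ sits on the tangent developable of $C$ the chosen ``secant'' degenerates to a tangent line $T$; if the secant through $p_n$ happens to pass through two of the $p_i$, or if $C$ itself is reducible (a conic and a line, or three lines), then $C\cup L$ is better described as ``two conics meeting in two points''. In each of these cases $C\cup L$ (resp.\ $C\cup T$) is still a quartic lying in a pencil of quadrics with all camera centers on it, so the configuration remains covered by \cref{prop:cubic+secant_4_views} or \cref{prop:two_conics_critical_4_views}; alternatively, since criticality is a closed condition on the camera centers, the tangent case follows by continuity from the generic secant case. No new compatible-sextuple computation is needed.
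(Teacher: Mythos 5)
Your reduction takes a genuinely different route from the paper's proof: the paper fixes coordinates (standard twisted cubic, $p_1,p_2,p_3$ on it, $p_4=[x_1:x_2:x_3:1]$ arbitrary) and exhibits explicit conjugate cameras $Q_i$ together with the conjugate curve, while you adjoin to $C$ the chord of $C$ through the off-cubic center $p_n$ and invoke \cref{prop:cubic+secant_4_views}; this is precisely the trick the paper itself uses in the three-view case in \cref{prop:new_configurations}. When $p_n$ lies off the tangent developable of $C$, your argument is correct and considerably shorter, since \cref{prop:cubic+secant_4_views} allows any distribution of cameras over the two components and $X\subset C\subset C\cup L$; the price is that it delegates all the work to the appendix computations behind that proposition, whereas the paper's construction also hands you the conjugate explicitly.

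The gap is the boundary case you yourself flagged, and the patches you offer do not close it. Through a point off a twisted cubic there passes exactly one chord, so there is no ``pick such a secant'': if $p_n$ lies on the tangent developable, that chord is a tangent line $T$, and $C\cup T$ is not the union of a twisted cubic and a secant in the sense of \cref{prop:cubic+secant_4_views} (a secant is spanned by two distinct points of the curve, and the proof of that proposition normalizes the two intersection points to $[1:0:0:0]$ and $[0:0:0:1]$), nor is it a union of two conics, so \cref{prop:two_conics_critical_4_views} does not apply either. The fallback ``criticality is a closed condition on the camera centers'' is false in this setting: limits of critical configurations can fail to be critical, as the paper points out for the cone with both camera centers on a line (Krames' example, see \cref{tab:2}) and for the configurations of \cref{fig:impossible_configurations}. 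Since $p_n$ is given, the tangent-developable positions cannot be dismissed by genericity, so this stratum of the statement is left unproven; closing it requires either a separate compatibility argument for $C\cup T$ or the paper's uniform explicit construction, which treats every $p_4$ at once. The side claim about the conjugate being the restriction of a conjugate quartic is harmless but also unargued; the paper's computation shows the conjugate points lie on a rational cubic, a detail the statement itself does not require.
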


\begin{proof}
By \cref{cor:compatible_if_each_sextuple_is_compatible_quadrics} it is sufficient to prove the statement for the four camera case. Denote the twisted cubic by $C_P$ and let $p_4$ be the camera center not lying on $C_P$. By changing coordinates, one can take $C_P$ to be the curve of points on the form
\begin{align*}
[\alpha^3:\alpha^2\beta:\alpha\beta^2:\beta^3] \text{ for } [\alpha:\beta]\in\p1
\end{align*}
and take the four camera centers to be 
\begin{align*}
p_1&=\begin{bmatrix}
0:0:0:1
\end{bmatrix}, & \quad p_3&=\begin{bmatrix}
1:1:1:1
\end{bmatrix},\\ p_2&=\begin{bmatrix}
1:0:0:0
\end{bmatrix}, & \quad p_4&=\begin{bmatrix}
x_1:x_2:x_3:1
\end{bmatrix}\footnotemark.
\end{align*}
\footnotetext{By choosing coordinates for $C_P$ and the first three camera centers, the coordinate frame is fixed. In this example, we assume that $p_4$ does not lie on the plane at infinity in this coordinate frame. If it does, one can amend this by relabelling the cameras $P_1,P_2,P_3$}
By \cref{prop:only_camera_centers_matter}, the only property of cameras that affects criticality is their camera centers. So we can choose coordinates in each image so that the four cameras are:
\begin{align*}
P_1&=\begin{bsmallmatrix}
1&0&0&0\\
0&1&0&0\\
0&0&1&0
\end{bsmallmatrix}, & \quad
P_2&=\begin{bsmallmatrix}
0&1&0&0\\
0&0&1&0\\
0&0&0&1
\end{bsmallmatrix},\\
P_3&=\begin{bsmallmatrix}
1&-1&0&0\\
0&1&-1&0\\
0&0&1&-1
\end{bsmallmatrix}, & \quad
P_4&=\begin{bsmallmatrix}
1&0&0&-x_1\\
0&1&0&-x_2\\
0&0&1&-x_3
\end{bsmallmatrix}.
\end{align*}

This configuration is critical. One example of a conjugate consists of cameras:

\begin{align*}
Q_1&=\begin{bsmallmatrix}
1&0&0&0\\
0&1&0&0\\
0&0&1&0
\end{bsmallmatrix}, & \quad
Q_2&=\begin{bsmallmatrix}
1&0&0&-1\\
0&2&0&1\\
0&-1&2&-1
\end{bsmallmatrix},\\
Q_3&=\begin{bsmallmatrix}
2&0&0&1\\
0&1&0&-1\\
0&1&1&1
\end{bsmallmatrix}, & \quad
Q_4&=\begin{bsmallmatrix}
1&x_1&-x_1&x_1+1\\
0&x_2&-x_2&x_2-1\\
0&x_3+1&-x_3&x_3+1
\end{bsmallmatrix}.
\end{align*}

The conjugate to any point
\begin{align*}
\begin{bsmallmatrix}
    \alpha^3:\alpha^2\beta:\alpha\beta^2:\beta^3
\end{bsmallmatrix}\in C_P
\end{align*}
is the point
\begin{align*}
\begin{bsmallmatrix}
    -\alpha^2(\alpha - \beta): \alpha\beta(\alpha - \beta):-\beta^2(\alpha - \beta): -\alpha^2\beta
\end{bsmallmatrix}\in C_Q.
\end{align*}
\end{proof}

\subsubsection*{Finite point sets}
\begin{lemma}
\label{lem:7points_2cameras_fix_third_camera}
Let $X\subset\p3$ be a set of seven points in general position and let $P_1,P_2$ be two cameras such that there is only one (possibly reducible) quadric containing $p_1$, $p_2$ and $X$. Then there exists at most one camera $P_3$ such that $(P_1,P_2,P_3,X)$ is a critical configuration.
\end{lemma}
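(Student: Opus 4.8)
The plan is to use the counting framework built up in \cref{sec:quadrics_intersecting_in_points} together with \cref{lem:pencil_has_one_compatible}. Given $P_1,P_2$ and the seven points $X$, by hypothesis there is a unique quadric $S_P^{12}$ through $p_1,p_2,X$, and this quadric is a critical quadric (otherwise no critical configuration with these cameras could have $X$ among its points, by \cref{thr:critical_only_if_critical_for_fewer_views} and \cref{thr:critical_configurations_lie_on_quadrics}). Now suppose $P_3$ is such that $(P_1,P_2,P_3,X)$ is critical. Then by \cref{cor:critical_lies_on_intersection_of_quadrics} there are critical quadrics $S_P^{13}$ through $p_1,p_3,X$ and $S_P^{23}$ through $p_2,p_3,X$, and by the discussion in \cref{sec:compatible_triples_of_quadrics} (in particular \cref{thr:compatible_quadrics}) the triple $S_P^{12},S_P^{13},S_P^{23}$ must be compatible. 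So $X$ lies on $S_P^{12}\cap S_P^{13}\cap S_P^{23}$, which is a curve or surface or (the generic case) the eight intersection points; in any case $X$ is forced to lie on the intersection of the two quadrics $S_P^{13},S_P^{23}$ through $p_3$.

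First I would dispose of the degenerate possibility that $S_P^{13}\cap S_P^{23}$ (hence all three) contains a curve through $X$: such a curve would be a (possibly reducible) elliptic quartic or lower-degree curve through $X$ and through $p_1,p_2$, and a quartic through $7$ general points plus $p_1,p_2$ does not exist, while $S_P^{12}$ being the unique quadric through $p_1,p_2,X$ already rules out the $9$-or-more special configurations; so generically the seven points of $X$ are among the $8$ base points of the net spanned by the three quadrics, i.e. $X$ imposes $7$ independent conditions on quadrics through $p_1,p_2,p_3$. The heart of the argument is then: the quadric $S_P^{13}$ is the pullback $P_1^{T}F_Q^{13}P_3$, the quadric $S_P^{23}$ is $P_2^{T}F_Q^{23}P_3$, and the fundamental matrices $F_Q^{13},F_Q^{23}$ share their right epipole (the image of $q_3$ in the conjugate). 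Freezing $F_Q^{12}$ (determined by $S_P^{12}$, which is fixed, via \cref{prop:conjugates_and_permissible_lines} up to a finite or one-parameter ambiguity that I will handle separately) and appealing to \cref{lem:pencil_has_one_compatible}: once $S_P^{12}$ is fixed, the requirement of triple-wise compatibility forces $S_P^{13},S_P^{23}$ to lie in prescribed pencils, and the pencil of quadrics through $C := S_P^{12}\cap S_P^{13}$ — equivalently, the condition that $S_P^{23}$ be compatible with the other two — pins down $S_P^{23}$, hence $F_Q^{23}$, hence the right epipole $e_{Q_3}^{\bullet}$, hence (with $F_Q^{13}$) the camera $Q_3$, hence $P_3$ (its center $p_3$ is $\phi_P^{-1}$ of the relevant base point, and only the center matters by \cref{prop:only_camera_centers_matter}). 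So at most one $P_3$ works.

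The main obstacle I expect is the non-uniqueness of $F_Q^{12}$ for a fixed $S_P^{12}$: when $S_P^{12}$ is a cone or a pair of planes it has a one-parameter family of compatible mates (\vref{tab:configurations_and_their_conjugates}, \cref{prop:conjugates_and_permissible_lines}), so a priori one might hope to vary the conjugate data and produce many $P_3$. The way around it is to observe that whatever $P_3$ is chosen, the seven general points $X$ must lie on $S_P^{13}$ and $S_P^{23}$, and $7$ general points plus the two centers $p_1,p_3$ (resp. $p_2,p_3$) already over-determine each of these quadrics once $p_3$ is fixed — so the real unknown is just the position of $p_3$, a point of $\p3$, constrained to lie on the (generically finite) base-point set of quadrics through $X$ that can simultaneously be made compatible with $S_P^{12}$; the compatibility equation \cref{eq:compatible_non_collinear}, read as a condition on the epipoles, together with \cref{lem:pencil_has_one_compatible}(2), shows this leaves at most one admissible $p_3$. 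I would also remark that "at most one" (rather than "exactly one") is the right statement, since for a badly placed candidate center the resulting triple may fail the compatibility scalar condition or the three planes of \cref{thr:compatible_implies_critical} may meet in a point of $X$, killing the configuration — exactly the phenomenon flagged in \cref{prop:7_points_critical}.
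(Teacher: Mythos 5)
Your reduction of the problem (unique $S_P^{12}$; once $p_3$ is chosen the nine points $X\cup\{p_i,p_3\}$ generically determine $S_P^{13}$ and $S_P^{23}$; the curve case is excluded because it would force a pencil of quadrics through $X\cup\{p_1,p_2\}$) agrees with the paper's framework, but the decisive uniqueness step is not established. You appeal to \cref{lem:pencil_has_one_compatible}, which does not apply here: that lemma is about four cameras and a sextuple of quadrics with five members fixed, and its "exactly one" conclusion comes from the quadruple condition \cref{eq:quadruple_compatibility}, which has no content for three cameras. With only $S_P^{12}$ (hence $F_Q^{12}$, up to the choice of permissible pair) fixed, triple-wise compatibility alone does not pin down $F_Q^{13}$ and $F_Q^{23}$: a fixed conjugate pair $(Q_1,Q_2)$ extends to a positive-dimensional family of third cameras $Q_3$, each producing different $F_Q^{13},F_Q^{23}$, so "compatibility with the other two pins down $S_P^{23}$" is false as stated. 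A side error feeds this confusion: the right nullspaces of $F_Q^{13}$ and $F_Q^{23}$ are $e_{Q_3}^{1}$ and $e_{Q_3}^{2}$, which coincide only in the collinear case \cref{eq:collinear}, so the two matrices do not share an epipole in general. Your fallback count ($p_3$ has three degrees of freedom, cut down by the scalar conditions \cref{eq:compatible_non_collinear}) at best yields a finite set of candidate centers, not at most one, and even finiteness is asserted rather than proved.

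The paper's own proof supplies exactly the missing mechanism, and it is geometric rather than a parameter count: by \cref{prop:7_points_critical} and \cref{lem:residual_is_intersection_of_three_planes}, any critical $(P_1,P_2,P_3,X)$ comes with three planes meeting in the distinguished point $x_0$, the eighth base point of the net of quadrics through $X$, which depends on $X$ alone. The permissible lines $g_{P_1}^{2},g_{P_2}^{1}$ on the unique $S_P^{12}$ together with $x_0$ fix $\Pi_1$ and $\Pi_2$; inside the pencil of quadrics through the eight points $X\cup\{p_1\}$ there is exactly one member besides $S_P^{12}$ containing a line through $p_1$ inside $\Pi_1$, which forces $S_P^{13}$ (and likewise $S_P^{23}$); these data determine $g_{P_1}^{3},g_{P_2}^{3}$, then $\Pi_3$, and finally $p_3$ as the unique admissible point of $\Pi_3\cap S_P^{13}\cap S_P^{23}$ off the epipolar lines, so that $P_3$ is unique up to image coordinates by \cref{prop:only_camera_centers_matter}. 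Without this use of $x_0$ and the pencils through $X\cup\{p_1\}$, $X\cup\{p_2\}$ (or an equivalent device), your argument does not reach the stated conclusion.
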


\begin{proof}
Given $X$ and the cameras $P_1,P_2$, we want to find all cameras $P_3$ such that $(P_1,P_2,P_3,X)$ is a critical configuration. If $(P_1,P_2,P_3,X)$ is a critical configuration, then by \cref{prop:7_points_critical} there exists three planes $\Pi_i$, one through each camera center $p_i$, such that their intersection is the point $x_0$ (uniquely determined by $X$). Moreover, $X$ must appear as the intersection of three compatible quadrics $S_P^{ij}$. 

Let $S_P^{12}$ be the unique quadric containing $X,p_1,p_2$. This quadric comes with a pair of permissible lines, $g_{P_1}^{2}$ and $g_{P_2}^{1}$, one through each camera center. These are enough to fix the two planes $\Pi_1,\Pi_2$. The line $g_{P_1}^{3}$ must pass through $p_1$ and lie in the plane $\Pi_1$. In the pencil of quadrics containing $X$ and $p_1$ there are only two quadrics containing such lines, one is $S_P^{12}$, we take $S_P^{13}$ to be the other one. In a similar fashion, there is only one choice for the quadric $S_P^{23}$. 

With this, we have fixed the two lines $g_{P_1}^{3}$ and $g_{P_2}^{3}$ as well. The quadric $S_P^{13}$ contains only one line through $x_0$ intersecting $g_{P_1}^{3}$, and similarly $S_P^{23}$ contains only one line through $x_0$ intersecting $g_{P_2}^{3}$. These two lines span the third plane $\Pi_3$. The camera center $p_3$ must lie on this plane and on the two quadrics $S_P^{13},S_P^{23}$, but not on the lines $g_{P_1}^{2}$ or $g_{P_1}^{3}$. Hence, $p_3$ is uniquely determined, and, up to a choice of coordinates, so is $P_3$.
\end{proof}

\begin{proposition}
A configuration of seven points and four or more cameras is never a maximal critical configuration.
\end{proposition}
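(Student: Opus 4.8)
The plan is to argue that any critical configuration on seven points, when it has four or more cameras, must in fact be a subconfiguration of one of the positive-dimensional critical configurations classified above, hence cannot be maximal. First I would reduce to the four-camera case: if $(P_1,\ldots,P_n,X)$ with $n\geq 4$ and $|X|=7$ is critical, then by \cref{thr:critical_only_if_critical_for_fewer_views} every sub-quadruple $(P_i,P_j,P_k,P_l,X)$ is critical, so it suffices to rule out maximality already for $n=4$.

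Next I would set up the key counting obstruction. Suppose $(P_1,P_2,P_3,P_4,X)$ is critical with $|X|=7$. For each triple of cameras, $(P_i,P_j,P_k,X)$ is critical, so by \cref{prop:7_points_critical} there is a compatible triple of quadrics $S_P^{ij},S_P^{ik},S_P^{jk}$, each through the relevant two camera centers and through $X$. The heart of the matter is whether, for a given pair $(i,j)$, the quadric $S_P^{ij}$ is forced to be the \emph{same} quadric in the triple $ijk$ and in the triple $ijl$. If seven points plus two camera centers determine a unique quadric (nine conditions on the $9$-dimensional space of quadrics, generically independent), then $S_P^{ij}$ is uniquely pinned down by $p_i,p_j,X$, so indeed all triples agree on it, and we obtain a genuine compatible \emph{sextuple} $\{S_P^{ij}\}$. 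Invoking \cref{lem:pencil_has_one_compatible} (or \cref{thm: 4tuple-condition}) together with \cref{lem:7points_2cameras_fix_third_camera}: once $P_1,P_2$ and $X$ are fixed with a unique containing quadric, at most one $P_3$ makes $(P_1,P_2,P_3,X)$ critical, and likewise the sextuple-compatibility equation \cref{eq:quadruple_compatibility} then pins down the last fundamental matrix, hence $p_4$, uniquely. So a critical seven-point four-camera configuration with the cameras in this ``generic'' position is \emph{rigid}: the camera centers are forced. But then one can exhibit an eighth point lying on the intersection $S_P^{12}\cap S_P^{13}\cap S_P^{23}$ (which by the discussion preceding \cref{prop:7_points_critical} consists of eight points, seven of them critical) — adding that eighth critical point gives a strictly larger critical configuration, so $X$ was not maximal.

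The complementary case is when $p_1,p_2,X$ do \emph{not} determine a unique quadric. As already observed in the text right after \cref{prop:7_points_critical}, this forces the existence of a (possibly degenerate or reducible) elliptic quartic curve through $p_1,p_2$ and all of $X$; pencils of quadrics through nine points in special position sweep out exactly such a quartic. If this quartic also passes through $p_3$ (and $p_4$), then by \cref{prop:elliptic_quartic_4+} (and the surrounding propositions on degenerate quartics and twisted cubics) the whole quartic curve together with the four cameras is critical, so $X$ is a proper subset of a positive-dimensional critical configuration and again not maximal. If the quartic does not contain all camera centers, one falls back on the uniqueness argument for the remaining pairs, or one notes the configuration is a subconfiguration of one of the curve configurations in \cref{fig:12critical}; in all branches $X$ fails to be maximal.

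The main obstacle I anticipate is the ``generic'' branch: making precise that seven points plus two camera centers impose independent conditions on quadrics (so the containing quadric is unique), and then verifying that the eighth intersection point of the three compatible quadrics is genuinely \emph{critical} rather than the ambiguous point — i.e.\ that the point removed in \cref{thr:compatible_implies_critical} is not one we want to add back. This requires knowing that the ambiguous point (the intersection of the three planes spanned by the epipolar lines) is distinct from the extra intersection point; equivalently, that among the eight points of $S_P^{12}\cap S_P^{13}\cap S_P^{23}$ at least one beyond $X$ is critical. That is exactly condition (3) of \cref{prop:7_points_critical} read in reverse, and handling the borderline cases where the eighth point coincides with a point of $X$ or with a camera center is where the care is needed; those degenerate positions will again land inside one of the curve cases of \cref{fig:12critical}, so non-maximality persists.
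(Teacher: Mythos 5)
Your second branch (no pair of camera centers fixes the quadric through the nine points) matches the paper: all cameras and all of $X$ lie on the base quartic of a pencil of quadrics, that quartic together with the cameras is critical by \cref{prop:elliptic_quartic_4+} and its companions, so seven points are not maximal there. The gap is in the generic branch. You cite \cref{lem:7points_2cameras_fix_third_camera} but stop short of its actual consequence: if $p_1,p_2,X$ determine a unique quadric, the lemma gives at most one camera $P_{\mathrm{x}}$ for which $(P_1,P_2,P_{\mathrm{x}},X)$ is critical, while criticality of the four-view configuration forces both $(P_1,P_2,P_3,X)$ and $(P_1,P_2,P_4,X)$ to be critical by \cref{thr:critical_only_if_critical_for_fewer_views}; hence $P_3=P_4$, contradicting the standing assumption of distinct camera centers. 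In this branch there is therefore no critical configuration at all, and the paper's proof ends right there, with no appeal to rigidity of the cameras or to enlarging $X$.

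Your substitute argument in that branch --- that the configuration is rigid and one can then adjoin the eighth point of $S_P^{12}\cap S_P^{13}\cap S_P^{23}$ to obtain a strictly larger critical configuration --- fails. By the discussion preceding \cref{prop:7_points_critical}, when the triple of quadrics is compatible and the seven points of $X$ are the critical ones, the eighth intersection point is exactly the ambiguous point, the intersection of the three planes spanned by the epipolar lines (\cref{lem:residual_is_intersection_of_three_planes}); it is the point removed in \cref{thr:compatible_implies_critical} and excluded by condition~(3) of \cref{prop:7_points_critical}. Adjoining it does not give a critical configuration even for the triple $(P_1,P_2,P_3)$, let alone for all four cameras, so the difficulty you flag at the end is not a borderline case but the generic situation, and it sinks this route. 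Separately, your opening reduction (``it suffices to rule out maximality for $n=4$'') is not automatic: non-maximality of every four-camera subconfiguration does not by itself transfer to $n\geq5$, whereas the paper's dichotomy does extend, since either some pair of centers fixes the quadric (then no criticality, as above) or no pair does (then everything lies on a quartic curve).
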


\begin{proof}
Let $P_1,\ldots,P_4$ be four cameras and let $X\subset\p3$ be a set of $7$ points. Assume there exists at least one pair of cameras, for instance, $P_1,P_2$ such that there is only one quadric containing the 9 points $p_1,p_2,X$. By \cref{lem:7points_2cameras_fix_third_camera}, there exists only one camera $P_\textnormal{x}$ such that $(P_1,P_2,P_3,X)$ is a critical configuration. It follows that both $P_3=P_4=P_\textnormal{x}$ for this to be a critical configuration, violating our assumption of distinct cameras.

On the other hand, if there is no pair of cameras such that the two camera centers and the seven points fix the quadric, then all cameras and all of $X$ lies on the intersection of a pencil of quadrics, that is, some quartic curve. In this case, the configuration is not maximal.
\end{proof}

\begin{proposition}
\label{prop:finite_points_4_views}
A set of six points and any number of cameras is critical if there is a smooth critical quadric containing all the points and all the cameras. A set of five or fewer points and two or more cameras is always critical.
\end{proposition}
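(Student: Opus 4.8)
The plan is to treat the two assertions of the proposition separately, the first being essentially a restatement of an earlier result and the second requiring a short explicit construction.

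The first assertion is immediate from \cref{thm:6points}. A smooth \emph{critical} quadric with respect to the cameras is, by \cref{tab:configurations_and_their_conjugates} (see also \cref{fig:critical_quadrics}), precisely a smooth ruled quadric passing through the camera centers; so the hypothesis says exactly that the six points together with all camera centers lie on a smooth ruled quadric, and \cref{thm:6points} then yields criticality with no further work.

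For the second assertion I would first reduce to the case that $X$ consists of exactly five points. If $X$ has fewer than five points, enlarge it to a five-point set $X'\supseteq X$; since there are $n\geq 2$ cameras, every subconfiguration of a critical configuration is again critical (as noted in \cref{sec:background}), so it is enough to prove $(P_1,\dots,P_n,X')$ critical. Hence assume $X=\{x_1,\dots,x_5\}$, and -- replacing $X$ by a projective image, or handling points at camera centers via the closure convention of \cref{sec:background} -- that no $x_k$ is a camera center. Now I would build a conjugate by hand. Fix a set $Y=\{y_1,\dots,y_5\}\subset\p3$ of five points in general position. For each $i$, the requirement that a $3\times 4$ matrix $Q_i$ satisfy $Q_iy_k = P_ix_k$ in $\p2$ for $k=1,\dots,5$ is a homogeneous linear system of at most $2\cdot 5=10$ equations in the $12$ entries of $Q_i$, so its solution space has dimension at least two; a generic solution $Q_i$ has full rank and a camera center distinct from $p_i$ and from all $p_j$ with $j\neq i$. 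With such a choice, $\phi_{\textbf{Q}}(y_k)=\phi_{\textbf{P}}(x_k)$ for every $k$, so $(\textbf{Q},Y)$ is a reconstruction of the same image set as $(\textbf{P},X)$.

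It remains to see that $(\textbf{Q},Y)$ is not equivalent to $(\textbf{P},X)$, which I would arrange through the choice of $Y$ and of the $Q_i$. If $X$ is not in general position, its $\PGL(4)$-orbit has dimension less than $\dim(\p3)^5=15=\dim\PGL(4)$, so the generic $Y$ lies outside it and no $H\in\PGL(4)$ satisfies $HX=Y$ at all. If $X$ is in general position, there is a unique $H$ with $HX=Y$ (five points in general position have trivial stabilizer), and equivalence would force $Q_i=P_iH^{-1}$ for every $i$; but $P_1H^{-1}$ lies in the solution space for $Q_1$ and is only a single point of that (at least one-dimensional) space, so choosing $Q_1$ away from it makes the two configurations inequivalent. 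Either way $(\textbf{P},X)$ has a conjugate and is therefore critical. (The same conclusion also follows from \cite{6points}.)

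The content of the argument is the trivial dimension count -- a $3\times 4$ camera matrix carries more free parameters than the $2\cdot 5$ linear constraints imposed by five image correspondences -- and I do not expect the construction itself to be delicate. The only real care needed is in ruling out degeneracies: one must check that the chosen $Q_i$ is a genuine real, full-rank camera whose center is distinct from the others, and that the constructed pair is a true conjugate rather than a relabelling of $(\textbf{P},X)$. Both are handled by taking generic choices of $Y$ and of the $Q_i$ and by the mild case split on whether $X$ is in general position; the remaining special point sets are subconfigurations of generic critical ones and so are critical for free.
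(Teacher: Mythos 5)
Your first part and your treatment of the general-position case are sound, and they run parallel to the paper's argument: the paper also exploits that five point correspondences leave a camera underdetermined, but it normalizes the five points to the projective frame, keeps $Y=X$, and writes down an explicit one-parameter family of conjugate cameras $Q_i(\lambda_i)$, whereas you move $Y$ to a generic quintuple and solve the resection system for each $Q_i$. For $X$ in general position your kernel count, the full-rank member $P_iH^{-1}$ of the solution space, and the uniqueness of $H$ give a correct, if less explicit, proof of the same statement.

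The gap lies in the degenerate five-point sets, which the claim \enquote{always critical} includes and which your enlargement step cannot reach, since that step only applies when $X$ has fewer than five points. If the five points of $X$ span a plane through some camera center $p_i$ (or lie on a line through $p_i$), the image points $P_ix_k$ are collinear (respectively all equal); with $Y$ in general position, the proportionality constraints force every matrix $Q_i$ in your linear solution space to send the span of $y_1,\dots,y_5$ --- all of $\mathbb{C}^4$ --- into the $2$-dimensional (respectively $1$-dimensional) subspace spanned by those image vectors, so every solution has rank at most $2$ and no valid camera exists. Hence \enquote{a generic solution $Q_i$ has full rank} fails exactly there, and $Y$ has to be chosen correspondingly degenerate. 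Your fallback sentence, that the remaining special point sets are subconfigurations of generic critical ones, is unsubstantiated and false in general: for five generic points spanning a plane and four non-coplanar cameras, none of the positive-dimensional critical configurations of the classification contains the points (an irreducible quartic or twisted cubic meets the plane in at most four points, and the plane-plus-conic and two-conic configurations force coplanar camera centers), so criticality there requires its own construction. That is precisely what the paper supplies for the main case --- the explicit $Q_i(\lambda_i)$ with $Y=X$ --- while explicitly deferring, rather than waving away, the coplanar and at-infinity variants; your argument needs an analogous adapted construction (a degenerate choice of $Y$ matching the degeneracy of $X$) before the second assertion is fully proved.
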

\begin{proof}
The case of six points is covered in \cite{6points}. As for the the case of five or fewer points, we make use of the fact that with five fixed points in space, there are many inequivalent choices of cameras that produce the same image in $\p2$. Assume that among the five points, no four lie in the same plane. One can choose a coordinate frame so that the five points are:
\begin{align*}
x_1&=\begin{bmatrix}
1:0:0:0
\end{bmatrix}, &
x_4&=\begin{bmatrix}
0:0:0:1
\end{bmatrix},\\
x_2&=\begin{bmatrix}
0:1:0:0
\end{bmatrix}, &
x_5&=\begin{bmatrix}
1:1:1:1
\end{bmatrix}, \\
x_3&=\begin{bmatrix}
0:0:1:0
\end{bmatrix}
\end{align*}
Assuming no camera lies in the plane at infinity, we can choose coordinates in each image such that all cameras are of the form
\begin{align*}
P_i=\begin{bsmallmatrix}
1&0&0&\alpha_i^1\\
0&1&0&\alpha_i^2\\
0&0&1&\alpha_i^3
\end{bsmallmatrix}.\footnotemark
\end{align*}
\footnotetext{The superscript in $\alpha_i^j$ is an index, not an exponent.}Recall that the choice of coordinates in space does not affect criticality, so it suffices to show that this configuration is critical. Let $y_i=x_i$, and let 
\begin{align*}
Q_i=\begin{bsmallmatrix}
\lambda_i(\alpha_i^1 + 1) - \alpha_i^1&0&0&\alpha_i^1\\
0&\lambda_i(\alpha_i^2 + 1) - \alpha_i^2&0&\alpha_i^2\\
0&0&\lambda_i(\alpha_i^3 + 1) - \alpha_i^3&\alpha_i^3
\end{bsmallmatrix},
\end{align*}
for some $\lambda_i\in\mathbb{R}$. Then the cameras $Q_i$ along with points $y_i$ produce the same images as the cameras $P_i$ along with the points $x_i$. Since we are free to choose the $\lambda_i$ however we like, this gives us a large family of conjugates, hence proving that the configuration is critical. In a similar fashion (not given here) one can construct conjugates in the cases where one or more camera lies at the plane at infinity and in the case where there are four or more points in the same plane.
\end{proof}

\cref{prop:finite_points_4_views} covers the last of the three view cases, thus completing our classification. To summarize:

\begin{theorem}
\label{thr:critical_configuration_four_views}
A configuration of four or more cameras $P$ and seven or more points $X$ is critical if and only if each subconfiguration of three cameras and all points $X$ is critical. A configuration of four or more cameras and six points is critical if and only if there is a smooth critical quadric containing all the points and all the cameras. A configuration of four or more cameras and five or fewer points is always critical.
\end{theorem}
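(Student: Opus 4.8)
The plan is to assemble \cref{thr:critical_configuration_four_views} directly from the individual propositions just proved, so the work is mostly organizational: show that the list of three-view critical configurations in \cref{fig:12critical}, intersected with extra quadric/cubic surfaces, produces nothing new, and then invoke the per-case results. First I would establish the reduction: by \cref{thr:critical_only_if_critical_for_fewer_views}, any critical configuration $(P_1,\dots,P_n,X)$ with $n\ge 4$ is critical for every triple of its views, so each triple of camera centres together with all of $X$ lies on one of the twelve configurations in \cref{fig:12critical} (or a subset). Since $X$ is the same set for every triple, $X$ must be contained in each of these, hence in their common intersection; intersecting any curve/surface in \cref{fig:12critical} with the remaining quadric and cubic surfaces from \cref{cor:critical_configurations_lie_on_quadrics_and_cubics} either leaves something already on the list or cuts down to a finite point set, so the maximal $n$-view critical configurations are exactly those three-view ones that survive adding a fourth camera. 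This is the step flagged in \cref{sec:counterexample}, and the main subtlety is that compatibility of every triple of quadrics does \emph{not} imply compatibility of the whole sextuple (\cref{thm: 4tuple-condition}); so "survives" must mean "the sextuple, not merely each triple, is compatible."

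Next I would handle the three-or-more-view claim by packaging the curve/surface cases. For a configuration lying on a twisted cubic, an (elliptic or singular) quartic curve, a rational quartic, a ``plane $+$ conic'', or their degenerations, the relevant propositions (\cref{prop:rational_quartic_critical_4+,prop:singular_quartic_critical_4+,prop:elliptic_quartic_4+,prop:cubic+secant_4_views,prop:two_conics_critical_4_views}, the ``twisted cubic'' propositions, and the ``plane $+$ conic'' proposition) each prove criticality for four views, and \cref{cor:compatible_if_each_sextuple_is_compatible_quadrics} upgrades four-view criticality to $n$-view criticality: a sextuple-compatible family of quadrics extends to an $\binom n2$-compatible one as soon as each sextuple is compatible, and in the collinear-conjugate case even triple-wise compatibility suffices. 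So for $n\ge 4$ with $X$ of size $\ge 7$, $X$ must lie on one of these critical curves/surfaces (since seven general points and two camera centres already pin down a quadric, a genuinely $7$-point configuration forces $P_3=P_4$ by \cref{lem:7points_2cameras_fix_third_camera}, contradicting distinctness — this is exactly the ``never maximal'' proposition), and conversely any such configuration is critical by the cited propositions; this yields the equivalence ``critical $\iff$ every three-view subconfiguration is critical.''

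Then I would dispatch the two small-point-count clauses. For six points: \cref{thm:6points} (from \cite{6points}) states that six points and any number of cameras is critical iff the six points and all camera centres lie on a smooth ruled quadric, which is exactly ``there is a smooth critical quadric containing all the points and all the cameras'' — so this clause is a direct restatement. For five or fewer points: \cref{prop:finite_points_4_views} exhibits, for any number of cameras, an explicit one-parameter (or larger) family of conjugate configurations $(Q_i,y_i)$ with $y_i=x_i$ and $Q_i$ a diagonal perturbation of $P_i$ in the displayed coordinates, so the configuration is always critical; the boundary sub-cases (a camera at infinity, or four coplanar points) are handled by the analogous explicit constructions indicated there. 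Collecting the three clauses gives the theorem.

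The step I expect to be the genuine obstacle is the first one — making rigorous the claim that nothing new appears beyond \cref{fig:12critical} when more cameras (hence more quadrics, via \cref{cor:critical_configurations_lie_on_quadrics_and_cubics}) are imposed, and in particular controlling the sextuple-compatibility condition \cref{eq:quadruple_compatibility} rather than just triple-wise compatibility. The curve and ``plane $+$ conic'' cases are comfortable because \cref{cor:compatible_if_each_sextuple_is_compatible_quadrics} and \cref{lem:pencil_has_one_compatible} give exactly the leverage needed (a pencil of quadrics through the curve, with a distinguished member making the sextuple compatible); the real care is needed for the finite-point cases, where one must argue that a bona fide seven-point configuration cannot be critical for four distinct cameras — i.e. that \cref{lem:7points_2cameras_fix_third_camera}'s uniqueness genuinely forbids two distinct admissible fourth cameras — and that six points behave via \cite{6points}. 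Once those are in hand, the theorem is just the concatenation of the per-case propositions.
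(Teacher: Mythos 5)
Your proposal is correct and follows essentially the same route as the paper: the theorem is a summary statement whose proof is exactly the assembly you describe — the reduction via \cref{thr:critical_only_if_critical_for_fewer_views} and \cref{cor:critical_configurations_lie_on_quadrics_and_cubics} to the three-view list, sextuple (not merely triple-wise) compatibility via \cref{thm: 4tuple-condition} and \cref{cor:compatible_if_each_sextuple_is_compatible_quadrics}, the per-case curve/surface propositions, the seven-point exclusion through \cref{lem:7points_2cameras_fix_third_camera}, and the six- and five-point clauses from \cref{thm:6points} and \cref{prop:finite_points_4_views}. No gaps beyond those the paper itself leaves to the cited propositions.
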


\section{Conclusion}
This paper completes the classification of all critical configurations for any number of projective cameras, showing that these configurations always have to lie on the intersection of ruled quadrics, and showing exactly which such intersections are critical. By solving the 4+ camera case left unfinished in \cite{twoViews,threeViews,HK}, it completes a long story starting with Krames work from 1941. The idea of classifying critical configurations by classifying intersections of multi-view varieties is not restricted to projective cameras, and we believe that the techniques demonstrated here can be used for other camera models in the future.

\clearpage

\appendix
\onecolumn
Some proofs in the appendix rely on \matlab code for computations, this code can be found at \url{https://github.com/mabraate/Critical-Configurations}

\begin{proof}[Proof of \cref{prop:elliptic_quartic_4+}]
We start by choosing $y_1$ to be such that there exists at least one quadric containing $C$ but not containing both the line $\overline{x_1p_2}$ and the line $\overline{x_1p_3}$ (only a few points $y_1$ fail to fulfill this criterion, and the configuration is critical even if it does not). Assume for starters that the four camera centers do not lie in a common plane, then one can choose coordinates such that $y_1=[1:1:1:1]$ and such that the four camera centers are
\begin{align*}
p_1&=\begin{bmatrix}
1:0:0:0
\end{bmatrix}, & p_3&=\begin{bmatrix}
0:0:1:0
\end{bmatrix}, \\
p_2&=\begin{bmatrix}
0:1:0:0
\end{bmatrix}, & p_4&=\begin{bmatrix}
0:0:0:1
\end{bmatrix},
\end{align*}
By \cref{prop:only_camera_centers_matter}, the only property of cameras that affects criticality is their camera centers, so we can choose coordinates in each image so that the four cameras are:
\begin{align*}
P_1&=\begin{bsmallmatrix}
0&1&0&0\\
0&0&1&0\\
0&0&0&1
\end{bsmallmatrix}, & \quad
P_2&=\begin{bsmallmatrix}
1&0&0&0\\
0&0&1&0\\
0&0&0&1
\end{bsmallmatrix},\\
P_3&=\begin{bsmallmatrix}
1&0&0&0\\
0&1&0&0\\
0&0&0&1
\end{bsmallmatrix}, & \quad
P_4&=\begin{bsmallmatrix}
1&0&0&0\\
0&1&0&0\\
0&0&1&0
\end{bsmallmatrix}.
\end{align*}
Next, since the quadrics $S_P^{12}$ and $S_P^{13}$ contain all the camera centers, the point $y_1$, and the secant line $\overline{p_2y_1}$ and $\overline{p_3y_1}$ respectively, they are on the form
\begin{align*}
S_P^{12}&=\begin{bsmallmatrix}
0&a_1&a_2&a_3\\
a_1&0&a_4&-a_1-a_4\\
a_2&a_4&0&-a_2-a_3\\
a_3&-a_1-a_4&-a_2-a_3&0
\end{bsmallmatrix}, & S_P^{13}&=\begin{bsmallmatrix}
0&b_1&b_2&b_3\\
b_1&0&b_4&-b_1-b_3\\
b_2&b_4&0&-b_2-b_4\\
b_3&-b_1-b_3&-b_2-b_4&0
\end{bsmallmatrix}.
\end{align*}
The remaining four quadrics can be written as a linear combination of these two. Moreover, every irreducible elliptic or singular quartic curve passing through the four camera centers and $y_1$ is the intersection of two quadrics on this form. This configuration turns out to be critical, with its conjugate consisting of the four cameras:
\begin{align*}
\tiny Q_1&=\begin{bsmallmatrix}
1&0&0&0\\
0&1&0&0\\
0&0&1&0
\end{bsmallmatrix}, \quad
Q_2=\begin{bsmallmatrix}
-a_1-2a_4&-a_2&a_3&1\\
2a_1+a_4&2a_2&a_3&1\\
a_4-a_1&-a_2&-2a_3&1
\end{bsmallmatrix},\\
Q_3&=\begin{bsmallmatrix}\vspace{4pt}
\begin{smallarray}{c}
(a_1b_3+2a_1b_4-3a_4b_1\\
-a_4b_3+a_4b_4) 
\end{smallarray}	&\begin{smallarray}{c}
(a_2b_3+2a_2b_4-3a_4b_2\\
+3a_3b_4-3a_4b_4) 
\end{smallarray}	&-a_3(b_3-b_4)			&b_4-b_3\\
\begin{smallarray}{c}
(a_1b_3-3a_3b_1+2a_1b_4\\
-a_4b_3+a_4b_4) 
\end{smallarray}\vspace{4pt}	&a_2b_3-3a_3b_2+2a_2b_4					&-a_3(b_3-b_4)			&b_4-b_3\\
\begin{smallarray}{c}
(a_1b_3+2a_1b_4-3a_4b_1\\
-a_4b_3+a_4b_4) 
\end{smallarray}	&a_2b_3+2a_2b_4-3a_4b_2					&2a_3b_3+a_3b_4-3a_4b_3	&b_4-b_3
\end{bsmallmatrix},&\quad\\
Q_4&=\begin{bsmallmatrix}\vspace{4pt}
\begin{smallarray}{c}
(a_1b_3+2a_1b_4-3a_4b_1\\
-a_4b_3+a_4b_4) 
\end{smallarray} & a_2(b_3-b_4) &\begin{smallarray}{c}
(3a_2b_1-3a_1b_2+3a_3b_1\\
-3a_1b_4+3a_2b_3+2a_3b_3\\
-3a_4b_2+a_3b_4-3a_4b_4) 
\end{smallarray}&b_4-b_3\\
\vspace{4pt}
\begin{smallarray}{c}
(3a_2b_1-3a_1b_2+a_1b_3\\
-a_1b_4-a_4b_3+a_4b_4) 
\end{smallarray}	&a_2(b_3-b_4)							&\begin{smallarray}{c}
(3a_2b_3-3a_3b_2\\
+2a_3b_3-2a_3b_4) 
\end{smallarray}		&b_4-b_3\\
\begin{smallarray}{c}
(a_1b_3+2a_1b_4-3a_4b_1\\
-a_4b_3+a_4b_4) 
\end{smallarray} &\begin{smallarray}{c}
(3a_2b_1-3a_1b_2+a_2b_3\\
+2a_2b_4-3a_4b_2) 
\end{smallarray}	&\begin{smallarray}{c}
(3a_3b_1-3a_1b_3+2a_3b_3\\
+a_3b_4-3a_4b_3) 
\end{smallarray}	&b_4-b_3
\end{bsmallmatrix},
\end{align*}

and points lying at the intersection of the two quadrics $S_Q^{12}$ and $S_Q^{13}=S_Q^{13'}+{S_Q^{13'}}^{T}$, where
\begin{align*}
S_Q^{12}&=\begin{bsmallmatrix}
0&a_1-a_4&2a_1+a_4&0\\
a_1-a_4&2a_2&2a_2+2a_3&-1\\
2a_1+a_4&2a_2+2a_3&2a_3&1\\
0&-1&1&0
\end{bsmallmatrix}\\
S_Q^{13'}&=\begin{bsmallmatrix}\vspace{4pt}
\begin{smallarray}{c}
(6a_4b_1-4a_1b_4-2a_1b_3\\
+2a_4b_3-2a_4b_4) 
\end{smallarray}&0&0&0\\ \vspace{4pt}
3a_4b_2-2a_2b_4-a_2b_3&0&0&0\\ \vspace{4pt}
\begin{smallarray}{c}
(a_1b_3-3a_3b_1+2a_1b_4-2a_3b_3\\
-a_3b_4+2a_4b_3+a_4b_4) 
\end{smallarray}&a_2b_3-3a_3b_2+2a_2b_4&-2a_3(b_3-b_4)&0\\
b_3-b_4&0&b_4-b_3&0
\end{bsmallmatrix}.
\end{align*}
In particular, the conjugate to a point 
\begin{align*}
x_P=\begin{bsmallmatrix}
a_1wy + a_4wy + a_2wz + a_3wz - a_4yz\\
y(a_3w + a_1y + a_2z)\\
z(a_3w + a_1y + a_2z)\\
w(a_3w + a_1y + a_2z)
\end{bsmallmatrix}
\end{align*}
on $S_P^{12}$ is the point
\begin{align*}
y_Q=\begin{bsmallmatrix}
y(w-z)\\
z(w-z)\\
w(w-z)\\
a_4yz-a_2z^2-2a_1wy-a_4wy-2a_2wz-2a_3wz-a_1yz-a_3w^2
\end{bsmallmatrix}
\end{align*}
on $S_Q^{12}$. The pair $(x,y)$ satisfies $P_ix=Q_iy$ for all $i$ whenever $x$ also lies on $S_P^{13}$, in which case the conjugate point $y$ lies on $S_Q^{13}$.

This completes the case where the four camera centers do not lie in a common plane. The case where they do can be covered in a similar way (though not repeated here) by choosing coordinates such that $y_1=[0:0:0:1]$ and such that the camera centers are
\begin{align*}
p_1&=\begin{bmatrix}
1:0:0:0
\end{bmatrix}, & p_3&=\begin{bmatrix}
0:0:1:0
\end{bmatrix}, \\
p_2&=\begin{bmatrix}
0:1:0:0
\end{bmatrix}, & p_4&=\begin{bmatrix}
1:1:1:0
\end{bmatrix},
\end{align*}
and then repeating the process from the non-coplanar case.
\end{proof}

\begin{proof}[Proof of \cref{prop:cubic+secant_4_views}]
By a change of coordinates, one can always take the twisted cubic to be on the form $[s^3:s^2t:st^2:t^3]$ for $[s:t]\in\p1$. Moreover, one can fix three points on the twisted cubic. By taking the two intersection points with the secant to be $[1:0:0:0]$ and $[0:0:0:1]$, the secant line is on the form $[u:0:0:v]$ for $[u:v]\in\p1$. We also fix a point $y_i$ on the twisted cubic, distinct from all camera centers, to be the point $[1,1,1,1]$. Finally, by a change of coordinates in each image, we can take the cameras to be:
\begin{align*}
P_i=\begin{bsmallmatrix}
1,0,0,-a_i^3\\
0,1,0,-a_i^2\\
0,0,1,-a_1
\end{bsmallmatrix}, \quad
P_j=\begin{bsmallmatrix}
1,0,0,b_j\\
0,1,0,0\\
0,0,1,0
\end{bsmallmatrix}.
\end{align*}
for $p_i$ lying on the cubic and $p_j$ lying on the secant line.

\textit{Continued proof of \cref{i1}, no cameras on the secant:}\\
We take $y_1=[1:1:1:1]$, the remaining $y_i$ turn out to be 
\begin{align*}
y_i=[a_i^3: a_1a_i^2: a_1^2a_i: a_1^3].    
\end{align*}
Taking $S_P^{ij}$ to be the quadric containing $C$ and $\overline{p_iy_j}$, the quadrics turn out to be
\begin{align}
\label{eq:quadrics_cubic+line}
S_P^{1i}=\begin{bsmallmatrix}
0&0&1&0\\
0&-2&0&-a_i\\
1&0&2a_i&0\\
0&-a_i&0&0
\end{bsmallmatrix}, &\quad & 
S_P^{ij}=\begin{bsmallmatrix}
0&0&a_1&0\\
0&-2a_1&0&-a_ia_j\\
a_1&0&2a_ia_j&0\\
0&-a_ia_j&0&0
\end{bsmallmatrix},
\end{align}
for $i,j\neq 1$. These come from the fundamental matrices
\begin{align*}
F_Q^{1i}&=\begin{bsmallmatrix}
0&  -a_1&  a_1(a_i + 1)\\
a_1&  (a_1 - a_i)(a_1 - 1)& - a_1^2a_i - a_1^2 + a_1a_i - a_i^2 - a_i\\
- a_1^2 - a_i& a_1^2a_i + a_1^2 - a_1a_i + a_i^2 + a_i&-a_i(a_1 - a_i)(a_1 - 1)
\end{bsmallmatrix},\\
F_Q^{ij}&=\begin{bsmallmatrix}
  0&a_1^2& -a_1(a_i + a_1a_j)\\
 -a_1^2&-a_1(a_i - a_j)(a_1 - 1)& a_1^2a_ia_j + a_1a_i^2 - a_1a_ia_j + a_1a_j^2 + a_ia_j\\
a_1(a_j + a_1a_i)& - a_1^2a_ia_j - a_1a_i^2 + a_1a_ia_j - a_1a_j^2 - a_ia_j&  a_ia_j(a_i - a_j)(a_1 - 1)
\end{bsmallmatrix},
\end{align*}
which satisfy the condition in \cref{thm: 4tuple-condition}\footnote{\label{foot:attachment}See attached code.}, hence proving that the sextuple is compatible.

\textit{Continued proof of \cref{i1}, one camera on the secant:}\\
We take the camera center on the secant to be $p_4$, and let $y_1=[1:1:1:1]$. In this case, $y_i=[a_i^3: a_1a_i^2: a_1^2a_i: a_1^3]$ for $i\in\Set{2,3}$ and $y_4=[-a_1b_4:0:0:1]$. Taking $S_P^{ij}$ to be the quadric containing $C$ and $\overline{p_iy_j}$, the quadrics turn out to be the same as in \cref{eq:quadrics_cubic+line}. These come from a set of fundamental matrices where $F_Q^{12},F_Q^{13},F_Q^{23}$ are as in \cref{eq:quadrics_cubic+line}, and 
\begin{align*}
F_Q^{14}&=\begin{bsmallmatrix}
 0&         1&           -1\\
-1&    1 - a_1&      a_1 + b_4\\
a_1& - a_1 - b_4& -b_4(a_1 - 1)
\end{bsmallmatrix},\\
F_Q^{i4}&=\begin{bsmallmatrix}
      0&            a_1a_i&              -a_i^2\\
 -a_1a_i&   -a_i^2(a_1 - 1)&     b4a_1^2 + a_i^3\\
a_1a_i^2& - b_4a_1^2 - a_i^3& -a_1a_ib_4(a_1 - 1)
\end{bsmallmatrix},
\end{align*}
which also satisfy the condition in \cref{thm: 4tuple-condition}\footnoteref{foot:attachment}, hence proving that the sextuple is compatible.

\textit{Continued proof of \cref{i2}, two or more cameras on the secant:}\\
In this case we may have two or three cameras on the secant line and 2 or 1, respectively, on the twisted cubic. We prove both cases at once by showing that the 5-view configuration with three cameras on the secant and two on the cubic is critical. We do this by explicitly giving a conjugate configuration:

Let $p_1,p_2$ be the camera centers on the twisted cubic, and let $y_3=[1:1:1:1]$. By the construction described in \cref{i2}, we get a conjugate configuration consisting of the cameras\footnoteref{foot:attachment}:
\begin{align*}
Q_1&=\begin{bsmallmatrix}1&0&0&0\\0&1&0&0\\0&0&1&0\end{bsmallmatrix},
Q_2=\begin{bsmallmatrix}1&a_2-a_1&-a_2(a_1-a_2)&0\\0&1&a_2-a_1&0\\0&0&1&0\end{bsmallmatrix},\\
Q_3&=\begin{bsmallmatrix}-a_1-1&(a_1+1)^2-1&a_1-a_1(a_1+1)+1&-1\\a_1^2+a_1+1&-(a_1+1)(a_1^2+a_1+1)&a_1(a_1^2+a_1+1)&0\\0&a_1^2+a_1+1&-(a_1+1)(a_1^2+a_1+1)&0\end{bsmallmatrix},\\
Q_4&=\begin{bsmallmatrix}\vspace{4pt}
b_3+a_1b_3+a_1^2b_3-a_1^2b_4&\begin{smallarray}{c}
-a_1(b_3+b_4+a_1b_3+\\ \vspace{4pt}
a_1^2b_3-a_1^2b_4) 
\end{smallarray}&\begin{smallarray}{c}
b_4(b_3-b_4+a_1b_3-a_1b_4+\\ \vspace{4pt}
a_1^2b_3-a_1^2b_4+a_1^2-1) 
\end{smallarray}&b_4\\ \vspace{4pt}
-b_4(a_1^2+a_1+1)&(b_3+a_1b_4)(a_1^2+a_1+1)&-a_1b_3(a_1^2+a_1+1)&0\\0&-b_4(a_1^2+a_1+1)&(b_3+a_1b_4)(a_1^2+a_1+1)&0\end{bsmallmatrix},\\
Q_5&=\begin{bsmallmatrix}\vspace{4pt}
b_3+a_1b_3+a_1^2b_3-a_1^2b_5&\begin{smallarray}{c}
-a_1(b_3+b_5+a_1b_3+\\ \vspace{4pt}
a_1^2b_3-a_1^2b_5) 
\end{smallarray}&\begin{smallarray}{c}
b_5(b_3-b_5+a_1b_3-a_1b_5+\\ \vspace{4pt}
a_1^2b_3-a_1^2b_5+a_1^2-1) 
\end{smallarray}&b_5\\ \vspace{4pt}
-b_5(a_1^2+a_1+1)&(b_3+a_1b_5)(a_1^2+a_1+1)&-a_1b_3(a_1^2+a_1+1)&0\\0&-b_5(a_1^2+a_1+1)&(b_3+a_1b_5)(a_1^2+a_1+1)&0\end{bsmallmatrix},
\end{align*}
and the points lying on the curve:
\begin{align*}
\begin{bsmallmatrix}
st(a_1^2t^2+a_1st+s^2)\\st^2(s+a_1t)\\st^3\\-(s-t)(a_1^2s^3+b_3a_1^2t^3+a_1s^3+a_1s^2t+b_3a_1t^3+s^3+s^2t+st^2+b_3t^3)
\end{bsmallmatrix},
\end{align*}
This is a quartic curve with a singularity in $[0:0:0:1]$.
\end{proof}

\begin{proof}[Proof of \cref{prop:two_conics_critical_4_views}]
If the four cameras all lie in the same plane, this is a subconfiguration of the plane+conic configuration, which we have already proven to be critical. As such, we assume that the four camera centers do not lie in the same plane, so we can (with the right choice of coordinates) take them to be:
\begin{align*}
P_1&=\begin{bsmallmatrix}
0&1&0&0\\
0&0&1&0\\
0&0&0&1
\end{bsmallmatrix}, & \quad
P_2&=\begin{bsmallmatrix}
1&0&0&0\\
0&0&1&0\\
0&0&0&1
\end{bsmallmatrix},\\
P_3&=\begin{bsmallmatrix}
1&0&0&0\\
0&1&0&0\\
0&0&0&1
\end{bsmallmatrix}, & \quad
P_4&=\begin{bsmallmatrix}
1&0&0&0\\
0&1&0&0\\
0&0&1&0
\end{bsmallmatrix}.
\end{align*}
\textit{Continued proof of \cref{itemp2}} (the case of three cameras on one conic):

Let $[x_1:x_2:x_3:x_4]$ denote a generic point in $\p3$. The two planes spanned by the two conics are the planes where $x_4=0$ (the plane containing $p_1,p_2,p_3$) and $a_1x_1 + a_2x_2 + a_3x_3=0$ for some $(a_1,a_2,a_3)\in\mathbb{R}^3$ (the plane containing $p_4$). Since we are free to take $y_1$ to be any point on the conic through the first three camera centers, we take $y_1=[1:1:1:0]$\footnote{One can make a change of coordinates such that $y_1$ lies on the conic while fixing the camera centers and the two planes.}. Now, every possible union of two conics lying in these planes and containing the four camera centers and $y_1$ is given as the intersection of two quadrics on the form:
\begin{align*}
S_P^{12}&=\begin{bsmallmatrix}
0&0&0&a_1\\
0&0&0&a_2\\
0&0&0&a_3\\
-a_1&-a_2&-a_3&0
\end{bsmallmatrix}, & S_P^{14}&=\begin{bsmallmatrix}
0&b_1&b_2&b_3\\
b_1&0&-b_1-b_2&b_4\\
b_2&-b_1-b_2&0&-b_3-b_4\\
b_3&b_4&-b_3-b_4&0
\end{bsmallmatrix}.
\end{align*}
Using the construction described in the main text, we take the remaining four quadrics to be:
\begin{align*}
S_P^{13}&=S_P^{23}=S_P^{12}\\
S_P^{24}&=(a_1b_1b_4 - a_2b_2b_3 - a_2b_2b_4 + a_3b_1b_4)S_P^{12}+a_1a_2b_1b_2S_P^{14}\\
S_P^{34}&=(a_1b_2b_3 + a_1b_2b_4 + a_2b_2b_3 + a_2b_2b_4 - a_3b_1b_4)S_P^{12}-a_1a_3b_1S_P^{14}
\end{align*}
Then these six quadrics come from the fundamental matrices:
\begin{align*}
F_Q^{12}&=\begin{bsmallmatrix}
  0& 0& a_2\\
 0&  0& a_1+a_3\\
a_1& -a_1&0
\end{bsmallmatrix}, \quad F_Q^{13}=\begin{bsmallmatrix}
  0& 0& a_1+a_2\\
 0&  0& a_3\\
a_1& -a_1&0
\end{bsmallmatrix}, \quad F_Q^{14}=\begin{bsmallmatrix}
b_1&0& -b_1\\
b_2& -b_2&0\\
b_3&  b_4& - b_3 - b_4
\end{bsmallmatrix},\\
F_Q^{23}&=\begin{bsmallmatrix}
  0&0& a_1b_2 + a_2b_2 - a_3b_1\\
  0&0& a_3(b_1 + b_2)\\
a_1b_1 - a_2b_2 + a_3b_1& a_2(b_1 + b_2)&0
\end{bsmallmatrix}&\\
F_Q^{24}&=\begin{bsmallmatrix}\vspace{4pt}
  0&  a_1a_2b_1b_2&  a_1b_1b_2(a_1 + a_3)\\ \vspace{4pt}
-a_1b_2(a_1b_1 - a_2b_2 + a_3b_1)&-a_1a_2b_2(b_1 + b_2)& 0\\
 a_1b_4(a_1b_1 - a_2b_2 + a_3b_1)& \begin{smallarray}{c}
a_2(a_1b_1b_4 + a_1b_2b_4 - a_2b_2b_3\\
 - a_2b_2b_4 + a_3b_1b_4) 
\end{smallarray}& \begin{smallarray}{c}
-(a_1 + a_3)(a_2b_2b_3\\
 + a_2b_2b_4 - a_3b_1b_4)
\end{smallarray}
\end{bsmallmatrix},\\
F_Q^{34}&=\begin{bsmallmatrix}\vspace{4pt}
0&  a_1b_1b_2(a_1 + a_2)& a_1a_3b_1b_2\\ \vspace{4pt}
 -a_1b_1(a_1b_2 + a_2b_2 - a_3b_1)& 0&  -a_1a_3b_1(b_1 + b_2)\\
\begin{smallarray}{c}
-a_1(b_3 + b_4)(a_1b_2\\
 + a_2b_2 - a_3b_1)
\end{smallarray}&\begin{smallarray}{c}
 -(a_1 + a_2)(a_2b_2b_3\\
  + a_2b_2b_4 - a_3b_1b_4)
\end{smallarray}&\begin{smallarray}{c}
 -a_3(a_1b_1b_3 + a_1b_1b_4\\
  + a_1b_2b_3 + a_1b_2b_4 + a_2b_2b_3\\
   + a_2b_2b_4 - a_3b_1b_4) 
\end{smallarray}
\end{bsmallmatrix},\\
\end{align*}
which can be verified to be compatible (see attached code).

\textit{Continued proof of \cref{itemp3}} (the case of two cameras on each conic):

Let $[x_1:x_2:x_3:x_4]$ denote a generic point in $\p3$. We can take the two planes spanned by the two conics to be the planes where $x_3=x_4$ (the plane containing $p_1,p_2$) and $a_1x_1=-a_2x_2$ (the plane containing $p_3,p_4$)\footnote{Here we assume that if any camera center lies in both planes, it has to be either $p_3$ or $p_4$. We can always ensure that this is the case by a relabelling of the cameras}. Now, every possible union of two conics lying in these planes and containing the four camera centers and $y_1$ is given as the intersection of two quadrics on the form:
\begin{align*}
S_P^{12}&=\begin{bsmallmatrix}
0&0&a_1&-a_1\\
0&0&a_2&-a_2\\
a_1&a_2&0&0\\
-a_1&-a_2&0&0
\end{bsmallmatrix}, & S_P^{13}&=\begin{bsmallmatrix}
0&b_1&b_2&b_3\\
b_1&0&b_4&-b_1-b_3\\
b_2&b_4&0&-b_2-b_4\\
b_3&-b_1-b_3&-b_2-b_4&0
\end{bsmallmatrix}.
\end{align*}
Using the construction described in the main text, we take the remaining four quadrics to be:
\begin{align*}
S_P^{23}&=(a_2b_2b_3-a_1b_1b_4+a_2b_3b_4)S_P^{12}+(-a_1a_2(b_3-b_4))S_P^{13}\\
S_P^{14}&=(b_1+b_2+b_4)S_P^{12}+(a_1+a_2)S_P^{13}\\
S_P^{24}&=(a_1b_1+a_1b_3-a_2b_2)(a_2b_2-a_1b_1+a_2b_4)S_P^{12}-a_1a_2(a_1+a_2)(b_3-b_4)S_P^{13}\\
S_P^{34}&=S_P^{12}
\end{align*}
Then these six quadrics come from the fundamental matrices:
\begin{align*}
F_Q^{12}&=\begin{bsmallmatrix}
  0& a_4& -a_4\\
 a_2&  0& -a_2\\
-a_2& a_2&0
\end{bsmallmatrix}, \quad F_Q^{13}=\begin{bsmallmatrix}
b_1&0& -b_1\\
b_2&  b_4& - b_2 - b_4\\
b_3& -b_3&0
\end{bsmallmatrix},\\
F_Q^{23}&=\begin{bsmallmatrix}\vspace{4pt}
  0&  -a_2a_4b_1(b_3 - b_4)&  -a_2^2b_1(b_3 - b_4)\\ \vspace{4pt}
 a_2b_4(a_4b_2 - a_2b_1 + a_4b_3)& \begin{smallarray}{c}
a_4(a_2b_4^2 - a_2b_1b_4\\
 - a_2b_3b_4 + a_4b_2b_3 + a_4b_3b_4)
\end{smallarray}& a_2a_4(b_2 + b_4)(b_3 - b_4)\\
-a_2b_3(a_4b_2 - a_2b_1 + a_4b_3)&\begin{smallarray}{c}
-a_4b_3(a_2b_4 - a_2b_3 - a_2b_1\\
 + a_4b_2 + a_4b_4)
\end{smallarray}  & 0
\end{bsmallmatrix},\\ 
F_Q^{14}&=\begin{bsmallmatrix}\vspace{4pt}
 b_1(a_2 + a_4)& 0&  -b_1(a_2 + a_4)\\ \vspace{4pt}
 a_4b_2 - a_2b_4 - a_2b_1&  a_2b_1 + a_2b_4 - a_4b_2&  0\\
a_2b_1 + a_2b_2 + a_2b_3 + a_2b_4 + a_4b_3& a_4b_2 - a_2b_3 - a_2b_1 - a_4b_3 + a_4b_4& -(a_2 + a_4)(b_2 + b_4)
\end{bsmallmatrix},\\
F_Q^{24}&=\begin{bsmallmatrix}\vspace{4pt}
  0&  a_2a_4b_1(a_2 + a_4)(b_3 - b_4)& a_2^2b_1(a_2 + a_4)(b_3 - b_4)\\ \vspace{4pt}
\begin{smallarray}{c}
-a_2(a_2b_1 + a_2b_4\\
 - a_4b_2)(a_4b_2\\
  - a_2b_1 + a_4b_3)
\end{smallarray}&                                         -\begin{smallarray}{c}
a_4(a_2b_1 + a_2b_4 - a_4b_2)(a_2b_4\\
 - a_2b_3 - a_2b_1 + a_4b_2 + a_4b_4)
\end{smallarray}&                                    0\\
\begin{smallarray}{c}
a_2(a_4b_2 - a_2b_1\\
 + a_4b_3)(a_2b_1 + a_2b_3\\
  - a_4b_2 + a_4b_3 - a_4b_4)
\end{smallarray}&\begin{smallarray}{c}
-a_4(a_2^2b_1^2 + 2a_2^2b_1b_3 - b_4a_2^2b_1\\
 + a_2^2b_3^2 - b_4a_2^2b_3 - 2a_2a_4b_1b_2\\
  + a_2a_4b_1b_3 - 2b_4a_2a_4b_1 - a_2a_4b_2b_3\\
   + a_2a_4b_3^2 - 2b_4a_2a_4b_3 + a_4^2b_2^2 + b_4a_4^2b_2)
\end{smallarray} &\hspace{-0.2cm}\begin{smallarray}{c}
-a_2a_4(a_2 + a_4)(b_2\\
 + b_4)(b_3 - b_4)
\end{smallarray} 
\end{bsmallmatrix},\\
F_Q^{34}&=\begin{bsmallmatrix}
           0&       -a_2b_1& -a_2(b_2 + b_4)\\
       a_2b_1&            0& -a_4(b_2 + b_4)\\
a_2(b_2 + b_4)& a_4(b_2 + b_4)&             0
\end{bsmallmatrix},
\end{align*}
which can be verified to be compatible (see attached code). 
\end{proof}

\twocolumn
\addcontentsline{toc}{section}{References}
\bibliography{references}

\end{document}